\newcommand{\R}{\mathbb{R}}
\newcommand{\rd}{\mathrm d}
\newcommand{\bR}{\mathbb R}
\newcommand{\bE}{\mathbb E}
\newcommand{\bP}{\mathbb P}
\newcommand{\Var}{\mathrm{Var}}
\let \phi \varphi
\DeclareMathOperator{\Uniform}{Uniform}
\DeclareMathOperator{\dTV}{d_{TV}}
\DeclareMathOperator{\poly}{poly}
\DeclareMathOperator{\supp}{supp}
\DeclareMathOperator*{\dKL}{d_{KL}}
\newcommand{\vv}[1]{{\boldsymbol{#1}}}
\newcommand{\diff}{\mathrm{d}}
\newcommand{\E}{\mathbb{E}}
\def\bbb#1\eee{\begin{align}#1\end{align}}
\def\bb#1\ee{\begin{align*}#1\end{align*}}
\newtheorem{theorem}{Theorem}
\newtheorem{lemma}{Lemma}
\newtheorem{proposition}{Proposition}
\theoremstyle{definition}
\newtheorem{remark}{Remark}
\title{From optimal score matching to optimal sampling}
\author{Zehao Dou\footnote{Department of Statistics and Data Science, Yale University} \and Subhodh Kotekal\footnote{Department of Statistics, University of Chicago} \and Zhehao Xu\(^*\) \and Harrison H. Zhou\(^*\)}
\date{}
\begin{document}
    \maketitle

    \begin{abstract}
    The recent, impressive advances in algorithmic generation of high-fidelity image, audio, and video are largely due to great successes in score-based diffusion models. A key implementing step is score matching, that is, the estimation of the score function of the forward diffusion process from training data. As shown in earlier literature, the total variation distance between the law of a sample generated from the trained diffusion model and the ground truth distribution can be controlled by the score matching risk. 

Despite the widespread use of score-based diffusion models, basic theoretical questions concerning exact optimal statistical rates for score estimation and its application to density estimation remain open. We establish the sharp minimax rate of score estimation for smooth, compactly supported densities. Formally, given \(n\) i.i.d. samples from an unknown \(\alpha\)-H\"{o}lder density \(f\) supported on \([-1, 1]\), we prove the minimax rate of estimating the score function of the diffused distribution \(f * \mathcal{N}(0, t)\) with respect to the score matching loss is \(\frac{1}{nt^2} \wedge \frac{1}{nt^{3/2}} \wedge (t^{\alpha-1} + n^{-2(\alpha-1)/(2\alpha+1)})\) for all \(\alpha > 0\) and \(t \ge 0\). As a consequence, it is shown the law \(\hat{f}\) of a sample generated from the diffusion model achieves the sharp minimax rate \(\bE(\dTV(\hat{f}, f)^2) \lesssim n^{-2\alpha/(2\alpha+1)}\) for all \(\alpha > 0\) without any extraneous logarithmic terms which are prevalent in the literature, and without the need for early stopping which has been required for all existing procedures to the best of our knowledge. 

Optimal score estimation is shown to be achieved by kernel-based estimation, and the minimax lower bound is proved through an implementation of Fano's method with a subtle, nonstandard construction to facilitate an application of the heat equation, which may be of independent interest.  

    \end{abstract}
    
    \section{Introduction}
In the recent machine learning literature, score-based generative algorithms have been empirically shown to produce state-of-the-art samples from highly structured, complex probability distributions across a range of different modalities including natural images \cite{song_generative_2019,song_improved_2020,ho_denoising_2020,dhariwal_diffusion_2021} and audio \cite{kong_diffwave_2021,chen_wavegrad_2020}. At a high level, the core idea of this type of generative modeling is to incrementally corrupt clean training data through some noisy mechanism with a gradually increasing noise scale at each forward step, and to progressively estimate a denoiser at each backward step. Ultimately the fitted denoiser is deployed to convert a fresh draw of pure noise into a new sample of an approximate distribution of the target probability distribution. 

A popular choice of noise mechanism is a diffusion process (as in the paradigm of denoising diffusion probabilistic modeling \cite{sohl-dickstein_deep_2015,ho_denoising_2020}), in which case the optimal denoiser is obtained by solving the stochastic differential equation corresponding to the reverse diffusion process. The drift term of this reverse diffusion involves the \emph{score function} (that is, the gradient of the logarithm of the probability density function) of the marginal distribution of the diffused data at each noise increment. A particularly popular approach for estimating the score function is \emph{score matching} \cite{hyvarinen_estimation_2005,vincent_connection_2011,song_generative_2019}. Though diffusion models trained via score matching have seen much empirical success as attested by the literature, theoretical optimalities are scarce. In this paper, we establish sharp minimax rates of score estimation with respect to the score matching loss for \(\alpha\)-H\"{o}lder densities. As a consequence, we show diffusion models can achieve, without extraneous logarithmic factors and without early stopping, the sharp minimax rate of density estimation and are thus, in this sense, optimal for sampling. 

\subsection{Background on diffusion models}
Consider a probability density function \(f\) on \(\R\) representing the target distribution from which we wish to generate a sample. In diffusion modeling, a forward diffusion process is posited as the solution to a particular stochastic differential equation (SDE). Consider a generic SDE for the forward process with an unknown initialization with density \(f\),  
\begin{equation}\label{eqn:forward_process}
    \rd X_t = g(X_t, t) \, \rd t + \sigma(t) \, \rd W_t, \;\;\; X_0 \sim f
\end{equation}
where \(g : \R \times [0, \infty) \to \R\) is the drift, \(\sigma : [0, \infty) \to [0, \infty)\) is the diffusion coefficient, and \(\{W_t\}_{t \ge 0}\) is the standard Wiener process. The drift and diffusion coefficient are chosen by the practitioner. For fixed \(T > 0\), it is known \cite{anderson_reverse-time_1982} that, under some conditions, the reverse process \(\tilde{Y}_t := X_{T-t}\) is a solution to the following SDE for \(0 \le t \le T\),  
\begin{equation}\label{eqn:backward_process}
    \rd\tilde{Y}_t = \left(-g(\tilde{Y}_t, T-t) + \sigma^2(T-t) s(\tilde{Y}_t, T-t)\right) \, \rd t + \sigma(T-t) \, \rd W_t, \;\;\; \tilde{Y}_0 \sim p(\cdot, T)
\end{equation}
where \(p(\cdot, t)\) denotes the density of the marginal distribution of \(X_t\), and the function
\begin{equation}\label{def:score}
    s(x, t) := \frac{\partial}{\partial x}\log p(x, t)
\end{equation}
is referred to as the \emph{score function} of the density \(p(\cdot, t)\). The methodological utility of diffusion modeling is that, for various choices of \(g\) and \(\sigma\) in (\ref{eqn:forward_process}), the distribution \(p(\cdot, T)\) for large \(T\) is close to a known distribution regardless of the unknown density \(f\). For example, in an Ornstein-Uhlenbeck process with the choice \(g(x, t) = -x\) and \(\sigma(t) = \sqrt{2}\) (also referred to as a \emph{variance preserving SDE} \cite{song_score-based_2020}), the distribution \(p(\cdot, t)\) converges to \(\mathcal{N}(0, 1)\) as \(t \to \infty\). Therefore, to produce a sample that is approximately distributed according to an unknown \(f\), the practitioner can generate a fresh draw from \(\mathcal{N}(0, 1)\) for initialization and run the backwards process (\ref{eqn:backward_process}) with an estimator \(\hat{s}(x, t)\) of the score function constructed from existing training data. An alternative to the variance preserving SDE is a \emph{variance exploding SDE} \cite{song_score-based_2020} which corresponds to the choice \(g \equiv 0\). Thus, the statistical task is to estimate the score function \(s(\cdot, t)\) at every time \(t > 0\) given access to i.i.d. training data drawn from \(f\). This is typically done with respect to the score matching loss \cite{song_generative_2019,ho_denoising_2020,sohl-dickstein_deep_2015,hyvarinen_estimation_2005,vincent_connection_2011},  
\begin{equation}\label{def:score_matching_loss}
    \int_{-\infty}^{\infty} \left|\hat{s}(x, t) - s(x, t)\right|^2 \, p(x, t)\, \rd x.
\end{equation}
The score matching loss is not only natural but also implies an error bound in total variation distance for the density estimator associated to diffusion modelling. For \(T > 0\), let \(\{\hat{Y}_t\}_{t \in [0, T]}\) denote the process solving the reverse SDE (\ref{eqn:backward_process}) with \(\hat{s}\) plugged in to the drift term and with some known initialization \(\pi_0\) instead of \(p(\cdot, T)\). Let \(\hat{X}_0 = \hat{Y}_T\). As argued in \cite{oko2023diffusion,chen_sampling_2023}, the error in total variation distance can be bounded, under some conditions, via an application of Girsanov's theorem,
\begin{equation}\label{eqn:dtv_score_bound}
    \dTV(\hat{X}_0, X_0)^2 \lesssim \dTV\left(p(\cdot, T), \pi_0\right)^2 + \int_{0}^{T} \int_{-\infty}^{\infty} |\hat{s}(x, t) - s(x, t)|^2 \, p(x, t) \, \rd x \, \rd t.  
\end{equation}
Since we can directly obtain samples with law \(\hat{X}_0\), we thus naturally speak of optimal sampling when referring to optimal density estimation via diffusion modeling. The score matching loss is also relevant to convergence results for sampling algorithms involving discretizations of various dynamics \cite{chen_sampling_2023,lee_convergence_2022,block_generative_2022,lee_convergence_2023}. For a detailed review of diffusion models and recent advances, we refer the reader to \cite{chen_overview_2024,tang_score-based_2024,yang_diffusion_2023}.

\subsection{Related work}
Given the theoretical focus of this article, the discussion in this section is limited to the recent progress in formal, provable guarantees for score estimation. The results of the early work \cite{block_generative_2022} on score matching are applicable to the variance exploding SDE corresponding to \(g \equiv 0\) and \(\sigma \equiv 1\) in (\ref{eqn:forward_process}). Block et al. \cite{block_generative_2022} invoke a relationship between score matching and denoising auto-encoders due to \cite{vincent_connection_2011} which equivalently casts the score matching problem as a supervised regression (i.e. denoising) problem. This correspondence can also be viewed as a consequence of Tweedie's well-known formula \cite{efron_tweedies_2011}. With this relationship in hand, Block et al. \cite{block_generative_2022} consider empirical risk minimization using a large, suitably expressive function class \(\mathcal{F}\), such as a class of neural networks. Assuming the score function \(\frac{\rd}{\rd x} \log f\) of the data generating distribution is Lipschitz (and satisfies some additional regularity conditions), they obtain the upper bound \( \frac{\poly(\log n)}{t^2} \left(\mathscr{R}^2_n(\mathcal{F}) + \frac{1}{n}\right)\) with high probability on the score matching loss (\ref{def:score_matching_loss}), provided \(t\) is bounded. Here, \(\mathscr{R}_n(\mathcal{F})\) denotes the Rademacher complexity of the function class \(\mathcal{F}\). 

Though the work of Block et al. \cite{block_generative_2022} is very relevant to practice as it concerns the empirical risk minimization paradigm, their result suffers from some drawbacks. The appearance of the Rademacher complexity can misleadingly suggest slow rates; it may imply a nonparametric rate (e.g. if \(\mathcal{F}\) is chosen to be a H\"{o}lder ball) even though the target, diffused density \(p(\cdot, t)\) might be an analytic function and should be, in principle, estimable at nearly parametric rates. Similarly, it is quite undesirable to assume \(f\) has a Lipschitz score. Already as \(t\) increases, the forward process (\ref{eqn:forward_process}) increasingly smooths \(f\) to yield the marginal density \(p(\cdot, t)\), and understanding this regularization effect is a major goal.

If \(f\) is assumed to have a Lipschitz score function, then one can look to directly estimate the score \(\frac{\rd}{\rd x} \log f\) of the data generating distribution. Wibisono et al. \cite{wibisono_optimal_2024}, pointing out the connection between the score matching loss and regret in empirical Bayes, consider an estimator inspired by smoothing in empirical Bayes methodology. They show, assuming \(f\) is subgaussian in addition to having a Lipschitz score, the minimax rate is given (up to a logarithmic factor) by \(n^{-2/5}\). As the authors of \cite{wibisono_optimal_2024} note, the rate \(n^{-2/5}\) is quite intuitive as it matches the rate \(n^{-\frac{2(\alpha - r)}{2\alpha+1}}\) of estimating the \(r\)th derivative of an \(\alpha\)-smooth density with \(r = 1\) and \(\alpha = 2\) as shown in \cite{stone_optimal_1982,stone_optimal_1983}. As a consequence of their result, Wibisono et al. \cite{wibisono_optimal_2024} obtain the following error bound for score estimation in the Ornstein-Uhlenbeck diffusion model with \(g(x, t) = -x\) and \(\sigma \equiv \sqrt{2}\) in (\ref{eqn:forward_process}),
\begin{equation}\label{eqn:wibisono_score}
    \int_{-\infty}^{\infty} |\hat{s}(x, t) - s(x, t)|^2 \, p(x, t) \, \rd x \lesssim \frac{\poly(\log n, \log t^{-1})}{nt^{3/2}}, 
\end{equation}
which holds with high probability when \(\frac{\poly(\log n)}{n^2} \lesssim t \lesssim 1\). Notably, a near parametric rate (with respect to \(n\)) of convergence can be achieved in comparison to the result of \cite{block_generative_2022}. However, no matching lower bound (even up to logarithmic factors) is offered in \cite{wibisono_optimal_2024}. Wibisono et al. also obtain results in the case where \(f\) admits a \(\beta\)-H\"{o}lder score function with \(\beta \le 1\). 

The work \cite{zhang_minimax_2024} (concurrent to \cite{wibisono_optimal_2024}) dispenses with the assumption of \cite{wibisono_optimal_2024} that \(f\) has a Lipschitz score function. Noting \(s(x, t) = \frac{\frac{\partial}{\partial x} p(x, t)}{p(x, t)}\), they estimate the numerator and denominator separately by kernel density estimators. However, there is a possibility with kernel-based density estimators of taking the value zero at some points, which is problematic when plugging into the denominator to form a score estimator. Consequently, Zhang et al. \cite{zhang_minimax_2024} introduce a truncation parameter and trivially estimate the score by zero at any point where  the kernel density estimator is below the truncation level; Zhang et al. \cite{zhang_minimax_2024} obtain an error bound similar to that of \cite{wibisono_optimal_2024}. Additionally, they consider the case \(1 \lesssim t \lesssim \poly(n)\) and obtain an upper bound of order \(\frac{\poly(\log n)}{nt}\). However, they also do not offer any minimax lower bound, let alone a matching one. Moreover, under the assumption \(f\) is an \(\alpha\)-Sobolev density, their score estimation result implies the nearly optimal upper bound \(\dTV(\hat{f}, f)^2 \lesssim \poly(\log n) n^{-\frac{2\alpha}{2\alpha+1}}\), but only when \(\alpha \le 2\) (see Theorem 3.7 in \cite{zhang_minimax_2024}). 

However, they rely on a popular early stopping trick in which the the reverse process \(\tilde{X}_t\) from (\ref{eqn:backward_process}) is sampled only up until \(T - \underline{T}\) instead of \(T\), for some choice of early stopping time \(\underline{T}\). Consequently, the lower limit of integration \(t = 0\) in (\ref{eqn:dtv_score_bound}) is replaced by the early stopping time \(t = \underline{T}\). To obtain the near optimal upper bound, Zhang et al. \cite{zhang_minimax_2024} must select \(\underline{T} \asymp n^{-\frac{2}{2\alpha+1}}\). In our view, early stopping is quite ad hoc and unprincipled; the cynic might put forth the charge that the choice of \(\underline{T}\) made by \cite{zhang_minimax_2024} was entirely driven by whatever value ensures \(\int_{\underline{T}}^{1} \frac{1}{nt^{3/2}} \,\rd t\lesssim n^{-\frac{2\alpha}{2\alpha+1}}\) so that (near) optimality can be claimed from (\ref{eqn:dtv_score_bound}). To satisfy the cynic, a quality affirmative answer to whether score-based diffusion models can be optimal should thus avoid early stopping, or at least avoid ad hoc choices. Furthermore, early stopping is typically adopted in the literature due to the suboptimality of existing score estimators for small \(t\); consequently, extraneous logarithmic terms appear in the resulting density estimation bounds. 

The impressive work \cite{oko2023diffusion} investigates score estimation by empirical risk minimization under the assumption \(f\) has its support contained in \([-1, 1]\), is bounded below by a universal constant on its support, is infinitely smooth at the support's boundary, and lies in a Besov space with smoothness index \(\alpha > 0\). Oko et al. \cite{oko2023diffusion} study a large class of deep neural networks \(\mathcal{F}\) and an Ornstein-Uhlenbeck diffusion model with \(g(x, t) = -\beta_t x\) and \(\sigma(t) = \sqrt{2\beta_t}\) in (\ref{eqn:forward_process}) for some variance schedule \(\left\{\beta_t\right\}_{t \ge 0}\). Without imposing additional conditions on the score function, they obtain the upper bound 
\begin{equation}\label{eqn:oko_score}
    \bE \left(\int_{-\infty}^{\infty} |\hat{s}(x, t) - s(x, t)|^2 \, p(x, t) \, \rd x\right) \lesssim \frac{\poly(\log n) \cdot n^{-\frac{2\alpha}{2\alpha+1}}}{t}.
\end{equation}
Oko et al. \cite{oko2023diffusion} also employ early stopping to obtain the bound \(\dTV(\hat{f}, f)^2 \lesssim \poly(\log n) \cdot n^{-\frac{2\alpha}{2\alpha+1}}\), but their choice \(\underline{T} = \poly(n^{-1})\) is more palatable as it is less ad hoc than the choice of \cite{zhang_minimax_2024}. However, no matching lower bound for (\ref{eqn:oko_score}) is obtained in \cite{oko2023diffusion}.

\subsection{Main contributions}
This article studies score estimation in the variance exploding SDE given by \(g \equiv 0\) and \(\sigma \equiv 1\) in (\ref{eqn:forward_process}), in which case the density of \(X_t\) is given by \(p(x, t) = (\varphi_t * f)(x)\) where \(\varphi_t(x) = \frac{1}{\sqrt{2\pi t}}e^{-\frac{x^2}{2t}}\) is the density of \(\mathcal{N}(0, t)\). The data generating density \(f\) is assumed to be supported on \([-1, 1]\) and, on its support, is assumed \(\alpha\)-H\"{o}lder and bounded below by a constant. To develop the minimax theory, the parameter space is formally defined as follows. As in \cite{tsybakov_introduction_2009}, for \(\alpha, L > 0\), define the H\"{o}lder class of probability density functions, 
\begin{align}
    \begin{split}\label{def:Holder}
    \mathcal{H}_\alpha(L) = &\left\{ f : [-1, 1] \to [0, \infty) : \int_{-1}^{1} f(x) \, \rd x = 1, f \text{ is continuous}, \right. \\
    &\;\;\;\left. f \text{ is } \lfloor \alpha \rfloor \text{-times differentiable on } (-1, 1)\text{,} \right. \\
    &\;\;\; \left. \text{and } \left|f^{\lfloor \alpha\rfloor}(x) - f^{\lfloor \alpha \rfloor}(y)\right| \le L |x-y|^{\alpha - \lfloor \alpha \rfloor} \text{ for all } x, y \in (-1, 1) \right\}.
    \end{split}
\end{align}
This article focuses on the case where \(L > 0\) is some large universal constant, and the notational dependence on \(L\) will be frequently suppressed by writing \(\mathcal{H}_\alpha\). Define the parameter space
\begin{equation}\label{def:param}
    \mathcal{F}_\alpha := \left\{ f : \R \to [0, \infty) : \supp(f) \subset [-1, 1], f|_{[-1, 1]}\in \mathcal{H}_\alpha, \text{ and } c_d \le f(x) \le C_d \text{ for } |x| \le 1 \right\}    
\end{equation}
where \(C_d, c_d > 0\) are some universal constants. Our main contribution is to establish the sharp minimax rate of score estimation,
\begin{equation}\label{eqn:rate}
    \inf_{\hat{s}} \sup_{f \in \mathcal{F}_\alpha} \bE \left(\int_{-\infty}^{\infty} |\hat{s}(x, t) - s(x, t)|^2 \, p(x, t) \, \rd x\right) \asymp \frac{1}{nt^2} \wedge \frac{1}{nt^{3/2}} \wedge \left(n^{-\frac{2(\alpha-1)}{2\alpha+1}} + t^{\alpha - 1}\right). 
\end{equation}
A number of remarks are in order. 

\begin{remark}
    The rate (\ref{eqn:rate}) exhibits three regimes. In the very high noise regime \(t \gtrsim 1\), the rate is of order \(\frac{1}{nt^2}\), which is faster than the upper bound \(\frac{1}{nt}\) obtained by \cite{zhang_minimax_2024}. In the high noise regime \(n^{-\frac{2}{2\alpha+1}} \lesssim t \lesssim 1\), the rate is of order \(\frac{1}{nt^{3/2}}\). This result implies that the upper bound (\ref{eqn:wibisono_score}) obtained by \cite{wibisono_optimal_2024,zhang_minimax_2024} is nearly sharp. A faster rate \(n^{-\frac{2(\alpha-1)}{2\alpha+1}} + t^{\alpha-1}\) than (\ref{eqn:oko_score}) obtained by \cite{oko2023diffusion} can be achieved in the low noise regime \(t \lesssim n^{-\frac{2}{2\alpha+1}}\) by exploiting the smoothness of \(f\).
\end{remark}

\begin{remark}
    Our result applies for all \(\alpha > 0\), including the case \(\alpha < 1\) when the underlying density \(f\) need not even be differentiable and thus need not have a score function. The diffusion (\ref{eqn:forward_process}) bestows regularity on \(p(x, t) = (\varphi_t * f)(x)\), which is infinitely differentiable regardless of the number, or absence, of derivatives of \(f\). When \(\alpha < 1\), the minimax rate (\ref{eqn:rate}) reduces to \(\frac{1}{nt^2} \wedge \frac{1}{nt^{3/2}} \wedge t^{\alpha - 1}\), which specializes to \(t^{\alpha - 1}\) in the low noise regime \(t \lesssim n^{-\frac{2}{2\alpha+1}}\) and is achieved by a trivial estimator which ignores the available data. However, right when the H\"{o}lder exponent crosses the critical threshold \(\alpha > 1\), the rate (\ref{eqn:rate}) in the low noise regime specializes to \(n^{-\frac{2(\alpha-1)}{2\alpha+1}}\), implying consistent estimation is immediately possible once \(f\) admits a score function. 
\end{remark}

\begin{remark}\label{remark:boundary}
    From (\ref{eqn:dtv_score_bound}), the diffusion model estimator \(\hat{f}\) furnished from our score estimator achieves, provided the initialization \(\pi_0\) satisfies \(\dTV(p(\cdot, T), \pi_0)^2 \lesssim n^{-\frac{2\alpha}{2\alpha+1}}\) and the choice \(T \gtrsim 1\) is made,
    \begin{align*}
        \bE \left(\dTV(\hat{f}, f)^2\right) &\lesssim \int_{0}^{n^{-\frac{2}{2\alpha+1}}} \left(n^{-\frac{2(\alpha - 1)}{2\alpha+1}} + t^{\alpha - 1}\right) \, \rd t +  \int_{n^{-\frac{2}{2\alpha+1}}}^{T \wedge 1} \frac{1}{nt^{3/2}} \, \rd t + \int_{T \wedge 1}^{T \vee 1} \frac{1}{nt^2} \, \rd t\asymp  n^{-\frac{2\alpha}{2\alpha+1}}. 
    \end{align*}
    Therefore, \(\hat{f}\) achieves the sharp minimax rate of density estimation \emph{without} any extraneous logarithmic terms in contrast to previous work. Furthermore, early stopping as in \cite{oko2023diffusion,zhang_minimax_2024,wibisono_optimal_2024} is not necessary for diffusion models to be optimal. Note further that one can take \(T\) arbitrarily large since \(\int_{1}^{T} \frac{1}{nt^2} \, \rd t\lesssim \frac{1}{n}\). To the best of our knowledge, the existing state-of-the-art bound in the literature for unbounded \(t\), namely \(\frac{\poly(\log n)}{nt}\) due to \cite{zhang_minimax_2024}, does not allow us to take arbitrarily large \(T\) since \(\int_{1}^{T} \frac{1}{nt} \, \rd t = \frac{\log T}{n}\). In particular, the existing bound requires the constraint \(T \lesssim e^{Cn^{\frac{2}{2\alpha+1}}}\) in order for \(\hat{f}\) to achieve the sharp rate.
\end{remark}

\begin{remark}
    A density \(f \in \mathcal{F}_\alpha\) is discontinuous at the boundary of \([-1, 1]\) since \(f(1), f(-1) \ge c_d\) and \(f(x) = 0\) for \(|x| > 1\). Specifically, \(f\) is not assumed to be smooth at the boundary like \cite{oko2023diffusion}, who require a high order of smoothness, higher than \(\alpha\), at the boundary. Consequently, \(p(x, t)\) exhibits a comparative lack of regularity near the boundary compared to the internal region of \([-1, 1]\). Thus, the expert reader might expect this to be reflected in the minimax rate (\ref{eqn:rate}) by the inclusion of some additional term. However, as discussed in Section \ref{section:proof_sketch_upper_bound}, the score function is conveniently self-normalized, \(s(x, t) = \frac{\frac{\partial}{\partial x} p(x, t)}{p(x, t)}\), and it turns out this property prevents the lack of regularity at the boundary to cause a slowdown in the estimation rate. In contrast, the boundary can be shown to negatively impact the estimation rate of the unnormalized derivative \(\frac{\partial}{\partial x} p(x, t)\). 
\end{remark}

    \section{Methodology}\label{section:upper_bound}
    To describe our methodology, first note the score function can be written as $s(x,t)=\frac{\psi(x,t)}{p(x,t)}$ where $\psi(x,t) = \frac{\partial}{\partial x} p(x,t)$ is the spatial derivative of the density. Note \(p(x, t) = (\varphi_t * f)(x)\) and so \(\psi(x, t) = (\varphi_t' * f)(x)\). We employ different methodology depending on the magnitude of the noise level \(t\). Sections \ref{section:methodology_very_high_noise}, \ref{section:methodology_high_noise}, and \ref{section:methodology_low_noise} address score estimation in the very high noise, high noise, and low noise regimes respectively. Section \ref{section:methodology_density_estimation} describes our estimator for the data-generating distribution \(f\) constructed from our score estimators. Throughout this section, denote the available data as \(\mu_1,...,\mu_n \overset{iid}{\sim} f\).

\subsection{Very high noise regime}\label{section:methodology_very_high_noise}
In this section, we define a score estimator to achieve the rate \(\frac{1}{nt^2}\) which dominates in (\ref{eqn:rate}) in the very high noise regime \(t \gtrsim 1\). A natural idea is to define estimators for \(\psi\) and \(p\) separately, and then plug them into the definition of \(s\) to obtain an estimator \(\hat{s}\). Since \(p\) appears in the denominator, the loss can be too large if the estimator of \(p\) is too close to \(0\), which needs to be avoided in the score estimation.

To overcome this technical obstacle, a regularization trick is widely applied in various existing works, namely forcing the denominator to always be larger than some regularization level \(\varepsilon\), which is typically chosen as $\varepsilon = n^{-2}$ (e.g. \cite{wibisono_optimal_2024,jiang_general_2009}). Through this regularization, Wibisono et al. \cite{wibisono_optimal_2024} further control the score matching risk by the Hellinger distance but incur additional $\poly(\log n)$ factors. To avoid these possibly unnecessary logarithmic terms, we employ a regularizer which is a function of $x$ instead of one which is constant in $x$. Specifically, we use \(\varepsilon(x, t) := c_d \int_{-1}^{1} \varphi_t(x-\mu) \, \rd \mu\). 

We now define the density estimator \(\hat{p}(x, t) := \varepsilon(x, t) \vee \left(\frac{1}{n} \sum_{j=1}^{n} \varphi_t(x-\mu_j)\right)\). Next, observe 
\begin{equation*}
    \psi(x, t) = (\varphi_t' * f)(x) = \int_{-1}^{1} -\frac{x-\mu}{t} \varphi_t(x-\mu) f(\mu) \, \rd \mu = - \frac{x}{t} p(x, t) + \frac{1}{t} \int_{-1}^{1} \mu\varphi_t(x-\mu) f(\mu) \, \rd \mu. 
\end{equation*}
Therefore, a natural estimator for \(\psi\) is \(\hat{\psi}(x, t) = -\frac{x}{t} \hat{p}(x, t) + \frac{1}{nt} \sum_{j=1}^{n} \mu_j \varphi_t(x-\mu_j)\). We employ the score estimator
\begin{equation}\label{def:s_hat_large_t}
    \hat{s}(x, t) = \frac{\hat{\psi}(x, t)}{\hat{p}(x, t)}
\end{equation}
in the very high noise regime.

\subsection{High noise regime}\label{section:methodology_high_noise}
In the high noise regime \(n^{-\frac{2}{2\alpha+1}} \lesssim t \lesssim 1\), a different idea for constructing a score estimator is to consider an unbiased estimator of $\psi(x,t)$. Unbiased estimators $\hat{\psi}(x,t)$ and $\hat{p}(x,t)$ can be obtained by replacing $f$ in \(\psi\) and \(p\) with the empirical distribution $\frac1n\sum_{j=1}^n \delta_{\mu_j}$, namely \(\hat{\psi}(x, t) = \frac1n \sum_{j=1}^n \phi_t'(x-\mu_j)\), and \(\hat{p}(x, t) = \frac1n \sum_{j=1}^n \phi_t(x-\mu_j)\). Again, regularization is needed, so we define the score estimator 
\begin{equation}\label{def:shat_high}
    \hat{s}(x,t) = \frac{\hat{\psi}(x,t)}{\hat{p}(x,t)\vee \varepsilon(x,t)}
\end{equation}
with the choice $\varepsilon(x,t) = c_d\int_{-1}^1 \phi_t(x-\mu) \,\rd \mu $. Though this estimator turns out to achieve the sharp rate \(\frac{1}{nt^{3/2}}\) in the high noise regime as stated in Section \ref{section:main_result_upper_bound}, this estimation strategy is not completely satisfactory since the bound diverges as the noise level $t\rightarrow 0$. A naive application of (\ref{eqn:dtv_score_bound}), even with early stopping incorporated, will yield a suboptimal convergence rate instead of the sharp \(n^{-\frac{2\alpha}{2\alpha+1}}\) scaling without any logarithmic terms. Indeed, this is the essence of the shared shortcomings of existing works \cite{oko2023diffusion,block_generative_2022,wibisono_optimal_2024,zhang_minimax_2024}. A different strategy must be employed for the low noise regime.

\subsection{Low noise regime}\label{section:methodology_low_noise}
In the low noise regime \(t \lesssim n^{-\frac{2}{2\alpha+1}}\), the optimal rate when \(\alpha \ge 1\) is \(n^{-\frac{2(\alpha-1)}{2\alpha+1}}\) as seen in (\ref{eqn:rate}). To achieve this rate, we develop a kernel-based score estimator. As before, the derivative \(\psi\) and the density \(p\) are separately estimated, but now the smoothness of \(f\) is explicitly exploited by way of kernel smoothing. Define the estimator

\begin{equation}\label{def:kde}
\hat{f}(x) = \frac{1}{nh} \sum_{j=1}^n \left[K_{\lfloor\alpha\rfloor}((\mu_j-x)/h)\cdot \mathbbm {1}_{\{x<0\}} + K_{\lfloor\alpha\rfloor}((x-\mu_j)/h)\cdot \mathbbm 1_{\{x\ge0\}}\right]
\end{equation}
where $K_{\lfloor\alpha\rfloor}$ is an $\lfloor\alpha\rfloor$-order kernel \cite{li2005convergence}. The score estimator is inspired by the direct plugin of $\hat{f}$, with some regularization applied. The denominator poses more of a challenge in the low noise regime than it does in the high noise regime; consequently, we employ different estimation strategies depending on the location of \(x\). 

We consider three different regions: the internal part \(D_1 := \{x \in \R : |x| < 1 - \sqrt{C t\log(1/t)}\}\), the boundary part \(D_2 := \{x \in \R : 1 - \sqrt{C t\log(1/t)} \le |x| \le 1 + C\sqrt{t}\}\), and the external part \(D_3 := \{x \in \R : |x| > 1 + C\sqrt{t}\}\), where \(C > 0\) is a constant. When $x \in D_1$, we define
\begin{equation}\label{def:shat_D1}
\hat{s}(x,t) = \frac{\hat{\psi}(x,t)}{\hat{p}(x,t) \vee \varepsilon(x,t)}
\end{equation}
where we have the density derivative estimator \(\hat{\psi}(x,t) = \phi_t(x+1)\hat{f}(-1)-\phi_t(x-1)\hat{f}(1)+\int_{-1}^{1}\phi_t(x-\mu)\hat{f}'(\mu) \,\rd\mu\), the density estimator $\hat{p}(x,t)=\phi_t* \hat{f}(x)$, and the regularizer $\varepsilon(x,t)=c_d\int_{-1}^{1} \phi_t(x-\mu)\,\rd \mu$. When $x \in D_2 \cup D_3$, we define
\begin{equation}\label{def:shat_D2}
\hat{s}(x, t) = \frac{\hat{\psi}(x,t)}{\phi_t* \hat{f}(x)} \mathbbm{1}_{\Omega} + \frac{\phi_t' * u(x)}{\phi_t * u(x)} \mathbbm{1}_{\Omega^c}
\end{equation}
where the event \(\Omega := \left\{ \hat{f}(x) \ge \frac{c_d}{2} \text{ for all } x \in [-1, 1]\right\}\), $u(x) = \frac12 \mathbbm{1}_{\left\{|x|\le 1\right\}},$ and \(\hat{\psi}(x,t) = \phi_t(x+1)\hat{f}(-1)-\phi_t(x-1)\hat{f}(1)+\int_{-1}^{1} \phi_t(x-\mu)\hat{f}'(\mu) \,\rd\mu\). Note $\|\hat{f}-f\|_\infty < \frac{c_d}{2}$ with high probability \cite{gine2021mathematical}, and so it follows from \(f \ge c_d\) that \(\Omega^c\) is a low-probability event. This strategy is employed since the data-generating density \(f\) is discontinuous at the boundary of its support; consequently, \(\varphi_t * f\) exhibits comparatively less regularity on \(D_2\) compared to, say, \(D_1\). As hinted in Remark \ref{remark:boundary}, it turns out the self-normalizing property of the score function enables fast estimation at the boundary without imposing additional smoothness assumptions as done in previous work \cite{oko2023diffusion}. We construct (\ref{def:shat_D2}) to exploit this convenient property; see Section \ref{section:proof_sketch_upper_bound} for an explanation.

When $\alpha < 1$, the data generating density $f$ is not necessarily differentiable, which precludes the use of arguments from the \(\alpha \ge 1\) case which make use \(f\)'s differentiability. In this case, we can show optimal rates can be achieved by using a trivial density estimator with $\hat{f}(x)=\frac12 \mathbbm{1}_{\{|x|\le 1\}}$ and its corresponding score estimator with $\hat{s}(x,t) = \frac{\left(\phi_t'* \hat{f}\right)(x)}{\left(\phi_t* \hat{f}\right)(x)}$.

    \subsection{Distribution estimation}\label{section:methodology_density_estimation}

Having constructed a score estimator, we now develop an estimator of \(f\) by solving for the backward process in (\ref{eqn:backward_process}) corresponding to our choice of a variance exploding forward process in (\ref{eqn:forward_process}). Recall, we consider the forward process $\vv X = \{X_t: t \in [0,T]\}$ which solves
\begin{align}
\label{eq:forward}
\diff X_t = \diff W_t, ~~ X_0 \sim f.
\end{align}
Denote its reverse process by $\vv Y = \{Y_t: t \in [0,T]\}$ which solves (\ref{eqn:backward_process}), that is,
\begin{align}
\label{eq:backward}
\diff Y_t = s(Y_t,T-t)\, \diff t+\diff W_t,~~Y_0 \sim p(\cdot,T).
\end{align}
It can be shown that $X_t = X_0 + \sqrt{t}Z$ and $Y_t = X_{T-t}$, where $Z \sim N(0, 1)$ is independent of $X_0$, are solutions to the forward and reverse SDEs.

Let \(T \asymp n\). Define the process \(\widehat{\vv{Y}} = \{\widehat{Y}_t : t \in [0,T]\}\), which satisfies
\begin{align}
\label{eq:hat_forward}
\diff \widehat{Y}_t = \widehat{s}(\widehat{Y}_t, T-t) \diff t + \diff W_t, \quad \widehat{Y}_0 \sim \mathcal{N}(0,T).
\end{align}
Our estimator for \(f\) (i.e., the distribution of \(X_0\)) is given by the law of \(\widehat{X}_0 := \widehat{Y}_T \mathbbm{1}_{\left\{|\widehat{Y}_T| \leq 1\right\}}\). In our estimation approach, we propose using \(\widehat{s}\), as discussed in previous subsections of Section~\ref{section:upper_bound}, to approximate the true score function \(s\). Additionally, we approximate the distribution \(p(\cdot,T)\) with \(N(0,T)\). It follows by Lemma~\ref{lem:KLappro} that the Kullback-Leibler divergence between these two distributions is upper bounded by \(1/(2T)\).

\begin{algorithm}
\caption{}\label{algo:DE}
\begin{algorithmic}[1]
\State \textbf{Input:} Data $\{\mu_i\}_{i=1}^{n}$ drawn i.i.d. from \(f\).

\State Calculate $\widehat{s}(\cdot,t)$ as proposed in Sections~\ref{section:methodology_very_high_noise}, \ref{section:methodology_high_noise}, and \ref{section:methodology_low_noise} for all \(t \in [0, T]\) with \(T := n\).

\State Solve the SDE,
\begin{equation*}
\diff \widehat{Y}_t = \widehat s(\widehat{Y}_t,T-t) \, \diff t + \diff W_t
\end{equation*}
with initialization $\widehat{Y}_0 \sim \mathcal{N}(0,T)$.
\State \textbf{Output:} $\widehat{X}_0 := \widehat Y_T \mathbbm{1}_{\left\{|\widehat Y_T| \le 1\right\}}$.
\end{algorithmic}
\end{algorithm}

Although our final estimator may not possess a density, the difference between our distribution estimation and density estimation is not fundamental. For example, convolving the estimator with a small Gaussian noise yields a smooth density without significantly altering the original distribution. The error introduced by the convolution can be made negligible, dominated by the distribution estimation error, ensuring the estimator remains consistent with traditional density estimation.

    \section{Main results}
    Upper bounds for the minimax risk of score estimation are presented in Section \ref{section:main_result_upper_bound}. Section \ref{section:main_result_lower_bound} states matching lower bounds. Finally, Section \ref{section:main_result_density_estimation} states that our resulting estimator for the density \(f\) indeed achieves the optimal rate under total variation and Wasserstein-\(1\) losses. 
    \subsection{Score estimation upper bound}
    \label{section:main_result_upper_bound}
    For score estimation with respect to the score matching loss (\ref{def:score_matching_loss}), we establish the following minimax upper bound for any H\"{o}lder exponent $\alpha > 0$ and time $t \ge 0$. 
\begin{theorem}
\label{thm:score_upperbound}
Let $\alpha > 0 $. There exists a constant $C = C(\alpha, L)$ depending only on $\alpha$ and $L$ such that
\begin{equation*}
\inf_{\hat{s}} \sup_{f \in \mathcal{F}_\alpha} \bE\left(\int_{-\infty}^{\infty} \left|\hat{s}(x, t) - s(x, t)\right|^2 \, p(x, t) \,\rd x \right) \le C\left(\frac{1}{nt^2} \wedge \frac{1}{nt^{3/2}} \wedge \left(n^{-\frac{2(\alpha-1)}{2\alpha+1}} + t^{\alpha-1}\right)\right)
\end{equation*}
for all $t \ge 0$. 
\end{theorem}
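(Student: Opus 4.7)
The plan is to prove each term in the minimum separately via the corresponding estimator from Section~\ref{section:upper_bound}; since $t$ is a known parameter, one selects the appropriate estimator regime-by-regime, and the three resulting risk bounds combine into the stated minimum. Concretely, we must show: (a) the estimator (\ref{def:s_hat_large_t}) achieves risk $\lesssim 1/(nt^2)$; (b) the estimator (\ref{def:shat_high}) achieves risk $\lesssim 1/(nt^{3/2})$; and (c) the piecewise estimator defined by (\ref{def:shat_D1})--(\ref{def:shat_D2}), together with the trivial choice used when $\alpha < 1$, achieves risk $\lesssim n^{-2(\alpha-1)/(2\alpha+1)} + t^{\alpha-1}$.

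For parts (a) and (b), the common tool is the ratio decomposition
\[
\hat s - s \;=\; \frac{\hat\psi - \psi}{\hat p \vee \varepsilon} + \frac{\psi}{p} \cdot \frac{p - (\hat p \vee \varepsilon)}{\hat p \vee \varepsilon},
\]
together with the key pointwise inequality $\varepsilon(x,t) \lesssim p(x,t)$, which holds because $f \ge c_d$ on $[-1,1]$. This lets one replace the denominator by $\varepsilon$ (and hence by $p$ up to constants) on the event where the regularizer does not bind, while the complement is a low-probability event controlled via standard concentration inequalities. In regime (b), a direct variance computation gives $\bE|\hat\psi(x,t) - \psi(x,t)|^2 \le \frac{1}{n}\bE[\varphi_t'(x-\mu)^2]$; combined with $\|\varphi_t'\|_2^2 \asymp t^{-3/2}$ and a convolution identity relating $\varphi_t^2$ to $\varphi_{t/2}$, integrating against $p/\varepsilon^2$ yields the $1/(nt^{3/2})$ bound, and similarly for the $\hat p - p$ contribution. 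In regime (a), the estimator (\ref{def:s_hat_large_t}) exploits the identity $\psi(x,t) = -\tfrac{x}{t}p(x,t) + \tfrac{1}{t}\int \mu\varphi_t(x-\mu) f(\mu)\,\rd\mu$ so that the leading $-x/t$ contribution cancels between $\hat s$ and $s$; the remaining stochastic term $\tfrac{1}{nt}\sum \mu_j \varphi_t(x-\mu_j)$ has variance gaining an extra $1/t$ factor (because $|\mu_j|\le 1$), producing the improvement to $1/(nt^2)$ for $t \gtrsim 1$.

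For (c) with $\alpha \ge 1$, the kernel density estimator $\hat f$ of order $\lfloor\alpha\rfloor$ with bandwidth $h \asymp n^{-1/(2\alpha+1)}$ satisfies the standard integrated mean squared error bounds $\bE\|\hat f - f\|_{L^2}^2 \lesssim n^{-2\alpha/(2\alpha+1)}$ and $\bE\|\hat f' - f'\|_{L^2}^2 \lesssim n^{-2(\alpha-1)/(2\alpha+1)}$. Substituting $\hat f, \hat f'$ into the integration-by-parts identity $\psi(x,t) = \varphi_t(x+1) f(-1) - \varphi_t(x-1) f(1) + (\varphi_t * f')(x)$ that defines $\hat\psi$, the convolution $\varphi_t * (\hat f' - f')$ inherits (via Young's inequality, since $\|\varphi_t\|_1 = 1$) an $L^2$ bound at the target rate, and the boundary evaluations $\hat f(\pm 1) - f(\pm 1)$ have pointwise MSE of order $n^{-2\alpha/(2\alpha+1)}$. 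One then splits $\R$ into the three regions $D_1, D_2, D_3$ from Section~\ref{section:methodology_low_noise}: on $D_3$ the density $p(x,t)$ has Gaussian tails and its contribution is negligible; on $D_1$ the regularizer $\varepsilon$ is uniformly bounded below (the mass of $\varphi_t(x\pm 1)$ is tiny on $D_1$ by Gaussian decay) and the risk inherits the $n^{-2(\alpha-1)/(2\alpha+1)}$ rate directly. For $\alpha < 1$, $f$ need not be differentiable; one uses the trivial choice $\hat f \equiv \tfrac{1}{2}\mathbbm{1}_{[-1,1]}$ and shows by H\"older continuity of $f$ and the scaling $\int |\varphi_t'(y)|\, |y|^\alpha\,\rd y \asymp t^{(\alpha-1)/2}$ that $|\hat s - s|^2$ contributes at most $O(t^{\alpha-1})$ to the score matching risk.

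The main obstacle is the boundary region $D_2$ in regime (c): because $f$ is only H\"older on the interior with possible jump discontinuities at $\pm 1$, the density $p(x,t)$ has limited regularity on $D_2$, and a naive bias analysis of $\hat\psi, \hat p$ fails to match the target rate (absent the extra boundary smoothness imposed in \cite{oko2023diffusion}). The piecewise definition of $\hat s$ on $D_2$ in (\ref{def:shat_D2}) addresses this by substituting the score $(\varphi_t' * u)/(\varphi_t * u)$ of the uniform density $u = \tfrac{1}{2}\mathbbm{1}_{[-1,1]}$ on the low-probability event $\Omega^c = \{\|\hat f - f\|_\infty \ge c_d/2\}$, which keeps the risk bounded when the data-driven denominator misbehaves; on $\Omega$, the self-normalizing structure of the score is used. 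Writing $(\varphi_t'*f)/(\varphi_t*f) - (\varphi_t'*u)/(\varphi_t*u) = \partial_x \log\bigl((\varphi_t*f)/(\varphi_t*u)\bigr)$ and Taylor-expanding $f$ near the boundary shows that cancellation between numerator and denominator in this log-derivative keeps the discrepancy at the target rate $n^{-2(\alpha-1)/(2\alpha+1)}$ uniformly over $D_2$. Combining the $D_1, D_2, D_3$ contributions and the $\alpha < 1$ case yields (c), and taking the minimum of the three bounds across $t$ completes the proof.
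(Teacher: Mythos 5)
Your high-level decomposition matches the paper: since $t$ is a known parameter, the proof reduces to showing that each of the three estimators achieves its corresponding rate on its regime, and the minimum is obtained by selecting the estimator regime-by-regime (this is exactly how Theorems~\ref{thm:upperbound_large_t}, \ref{thm:up-high-1}, \ref{thm:up-1}, and \ref{thm:up-2} combine). The parts (a) and (b), together with the internal region $D_1$, the external region $D_3$, and the $\alpha < 1$ data-free case of part (c), are all outlined consistently with the paper's arguments: the ratio decomposition you write is algebraically equivalent to Lemma~\ref{lemma:score-split-1}, the variance computations for $\hat{\psi}$ and $\hat{p}$ are the right ones, and the identity $\psi(x,t) = -\frac{x}{t}p(x,t) + \frac{1}{t}\int \mu\,\varphi_t(x-\mu)f(\mu)\,\rd\mu$ is indeed the cancellation that buys the extra $1/t$ factor in regime (a).

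The genuine gap is in your treatment of the boundary region $D_2$ in regime (c) on the high-probability event $\Omega$. You write the cancellation as
\[
\frac{(\varphi_t' * f)(x)}{(\varphi_t * f)(x)} - \frac{(\varphi_t' * u)(x)}{(\varphi_t * u)(x)} = \frac{\partial}{\partial x}\log\frac{(\varphi_t * f)(x)}{(\varphi_t * u)(x)},
\]
but this is the difference between the \emph{true} score and the score of the uniform density --- a purely deterministic quantity, independent of $n$ --- so it cannot equal $n^{-2(\alpha-1)/(2\alpha+1)}$. That expression is instead the one used in the proof of Theorem~\ref{thm:up-2} for the $\alpha<1$ data-free estimator, where it is shown to be $O(t^{\alpha-1})$. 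The quantity you actually need to control on $\Omega$ is the estimation error
\[
\hat{s}(x,t) - s(x,t) = \frac{(\varphi_t' * \hat{f}_n)(x)\,(\varphi_t * f)(x) - (\varphi_t * \hat{f}_n)(x)\,(\varphi_t' * f)(x)}{(\varphi_t * \hat{f}_n)(x)\,(\varphi_t * f)(x)},
\]
and the mechanism is different from what you describe: after integrating by parts so the numerator splits into boundary terms $J_1, J_2$ at $\pm1$ and an interior term $J_3$, one Taylor expands both $\varphi_t * f$ and $\varphi_t * \hat{f}_n$ around $\mu = 1$ inside the dangerous boundary term $J_1$, and the zeroth-order contribution is $f(1)\hat{f}_n(1) - \hat{f}_n(1)f(1) = 0$. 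This exact algebraic cancellation between the \emph{kernel estimate} and the \emph{true density} (not between $f$ and $u$) is what saves the rate: otherwise $\bE(\hat{f}_n(1)-f(1))^2 \asymp n^{-2\alpha/(2\alpha+1)}$ multiplied by $\varphi_t(x-1)^2 \asymp t^{-1}$ and integrated over $|D_2|\asymp\sqrt{t\log(1/t)}$ would exceed the target. The $k\ge 1$ Taylor terms then contribute at the claimed $n^{-2(\alpha-1)/(2\alpha+1)}$ rate. Without this particular cancellation identified, the $D_2$ analysis in your proposal does not close.
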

Theorem \ref{thm:score_upperbound} relies on employing the estimation methodology outlined in Section \ref{section:upper_bound}, and is a direct consequence of Theorems \ref{thm:upperbound_large_t}, \ref{thm:up-high-1}, \ref{thm:up-1}, and \ref{thm:up-2}. 

\subsubsection{Very high noise regime}
In the very high noise regime \(t \gtrsim 1\), Theorem \ref{thm:upperbound_large_t} establishes that the score estimator given by (\ref{def:s_hat_large_t}) achieves the rate \(\frac{1}{nt^2}\).
\begin{theorem}\label{thm:upperbound_large_t}
    Let \(c > 0\) and \(\hat{s}\) be given by (\ref{def:s_hat_large_t}). There exists a constant \(C\) depending only on \(c\) such that  
    \begin{equation*}
        \sup_{f \in \mathcal{F}_\alpha} \bE\left(\int_{-\infty}^{\infty} |\hat{s}(x, t) - s(x, t)|^2 \, p(x, t) \, \rd x\right) \le \frac{C}{nt^2}
    \end{equation*}
    for \(t \ge c\). 
\end{theorem}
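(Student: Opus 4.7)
The plan hinges on exploiting an algebraic cancellation built into the construction of $\hat{\psi}$. By design, $\hat{\psi}(x,t) = -(x/t)\hat{p}(x,t) + \hat{q}(x,t)$ where $\hat{q}(x,t) := \frac{1}{nt}\sum_{j=1}^{n} \mu_j \varphi_t(x-\mu_j)$, and analogously $\psi(x,t) = -(x/t)p(x,t) + q(x,t)$ with $q(x,t) := \frac{1}{t}\int \mu\varphi_t(x-\mu)f(\mu)\,\rd\mu$. The $-x/t$ terms cancel in the ratios, so
\[
\hat{s}(x,t) - s(x,t) = \frac{\hat{q}(x,t)}{\hat{p}(x,t)} - \frac{q(x,t)}{p(x,t)}.
\]
This cancellation is essential: without it, $|\hat{\psi}-\psi|$ would carry an $|x|/t$ factor which, after dividing by the deterministic lower bound $\hat{p}\ge\varepsilon$, would yield a non-integrable expression. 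Moreover, since $q/p$ is a weighted average of $\mu/t$ with $|\mu|\le 1$, one has the clean bound $|q/p|\le 1/t$.

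Next, write $\hat{s}-s = \frac{1}{\hat{p}}\bigl[(\hat{q}-q) - (q/p)(\hat{p}-p)\bigr]$ so that $|\hat{s}-s|^2 \le \frac{C}{\hat{p}^2}\bigl[(\hat{q}-q)^2 + (\hat{p}-p)^2/t^2\bigr]$. The regularizer construction forces $\hat{p}\ge\varepsilon$, and the assumption $c_d\le f\le C_d$ on $[-1,1]$ gives the two-sided sandwich $\varepsilon\le p \le (C_d/c_d)\varepsilon$. Together these yield $p/\hat{p}^2 \le C/p$, hence
\[
|\hat{s}-s|^2 \cdot p \le \frac{C}{p}\bigl[(\hat{q}-q)^2 + (\hat{p}-p)^2/t^2\bigr].
\]
Using the Gaussian semigroup identity $\varphi_t(y)^2 = \varphi_{t/2}(y)/(2\sqrt{\pi t})$, together with $|\hat{p}-p|\le|\tilde p-p|$ pointwise (where $\tilde p(x,t) = \frac{1}{n}\sum_j \varphi_t(x-\mu_j)$ is the unregularized KDE; this inequality is valid because $\varepsilon\le p$), I obtain the variance bounds $\bE(\hat{q}-q)^2 \le Cp(x,t/2)/(nt^2\sqrt{t})$ and $\bE(\hat{p}-p)^2 \le Cp(x,t/2)/(n\sqrt{t})$. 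Substituting and integrating,
\[
\bE\int |\hat{s}-s|^2\, p(x,t)\,\rd x \le \frac{C}{nt^2\sqrt{t}}\int \frac{p(x,t/2)}{p(x,t)}\,\rd x.
\]

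The main obstacle, and the step where the assumption $t\ge c$ is crucially used, is to establish the integral bound $\int p(x,t/2)/p(x,t)\,\rd x \le C\sqrt{t}$. Since $p(x,s)\asymp \int_{-1}^{1}\varphi_s(x-\mu)\,\rd\mu$ up to constants depending on $c_d,C_d$, the problem reduces to estimating a ratio of Gaussian window integrals at two noise levels. On the window $|x|\lesssim \sqrt{t}$ the ratio is $O(1)$ and contributes $O(\sqrt{t})$ after integration; for $|x|\gtrsim \sqrt{t}$, the lighter tails of $p(\cdot,t/2)$ relative to $p(\cdot,t)$ yield decay of the form $e^{-c x^2/t}$ in the ratio (via explicit comparison of $e^{-(x-1)^2/t}$ against $e^{-(x+1)^2/(2t)}$), which again integrates to $O(\sqrt{t})$. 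Combining, $\bE\int |\hat{s}-s|^2\, p\,\rd x \le C/(nt^2)$, as required. Note this integral estimate degrades in the small-$t$ regime where the multimodality of $f$ has not yet been smoothed out, which is precisely why the high- and low-noise regimes require the alternative methodologies of Sections~\ref{section:methodology_high_noise} and~\ref{section:methodology_low_noise}.
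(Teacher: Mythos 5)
Your proof is correct and rests on the same core decomposition as the paper's: the cancellation of the $-x/t$ terms in $\hat\psi$ and $\psi$ (so that $\hat\psi p - \psi\hat p = \hat q p - q\hat p$), the regularizer sandwich $\varepsilon \le p \le (C_d/c_d)\varepsilon$ to control the denominator, and the observation that regularization only helps the variance of $\hat p$. Where you depart from the paper is in the Gaussian bookkeeping, and your version is genuinely cleaner in two ways. First, you bound $|q/p| \le 1/t$ directly by recognizing $q/p$ as a posterior-mean-type weighted average of $\mu/t$ over $|\mu|\le 1$ (equivalently $q/p = t^{-1}\bE[\theta\mid X=x]$), whereas the paper bounds $|q|$ and $1/p$ separately by explicit exponentials, which is weaker as a bound on the ratio and forces the subsequent exponent arithmetic. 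Second, you invoke the semigroup identity $\varphi_t(y)^2 = \varphi_{t/2}(y)/(2\sqrt{\pi t})$ to express the variance bounds directly in terms of $p(x,t/2)$, which reduces the entire problem to showing $\int p(x,t/2)/p(x,t)\,\rd x \lesssim_c \sqrt{t}$ for $t \ge c$. The paper instead substitutes explicit two-sided Gaussian bounds on $p$, $\varepsilon$, $q$ and brute-forces the exponent $-2(|x|-1)^2/t + 3(|x|+1)^2/(2t) = -\tfrac12(|x|-7)^2/t + 24/t$, picking up the $e^{24/t}$ constant. Both routes are valid; yours isolates the single $t\ge c$-dependent estimate more transparently. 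One small caveat in the last step: your phrasing ``on the window $|x|\lesssim\sqrt{t}$ the ratio is $O(1)$'' glosses over the fact that the $O(1)$ constant there depends on $c$ (it is roughly $e^{O(1/c)}$, exactly where the paper's $e^{24/c}$ comes from) — this is fine since $C = C(c)$, but it is worth stating explicitly rather than leaving implicit in the $\asymp$.
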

\noindent The result follows from a straightforward examination of the score matching loss in Section \ref{section:very_high_noise_sketch}. 

\subsubsection{High noise regime}\label{section:main_high_noise_upperbound}
It can be shown the estimator \(\hat{s}\) given by (\ref{def:shat_high}) achieves the rate \(\frac{1}{nt^{3/2}}\), which is optimal in the high noise regime \(n^{-\frac{2}{2\alpha+1}} \lesssim t \lesssim 1\). 
\begin{theorem}
\label{thm:up-high-1}
If \(t \le 1\), the score estimator \(\hat{s}\) given by (\ref{def:shat_high}) achieves the error
\[\sup_{f \in \mathcal{F}_\alpha} \bE\left(\int_\bR \left|\hat{s}(x,t)-s(x,t)\right|^2  p(x,t) \,\rd x\right) \lesssim \frac{1}{nt^{3/2}}.\]
\end{theorem}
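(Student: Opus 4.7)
The plan is to decompose the error algebraically and reduce to bounding two variance integrals, each handled separately on a bulk region near the support of $f$ and on the Gaussian tails. Since $\hat s (\hat p \vee \varepsilon) = \hat\psi$ while $sp = \psi$, subtracting yields the identity $(\hat s - s)(\hat p \vee \varepsilon) = (\hat\psi - \psi) - s((\hat p \vee \varepsilon) - p)$. Two pointwise facts drive the reduction: because $c_d \le f \le C_d$ on $[-1,1]$, one has $(c_d/C_d) p(x,t) \le \varepsilon(x,t) \le p(x,t)$, so $(\hat p \vee \varepsilon)^2 \gtrsim p^2$; and a short case analysis on whether $\hat p \ge \varepsilon$ shows $|\hat p \vee \varepsilon - p| \le |\hat p - p|$ always. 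Squaring the identity, dividing by $(\hat p \vee \varepsilon)^2$, multiplying by $p$, and taking expectation and integrating reduces the claim to
\[\int \frac{\Var(\hat\psi(x,t))}{p(x,t)}\,\rd x \lesssim \frac{1}{nt^{3/2}} \quad\text{and}\quad \int \frac{s(x,t)^2 \Var(\hat p(x,t))}{p(x,t)}\,\rd x \lesssim \frac{1}{nt^{3/2}}.\]

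On the bulk region $|x|\le 1$, a uniform lower bound $p(x,t) \ge c > 0$ holds for $t \le 1$ because the window $[x-1,x+1]$ always contains $0$. The first integral is then at most $\frac{1}{nc}\int \bE[\phi_t'(x-\mu)^2]\,\rd x = \frac{1}{nc}\|\phi_t'\|_{L^2}^2 = O(1/(nt^{3/2}))$ by Fubini. For the second, the pointwise inequality $\phi_t(y)^2 \le \phi_t(0)\phi_t(y)$ yields $\bE[\phi_t(x-\mu)^2] \lesssim p(x,t)/\sqrt{t}$, so the bulk contribution is bounded by $\frac{C}{n\sqrt{t}}\int_{|x|\le 1} s(x,t)^2\,\rd x \le \frac{C'}{n\sqrt{t}}\int s(x,t)^2 p(x,t)\,\rd x = C' I(p_t)/(n\sqrt{t}) = O(1/(nt^{3/2}))$, where the Fisher information bound $I(p_t) \le 1/t$ follows from Tweedie's formula $s(x,t) = (\bE[\mu \mid X = x]-x)/t$ applied to $X = \mu + \sqrt{t}Z$ together with $\bE[(X-\mu)^2] = t$.

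The main obstacle is the tail $|x|>1$, where $p(x,t)$ vanishes exponentially in $1/t$. The resolution is that the numerators of both integrands also decay at comparable Gaussian rates. For $x > 1+\sqrt{t}$, a Mills-ratio estimate yields $p(x,t) \gtrsim (\sqrt{t}/(x-1))\, e^{-(x-1)^2/(2t)}$, while the monotonicity of $\phi_t$ and $|\phi_t'|$ on $[x-1,x+1]$ gives $\bE[\phi_t'(x-\mu)^2] \lesssim ((x-1)^2/t^3) e^{-(x-1)^2/t}$ and $\bE[\phi_t(x-\mu)^2] \lesssim (1/t)\, e^{-(x-1)^2/t}$. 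Combined with the crude bound $|s(x,t)| \le (|x|+1)/t$, both integrands reduce to multiples of $(x-1)^3 t^{-7/2} e^{-(x-1)^2/(2t)}$, and the substitution $u = (x-1)^2/(2t)$ yields a tail contribution of $O(1/(nt^{3/2}))$. In the intermediate strip $1 < |x| \le 1+\sqrt{t}$, $p$ remains bounded below by a constant, and both integrands contribute $O(1/(nt))$, dominated by $1/(nt^{3/2})$ for $t \le 1$. Summing all regions and invoking symmetry in the sign of $x$ yields the claimed bound.
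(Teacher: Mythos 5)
Your decomposition is the same as the paper's Lemma~\ref{lemma:score-split-1}: the algebraic identity $(\hat s - s)(\hat p\vee\varepsilon)=(\hat\psi-\psi)-s\bigl((\hat p\vee\varepsilon)-p\bigr)$, combined with $\hat p\vee\varepsilon\gtrsim p$ and $|\hat p\vee\varepsilon - p|\le|\hat p-p|$, gives exactly the two integrals $I_1$ and $I_2$ the paper bounds. Your bulk treatment (uniform lower bound on $p$, Fubini for $\|\varphi_t'\|_{L^2}^2$, Fisher information $\le 1/t$) is also sound and close in spirit to Lemmas~\ref{lemma:up-6} and~\ref{lemma:up-7}.

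The gap is in the tail analysis of $I_2$ for $x>1+\sqrt t$. You write that ``both integrands reduce to multiples of $(x-1)^3 t^{-7/2}e^{-(x-1)^2/(2t)}$,'' but this is not what your stated bounds produce for the second integrand. With $s^2\le (|x|+1)^2/t^2$, $\Var(\hat p)\lesssim n^{-1}t^{-1}e^{-(x-1)^2/t}$, and $p\gtrsim (\sqrt t/(x-1))e^{-(x-1)^2/(2t)}$, the integrand of $I_2$ is $\frac{(x+1)^2(x-1)}{nt^{7/2}}e^{-(x-1)^2/(2t)}$, not $\frac{(x-1)^3}{nt^{7/2}}e^{-(x-1)^2/(2t)}$. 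The difference matters: the mass of the integral sits at $x-1\asymp\sqrt t$, where $(x+1)^2\approx 4$ is a constant while $(x-1)^2\approx t$, so the substitution $y=(x-1)/\sqrt t$ gives $\int_{1+\sqrt t}^\infty \frac{(x+1)^2(x-1)}{t^{7/2}}e^{-(x-1)^2/(2t)}\,\rd x \asymp t^{-5/2}$, yielding a tail contribution $O\bigl(1/(nt^{5/2})\bigr)$ which overshoots the target $1/(nt^{3/2})$ by a full factor of $1/t$. The root cause is that $|s(x,t)|\le(|x|+1)/t$ is too crude near $x\approx 1+\sqrt t$, where the true score is of order $(x-1)/t\asymp 1/\sqrt t$, a factor $\sqrt t$ smaller. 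To close the gap you need a bound on $s$ that sees the exponential decay of $p$; the paper's Lemma~\ref{lemma:score-up} gives, via Tweedie's formula and Jensen, $s(x,t)^2\le\frac{2}{t}\log\frac{1}{\sqrt{2\pi t}\,p(x,t)}$, which in the tail reads $s^2\lesssim\frac{(x-1)^2}{t^2}+\frac{\log(1/t)}{t}$ and restores the $(x-1)^3$ prefactor (up to a harmless $\log(1/t)$), producing the correct $O\bigl(\log(1/t)/(nt)\bigr)\lesssim 1/(nt^{3/2})$ tail bound. The same issue affects the intermediate strip if the crude bound is used there; you need $|s|\lesssim 1/\sqrt t$ on $\{|x|\le 1+C\sqrt t\}$, which is a separate (and easier) estimate.
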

The error from estimating the derivative \(\psi\) dominates the error from estimating \(p\) in the rate. This can be explained by the mean squared errors \(\bE\left(\int_{-\infty}^{\infty} |\hat{\psi}(x, t) - \psi(x, t)|^2 \, \rd x\right) \lesssim \frac{1}{nt^{3/2}}\) and \(\bE\left(\int_{-\infty}^{\infty} |\hat{p}(x, t) - p(x, t)|^2 \, \rd x\right) \lesssim \frac{1}{nt^{1/2}}\). Its rigorous analysis involves some technical maneuvering due to the regularization in the denominator of (\ref{def:shat_high}). 

\subsubsection{Low noise regime}
In the low noise regime \(t \lesssim n^{-\frac{2}{2\alpha+1}}\) with \(\alpha \ge 1\), we use the estimator \(\hat{s}(x, t)\) given by (\ref{def:shat_D1}) for \(x \in D_1\) and (\ref{def:shat_D2}) for \(x \in D_2 \cup D_3\). 
\begin{theorem}
\label{thm:up-1}
If \(\alpha \ge 1\) and \(t < n^{-\frac{2}{2\alpha+1}}\), then 
\begin{equation*}
\hat{s}(x, t) := \hat{s}_1(x, t) \mathbbm{1}_{\{x \in D_1\}} + \hat{s}_2(x, t) \mathbbm{1}_{\{x \in D_2\}} + \hat{s}_3(x, t) \mathbbm{1}_{\{x \in D_3\}}
\end{equation*}
satisfies
\[\sup_{f \in \mathcal{F}_\alpha} \bE\left(\int_\bR \left|\hat{s}(x,t)-s(x,t)\right|^2 p(x,t) \, \rd x\right) \lesssim n^{-\frac{2(\alpha-1)}{2\alpha+1}}. \]
Here, \(\hat{s}_1\) is given by (\ref{def:shat_D1}), and both \(\hat{s}_2\) and \(\hat{s}_3\) are given by (\ref{def:shat_D2}). 
\end{theorem}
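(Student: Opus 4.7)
The plan is to split the score matching loss as $\bE \int_{D_1} + \bE \int_{D_2} + \bE \int_{D_3}$ and bound each piece by $n^{-2(\alpha-1)/(2\alpha+1)}$ separately. Four ingredients will be used throughout: (i) standard convergence rates for the boundary-corrected kernel density estimator in (\ref{def:kde}) with bandwidth $h \asymp n^{-1/(2\alpha+1)}$, giving $\bE \|\hat f - f\|_{L^2([-1,1])}^2 \lesssim n^{-2\alpha/(2\alpha+1)}$, $\bE \|\hat f' - f'\|_{L^2([-1,1])}^2 \lesssim n^{-2(\alpha-1)/(2\alpha+1)}$, and the pointwise bound $\bE (\hat f(\pm 1) - f(\pm 1))^2 \lesssim n^{-2\alpha/(2\alpha+1)}$; (ii) the integration-by-parts identity $\psi(x,t) = \phi_t(x+1) f(-1) - \phi_t(x-1) f(1) + (\phi_t * f')(x)$, valid for $\alpha \ge 1$, which makes the $\hat \psi$ of Section \ref{section:methodology_low_noise} the natural plug-in estimator; (iii) Young's convolution inequality $\|\phi_t * g\|_{L^2} \le \|g\|_{L^2}$; and (iv) standard exponential concentration of $\|\hat f - f\|_\infty$, giving $\bP(\Omega^c) \le e^{-cn}$.

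For the interior $D_1$, I will use the decomposition $\hat s_1 - s = (\hat \psi - \psi)/\tilde p + s(p - \tilde p)/\tilde p$ with $\tilde p := \hat p \vee \varepsilon$. On $D_1$ the separation $|x \pm 1| \ge \sqrt{C t \log(1/t)}$ forces $\phi_t(x \pm 1)$ to be smaller than any polynomial in $t$ for $C$ chosen sufficiently large, so the boundary terms in $\hat \psi - \psi$ are negligible and the dominant contribution is $\|\phi_t * (\hat f' - f')\|_{L^2}^2 \le \|\hat f' - f'\|_{L^2}^2 \lesssim n^{-2(\alpha-1)/(2\alpha+1)}$ by Young's inequality. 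The second piece contributes $\|s\|_{\infty, D_1}^2 \cdot \|p - \tilde p\|_{L^2}^2 \lesssim \|\hat f - f\|_{L^2}^2 \lesssim n^{-2\alpha/(2\alpha+1)}$, a lower-order term. The Gaussian mass $\int_{-1}^1 \phi_t(x - \mu) d\mu$ is pinned between two universal constants on $D_1$, so $p$, $\varepsilon$, and $\tilde p$ are all comparable to constants, denominators are harmless, and $|s|$ is uniformly bounded.

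For the boundary region $D_2$, of length $O(\sqrt{t \log(1/t)})$, I restrict to the high-probability event $\Omega$ on which $\hat p = \phi_t * \hat f \gtrsim \varepsilon$. The boundary Gaussians $\phi_t(x \pm 1)$ are no longer negligible, but the squared integral $\int \phi_t(x \pm 1)^2 dx = O(1/\sqrt t)$ together with the pointwise KDE bound gives a contribution of order $n^{-2\alpha/(2\alpha+1)}/\sqrt t \lesssim n^{-(2\alpha-1)/(2\alpha+1)}$ in the regime $t \lesssim n^{-2/(2\alpha+1)}$, which is bounded by $n^{-2(\alpha-1)/(2\alpha+1)}$; the $s^2 (p - \hat p)^2$ term is handled by $|s| \lesssim 1/\sqrt t$ near the boundary together with $\|p - \hat p\|_{L^2}^2 \lesssim n^{-2\alpha/(2\alpha+1)}$, contributing $n^{-2\alpha/(2\alpha+1)}/t \lesssim n^{-2(\alpha-1)/(2\alpha+1)}$. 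For the exterior $D_3$, $p(x, t) \lesssim \phi_t(|x| - 1)$ decays exponentially, and Cauchy--Schwarz applied to the convolutions turns weighted integrals of the form $\int_{D_3} (\phi_t * g)^2 / \hat p\, dx$ into plain $L^2$ norms of $g$ over $[-1, 1]$ (for $g = \hat f - f$ or $\hat f' - f'$), because $\int \phi_t(x - \mu) d\mu/\hat p$ is bounded on $\Omega$. The $\Omega^c$ contribution is negligible since $\bP(\Omega^c) \le e^{-cn}$ dominates any polynomial-in-$n$ worst-case loss bound.

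The main obstacle will be $D_2$: both $\phi_t(x \pm 1)$ and the true score $s$ grow like $1/\sqrt t$, so a naive application of KDE error bounds would miss the target rate by a factor of $\sqrt t$. What saves the analysis is the self-normalizing property of the score; on $\Omega$, $\hat s_2 = \hat \psi/(\phi_t * \hat f)$ is literally the score function of $\phi_t * \hat f$, and $\hat \psi$ mirrors $\psi$ termwise through the integration-by-parts representation, so the singular $1/\sqrt t$ factors in the numerator differences $\hat \psi - \psi$ and $s(p - \hat p)$ cancel against the denominators to yield the integrable rate $n^{-2(\alpha-1)/(2\alpha+1)}$. This cancellation is precisely why no additional smoothness of $f$ at the endpoints is required, in contrast to the setting of \cite{oko2023diffusion}.
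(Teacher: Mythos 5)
Your decomposition into $D_1$, $D_2$, $D_3$ and the $D_1$ analysis match the paper, but your $D_2$ argument has a genuine gap that, unrepaired, fails to deliver the claimed rate.

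In the first $D_2$ paragraph you assert that the naive KDE bound on the boundary terms gives $n^{-2\alpha/(2\alpha+1)} t^{-1/2} \lesssim n^{-(2\alpha-1)/(2\alpha+1)}$ "in the regime $t \lesssim n^{-2/(2\alpha+1)}$." The inequality goes the wrong way: in this regime $t^{-1/2} \gtrsim n^{1/(2\alpha+1)}$, so $n^{-2\alpha/(2\alpha+1)} t^{-1/2} \gtrsim n^{-(2\alpha-1)/(2\alpha+1)}$, and the bound blows up as $t \to 0$. The same reversal affects your $s^2(p-\hat p)^2$ estimate: $n^{-2\alpha/(2\alpha+1)}/t$ is $\gtrsim n^{-2(\alpha-1)/(2\alpha+1)}$ for $t < n^{-2/(2\alpha+1)}$, not $\lesssim$. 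The paper states this explicitly in the discussion before Lemma~\ref{lemma:J1}: the naive bound only gives $n^{-2\alpha/(2\alpha+1)}\sqrt{\log(1/t)/t}$ on $D_2$, which is too large for small $t$, and this is precisely why a cancellation argument is necessary, not merely helpful.

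Your final paragraph correctly names the self-normalization of the score as the rescue, but the mechanism described is wrong. You write that the $1/\sqrt{t}$ factors "cancel against the denominators," yet on $D_2$ both $p$ and $\phi_t*\hat f$ are pinned between two positive constants, so the denominators provide no $\sqrt{t}$ improvement. The actual cancellation in the paper (Lemma~\ref{lemma:J1}) occurs in the numerator when one keeps the product form $\hat\psi\, p - \psi\, \hat p$: the boundary contribution becomes $\phi_t(x-1)\bigl(f(1)(\phi_t*\hat f)(x) - \hat f(1)(\phi_t*f)(x)\bigr)$, and Taylor-expanding both convolutions around $\mu=1$ causes the order-zero term $f(1)\hat f(1)I_0 - \hat f(1)f(1)I_0$ to vanish \emph{identically}. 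The surviving $k\ge 1$ terms carry extra factors $(t\log(1/t))^{k}$, which convert the naive $n^{-2\alpha/(2\alpha+1)}$ into $t\log^\alpha(1/t)\, n^{-2(\alpha-1)/(2\alpha+1)}$, enough to absorb $\phi_t(x-1)^2 \lesssim 1/t$ and the $\sqrt{t\log(1/t)}$ length of $D_2$. Crucially, your decomposition $\hat s - s = (\hat\psi - \psi)/\tilde p + s(p-\tilde p)/\tilde p$ separates exactly the two pieces whose zeroth-order Taylor terms cancel only upon recombination, so bounding each piece separately destroys the cancellation; you would need to return to the undecomposed numerator $\hat\psi\,p - \psi\,\hat p$ as the paper does. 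The $D_3$ sketch inherits the same omission: you treat only the $\phi_t*(\hat f'-f')$ piece, but the boundary-Gaussian terms involving $\hat f(\pm 1)$ and $f(\pm 1)$ reappear there as well and again require the Taylor cancellation together with a careful comparison of the exponential decay of numerator and denominator.
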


A key ingredient is that, according to \cite{gine2021mathematical, li2005convergence}, the kernel density estimator \(\hat{f}\) given by (\ref{def:kde}) with the bandwidth choice \(h \asymp n^{-\frac{1}{2\alpha+1}}\) satisfies \(\bE\left(|\hat{f}^{(k)}(x) - f^{(k)}(x)|^2\right) \lesssim n^{-2(\alpha-k)/(2\alpha+1)}\) for \(x \in [-1, 1]\) and \(k = 0,1,...,\lfloor \alpha \rfloor\). The proof to bound the score matching risk of \(\hat{s}\) on \(D_1\) is very similar to the high noise regime and the upper bound \(n^{-2(\alpha-1)/(2\alpha+1)}\) can be obtained since the density derivative estimation error dominates the other terms. Bounding the score matching risk on \(D_2 \cup D_3\) is substantially more involved due to the comparative lack of regularity from the discontinuity of \(f\) at its boundary. As noted in Section \ref{section:upper_bound}, we exploit the self-normalizing from of the score function to achieve the sharp rate without assuming additional smoothness, which is different from \cite{oko2023diffusion}. A technical discussion can be found in Section \ref{section:proof_sketch_upper_bound}.

Finally, if $0 < \alpha < 1$, it can be shown that a trivial estimator can achieve the rate \(t^{\alpha-1}\), which is the dominating term of the optimal rate (\ref{eqn:rate}) for \(0 < \alpha < 1\) and \(t < n^{-\frac{2}{2\alpha+1}}\).

\begin{theorem}
\label{thm:up-2}
If $\alpha < 1$ and \(t < n^{-\frac{2}{2\alpha+1}}\), then 
\[\sup_{f \in \mathcal{F}_\alpha} \bE \left(\int_\bR \left|\hat{s}(x,t)-s(x,t)\right|^2 p(x,t) \,\rd x \right) \lesssim t^{\alpha-1},\]
where \(\hat{s}(x, t) = \frac{(\varphi_t' * \hat{f})(x)}{(\varphi_t * \hat{f})(x)}\) with \(\hat{f}(x) = \frac{1}{2}\mathbbm{1}_{\{|x| \le 1\}}\).
\end{theorem}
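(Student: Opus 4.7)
The plan is to exploit the fact that $\hat{s}(\cdot,t)$ is itself the exact score function of a bona fide probability density, namely $q(x,t) := (\varphi_t * \hat{f})(x) = \frac{1}{2}\int_{-1}^{1} \varphi_t(x-\mu)\,\rd\mu$. The score matching risk is therefore the relative Fisher information between $p(\cdot,t)$ and $q(\cdot,t)$. Setting $r(x) := p(x,t)/q(x,t)$, a short computation using $p = rq$ gives
\[
\hat{s}(x,t) - s(x,t) = \frac{q'(x,t)}{q(x,t)} - \frac{p'(x,t)}{p(x,t)} = -r'(x)\,\frac{q(x,t)}{p(x,t)},
\]
so $(\hat{s}-s)^2 p = (r')^2 q^2 / p$. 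Since $f \ge c_d$ on $[-1,1]$ implies $p(x,t) \ge 2c_d\, q(x,t)$ for every $x \in \bR$, it follows that
\[
\int_\bR (\hat{s}(x,t) - s(x,t))^2\, p(x,t)\,\rd x \;\le\; \frac{1}{2c_d} \int_\bR (r'(x))^2\, q(x,t)\,\rd x.
\]

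The key step is to give $r$ a probabilistic interpretation that exposes the smoothness of $f$. Because $\hat{f} = \frac{1}{2}\mathbbm{1}_{[-1,1]}$, one has $r(x) = 2\,\bE_{\nu_x}[f(\mu)]$, where $\nu_x$ is the density of $\mathcal{N}(x,t)$ truncated to $[-1,1]$, i.e. the posterior of $\mu$ given $X = x$ under the uniform prior. A direct calculation (using $\partial_x \log \nu_x(\mu) = (\mu - \bE_{\nu_x}\mu)/t$ and differentiating under the integral) yields
\[
r'(x) = \frac{2}{t}\,\Cov_{\nu_x}\bigl(f(\mu),\, \mu\bigr).
\]
Writing $m_x := \bE_{\nu_x}[\mu]$ and using $|f(\mu) - f(m_x)| \le L|\mu - m_x|^\alpha$ on $[-1,1]$,
\[
\ab{\Cov_{\nu_x}(f,\mu)} = \ab{\bE_{\nu_x}[(f(\mu)-f(m_x))(\mu - m_x)]} \le L\,\bE_{\nu_x}[|\mu - m_x|^{\alpha+1}] \le L\,\Var_{\nu_x}(\mu)^{(\alpha+1)/2},
\]
where the last inequality invokes Jensen for the concave map $u \mapsto u^{(\alpha+1)/2}$, which is concave precisely because $\alpha < 1$.

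The remaining ingredient is the uniform-in-$x$ bound $\Var_{\nu_x}(\mu) \le t$. Since $\nu_x$ is proportional to $e^{-(\mu-x)^2/(2t)}$ on $[-1,1]$, it is log-concave with modulus $1/t$, so the Brascamp-Lieb inequality (equivalently, the classical fact that restricting a Gaussian to an interval does not increase its variance) gives the bound uniformly in $x$. Combining yields the pointwise estimate $(r'(x))^2 \le 4L^2\, t^{\alpha-1}$, and integrating against the probability density $q(\cdot,t)$ gives $\int (r')^2 q\,\rd x \le 4L^2 t^{\alpha-1}$, whence $\int (\hat{s}-s)^2 p\,\rd x \lesssim t^{\alpha-1}$, as claimed. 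The main subtle point is that the variance bound must hold for \emph{all} $x \in \bR$, including when $\nu_x$ concentrates at the boundary of $[-1,1]$; the self-normalized quotient structure automatically absorbs the worst-case behavior near $|x|\approx 1$ without the interior/boundary/exterior case analysis required in Theorem~\ref{thm:up-1} for $\alpha \ge 1$.
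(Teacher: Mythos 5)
Your proof is correct, and it takes a genuinely different and cleaner route than the paper's argument. The paper writes $\hat{s}-s = J/(p\,u)$ with
\[
J(x,t) = \tfrac12\, \bE_{\mu,\nu\sim \mathcal N(x,t)}\Bigl[\tfrac{\mu-x}{t}(f(\mu)-f(\nu))\,\mathbbm{1}_{\{|\mu|,|\nu|\le 1\}}\Bigr],
\]
then bounds $|f(\mu)-f(\nu)|\le L|\mu-\nu|^\alpha$ and splits $\R$ into an interior $D=[-1-C\sqrt t,\,1+C\sqrt t]$ and an exterior $D^c$. On $D$ the denominators are bounded below and one gets a pointwise $|J|\lesssim t^{(\alpha-1)/2}$; on $D^c$ a separate argument using Lemma~\ref{lemma:prob-1} and tail estimates gives a subdominant $t^{\alpha-1/2}\log^{\alpha+1}(1/t)$. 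Your argument replaces the symmetrized two-variable representation of $J$ with a one-variable conditional-expectation representation: since $\hat{s}$ is exactly the score of $q=\varphi_t*\hat f$, you may write $\hat s-s=-(\log r)'$ with $r=p/q$, observe $(\hat s-s)^2 p = (r')^2 q^2/p \le (r')^2 q/(2c_d)$ using $p\ge 2c_d q$, and then the heart of the matter is the identity $r'(x)=\frac{2}{t}\Cov_{\nu_x}(f,\mu)$ for the posterior $\nu_x$ under the uniform prior. Combined with the H\"older condition, Jensen, and the variance-contraction bound $\Var_{\nu_x}(\mu)\le t$ (Brascamp--Lieb, or the elementary fact that truncating a Gaussian to an interval only decreases its variance), this delivers the \emph{uniform} pointwise estimate $(r'(x))^2\le 4L^2 t^{\alpha-1}$, after which integrating against the probability density $q$ is trivial. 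What this buys you is the complete elimination of the interior/exterior case split and the associated $\log$-factor bookkeeping, since the worst-case $x$ near or beyond the boundary is automatically absorbed by the variance contraction. It also makes transparent why no log terms should appear. The only small item worth making explicit is that the H\"older bound $|f(\mu)-f(m_x)|\le L|\mu-m_x|^\alpha$ is applicable because $m_x=\bE_{\nu_x}[\mu]\in[-1,1]$ ($\nu_x$ is supported there), so $f$ need only be $\alpha$-H\"older on $[-1,1]$ as assumed in $\mathcal{F}_\alpha$.
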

\noindent Theorem \ref{thm:up-2} is proved in Appendix \ref{section:low-noise_datafree}.

    \subsection{Score estimation lower bound}
    \label{section:main_result_lower_bound}
    The following theorem establishes the minimax lower bound \(\frac{1}{nt^{3/2}} \wedge (n^{-\frac{2(\alpha-1)}{2\alpha+1}} + t^{\alpha-1})\) which dominates in the rate (\ref{eqn:rate}) for small \(t\). 
\begin{theorem}\label{thm:score_lowerbound}
    If \(\alpha > 0\), then there exists positive constants \(c_1 = c_1(\alpha, L)\) and \(c_2 = c_2(\alpha, L)\) depending only on \(\alpha, L\) such that 
    \begin{equation*}
        \inf_{\hat{s}} \sup_{f \in \mathcal{F}_\alpha} \bE \left(\int_{-\infty}^{\infty} \left|\hat{s}(x, t) - s(x, t)\right|^2 \, p(x, t) \, \rd x \right) \ge c_1\left(\frac{1}{nt^{3/2}} \wedge \left(n^{-\frac{2(\alpha-1)}{2\alpha+1}} + t^{\alpha-1}\right)\right)
    \end{equation*}
    for \(t \le c_2\). 
\end{theorem}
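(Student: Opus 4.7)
The plan is to establish the lower bound via Fano's method applied to a family of bump perturbations of a fixed base density, tuning the bandwidth $h$ and amplitude $\epsilon$ to extract each of the three terms in the stated rate. Fix a smooth, mean-zero kernel $K$ supported in $[-1/2,1/2]$ and a smooth base density $f_0 \in \mathcal{F}_\alpha$ bounded away from $0$ and $C_d$ on $[-1,1]$. For grid points $x_i = -1 + (2i-1)h$, $i = 1,\dots, m = \lfloor 1/h\rfloor$, define
\[
f_\omega(x) := f_0(x) + \epsilon h^\alpha \sum_{i=1}^m \omega_i K\!\left(\tfrac{x-x_i}{h}\right), \quad \omega \in \{0,1\}^m.
\]
For sufficiently small $\epsilon$, one has $f_\omega \in \mathcal{F}_\alpha$: the bumps have disjoint supports in $(-1,1)$, $\int K = 0$ preserves mass, $h^\alpha K(\cdot/h)$ has H\"older norm uniformly bounded in $h$, and $f_0 \ge 2c_d$ keeps $f_\omega \ge c_d$. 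Disjoint supports yield the additive $\chi^2$-bound $\chi^2(f_\omega, f_{\omega'}) \lesssim \epsilon^2 h^{2\alpha+1}\, d_H(\omega,\omega')$. Passing to a Varshamov--Gilbert subcode of $\{0,1\}^m$ with pairwise Hamming distances $\asymp m$ yields a sub-family of $\exp(cm)$ hypotheses with well-controlled pairwise $\chi^2$.

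The key technical step is to establish a matching lower bound on the score separation. Writing $s_\omega = \psi_\omega/p_\omega$ with $\psi_\omega = \varphi_t' * f_\omega$ and $p_\omega = \varphi_t * f_\omega$, linearizing about $f_0$ gives
\[
s_\omega - s_{\omega'} = \frac{\psi_\omega - \psi_{\omega'}}{p_0} - s_0 \cdot \frac{p_\omega - p_{\omega'}}{p_0} + O(\epsilon^2),
\]
with leading term $\psi_\omega - \psi_{\omega'} = \epsilon h^{\alpha-1}\sum_i (\omega_i - \omega_i')(\varphi_t * K'(\cdot/h))(\cdot-x_i)$. The crucial heat-equation estimate is that whenever $h \gtrsim \sqrt{t}$, the convolved bump $\varphi_t * K'(\cdot/h)$ remains localized at scale $h$ with $L^2$-norm squared $\asymp h$, and the bumps at different $x_i$ retain essentially disjoint effective supports. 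Combined with $c_d \le p_0 \le C_d$, this yields
\[
\|s_\omega - s_{\omega'}\|_{L^2(p_\omega)}^2 \gtrsim \epsilon^2 h^{2\alpha-1}\, d_H(\omega,\omega') \gtrsim \epsilon^2 h^{2\alpha-2}
\]
for pairs in the Varshamov--Gilbert subcode. This is the ``nonstandard'' element alluded to in the abstract: the bump bandwidth is coupled to the diffusion scale via $h \asymp \sqrt{t}$ so that the Gaussian smoothing does not blur the bumps together, and $K$ is chosen so that $K'$ itself remains localized after heat-semigroup convolution.

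Fano's method then delivers $\inf_{\hat s}\sup_\omega \bE\|\hat s - s_\omega\|_{L^2(p_\omega)}^2 \gtrsim \epsilon^2 h^{2\alpha-2}$ under the KL constraint $n\epsilon^2 h^{2\alpha+1} \lesssim 1$. Three parameter tunings produce the three terms. For the high-noise bound $(nt^{3/2})^{-1}$, take $h = \sqrt{t}$ and saturate the KL constraint with $\epsilon^2 = c/(nh^{2\alpha+1})$, which is admissible ($\epsilon \le 1$) exactly when $t \gtrsim n^{-2/(2\alpha+1)}$. For the bias term $t^{\alpha-1}$, take $h = \sqrt{t}$ and $\epsilon$ a small fixed constant; the KL constraint then dictates $t \lesssim n^{-2/(2\alpha+1)}$, and the lower bound equals $\epsilon^2 t^{\alpha-1}$. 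For the term $n^{-2(\alpha-1)/(2\alpha+1)}$, take $h = n^{-1/(2\alpha+1)}$ and $\epsilon$ a small fixed constant; the requirement $h \gtrsim \sqrt{t}$ again yields the low-noise range. Taking the maximum of the three sub-lower-bounds recovers the stated bound $\tfrac{1}{nt^{3/2}} \wedge (n^{-2(\alpha-1)/(2\alpha+1)} + t^{\alpha-1})$.

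The principal technical obstacle is controlling the $O(\epsilon^2)$ remainder in the linearization of $s_\omega$ when $\epsilon$ is a fixed constant (as in the two low-noise tunings). Explicitly, $\|R\|_{L^2(p)}^2 \lesssim \epsilon^4 h^{4\alpha-2}$ must be shown strictly smaller than the leading term $\epsilon^2 h^{2\alpha-2}$, which reduces to verifying $\epsilon^2 h^{2\alpha} \lesssim 1$; this uses the uniform positivity $f_\omega \ge c_d$ and a careful pointwise comparison of $p_\omega$ with $p_0$ on the effective bump supports. A secondary subtlety is the heat-semigroup behavior near the boundary $\{|x|=1\}$, where convolved bumps centered close to $\pm 1$ may spill outside $[-1,1]$; restricting the grid to the strict interior of $(-1,1)$ and losing only a constant fraction of bumps resolves this.
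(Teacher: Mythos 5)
Your proposal is correct and follows essentially the same route as the paper: Fano's method with a Varshamov--Gilbert subcode of compactly supported bump perturbations of a base density, a heat-semigroup estimate to lower bound the separation of the diffused derivatives when the bump width is coupled to $\sqrt{t}$, and the three parameter tunings (saturating the KL constraint at scale $\sqrt{t}$; constant amplitude at scale $\sqrt{t}$; constant amplitude at scale $n^{-1/(2\alpha+1)}$) to recover the three pieces of the rate. Your $(\epsilon, h)$ parameterization is an equivalent reparameterization of the paper's $(\epsilon^\alpha, \rho)$ with $\rho \asymp \sqrt{t}\vee\epsilon$.

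Two technical points where the paper's version is tighter and you could streamline. First, the reduction from scores to density derivatives: rather than linearizing $s_\omega - s_{\omega'}$ around $f_0$ and controlling an $O(\epsilon^2)$ remainder (which you flag as the principal obstacle, and which becomes delicate when $\epsilon$ is a fixed constant), the paper works directly with the exact identity $s_b - s_{b'} = (\psi_b p_{b'} - \psi_{b'} p_b)/(p_b p_{b'})$ and the algebraic inequality $|\psi_b p_{b'} - \psi_{b'} p_b|^2 \ge (1-1/\tilde C)\,p_{b'}^2|\psi_b-\psi_{b'}|^2 - \tilde C\,\psi_{b'}^2|p_b-p_{b'}|^2$, so there is no linearization remainder to control at all; the subtracted term is shown to be lower order by bounding $\|\psi_{b'}\|_{L^\infty(I)}$. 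Second, the separation estimate: your appeal to ``effectively disjoint supports'' of the convolved bumps would require a cross-term estimate that is avoidable. Writing $G = f_b - f_{b'}$ (whose unconvolved bumps \emph{are} disjoint), the paper uses $\|\varphi_t * G'\|_2^2 = \Gamma(t)$ globally together with $\Gamma'(t) = -\|\varphi_t*G''\|_2^2 \ge -\|G''\|_2^2$, so that $\Gamma(t) \ge \Gamma(0) - t\|G''\|_2^2$, and $\Gamma(0)$ is computed exactly by disjointness; no approximate orthogonality is needed. It then subtracts the contribution on $I^c$, which is handled by Gaussian tails and feeds into the reason the grid must be shrunk away from the boundary, the subtlety you correctly identify.
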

Our high-level strategy is to use Fano's method, which is a standard approach for proving minimax lower bounds in nonparametric statistics, particularly in density estimation. Our lower bound construction involves a subtle modification to the standard construction used to establish minimax lower bounds in that literature \cite{tsybakov_introduction_2009}. We consider densities of the form 
\begin{equation}\label{def:fano_submodel}
    f_b(\mu) = \frac{1}{2}\mathbbm{1}_{\{|\mu| \le 1\}} + \epsilon^\alpha \sum_{i=1}^{m} b_i w\left(\frac{\mu - x_i}{\rho}\right)
\end{equation}
where \(0 < \rho < 1\) is a parameter to be chosen, \(\{x_i\}_{i=1}^{m}\) are grid points in \([-1, 1]\) with \(2\rho\) spacing, and \(b \in \mathcal{B}\) where \(\mathcal{B} \subset \{0, 1\}^m\) is a subset obtained as a consequence of the Gilbert-Varshamov bound (see Lemma 2.9 in \cite{tsybakov_introduction_2009}), namely it satisfies \(\log |\mathcal{B}| \asymp m\) and \(\min_{b \neq b' \in \mathcal{B}} d_{\text{Ham}}(b, b') \gtrsim m\). Additionally, \(w : \R \to \R\) is a function supported on \([-1, 1]\) with \(w \in C^\infty(\R)\), \(\int_{-\infty}^{\infty} w(x) \, \rd x = 0\), and whose first \(\lfloor \alpha \rfloor\) derivatives are bounded. 

The expert reader will recall that the standard construction in the literature corresponds to the special case \(\rho = \epsilon\). However, it turns out to be quite convenient to instead choose \(\rho \asymp \sqrt{t} \vee \epsilon\) for interesting mathematical reasons explained in Section \ref{section:proof_sketch_lower_bound}. In the course of the argument to obtain the information-theoretic lower bound of Theorem \ref{thm:score_lowerbound}, we make the choice
\begin{equation*}
    \epsilon \asymp (n\sqrt{t})^{-\frac{1}{2\alpha}} \wedge \begin{cases}
        n^{-\frac{1}{2\alpha+1}} &\textit{if } \alpha \ge 1, \\
        \sqrt{t} &\textit{if } \alpha < 1,
    \end{cases}
\end{equation*}
instead of the typical choice \(\epsilon \asymp n^{-\frac{1}{2\alpha+1}}\) in the usual density estimation lower bound proof \cite{tsybakov_introduction_2009}. 

The term \(\frac{1}{nt^2}\) dominates in (\ref{eqn:rate}) for large \(t\), and the following theorem establishes a minimax lower bound which matches the upper bound of Section \ref{section:main_result_upper_bound} in this regime. 
\begin{theorem}\label{thm:score_lowerbound_large_t}
    If \(\alpha > 0\) and \(c_2 > 0\), then there exists a positive constant \(c_1 = c_1(\alpha, L, c_2)\) depending only on \(\alpha, L,\) and \(c_2\) such that 
    \begin{equation*}
        \inf_{\hat{s}} \sup_{f \in \mathcal{F}_\alpha} \bE\left(\int_{-\infty}^{\infty} |\hat{s}(x, t) - s(x, t)|^2 \, p(x, t) \, \rd x \right) \ge \frac{c_1}{nt^2}
    \end{equation*}
    for \(t \ge c_2\). 
\end{theorem}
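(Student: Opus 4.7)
The plan is to apply Le Cam's two-point method to a parametric submodel perturbing the uniform density on $[-1,1]$. The guiding heuristic is Tweedie's formula $s(x,t) = (\bar\mu_f(x,t) - x)/t$, where $\bar\mu_f(x,t) := \bE_f[X_0 \mid X_t = x]$; when $t \ge c_2$ is bounded below, this posterior mean is sensitive to $f$ essentially only through a one-dimensional functional such as $\bE_f[X_0]$. Since any such functional can only be estimated from $n$ i.i.d.\ samples at the parametric rate $n^{-1/2}$, the squared score matching risk should be lower bounded at scale $\frac{1}{n t^2}$.

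Fix a smooth function $g : \R \to \R$ supported on $[-1,1]$ with $\int_{-1}^{1} g(\mu)\,\rd\mu = 0$ and $C_g := \int_{-1}^{1} \mu\, g(\mu)\,\rd\mu > 0$, and define $f_\theta(\mu) := \left(\tfrac{1}{2} + \theta g(\mu)\right) \mathbbm{1}_{\{|\mu| \le 1\}}$. For $\theta \in [0, \theta_0]$ with $\theta_0$ a small enough constant depending on $(c_d, C_d, L)$, one has $f_\theta \in \mathcal{F}_\alpha$ uniformly, since $g$ is smooth with bounded derivatives up to order $\lfloor \alpha \rfloor$. A second-order Taylor expansion of $\log(1+x)$ yields $\dKL(f_0, f_\theta) \le C \theta^2$, so taking $\theta^* := \theta_0 \wedge (\kappa / \sqrt{n})$ for a sufficiently small absolute constant $\kappa$ and invoking Pinsker's inequality gives $\dTV(P_0^{\otimes n}, P_{\theta^*}^{\otimes n}) \le 1/2$. (If $n$ is so small that $\theta^* = \theta_0$, the claimed lower bound $c_1/(nt^2)$ is trivial.)

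The main step is the score separation
\begin{equation*}
\delta^2 := \int_\R \bigl(s_{\theta^*}(x,t) - s_0(x,t)\bigr)^2 \bigl( p_0(x,t) \wedge p_{\theta^*}(x,t) \bigr)\,\rd x \;\gtrsim\; \frac{\theta^{*2}}{t^2} \;\asymp\; \frac{1}{nt^2}.
\end{equation*}
Writing $s_\theta - s_0 = (\bar\mu_\theta - \bar\mu_0)/t$ and differentiating $\bar\mu_\theta = m_\theta/p_\theta$ in $\theta$ at $\theta = 0$ yields, after simplification,
\begin{equation*}
\bar\mu_\theta(x,t) - \bar\mu_0(x,t) \;=\; 2\theta\, \Cov_{f_0}\bigl(X_0,\, g(X_0) \,\big|\, X_t = x\bigr)\, \bigl(1 + O(\theta)\bigr),
\end{equation*}
where the posterior covariance is taken under the uniform prior $f_0$. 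On $A_t := \{|x| \le \sqrt{t}\}$, the identity $\varphi_t(x-\mu)/\varphi_t(x-\mu') = \exp\bigl((\mu-\mu')(2x-\mu-\mu')/(2t)\bigr)$ with $\mu, \mu' \in [-1,1]$ gives the uniform bound $\varphi_t(x-\mu)/\varphi_t(x-\mu') \in [e^{-2/\sqrt{c_2}}, e^{2/\sqrt{c_2}}]$, so the posterior density $\pi_{f_0}(\cdot \mid X_t = x)$ is within a bounded multiplicative factor (depending only on $c_2$) of the uniform on $[-1,1]$. This yields a uniform lower bound $\Cov_{f_0}(X_0, g(X_0) \mid X_t = x) \ge c(c_2, C_g) > 0$ on $A_t$. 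Combined with $\int_{A_t} p_0(x,t)\,\rd x$ being bounded below by an absolute constant and $p_0 \wedge p_{\theta^*} \ge p_0/2$ for small $\theta^*$, the bound on $\delta^2$ follows.

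Combining via the standard Le Cam reduction in the $L^2(p_0 \wedge p_{\theta^*})$ semi-metric (using the monotonicity $\int (\hat s - s_f)^2 p_f\,\rd x \ge \int (\hat s - s_f)^2 (p_0 \wedge p_{\theta^*})\,\rd x$ for $f \in \{f_0, f_{\theta^*}\}$ and the triangle inequality in this semi-metric) yields
\begin{equation*}
\inf_{\hat s} \max_{f \in \{f_0, f_{\theta^*}\}} \bE_f \int \bigl(\hat s(x,t) - s_f(x,t)\bigr)^2 p_f(x,t)\,\rd x \;\ge\; \tfrac{\delta^2}{16}\bigl(1 - \dTV(P_0^{\otimes n}, P_{\theta^*}^{\otimes n})\bigr) \;\gtrsim\; \frac{1}{nt^2}.
\end{equation*}
The main obstacle is the score separation step: ensuring the posterior covariance is bounded below by a positive constant uniformly for every $t \ge c_2$ (not merely asymptotically in $t$) requires quantitative control of the Gaussian likelihood ratios on the set $A_t$, rather than the soft convergence $\pi_{f_0}(\cdot \mid X_t = x) \to f_0$ that only holds as $t \to \infty$.
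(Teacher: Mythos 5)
Your overall architecture (two-point / Le Cam reduction, Tweedie's formula to reframe the score difference as a posterior-mean difference, then a score separation of order $\theta^2/t^2$) is sound, and the reframing via posterior covariance is a more conceptual route to the same target quantity as the paper. In fact, the paper picks $w(x)=x$ in its perturbation, so its Lemma \ref{lemma:g_sep} is exactly a lower bound on the posterior \emph{variance} $\Var_{f_0}(X_0 \mid X_t=x)$; you arrive at the same object, but organized through Tweedie.

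There is, however, a genuine gap in the key covariance step. You claim that because the likelihood ratios $\varphi_t(x-\mu)/\varphi_t(x-\mu')$ are bounded on $A_t = \{|x|\le\sqrt t\}$ (and hence the posterior $q(\cdot) := \pi_{f_0}(\cdot\mid X_t=x)$ is within a bounded multiplicative factor of uniform), the posterior covariance $\Cov_q\bigl(X_0, g(X_0)\bigr)$ is bounded below by a positive constant for \emph{any} smooth $g$ with $\int g = 0$ and $\int \mu g(\mu)\,\rd\mu > 0$. This implication does not hold. A multiplicative lower bound $q(\mu)\ge c_0$ on $[-1,1]$ controls linear functionals $\int h\,q$ but does not control the sign of a covariance for an arbitrary bounded $g$: for example, one can have $\Cov_q(X_0,g(X_0))\le 0$ for some $q$ satisfying $q\ge c_0$, so long as $g$ takes both signs and the mass of $q$ is tilted toward where $\mu\,g(\mu)$ is small or negative. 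The ratio bound constrains the \emph{shape} of $q$ but you have not used that structure; the argument as stated is therefore incomplete.

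The fix is simple and brings you in line with the paper: take $g(\mu)=\mu$ (which the class $\mathcal{F}_\alpha$ permits, since the Hölder condition (\ref{def:Holder}) is imposed only on the open interval $(-1,1)$ and the density may be discontinuous at $\pm 1$, so there is no need for a smooth compactly supported bump). Then $\Cov_q(X_0,g(X_0)) = \Var_q(X_0)$, and the pointwise bound $q\ge \tfrac12 e^{-M}$ (with $M = M(c_2)$ from your likelihood-ratio estimate) gives $\Var_q(X_0) = \min_a \int_{-1}^1 (\mu-a)^2 q(\mu)\,\rd\mu \ge \tfrac12 e^{-M}\cdot\min_a \int_{-1}^1 (\mu-a)^2\,\rd\mu = \tfrac13 e^{-M} > 0$, uniformly over $x\in A_t$ and $t\ge c_2$. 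With this choice, the rest of your argument (KL bound $\lesssim\theta^2$, $\theta^*\asymp n^{-1/2}$, $\int_{A_t} p_0 \gtrsim 1$, the Le Cam reduction in the $L^2(p_0\wedge p_{\theta^*})$ semi-metric) goes through and yields the stated $\gtrsim \tfrac{1}{nt^2}$ lower bound.
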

The lower bound proof is different from the proof of Theorem \ref{thm:score_lowerbound} as a two-point construction turns out to suffice. The construction is given by the choice \(f_0(\mu) = \frac{1}{2}\mathbbm{1}_{\{|\mu| \le 1\}}\) and \(f_1(\mu) = f_0(\mu) + \epsilon^\alpha w\left(\frac{\mu}{\rho}\right) \mathbbm{1}_{\{|\mu| \le 1\}}\) where \(w(x) = x\) and \(\epsilon \le \rho\) are free parameters to be chosen. In the course of the argument, it turns out the choices \(\rho \asymp 1\) and \(\epsilon \asymp n^{-\frac{1}{2\alpha}}\) can be made to obtain the lower bound stated in Theorem \ref{thm:score_lowerbound_large_t}.

    \subsection{Distribution estimation}\label{section:main_result_density_estimation}

We aim to bound the total variation (TV) distance and Wasserstein (\(\mathrm{W_1}\)) distance between $\widehat X_0$ given by Algorithm \ref{algo:DE} and $X_0$. The Wasserstein-1 distance uses the \(L_1\) norm as the cost metric. This will be done by examining the score matching error. The following theorem gives the upper bound for the total variation error for our estimator, which coincides with known minimax rates \cite{tsybakov_introduction_2009,oko2023diffusion}.

\begin{theorem}
\label{thm:TV}
Let \(\alpha > 0\). There exists a constant \(C = C(\alpha, L)\), as discussed in Section~\ref{section:discussion}, depending only on \(\alpha\) and \(L\) such that 
\begin{equation*}
    \sup_{f \in \mathcal{F}_\alpha} \mathbb{E}\left(\dTV(X_0, \widehat{X}_0)\right) \le Cn^{-\frac{\alpha}{2\alpha +1}}
\end{equation*}
where $\widehat{X}_0$ is given by Algorithm~\ref{algo:DE}.
\end{theorem}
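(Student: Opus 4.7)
The plan is to reduce the problem to the Girsanov-type bound (\ref{eqn:dtv_score_bound}) and then to integrate the score-matching rates from Theorem \ref{thm:score_upperbound} across the three noise regimes, following the calculation already sketched in Remark \ref{remark:boundary}. Since $X_0 = Y_T$ almost surely and $X_0 \in [-1, 1]$, the common map $g(y) = y \mathbbm{1}_{\{|y| \le 1\}}$ satisfies $g(Y_T) = X_0$ and $g(\widehat{Y}_T) = \widehat{X}_0$, so the data-processing inequality yields $\dTV(\widehat{X}_0, X_0) \le \dTV(\widehat{Y}_T, Y_T)$; it therefore suffices to bound the latter in expectation.

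Conditioning on the training data so that $\hat{s}$ is deterministic, Girsanov's theorem applied to the diffusions (\ref{eq:backward}) and (\ref{eq:hat_forward}), together with the chain rule of KL divergence to account for the distinct initializations, gives
\begin{equation*}
\dKL\bigl(\mathcal{L}(Y_\cdot) \,\|\, \mathcal{L}(\widehat{Y}_\cdot)\bigr) = \dKL\bigl(p(\cdot, T) \,\|\, \mathcal{N}(0, T)\bigr) + \frac{1}{2} \int_0^T \int_{-\infty}^{\infty} |\hat{s}(x, u) - s(x, u)|^2 \, p(x, u) \, \rd x \, \rd u
\end{equation*}
after the substitution $u = T - t$. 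Combining Pinsker's inequality, data processing, and Lemma \ref{lem:KLappro} (which bounds the initialization contribution by $1/(2T)$, hence at most $C/n$ since $T \asymp n$) produces
\begin{equation*}
\dTV(\widehat{X}_0, X_0)^2 \le \frac{C}{n} + \frac{1}{4} \int_0^T \int_{-\infty}^{\infty} |\hat{s}(x, u) - s(x, u)|^2 \, p(x, u) \, \rd x \, \rd u.
\end{equation*}

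Taking expectation over the data, exchanging the order of integration via Fubini, and invoking Theorem \ref{thm:score_upperbound} on each of the three noise regimes exactly as in Remark \ref{remark:boundary} yields
\begin{equation*}
\bE\bigl[\dTV(\widehat{X}_0, X_0)^2\bigr] \lesssim \int_0^{n^{-\frac{2}{2\alpha+1}}} \bigl(n^{-\frac{2(\alpha-1)}{2\alpha+1}} + u^{\alpha-1}\bigr) \, \rd u + \int_{n^{-\frac{2}{2\alpha+1}}}^{1} \frac{\rd u}{n u^{3/2}} + \int_1^T \frac{\rd u}{n u^2} \lesssim n^{-\frac{2\alpha}{2\alpha+1}}.
\end{equation*}
Jensen's inequality then upgrades this to $\bE[\dTV(\widehat{X}_0, X_0)] \le \sqrt{\bE[\dTV(\widehat{X}_0, X_0)^2]} \lesssim n^{-\alpha/(2\alpha+1)}$ uniformly in $f \in \mathcal{F}_\alpha$, as required.

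The main technical obstacle is a rigorous justification of Girsanov's theorem, since $s(\cdot, u)$ and $\hat{s}(\cdot, u)$ may be unbounded in $x$ and singular as $u \to 0^+$. I would resolve this by a standard localization/approximation argument: replace $s, \hat{s}$ by truncations $s^{(M)}, \hat{s}^{(M)}$ satisfying Novikov's condition, verify the displayed identity for the truncated processes, and then pass to the limit $M \to \infty$. Finiteness of the drift-discrepancy integral, which is ensured precisely because $\int_0^1 u^{\alpha-1} \, \rd u < \infty$ for every $\alpha > 0$, makes this maneuver go through as in the diffusion literature (cf.\ \cite{chen_sampling_2023,oko2023diffusion}).
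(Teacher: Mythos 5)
Your proof is correct and uses the same key ingredients as the paper (the data-processing inequality for the truncation map, Girsanov's theorem via Lemma~\ref{lemma:Girsanov}, Pinsker's inequality, Lemma~\ref{lem:KLappro} for the initialization mismatch, and the integrated score rate from Theorem~\ref{thm:score_upperbound}). The bookkeeping differs in one place: the paper introduces an explicit auxiliary process $\widetilde{\vv{Y}}$ with the true initialization $p(\cdot, T)$ but the estimated drift $\hat{s}$, applies the triangle inequality $\dTV(Y_T, \widehat{Y}_T) \le \dTV(Y_T, \widetilde{Y}_T) + \dTV(\widetilde{Y}_T, \widehat{Y}_T)$, and then treats the two terms separately (Girsanov for the first, DPI for the second). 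You instead apply the chain rule for Kullback-Leibler divergence directly to $\dKL(\mathcal{L}(Y_{[0,T]}) \,\|\, \mathcal{L}(\widehat{Y}_{[0,T]}))$, decomposing it additively into the initialization divergence $\dKL(p(\cdot,T)\,\|\,\mathcal{N}(0,T))$ plus the expected conditional path divergence, which Girsanov controls; a single application of Pinsker at the end then gives the squared TV bound. This is a modest streamlining that avoids naming the auxiliary process, and it yields the same rate. One small simplification you could make: since Lemma~\ref{lemma:Girsanov} as stated in the paper already gives the \emph{inequality} $\dKL \le \frac12\int\int \|a-b\|^2 p$ without assuming Novikov's condition, you do not actually need the truncation/localization argument in your last paragraph — the inequality form suffices for the bound you use, so that concern can be dropped.
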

Define the process $\widetilde{\vv 
Y} = \{\widetilde Y_t: t \in [0,T]\}$ that satisfies
\begin{align}
\label{eq:tilde_forward}
\diff \widetilde Y_t = \widehat s(\widetilde{Y}_t,T-t)\, \diff t+\diff W_t,~~ \widetilde Y_0 \sim p(\cdot,T).
\end{align}
The estimation error arises mainly from the differences between processes $\vv Y$ and $\widetilde{\vv Y}$. To help the analysis, we introduce the auxiliary process (\ref{eq:tilde_forward}). By the data processing inequality, we can bound the total variation between the final steps of (\ref{eq:tilde_forward}) and (\ref{eq:hat_forward}). We know that their first steps are \( p(\cdot,T) \) and \( \mathcal{N}(0,T) \), respectively, and that \( \dKL(p(\cdot,T)||\mathcal{N}(0,T)) \lesssim \frac{1}{T}\). Essentially, we compare the two SDEs (\ref{eq:backward}) and (\ref{eq:hat_forward}), which can show the same initial distribution but differ in their drift terms. A corollary of Girsanov's Theorem (Lemma~\ref{lemma:Girsanov}) is then applied, providing a bound on the Kullback-Leibler divergence between the path measures of (\ref{eq:backward}) and (\ref{eq:hat_forward}) by \( \int_0^T \int_{-\infty}^{\infty} |\hat{s}(x, t) - s(x, t)|^2 p(x, t) \, \rd x\, \rd t \), for which, we have the upper bounds in Section~\ref{section:main_result_upper_bound} for analysis.  It is worth noting that for $d >1$, the bound of Theorem \ref{thm:TV} becomes $\E\left(\dTV(X_0,\widehat X_0)\right) \lesssim n^{-\frac{\alpha}{2\alpha +d}}$.


\begin{theorem}
\label{thm:Wasserstein}
If \(\alpha \ge 1\), then there exists a constant \(C = C(\alpha, L)\), as discussed in Section~\ref{section:discussion}, depending only on \(\alpha\) and \(L\) such that
\begin{align*}
    \sup_{f \in \mathcal{F}_\alpha} \mathbb{E}\left(\mathrm{W_1}(X_0, \widehat{X}_0)\right) \le Cn^{-\frac{1}{2}}
\end{align*}
where \(\widehat{X}_0\) is given by Algorithm \ref{algo:DE}.
\end{theorem}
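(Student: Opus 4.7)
The plan is to bound $W_1(X_0, \widehat{X}_0)$ by inserting an oracle reverse process and applying the triangle inequality, handling initialization, score estimation, and truncation separately. Let $\bar{Y}$ denote the solution of \eqref{eq:backward} with the true score but initialization $\bar{Y}_0 \sim \mathcal{N}(0, T)$, so that $\bar{Y}$ is to $Y$ as $\widehat{Y}$ is to $\widetilde{Y}$. Identifying random variables with their laws,
\[
W_1(X_0, \widehat{X}_0) \le W_1(X_0, \bar{Y}_T) + W_1(\bar{Y}_T, \widehat{Y}_T) + W_1(\widehat{Y}_T, \widehat{X}_0).
\]

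For the first piece, the key input is Talagrand's $T_1$ transportation inequality. Because $X_0 \in [-1, 1]$ almost surely, its law is sub-Gaussian with proxy variance at most $1$, so by Bobkov--G\"otze, $W_1(\nu, \mathrm{Law}(X_0))^2 \le 2\,\dKL(\nu \,\|\, \mathrm{Law}(X_0))$ for every $\nu$. Since $Y$ and $\bar{Y}$ share the same drift, the data-processing inequality for their common transition semigroup yields
\[
\dKL\!\big(\mathrm{Law}(\bar{Y}_T) \,\big\|\, \mathrm{Law}(X_0)\big) \le \dKL\!\big(\mathcal{N}(0, T) \,\big\|\, p(\cdot, T)\big) \le \frac{1}{2T}
\]
by Lemma~\ref{lem:KLappro}; with $T = n$, this forces $W_1(X_0, \bar{Y}_T) \lesssim n^{-1/2}$.

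For the second piece, I couple $\bar{Y}$ and $\widehat{Y}$ synchronously (same Brownian motion and $\bar{Y}_0 = \widehat{Y}_0$ almost surely), so that $\Delta_t := \bar{Y}_t - \widehat{Y}_t$ satisfies
\[
\Delta_T = \int_0^T \Big( \underbrace{s(\bar{Y}_t, T-t) - s(\widehat{Y}_t, T-t)}_{\text{Lipschitz stability}} + \underbrace{s(\widehat{Y}_t, T-t) - \widehat{s}(\widehat{Y}_t, T-t)}_{\text{pointwise score error}} \Big) dt.
\]
The crucial observation is that at noise level $v = T-t \ge 1$ the true score satisfies $s(x, v) \approx -x/v$ (because $p(\cdot, v)$ is nearly $\mathcal{N}(0, v)$), so the Lipschitz-stability term linearizes to approximately $-\Delta_t/v$ and Duhamel's formula propagates an injected error $e_u$ at time $u$ to time $t$ with weight $(T-t)/(T-u)$. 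Taking expectations, applying Cauchy--Schwarz against the law of $\bar{Y}_t$ (which is close to $p(\cdot, T-t)$ in KL by the argument above), and invoking Theorem~\ref{thm:score_upperbound} in each of the three noise regimes gives
\[
\E|\Delta_T| \lesssim \int_0^{n^{-\frac{2}{2\alpha+1}}} n^{-\frac{\alpha-1}{2\alpha+1}}\, dv + \int_{n^{-\frac{2}{2\alpha+1}}}^{1} \frac{1}{\sqrt{n}\, v^{3/4}}\, dv + \int_1^T \frac{1}{v} \cdot \frac{1}{\sqrt{n}\, v}\, dv \lesssim n^{-1/2},
\]
where the extra $1/v$ factor in the last integral is the Duhamel contraction weight.

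The third piece $W_1(\widehat{Y}_T, \widehat{X}_0) \le \E[|\widehat{Y}_T|\, \mathbbm{1}_{|\widehat{Y}_T|>1}]$ is a tail/truncation remainder, handled by combining the $L^1$-coupling bound above with the concentration of $\bar{Y}_T$ near $[-1, 1]$. The main obstacle will be making the near-Ornstein--Uhlenbeck contractivity quantitative: one must bound the residual $s(x, v) + x/v$ uniformly for $v \ge 1$ and $x$ in the typical range of $\widehat{Y}_t$, and separately control the Lipschitz modulus of $s(\cdot, v)$ for $v \le 1$ using the smoothness assumption $\alpha \ge 1$. It is exactly this refinement that removes the logarithmic factor a naive Gronwall bound would produce in the very-high-noise window $v \in [1, T]$, where $\int_1^T \sqrt{1/(nv^2)}\, dv = O(n^{-1/2} \log n)$ without the contractivity weight.
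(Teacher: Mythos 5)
Your route is genuinely different from the paper's. The paper constructs a family of intermediate processes $\vv{Y}^t$ that use the true score on $[0,T-t]$ and the estimator on $(T-t,T]$, dyadically partitions the interval of switch-times as $t_i = 2^i/n$, and in Lemma~\ref{lem:9} bounds each $W_1(\widehat{Y}^{t_i}_T,\widehat{Y}^{t_{i+1}}_T)$ by a ``transport distance $\times$ transport mass'' product: the distance factor $\sqrt{t_{i+1}}$ comes from both processes being $L^2$-close to $X_{t_{i+1}}$, and the mass factor is a path-space KL controlled by Girsanov. That mechanism requires no pointwise Lipschitz control of the score at all; its analytical input is only the score-matching risk and the fact that $X_0$ has compact support. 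Your approach replaces the KL-transport machinery by synchronous coupling of $\bar{Y}$ and $\widehat{Y}$ plus a near-Ornstein--Uhlenbeck contraction estimate, using the Duhamel weight $(T-t)/(T-u)$ to suppress errors injected at high noise $v\ge 1$, where the paper instead just uses $W_1\lesssim \dTV$ together with $\int_1^T n^{-1}v^{-2}\,dv\lesssim n^{-1}$.

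There is a genuine gap in the coupling step for the low-noise window $v\in(0,1]$. Synchronous coupling needs a one-sided Lipschitz bound $\partial_x s(x,v)\le K(v)$ with $\int_0^1 K(v)\,dv<\infty$; otherwise the Gronwall factor $\exp(\int K)$ diverges. By Tweedie's identity, $\partial_x s(x,v)=\big(\Var(X_0\mid X_v=x)/v-1\big)/v$, and the crude a priori bound $\Var(X_0\mid X_v=x)\le 1$ only gives $K(v)\le(1-v)/v^2$, which is not integrable near $0$. A sharper bound requires showing $\Var(X_0\mid X_v=x)\le v(1+Cv^{(\alpha-1)/2})$ uniformly in $x$, including near the boundary of $[-1,1]$ where $f$ has a jump discontinuity and the conditional law is a severely truncated Gaussian; even granting this, the resulting $K(v)\asymp v^{(\alpha-3)/2}$ is integrable only for $\alpha>1$ and the constant blows up as $\alpha\to 1^+$, whereas the theorem is asserted for all $\alpha\ge 1$. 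You flagged this as ``the main obstacle,'' but it is not a refinement — it is the load-bearing step, and it is exactly what the paper's construction is designed to avoid.

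Two secondary issues. First, the Cauchy--Schwarz step bounds $\E\,|s(\widehat{Y}_u,v)-\widehat{s}(\widehat{Y}_u,v)|$ against the law of $\widehat{Y}_u$, not against $p(\cdot,v)$; changing measure between these two requires a quantitative $\chi^2$- or $L^\infty$-type comparison, not merely a KL bound, and proving such a comparison looks circular since it presupposes control of the very process you are trying to analyze. Second, the Duhamel linearization $s(x,v)-s(y,v)\approx-(x-y)/v$ has a residual of order $1/v$ (from $\E[X_0\mid X_v]$), so the effective contraction rate on $[1,T]$ is $-(1-o(1))/v$ rather than exactly $-1/v$; this is fine in spirit but needs to be stated as a one-sided estimate so the Gronwall comparison goes through. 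If you want to salvage a coupling argument, the cleaner route is precisely the paper's: decouple the coupling from score regularity by making the transport distance factor come from the forward noise $\sqrt{t}$ rather than from the drift's Lipschitz modulus.
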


We obtain a smaller bound for $\mathrm{W_1}\) compared to $\dTV$, primarily because the total variation distance only measures the mass that needs to be transported from one distribution to the other. However, we can design the coupling so that the transport distance is also small. In the multivariate setting, it is shown in Section~\ref{section:discussion} that the bounds become $\mathbb{E}\left(\mathrm{W_1}(X_0, \widehat{X}_0)\right) \lesssim \log(n) n^{-\frac{1}{2}}$ for $d=2$ and $\mathbb{E}\left(\mathrm{W_1}(X_0, \widehat{X}_0)\right) \lesssim n^{-\frac{\alpha+1}{2\alpha+d}}$ for $d>2$. For $\alpha < 1$, we have the same bounds up to logarithmic terms for the errors in Wasserstein distance.

There are results from the literature that establish the minimax rates for the total variation and the Wasserstein-1 distances \cite{liang2017well,oko2023diffusion,MR4441130}. Specifically, for functions possessing \(\alpha\)-Hölder continuity, the minimax rate for the TV distance is given by \( n^{-\frac{\alpha}{2\alpha + d}} \), where \( d \) is the dimensionality of the sample space. This rate captures the trade-off between the smoothness of the function and the curse of dimensionality. In contrast, the minimax rate for the Wasserstein-1 distance is sharper, characterized by \( n^{-\frac{\alpha+1}{2\alpha + d}} \), highlighting the greater sensitivity of the Wasserstein metric to the geometric properties of the underlying distribution. We present that our estimator from Algorithm~\ref{algo:DE} can achieve the minimax rate in total variation, and it can also achieve the minimax rate in Wasserstein-1 distance if \(\alpha \geq 1\) and \( d \neq 2 \). It is worth noting that our upper bound \( \log(n) n^{-\frac{1}{2}} \) for \( d=2 \) coincides with the upper bound of Theorem~1 in \cite{MR4441130}, which is the best result to our knowledge for now.

    \section{Proof sketch}
    Sections \ref{section:proof_sketch_upper_bound} and \ref{section:proof_sketch_lower_bound} give the proof arguments for the upper and lower bounds respectively. Finally, Section \ref{section:proof_sketch_density_estimation} describes some finer points associated to deriving error bounds for the density estimator obtained from our score estimator.

    \subsection{Upper bound}
    \label{section:proof_sketch_upper_bound}
    The analysis of the upper bound can be split into analyses based on the noise level. Full proof details can be found in Appendix \ref{appendix:upper_bound_proofs}. 

\subsubsection{\texorpdfstring{Very high noise regime: \(t > 1\)}{Very high noise regime}}\label{section:very_high_noise_sketch}
The very high noise regime \(t > 1\) is addressed by Theorem \ref{thm:upperbound_large_t}. The argument for Theorem \ref{thm:upperbound_large_t} is conceptually straightforward. A formal proof with full details can be found in Appendix \ref{section:very-high-noise}. To gain intuition, recall the estimator \(\hat{s}(x, t)\) is given by (\ref{def:s_hat_large_t}). Since \(s(x,t) = \frac{\psi(x, t)}{p(x, t)}\), direct calculation yields
\begin{equation*}
    \int_{\R} |\hat{s}(x, t) - s(x, t)|^2 \, p(x, t) \, \rd x \le \int_{\R} \frac{|\hat{\psi}(x, t)p(x, t) - \psi(x, t)\hat{p}(x, t)|^2}{p(x,t) \varepsilon(x, t)^2} \, \rd x. 
\end{equation*}
Examining the numerator, since \(\hat{\psi}(x, t)p(x, t) = -\frac{x}{t} \hat{p}(x, t) p(x, t) + \left(\frac{1}{nt}\sum_{i=1}^{n} \mu_i\varphi_t(x-\mu_i)\right)p(x,t)\) and \(\psi(x, t)\hat{p}(x, t) = -\frac{x}{t}p(x, t)\hat{p}(x, t) + \left(\frac{1}{t} \int_{-1}^{1} \mu\varphi_t(x-\mu)f(\mu) \, \rd \mu\right)\hat{p}(x, t)\), it follows 
\begin{align*}
    |\hat{\psi}(x, t)p(x, t) - \psi(x, t) \hat{p}(x, t)|^2 \nonumber &\lesssim \left|\frac{1}{nt}\sum_{i=1}^{n} \mu_i \varphi_t(x-\mu_i) - \frac{1}{t} \int_{-1}^{1} \mu \varphi_t(x-\mu) f(\mu) \, \rd\mu \right|^2 p(x, t)^2 \nonumber\\
    &~~~+ \left|\frac{1}{t} \int_{-1}^{1} \mu \varphi_t(x-\mu) f(\mu) \, \rd\mu \right|^2 |\hat{p}(x, t) - p(x, t)|^2.
\end{align*}
A direct calculation shows   
\begin{align*}
    \bE \left|\frac{1}{nt}\sum_{i=1}^{n} \mu_i \varphi_t(x-\mu_i) - \frac{1}{t} \int_{-1}^{1} \mu \varphi_t(x-\mu) f(\mu) \, \rd\mu \right|^2 \lesssim \frac{1}{nt^3} e^{-\frac{(|x| - 1)^2}{t}}. 
\end{align*}
Now consider \(p(x, t) \lesssim \frac{1}{\sqrt{t}} e^{-\frac{(|x|-1)^2}{2t}}\), and so the first term can be bounded. Let us turn our attention to the second term. Consider after some calculation
\begin{align*}
    \bE\left|\hat{p}(x, t) - p(x, t)\right|^2 \le \bE\left|\frac{1}{n} \sum_{i=1}^{n} \varphi_t(x-\mu_i) - p(x, t)\right|^2 \lesssim \frac{1}{nt} e^{-\frac{(|x| - 1)^2}{t}}. 
\end{align*}
Here, we have used that \(f \in \mathcal{F}_\alpha\) implies \(p(x, t) \ge \varepsilon(x, t)\) to obtain the first inequality. Furthermore, observe \(\left|\frac{1}{t} \int_{-1}^{1} \mu \varphi_t(x-\mu)f(\mu) \, \rd \mu\right|^2 \lesssim \frac{1}{t^{3}} e^{-\frac{(|x| - 1)^2}{t}}\), and so we can bound the second term. Putting together our bounds, it follows 
\begin{align*}
    \int_{\R} |\hat{s}(x, t) - s(x, t)|^2 p(x, t) \, \rd x &\lesssim \int_{\R} \frac{1}{nt^4} e^{-\frac{2(|x|-1)^2}{t}} \cdot \frac{1}{p(x, t) \varepsilon(x, t)^2} \, \rd x = \frac{e^{\frac{24}{t}} }{nt^2}\int_{\R} \frac{1}{\sqrt{t}} e^{-\frac{(|x| - 7)^2}{2t}} \, \rd x \lesssim \frac{e^{\frac{24}{c}}}{nt^2}.
\end{align*}
Here, we have used \(p(x, t), \varepsilon(x, t) \gtrsim \frac{1}{\sqrt{t}}e^{-\frac{(|x| + 1)^2}{2t}}\) and also \(t \ge c\), yielding the desired result of Theorem \ref{thm:upperbound_large_t}. 

\subsubsection{\texorpdfstring{High noise regime: \(n^{-\frac{2}{2\alpha+1}} \le t \le 1\)}{High noise regime}}
The high noise regime \(n^{-\frac{2}{2\alpha+1}} \le t \le 1\) is addressed by Theorem \ref{thm:up-high-1}, and the argument proceeds by first decomposing the score estimation error as follows. Recall the estimator \(\hat{s}(x, t)\) is given by (\ref{def:shat_high}). 
\begin{lemma}
\label{lemma:score-split-1}
If \(D \subseteq \R\), then 
\begin{align*}
&\int_{D} \bE \left(\hat{s}(x,t)-s(x,t)\right)^2 p(x,t) \, \rd x \\
&\quad\quad\quad \lesssim \int_D \frac{\bE (\hat{\psi}(x, t)-\psi(x,t))^2}{p(x,t)} \, \rd x + \int_D s(x,t)^2 \cdot \frac{\bE (\hat{p}(x, t)-p(x,t))^2}{p(x,t)} \,\rd x.
\end{align*}
\end{lemma}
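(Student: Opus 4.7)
The plan is to algebraically split $\hat{s}(x,t) - s(x,t)$ into a term reflecting the error in the numerator $\hat{\psi} - \psi$ and a term reflecting the error in the (regularized) denominator, and then exploit the structural fact that the chosen regularizer $\varepsilon(x,t) = c_d\int_{-1}^{1}\varphi_t(x-\mu)\,\rd\mu$ is pointwise comparable to $p(x,t)$ on the support of the diffusion.

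First, I would add and subtract $\psi(x,t)/(\hat{p}(x,t) \vee \varepsilon(x,t))$ and rearrange $s(x,t) = \psi(x,t)/p(x,t)$ to obtain the identity
\[
\hat{s}(x,t) - s(x,t) = \frac{\hat{\psi}(x,t) - \psi(x,t)}{\hat{p}(x,t) \vee \varepsilon(x,t)} + s(x,t)\cdot \frac{p(x,t) - \hat{p}(x,t)\vee \varepsilon(x,t)}{\hat{p}(x,t)\vee \varepsilon(x,t)}.
\]
Using $(a+b)^2 \le 2a^2 + 2b^2$ and multiplying by $p(x,t)$ gives
\[
(\hat{s}-s)^2 p \le 2\,\frac{(\hat{\psi}-\psi)^2\, p}{(\hat{p}\vee\varepsilon)^2} + 2\,s^2\, \frac{(p - \hat{p}\vee\varepsilon)^2 \, p}{(\hat{p}\vee\varepsilon)^2}.
\]

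Second, I would use that $f \in \mathcal{F}_\alpha$ satisfies $c_d \le f \le C_d$ on $[-1,1]$, so that
\[
\varepsilon(x,t) = c_d\int_{-1}^{1}\varphi_t(x-\mu)\,\rd\mu \le p(x,t) \le \frac{C_d}{c_d}\,\varepsilon(x,t),
\]
i.e., $\varepsilon(x,t) \asymp p(x,t)$ pointwise. Because $\hat{p}\vee\varepsilon \ge \varepsilon$, this yields
\[
\frac{p(x,t)}{(\hat{p}(x,t)\vee\varepsilon(x,t))^2} \le \frac{p(x,t)}{\varepsilon(x,t)^2} \lesssim \frac{1}{p(x,t)}.
\]

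Third, I would carry out a short case analysis to show $|p - \hat{p}\vee\varepsilon| \le |p - \hat{p}|$: when $\hat{p} \ge \varepsilon$ the two sides are equal, and when $\hat{p} < \varepsilon$ we have $\varepsilon \le p$, hence $p - \hat{p}\vee\varepsilon = p - \varepsilon \le p - \hat{p}$. Plugging these bounds into the inequality above, taking expectations, and integrating over $D$ yields the claimed inequality.

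There is no real obstacle here; the work is entirely a bookkeeping exercise, and the only non-trivial structural input is that the regularizer $\varepsilon(x,t)$ is chosen so that $\varepsilon \asymp p$ uniformly in $x$ and $t$. This is precisely why the authors select this $x$-dependent regularizer rather than a constant one: it allows the denominator $\hat{p}\vee\varepsilon$ to be safely lower bounded by something of the same order as $p$, avoiding the logarithmic blow-ups that would arise from a crude truncation like $\varepsilon \equiv n^{-2}$.
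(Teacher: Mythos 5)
Your proof is correct and takes essentially the same approach as the paper: decompose the error as $\hat{s}-s = (\hat{\psi}-\psi)/(\hat{p}\vee\varepsilon) + s\,(p-\hat{p}\vee\varepsilon)/(\hat{p}\vee\varepsilon)$, bound $\hat{p}\vee\varepsilon \gtrsim p$ via $\varepsilon \asymp p$, and use $|p - \hat{p}\vee\varepsilon| \le |p - \hat{p}|$. The paper arranges the same algebra by first putting the two fractions over a common denominator and inserting $\pm\psi p$ in the numerator, but the resulting decomposition and bounds are identical to yours.
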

From Lemma \ref{lemma:score-split-1}, to bound the score matching error in Theorem \ref{thm:up-high-1} it suffices to bound the following two terms,
\[I_1 := \int_\bR \frac{\bE (\hat{\psi}(x, t)-\psi(x,t))^2}{p(x,t)}\,\rd x, ~~~I_2 := \int_\bR s(x,t)^2 \cdot \frac{\bE (\hat{p}(x, t)-p(x,t))^2}{p(x,t)} \,\rd x. \]
These can be bounded by Lemmas \ref{lemma:up-6} and \ref{lemma:up-7} respectively, which are proved in Appendix \ref{section:high-noise}. 
\begin{lemma}
\label{lemma:up-6}
If \(t \le 1\), then \(I_1 \lesssim \frac{1}{nt^{3/2}}\).
\end{lemma}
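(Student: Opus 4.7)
My approach is to exploit the unbiasedness of $\hat\psi$, rewrite the resulting single-sample second moment using an elementary identity relating $\varphi_t^2$ to $\varphi_{t/2}$, and then control the integral by carefully region-splitting near the boundary of $[-1,1]$. Since $\hat\psi(x,t) = \tfrac{1}{n}\sum_j \varphi_t'(x-\mu_j)$ is unbiased for $\psi(x,t)$,
\begin{equation*}
\bE(\hat\psi(x,t) - \psi(x,t))^2 = \tfrac{1}{n}\mathrm{Var}(\varphi_t'(x-\mu_1)) \le \tfrac{1}{n}\int_{-1}^{1}\varphi_t'(x-\mu)^2 f(\mu)\,\rd\mu.
\end{equation*}
Using $\varphi_t'(y) = -\tfrac{y}{t}\varphi_t(y)$ and $\varphi_t(y)^2 = \tfrac{1}{2\sqrt{\pi t}}\varphi_{t/2}(y)$, this gives
\begin{equation*}
I_1 \le \frac{1}{2n\sqrt{\pi t}\,t^2}\int_\bR \frac{R(x)}{p(x,t)}\,\rd x, \quad \text{where } R(x) := \int_{-1}^{1}(x-\mu)^2\,\varphi_{t/2}(x-\mu)\,f(\mu)\,\rd\mu,
\end{equation*}
so the claimed rate reduces to showing $\int_\bR R(x)/p(x,t)\,\rd x \lesssim t$. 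As a sanity check, by Fubini and the second moment of $\varphi_{t/2}$, one has $\int_\bR R(x)\,\rd x = t/2$, so the bound would be immediate if $p$ were bounded below by a positive constant on the bulk of the support of $R$.

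The inequality $f \ge c_d\mathbbm{1}_{[-1,1]}$ yields $p(x,t) \ge c_d\tilde p(x)$ with $\tilde p(x) := \int_{-1}^{1}\varphi_t(x-\mu')\,\rd\mu'$, so it suffices to bound $\int_\bR R(x)/\tilde p(x)\,\rd x$. I would partition $\bR$ into an interior region $|x|\le 1$, a boundary region $1 < |x| \le 2$, and an exterior region $|x| > 2$. On the interior, $\tilde p(x) \ge c > 0$ uniformly (for $t \le 1$), so the contribution is at most $\tfrac{1}{c}\int_{|x|\le 1}R(x)\,\rd x \le t/(2c)$. On the exterior, the tight Mills-ratio lower bound $\tilde p(x) \gtrsim \tfrac{\sqrt{t}}{|x|-1}\,e^{-(|x|-1)^2/(2t)}$ (valid when $|x|-1 \gtrsim \sqrt{t}$), combined with $R(x)\lesssim (|x|-1)^2\varphi_{t/2}(|x|-1)$, gives after cancellation of exponentials and a Gaussian tail integration a contribution of order $e^{-1/(2t)}$, which is negligible.

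The main obstacle is the boundary region $1 < |x| \le 2$, where $\tilde p$ transitions from order one to Gaussian decay and the pointwise peak of the weight $y^2\varphi_{t/2}(y)$ inside $R$ is located. Writing $x = 1 + a$ with $a \in (0,1]$ and subdividing at $a \asymp \sqrt{t}$ handles this cleanly. For $a \lesssim \sqrt{t}$, the pointwise bound $y^2\varphi_{t/2}(y) \lesssim \sqrt{t}$ gives $R \lesssim \sqrt{t}$ while $\tilde p \gtrsim 1$, yielding a contribution of order $t$. For $a \gtrsim \sqrt{t}$, the estimates $R(x) \lesssim a^2 t^{-1/2} e^{-a^2/t}$ and $\tilde p(x) \gtrsim \sqrt{t}\,a^{-1}e^{-a^2/(2t)}$ produce a ratio bounded by $a^3 t^{-1} e^{-a^2/(2t)}$, and the change of variables $v = a^2/(2t)$ converts the remaining integral into $2t\int v e^{-v}\,\rd v = O(t)$. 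Summing the three regional contributions gives $\int_\bR R/\tilde p \lesssim t$. The delicate point is that a crude Gaussian lower bound such as $\tilde p(x) \gtrsim t^{-1/2}e^{-(|x|+1)^2/(2t)}$ is much too weak near the boundary and would produce exponential blowups of the form $e^{O(1)/t}$; obtaining the sharp $(nt^{3/2})^{-1}$ rate requires the sharper Mills-ratio estimate.
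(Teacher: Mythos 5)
Your proposal is correct and follows essentially the same strategy as the paper: bound the variance of the unbiased estimator (your $R(x)$, after the $\varphi_t^2 = \frac{1}{2\sqrt{\pi t}}\varphi_{t/2}$ normalization, is exactly the paper's single-sample second-moment expression up to constants), observe that $\int_\R R\,\rd x$ is of the right order $t$, use $p(x,t)\gtrsim 1$ on the bulk, and invoke the sharper Mills-ratio lower bound for $p$ in the tail to cancel the Gaussian decay of the numerator. The only cosmetic differences are that you split $\R$ into three regions with a further subdivision of the boundary strip at $a\asymp\sqrt{t}$, whereas the paper splits once at $|x|=1+C\sqrt{t}$; and you phrase the tail variance bound via the explicit convolution kernel $y^2\varphi_{t/2}(y)$ rather than via the change of variables $z=(\mu-x)/\sqrt{t}$ used in the paper's Equation~(\ref{eqn:6-1}). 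These are two presentations of the same computation, and both hinge on exactly the same observation you flag at the end: the crude lower bound $p\gtrsim t^{-1/2}e^{-(|x|+1)^2/(2t)}$ is too weak near the boundary, and the Mills-ratio estimate is what keeps the ratio integrable.
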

\begin{lemma}
\label{lemma:up-7}
If \(t \le 1\), then \(I_2 \lesssim \frac{1}{nt^{3/2}}\).
\end{lemma}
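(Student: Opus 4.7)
My plan is to start by bounding the variance of $\hat{p}(x,t)$ pointwise using independence of the samples. Since $\varphi_t(y)^2 = \varphi_{t/2}(y)/\sqrt{4\pi t}$, this yields
\[
    \bE\bigl(\hat{p}(x,t) - p(x,t)\bigr)^2 \le \frac{1}{n}\bE\bigl[\varphi_t(x-\mu_1)^2\bigr] = \frac{(\varphi_{t/2}*f)(x)}{n\sqrt{4\pi t}}.
\]
The task then reduces to proving
\[
    J := \int_{\R} s(x,t)^2 \frac{(\varphi_{t/2}*f)(x)}{(\varphi_t*f)(x)}\,\rd x \lesssim \frac{1}{t},
\]
since multiplying by $1/(n\sqrt{4\pi t})$ then yields $I_2 \lesssim 1/(nt^{3/2})$. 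Using the identity $\varphi_{t/2}(y) = \sqrt{2}\varphi_t(y)e^{-y^2/(2t)}$ together with $|x-\mu| \ge |x|-1$ for $\mu \in [-1,1]$, the ratio obeys $\frac{(\varphi_{t/2}*f)(x)}{(\varphi_t*f)(x)} \le \sqrt{2}$ always, and $\le \sqrt{2}e^{-(|x|-1)^2/(2t)}$ for $|x| > 1$. I would accordingly split $J$ at $|x|=1$.

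On the interior region $|x| \le 1$, the bound $p(x,t) \ge c_d \int_{-1}^{1}\varphi_t(x-\mu)\,\rd\mu$ is uniformly of order $1$ for $t \le 1$ (since $\Phi_t(x+1) - \Phi_t(x-1) \gtrsim 1$ there). Combined with the Fisher information estimate $\int s^2 p\,\rd x \le 1/t$ (which follows from Cauchy--Schwarz applied to $\psi = \varphi_t'*f$ and Fubini), this gives $\int_{|x|\le 1}s^2\,\rd x \lesssim 1/t$, so the interior contribution to $J$ is $O(1/t)$.

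On the exterior region $|x| > 1$, the decaying ratio bound must be paired with a corresponding pointwise score bound:
\[
    s(x,t)^2 \lesssim \frac{(|x|-1)^2 + t}{t^2} \quad \text{for } |x| > 1.
\]
I would prove this via the Cauchy--Schwarz inequality $s(x,t)^2 p(x,t) \le t^{-2}\int(x-\mu)^2\varphi_t(x-\mu)f(\mu)\,\rd\mu$, combined with a careful analysis of the truncated Gaussian moment ratio $\int_{-1}^1(x-\mu)^2\varphi_t(x-\mu)\,\rd\mu\,\big/\,\int_{-1}^1\varphi_t(x-\mu)\,\rd\mu$ via Mills-ratio asymptotics. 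The resulting Gaussian integral then evaluates as
\[
    \int_{|x|>1}\frac{(|x|-1)^2 + t}{t^2}e^{-(|x|-1)^2/(2t)}\,\rd x \lesssim \frac{t^{3/2}}{t^2} + \frac{\sqrt{t}}{t} \asymp \frac{1}{\sqrt{t}},
\]
bounding the exterior contribution by $O(1/\sqrt{t}) \le O(1/t)$ since $t \le 1$.

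The main obstacle is establishing the exterior score bound with the correct constant. The crude bound $|s(x,t)| \le (|x|+1)/t$, which follows from $|\bE[Y|X=x] - x| \le |x|+1$, is too loose and only yields $I_2 \lesssim 1/(nt^2)$, which is worse than the target. The sharper bound requires exploiting that the posterior distribution of $Y|X=x$ (for $X = Y + \sqrt{t}Z$ with $Y \sim f$) concentrates near the endpoint $\mathrm{sign}(x)$ when $|x| > 1$, with conditional bias and standard deviation both of order $\sqrt{t} \wedge t/(|x|-1)$; this refinement is precisely where the Mills-ratio asymptotics enter. Combining the two regimes, $J \lesssim 1/t + 1/\sqrt{t} \lesssim 1/t$ for $t \le 1$, and multiplying by the prefactor $1/(n\sqrt{4\pi t})$ yields $I_2 \lesssim 1/(nt^{3/2})$, as desired.
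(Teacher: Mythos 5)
Your proposal is correct but takes a genuinely different route from the paper's. Both proofs start from the same variance bound $\mathrm{Var}(\hat p(x,t)) \lesssim \frac{1}{n\sqrt{t}}(\varphi_{t/2}*f)(x)$, but diverge from there. For the interior region, the paper uses the \emph{pointwise} estimate $|s(x,t)| \lesssim t^{-1/2}$ for $|x| \le 1 + C\sqrt t$ (Lemma~\ref{lemma:score-up}, derived via Tweedie's formula and Jensen's inequality applied to the exponential), combined directly with $p \gtrsim 1$; you instead invoke the \emph{aggregate} Fisher-information identity $\int s^2 p \,\rd x \le 1/t$ via Cauchy--Schwarz and Fubini, which is arguably cleaner and exploits the self-normalization of $s$ without needing a uniform bound. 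For the exterior region, the paper uses the log-type bound $s(x,t)^2 \le \frac{2}{t}\log\frac{1}{\sqrt{2\pi t}\,p(x,t)}$ (again from Lemma~\ref{lemma:score-up}) together with the tail asymptotics of $p$ in Lemma~\ref{lemma:up-3}, producing a $\frac{\log(1/t)}{nt}$ contribution that is subdominant to $\frac{1}{nt^{3/2}}$; your Mills-ratio argument, resting on $s^2 \le t^{-2}\,\bE[(x-\mu)^2\,|\,\mu\text{ truncated}]$, yields the sharper log-free bound $s(x,t)^2 \lesssim \bigl((|x|-1)^2 + t\bigr)/t^2$, at the cost of needing a slightly more careful truncated-Gaussian moment analysis than the one-line Jensen step in the paper. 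Both routes yield the same final rate; yours trades a cleaner exterior constant for a somewhat more delicate pointwise score bound, while the paper accepts a harmless logarithmic slack in exchange for a more immediate application of its Lemma~\ref{lemma:score-up}.
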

Theorem \ref{thm:up-high-1} immediately follows as a consequence. The analyses of Lemmas \ref{lemma:up-6} and \ref{lemma:up-7} proceed by splitting the integral over the three regions \(D_1,D_2,\) and \(D_3\) defined in Section \ref{section:upper_bound}. For \(x \in D_1 \cup D_2\), it is easy to see \(p(x, t) \gtrsim 1\). It can be further shown \(|s(x, t)| \lesssim \frac{1}{\sqrt{t}}\). Therefore, it is clear that the integrands of \(I_1\) and \(I_2\), after integrating over \(D_1 \cup D_2\), are at most of order \(\frac{1}{nt^{3/2}}\). For the remaining part \(D_3\), we compare the exponential decay rates of the numerators and denominators, and show that the integrands of \(I_1\) and \(I_2\) exhibit exponential decay and thus yields a risk which is dominated by \(\frac{1}{nt^{3/2}}\). 

\subsubsection{\texorpdfstring{Low noise regime: \(t < n^{-\frac{2}{2\alpha+1}}\)}{Low noise regime}}
In this section, we discuss the low noise regime \(t < n^{-\frac{2}{2\alpha+1}}\). Attention will be focused on the case \(\alpha \ge 1\) addressed by Theorem \ref{thm:up-1}, since the optimal rate in the case \(\alpha < 1\) is achieved by a trivial estimator as stated in Theorem \ref{thm:up-2}. The proof of Theorem \ref{thm:up-2} is contained in Appendix \ref{appendix:upper_bound_proofs}. Examining Theorem \ref{thm:up-1}, recall we use the estimator (\ref{def:shat_D1}) on the region \(D_1\) and (\ref{def:shat_D2}) on the region \(D_2 \cup D_3\). 

The analysis of (\ref{def:shat_D1}) resembles the analysis in the high noise regime owing to the similar form of regularization. The following standard result concerning kernel density estimators is used. 
\begin{lemma}[\cite{li2005convergence, gine2021mathematical}]
Given $n$ i.i.d samples $\mu_1, \mu_2, \ldots, \mu_n\sim f$, there exists a kernel-based estimator $\hat{f}_n(x)$ with the following form
\[\hat{f}_n(x) = \frac{1}{nu} \sum_{j=1}^n \left[K_{\lfloor\alpha\rfloor}((\mu_j-x)/u)\cdot \mathbbm {1}_{\{x<0\}} + K_{\lfloor\alpha\rfloor}((x-\mu_j)/u)\cdot \mathbbm 1_{\{x\ge0\}}\right],\]
such that for all $x\in [-1,1]$, 
\[\bE \left(\hat{f}_n(x)-f(x)\right)^2 \lesssim n^{-\frac{2\alpha}{2\alpha+1}}, ~~~\bE \left(\hat{f}_n^{(k)}(x)-f^{(k)}(x)\right)^2 \lesssim n^{-\frac{2(\alpha-k)}{2\alpha+1}}~~\textit{for } k=1,2,\ldots,\lfloor \alpha \rfloor. \]
Here, the bandwidth $u$ is optimally chosen as $u\asymp n^{-\frac{1}{2\alpha+1}}$. 
\label{lemma:upper-1}
\end{lemma}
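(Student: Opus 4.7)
The plan is to follow the classical bias-variance analysis for kernel density estimators, with the novelty being that the indicator switch between $K_{\lfloor\alpha\rfloor}((\mu_j - x)/u)$ and $K_{\lfloor\alpha\rfloor}((x-\mu_j)/u)$ serves as a boundary correction. The higher-order kernel $K_{\lfloor\alpha\rfloor}$ is assumed to satisfy $\int K_{\lfloor\alpha\rfloor}(z)\,\rd z = 1$, $\int z^j K_{\lfloor\alpha\rfloor}(z)\,\rd z = 0$ for $j = 1, \ldots, \lfloor\alpha\rfloor$, and to be supported on one side of zero (say $[0,1]$). The indicators ensure that for $x < 0$ one effectively samples data to the right of $x$ and for $x \ge 0$ one samples data to the left, so that the kernel window never extends beyond $\pm 1$, the location at which $f$ is discontinuous.

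First I would write $\bE(\hat f_n(x) - f(x))^2 = (\bE\hat f_n(x) - f(x))^2 + \Var(\hat f_n(x))$ and treat each term separately. For the bias, change variables so that $\bE \hat f_n(x) - f(x) = \int K_{\lfloor\alpha\rfloor}(z)\,[f(x \pm uz) - f(x)]\,\rd z$, Taylor-expand $f$ around $x$ to order $\lfloor \alpha\rfloor$, kill the polynomial terms using the vanishing-moment conditions on $K_{\lfloor\alpha\rfloor}$, and bound the remainder by the H\"older assumption on $f^{(\lfloor\alpha\rfloor)}$ to obtain a bias of order $u^\alpha$. The one-sidedness of $K_{\lfloor\alpha\rfloor}$, coupled with the indicator switch, guarantees this Taylor expansion is valid without crossing the boundary, which is the only point requiring care. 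For the variance, since $\Var(\hat f_n(x)) \le \frac{1}{nu^2}\bE K_{\lfloor\alpha\rfloor}((\mu \mp x)/u)^2 = \frac{1}{nu}\int K_{\lfloor\alpha\rfloor}(z)^2 f(x \pm uz)\,\rd z \lesssim \frac{1}{nu}$, balancing bias$^2 \asymp u^{2\alpha}$ against variance $\asymp 1/(nu)$ with $u \asymp n^{-1/(2\alpha+1)}$ produces the rate $n^{-2\alpha/(2\alpha+1)}$.

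For the derivative bounds, I would differentiate $\hat f_n$ in $x$ to get $\hat f_n^{(k)}(x) = \frac{(\pm 1)^k}{nu^{k+1}}\sum_j K_{\lfloor\alpha\rfloor}^{(k)}((\mu_j \mp x)/u)$, then repeat the same decomposition. Integration by parts (or directly Taylor-expanding $f^{(k)}$ and using that the remaining moments of $K_{\lfloor\alpha\rfloor}^{(k)}$ vanish up to order $\lfloor\alpha\rfloor - k$) yields bias of order $u^{\alpha - k}$, while the variance of $\hat f_n^{(k)}(x)$ is $O(1/(nu^{2k+1}))$ since $f$ is bounded. The \emph{same} bandwidth $u \asymp n^{-1/(2\alpha+1)}$ balances these two terms simultaneously for every $k \in \{0, 1, \ldots, \lfloor\alpha\rfloor\}$ and yields the advertised MSE rate $n^{-2(\alpha-k)/(2\alpha+1)}$.

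The main obstacle, and the reason for the asymmetric indicator construction in the definition of $\hat f_n$, is the boundary effect: a naively symmetric kernel near $x = \pm 1$ would effectively convolve $f$ against the jump discontinuity at the endpoint and contribute an irreducible bias of order $u$ rather than $u^\alpha$, which would destroy the rate whenever $\alpha > 1$. The one-sided (boundary) kernel switched by the indicator removes this effect and allows the uniform-in-$x$ bound on $[-1,1]$. Since this construction and analysis are standard in nonparametric density estimation, the detailed verification can be extracted from \cite{gine2021mathematical, li2005convergence}.
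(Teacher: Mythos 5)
Your proof sketch is correct and matches the standard argument that the paper itself defers to \cite{li2005convergence, gine2021mathematical} without reproducing — the paper provides no in-text proof of this lemma. Your identification of the role of the one-sided boundary kernel with the indicator switch is exactly right; the only small imprecision is that the vanishing-moment cancellation for the $k$th derivative is most cleanly phrased in terms of moments of the original kernel $K_{\lfloor\alpha\rfloor}$ (after $k$ integrations by parts, requiring suitable behavior of $K_{\lfloor\alpha\rfloor}$ at the edges of its support) rather than moments of $K_{\lfloor\alpha\rfloor}^{(k)}$, but the intent is clear and the conclusion is the same.
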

Lemma \ref{lemma:upper-1} can be used to establish the following bound on the estimation error of (\ref{def:shat_D1}) on \(D_1\).
\begin{proposition}
\label{prop:up-internal}
If \(\alpha \ge 1\) and \(t < n^{-\frac{2}{2\alpha+1}}\), then
\[\sup_{f \in \mathcal{F}_\alpha} \bE\left(\int_{D_1} \left|\hat{s}_{1,n}(x,t)-s(x,t)\right|^2 p(x,t)\, \rd x\right) \lesssim n^{-\frac{2(\alpha-1)}{2\alpha+1}}\]
where \(\hat{s}_{1, n}\) is given by (\ref{def:shat_D1}). 
\end{proposition}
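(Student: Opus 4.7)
The plan is to apply Lemma~\ref{lemma:score-split-1} with \(D = D_1\), which splits the left-hand side into
\[
I_1(D_1) := \int_{D_1} \frac{\bE(\hat{\psi}(x,t)-\psi(x,t))^2}{p(x,t)}\,\rd x
\quad\text{and}\quad
I_2(D_1) := \int_{D_1} s(x,t)^2 \cdot \frac{\bE(\hat{p}(x,t)-p(x,t))^2}{p(x,t)}\,\rd x,
\]
and to show that both pieces are of order \(n^{-2(\alpha-1)/(2\alpha+1)}\). The application of Lemma~\ref{lemma:score-split-1} is justified by the fact that \(f \ge c_d\) on \([-1,1]\) implies \(p(x,t) \ge \varepsilon(x,t)\); moreover, for \(x \in D_1\) the same observation yields \(p(x,t) \asymp 1\) and \(|s(x,t)| \lesssim 1\) uniformly. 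The key structural observation for \(\hat{\psi}\) is that integration by parts gives
\[
\psi(x,t) = \phi_t(x+1)f(-1) - \phi_t(x-1)f(1) + \int_{-1}^{1}\phi_t(x-\mu)f'(\mu)\,\rd\mu,
\]
so that \(\hat{\psi}\) in~(\ref{def:shat_D1}) is precisely the plug-in of the kernel estimator \(\hat{f}\) into this identity, and the difference \(\hat{\psi} - \psi\) decomposes cleanly into two boundary terms plus an integral involving \(\hat{f}' - f'\).

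For \(I_1(D_1)\), on \(D_1\) we have \(|x \pm 1| \ge \sqrt{Ct\log(1/t)}\), so the boundary terms are controlled by \(\phi_t(x \pm 1) \le (2\pi t)^{-1/2} t^{C/2}\), which for \(C\) chosen sufficiently large is negligible relative to the target rate in the regime \(t < n^{-2/(2\alpha+1)}\). For the main integral term, Cauchy--Schwarz followed by Fubini yields
\[
\bE\left[\left(\int_{-1}^{1}\phi_t(x-\mu)(\hat{f}'(\mu) - f'(\mu))\,\rd\mu\right)^2\right] \le \int_{-1}^{1}\phi_t(x-\mu)\,\rd\mu \cdot \int_{-1}^{1}\phi_t(x-\mu) \, \bE(\hat{f}'(\mu) - f'(\mu))^2\,\rd\mu,
\]
and plugging in the pointwise bound \(\bE(\hat{f}'(\mu) - f'(\mu))^2 \lesssim n^{-2(\alpha-1)/(2\alpha+1)}\) from Lemma~\ref{lemma:upper-1} gives \(\bE(\hat{\psi}(x,t) - \psi(x,t))^2 \lesssim n^{-2(\alpha-1)/(2\alpha+1)}\). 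Since \(1/p\) is bounded and \(D_1\) has bounded length, we conclude \(I_1(D_1) \lesssim n^{-2(\alpha-1)/(2\alpha+1)}\).

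The term \(I_2(D_1)\) is handled analogously: the same Cauchy--Schwarz argument combined with \(\bE(\hat{f}(\mu) - f(\mu))^2 \lesssim n^{-2\alpha/(2\alpha+1)}\) from Lemma~\ref{lemma:upper-1} yields \(\bE(\hat{p}(x,t) - p(x,t))^2 \lesssim n^{-2\alpha/(2\alpha+1)}\), and the uniform boundedness of \(|s(x,t)|\) on \(D_1\) then gives \(I_2(D_1) \lesssim n^{-2\alpha/(2\alpha+1)}\), which is absorbed into the \(I_1(D_1)\) rate. The principal technical obstacle is ensuring the boundary terms from the integration-by-parts identity are in fact negligible; this is handled by the deliberate choice of the threshold \(\sqrt{Ct\log(1/t)}\) defining \(D_1\), with \(C\) taken sufficiently large depending only on \(\alpha\), so that \(\phi_t(x\pm 1)\) decays faster than any prescribed negative power of \(n\) throughout the low noise regime. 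This self-normalizing mechanism is precisely what allows the internal rate \(n^{-2(\alpha-1)/(2\alpha+1)}\) to hold without imposing any extra smoothness on \(f\) at the support boundary.
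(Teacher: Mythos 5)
Your proposal is correct and follows essentially the same route as the paper: apply Lemma~\ref{lemma:score-split-1} on \(D_1\) (valid since \(p \ge \varepsilon\) and \(p \asymp 1\), \(|s| \lesssim 1\) there), decompose \(\hat\psi - \psi\) via integration by parts, kill the boundary terms using \(\varphi_t(x\pm 1) \lesssim t^{(C-1)/2}\) on \(D_1\), and control the interior term by the weighted Cauchy--Schwarz (equivalently Jensen against the Gaussian measure, which is what the paper writes) together with the pointwise kernel-estimator bounds of Lemma~\ref{lemma:upper-1}. The \(I_2\) term is handled identically, yielding the faster rate \(n^{-2\alpha/(2\alpha+1)}\), which is absorbed. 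No gaps.
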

\begin{proof}
Recall we use the density derivative estimator 
\begin{equation}
\label{eqn:up-internal-2}
\hat{\psi}_n(x,t) = \phi_t(x+1)\hat{f}_n(-1)-\phi_t(x-1)\hat{f}_n(1)+\int_{-1}^{1}\phi_t(x-\mu)\hat{f}'_n(\mu) \,\rd\mu,
\end{equation}
the density estimator $\hat{p}_n(x,t)=\phi_t* \hat{f}_n(x)$, and the regularizer $\varepsilon(x,t)=c_d\int_{-1}^{1} \phi_t(x-\mu)\,\rd \mu$. For the density derivative $\psi(x,t)=\phi_t'* f(x)$, we use integration by parts to rewrite this term as 
\begin{equation}
\label{eqn:up-internal-1}
\psi(x,t)=\int_\bR \phi_t'(x-\mu)f(\mu)\,\rd \mu
=\phi_t(x+1)f(-1)-\phi_t(x-1)f(1)+\int_{-1}^{1} \phi_t(x-\mu)f'(\mu) \,\rd \mu. 
\end{equation}
According to Lemma \ref{lemma:up-1} and Lemma \ref{lemma:score-up}, we know that $p(x,t)\gtrsim 1$ and $|s(x,t)| \lesssim 1$ when $|x| < 1-\sqrt{Ct\log(1/t)}$. Therefore, it follows from Lemma \ref{lemma:score-split-1},
\[\int_{D_1} \bE \left(\hat{s}_{1,n}(x,t)-s(x,t)\right)^2 \, p(x,t) \, \rd x \lesssim \int_{D_1} \bE (\hat{\psi}_n(x, t)-\psi(x,t))^2 \,\rd x + \int_{D_1}\bE (\hat{p}_n(x, t)-p(x,t))^2 \,\rd x.\]
For the density estimation error term, we have
\begin{equation}\label{eqn:D1_density}
\begin{aligned}
\int_{D_1} \bE (\hat{p}_n(x, t)-p(x,t))^2 \,\rd x &= \int_{D_1}\bE \left(\phi_t * (\hat{f}_n -f)(x)\right)^2 \,\rd x = \int_{D_1}\bE \left(\bE_{\mu\sim \mathcal N(x,t)} (\hat{f}_n -f)(\mu)\right)^2 \,\rd x \\
&\le \int_{D_1}\bE\left(\bE_{\mu\sim \mathcal N(x,t)} \left(\hat{f}_n(\mu) -f(\mu)\right)^2\right) \,\rd x\\
&\lesssim n^{-\frac{2\alpha}{2\alpha+1}} \cdot |D_1| \lesssim n^{-\frac{2\alpha}{2\alpha+1}}
\end{aligned}
\end{equation}
Here, we applied Jensen's inequality and Lemma \ref{lemma:upper-1}. Next, we bound the density derivative estimation error term. After comparing (\ref{eqn:up-internal-1}) and (\ref{eqn:up-internal-2}), it follows for any $x\in D_1$,
\begin{equation}\label{eqn:D1_psi}
\begin{aligned}
&~~~\bE (\hat{\psi}_n(x, t)-\psi(x,t))^2 \\
&\lesssim \phi_t(x+1)^2 \cdot \bE (\hat{f}_n(-1)-f(-1))^2 + \phi_t(x-1)^2 \cdot \bE (\hat{f}_n(1)-f(1))^2 + \bE \left[\bE_{\mu\sim \mathcal N(x,t)}(\hat{f}_n'-f')(\mu)\right]^2 \\
&\lesssim n^{-\frac{2\alpha}{2\alpha+1}}\cdot \left[\phi_t(x+1)^2+\phi_t(x-1)^2\right] + \bE_{\mu\sim \mathcal N(x,t)} \bE \left[(\hat{f}_n'-f')(\mu)\right]^2 \\
& \lesssim n^{-\frac{2\alpha}{2\alpha+1}}\cdot \left[\phi_t(x+1)^2+\phi_t(x-1)^2\right] + n^{-\frac{2(\alpha-1)}{2\alpha+1}}. 
\end{aligned}
\end{equation}
Since $|x| < 1-\sqrt{Ct\log(1/t)}$ with $C>0$, it holds that $\phi(x+1), \phi(x-1) < t^{C/2} < 1$, and so
\[\int_{D_1} \bE (\hat{\psi}_n(x, t)-\psi(x,t))^2 \,\rd x \lesssim n^{-\frac{2\alpha}{2\alpha+1}} + n^{-\frac{2(\alpha-1)}{2\alpha+1}} \lesssim n^{-\frac{2(\alpha-1)}{2\alpha+1}}. \]
Combining the two bounds (\ref{eqn:D1_density}) and (\ref{eqn:D1_psi}) yields our conclusion. 
\end{proof}

The more interesting phenomenon occurs in the analysis of (\ref{def:shat_D2}), as it has been noted in Remark \ref{remark:boundary} and Section \ref{section:upper_bound} that the self-normalization of the score function enables fast estimation at the boundary without imposing additional smoothness conditions. In particular, a faster rate than \(n^{-\frac{2(\alpha-1)}{2\alpha+1}}\) can be achieved on \(D_2\). 
\begin{proposition}
\label{prop:up-boundary}
If \(\alpha \ge 1\) and \(t < n^{-\frac{2}{2\alpha+1}}\), then 
\[\sup_{f \in \mathcal{F}_\alpha} \bE \left(\int_{D_2} \left|\hat{s}_{2,n}(x,t)-s(x,t)\right|^2 p(x,t) \, \rd x\right) \lesssim \sqrt{t} \log^{\alpha+1/2}(1/t)\cdot n^{-\frac{2(\alpha-1)}{2\alpha+1}} + e^{-cn^{\frac{2(\alpha-1)}{2\alpha+1}}}\]
where \(\hat{s}_{2, n}\) is given by (\ref{def:shat_D2}). Here, \(c > 0\) is some universal constant. 
\end{proposition}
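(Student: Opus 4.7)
Split the risk by the event $\Omega=\{\|\hat f-f\|_{L^\infty[-1,1]}<c_d/2\}$ and its complement $\Omega^c$. On $\Omega^c$, $\hat s_{2,n}$ reduces to the deterministic score of $u=\tfrac12\mathbbm{1}_{[-1,1]}$, for which both $|\hat s_{2,n}|$ and $|s|$ are $\lesssim t^{-1/2}$ on $D_2$, so the pathwise integral $\int_{D_2}(\hat s_{2,n}-s)^2 p\,\rd x$ is at most polynomial in $1/t$. A Talagrand/Einmahl--Mason sup-norm concentration bound for $\hat f$ with bandwidth $h\asymp n^{-1/(2\alpha+1)}$ yields $\bP(\Omega^c)\le e^{-cn^{2\alpha/(2\alpha+1)}}$, whose product with the pathwise bound is absorbed into the additive $e^{-cn^{2(\alpha-1)/(2\alpha+1)}}$ term in the statement.

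On $\Omega$, the denominator satisfies $\phi_t*\hat f(x)\ge\tfrac{c_d}{2}\int_{-1}^1\phi_t(x-\mu)\,\rd\mu\gtrsim 1$ uniformly over $D_2$, so it suffices to bound the numerator of $\hat s_{2,n}-s=[(\hat\psi-\psi)-s(\hat p-p)]/\hat p$. Setting $g:=\hat f-f$ and integrating $\int_{-1}^1\phi_t(x-\mu)\hat f'(\mu)\,\rd\mu$ by parts, the explicit boundary contributions $\phi_t(x\pm 1)(\hat f(\pm 1)-f(\pm 1))$ in $\hat\psi-\psi$ cancel exactly, leaving $\hat\psi(x,t)-\psi(x,t)=\int_{-1}^1\phi_t'(x-\mu)g(\mu)\,\rd\mu$. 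Combined with Tweedie's identity $s(x,t)=(\bar\mu(x,t)-x)/t$ where $\bar\mu(x,t):=\bE[M\mid X=x]$, this yields
\begin{equation*}
(\hat\psi-\psi)(x,t)-s(x,t)(\hat p-p)(x,t)=\frac{1}{t}\int_{-1}^1\phi_t(x-\mu)(\mu-\bar\mu(x,t))\,g(\mu)\,\rd\mu.
\end{equation*}

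The decisive self-normalization step uses that Tweedie gives $\int_{-1}^1\phi_t(x-\mu)(\mu-\bar\mu(x,t))f(\mu)\,\rd\mu=0$, so $g(\mu)$ may freely be replaced by $g(\mu)-cf(\mu)$ for any $c\in\mathbb{R}$. For $x\ge 0$ in $D_2$ I take $c=g(1)/f(1)$ and for $x<0$ I take $c=g(-1)/f(-1)$, so that $(g-cf)$ vanishes at the nearest boundary $\pm 1$. Taylor-expanding $g-cf$ around that boundary point to order $\lfloor\alpha\rfloor$, combining the pointwise KDE moment bounds $\bE[(g^{(k)}(\mu))^2]\lesssim n^{-2(\alpha-k)/(2\alpha+1)}$ from Lemma~\ref{lemma:upper-1} with the Gaussian truncation of $\phi_t(x-\cdot)$ to $|\mu-x|\lesssim\sqrt{t\log(1/t)}$ (so $|\mu\mp 1|\lesssim\sqrt{t\log(1/t)}$ for $x\in D_2$), and applying Cauchy--Schwarz on the $\mu$-integral produces a pointwise bound for $\bE(\hat s_{2,n}-s)^2(x,t)$ of the form (polynomial in $\log(1/t)$) times $n^{-2(\alpha-1)/(2\alpha+1)}$. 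Integrating over $D_2$ (of width $\asymp\sqrt{t\log(1/t)}$) with $p\lesssim 1$ delivers the main term $\sqrt{t}\log^{\alpha+1/2}(1/t)\cdot n^{-2(\alpha-1)/(2\alpha+1)}$ under careful accounting of the log powers.

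\textbf{Main obstacle.} The most delicate point is tracking the exact log exponent $\alpha+1/2$: the $1/2$ comes cleanly from $|D_2|\asymp\sqrt{t\log(1/t)}$, but the $\alpha$ requires carrying the Taylor expansion of $g-cf$ to order $\lfloor\alpha\rfloor$ paired against the Gaussian cutoff radius $\sqrt{t\log(1/t)}$, with each $(\mu\mp 1)^k$ matched to the moment bound $\bE[(g^{(k)}(\mu))^2]\lesssim n^{-2(\alpha-k)/(2\alpha+1)}$. It is essential to use these pointwise $L^2$ moments rather than sup-norm concentration, which would introduce extraneous $\log n$ factors; equally essential is the self-normalization step, for without it the $\phi_t(x\pm 1)$-scale contributions from estimating $f(\pm 1)$ would produce a pointwise error of order $t^{-1/2}\cdot n^{-\alpha/(2\alpha+1)}$ that integrates over $D_2$ to a rate exceeding the target for $t\lesssim n^{-2/(2\alpha+1)}$.
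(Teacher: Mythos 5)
Your approach on $\Omega$ is a genuinely different route from the paper's, and it is correct and arguably cleaner. The paper multiplies $\hat s - s$ by $p \hat p$, integrates $\psi$ and $\hat\psi$ by parts \emph{separately}, and then splits the bilinear numerator into three pieces $J_1, J_2, J_3$ — with $J_1$ (the near boundary) handled by Taylor-expanding both $\varphi_t * f$ and $\varphi_t * \hat f$ around $1$ and exploiting the identity $f(1)\hat f(1) - \hat f(1) f(1) = 0$. Your version instead applies integration by parts to $\hat\psi - \psi$ directly (so the explicit boundary contributions cancel identically), then uses Tweedie's formula to write the numerator $(\hat\psi - \psi) - s(\hat p - p)$ as $\frac1t \int \varphi_t(x-\mu)(\mu - \bar\mu(x,t)) g(\mu)\,\rd\mu$, and crucially observes that $\int \varphi_t(x-\mu)(\mu - \bar\mu(x,t)) f(\mu)\,\rd\mu = 0$ grants the freedom to subtract $c f$ for \emph{any} (even data-dependent) $c$; choosing $c = g(\pm 1)/f(\pm 1)$ kills the $k=0$ Taylor coefficient at the near boundary in one move. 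Both arguments are the same cancellation phenomenon, but yours packages $J_1, J_2, J_3$ into a single integral and makes the self-normalization mechanism transparent. Running Cauchy–Schwarz as you sketch (split $\varphi_t = \varphi_t^{1/2}\cdot\varphi_t^{1/2}$, use $\int \varphi_t(x-\mu)(\mu-\bar\mu)^2\,\rd\mu \lesssim t$ and the pointwise KDE moment bounds) does recover $\log^\alpha(1/t)\, n^{-2(\alpha-1)/(2\alpha+1)}$ pointwise, and $|D_2| \asymp \sqrt{t\log(1/t)}$ supplies the extra $\log^{1/2}$, matching the statement.

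There is, however, a real gap in your treatment of $\Omega^c$. You bound the pathwise integral on $D_2$ by $\text{poly}(1/t)$ (using $|\hat s_{2,n}|, |s| \lesssim t^{-1/2}$ separately) and claim the product with $\bP(\Omega^c) \lesssim e^{-c n^{2\alpha/(2\alpha+1)}}$ is absorbed into $e^{-c n^{2(\alpha-1)/(2\alpha+1)}}$. This fails uniformly in $t$: the proposition is stated for \emph{all} $0 < t < n^{-2/(2\alpha+1)}$, and for $t$ exponentially small in $n$ your bound $t^{-1/2}\sqrt{\log(1/t)}\cdot e^{-c n^{2\alpha/(2\alpha+1)}}$ is not $\lesssim e^{-c' n^{2(\alpha-1)/(2\alpha+1)}}$ — it diverges as $t \to 0$ at fixed $n$. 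The crude bound $(|\hat s_{2,n}| + |s|)^2 \lesssim t^{-1}$ discards exactly the cancellation you exploit so effectively on $\Omega$. The correct fix is the paper's: on $\Omega^c$ the estimator is the deterministic score of $\varphi_t * u$, and Theorem~\ref{thm:up-2} together with Remark~\ref{remark:alpha} give the \emph{uniform-in-$t$} bound $\int_{\R} |\hat s_{\text{uniform}} - s|^2 p\,\rd x \lesssim t^{\alpha - 1} \le 1$ for $f \in \mathcal{F}_\alpha \subseteq \mathcal{F}_1$ and $t \le 1$ (because that bound itself uses self-normalization, not the naive $t^{-1/2}$ score-size bound), so the $\Omega^c$ contribution is $\lesssim \bP(\Omega^c)$ with no stray factor of $t$.

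Two lesser remarks. Your stated tail $\bP(\Omega^c) \le e^{-c n^{2\alpha/(2\alpha+1)}}$ does not match the exponent $n^{(2\alpha-1)/(2\alpha+1)}$ that the paper's Lemma~\ref{lemma:infty-probability} delivers from Talagrand; either exponent suffices for the statement, but you should not quote a tail stronger than the concentration inequality you invoke actually gives. And when you Taylor-expand $g - cf$ to order $\lfloor\alpha\rfloor$, the remainder is controlled by the Hölder seminorm of $(g - cf)^{(\lfloor\alpha\rfloor)}$, which is random; the safe move (which you gesture at) is to expand to order $\lfloor\alpha\rfloor$ and pay for the $k$-th coefficient through $\bE[(g-cf)^{(k)}(1)^2] \lesssim n^{-2(\alpha-k)/(2\alpha+1)}$ from Lemma~\ref{lemma:upper-1}, with the $O(|\mu-1|^\alpha)$ term supplied only by the deterministic part $f$ (or with an explicit high-probability control of the Hölder seminorm), so that you are not silently assuming a uniform bound on the Hölder constant of $\hat f^{(\lfloor\alpha\rfloor)}$.
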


Proposition \ref{prop:up-boundary} is proved in Appendix \ref{section:low-noise_kernel}. To illustrate the argument, let us assume the event \(\Omega\) is in force when examining (\ref{def:shat_D2}) since \(\Omega^c\) is a low-probability event as noted in Section \ref{section:methodology_low_noise} and the score-matching error is constant on \(\Omega^c\) (see Remark \ref{remark:alpha}). To bound the score matching loss (\ref{def:score_matching_loss}), we directly operate  
\begin{equation}
\label{eqn:intro-2}
\hat{s}(x,t)-s(x,t) = \frac{(\phi_t'* \hat{f})(x)\cdot\left(\phi_t * f\right)(x) - (\phi_t * \hat{f})(x)\cdot\left(\phi_t'* f\right)(x)}{(\phi_t * \hat{f})(x)\cdot \left(\phi_t * f\right)(x)}.
\end{equation}
For \(x \in D_2\), the denominator in (\ref{eqn:intro-2}) is lower bounded by a constant, and so it suffices to focus on the numerator. In other words, letting \(J(x, t)\) denote the numerator in (\ref{eqn:intro-2}), it suffices to show \(\int_{D_2} \bE(J(x, t)^2\mathbbm{1}_{\Omega}) \, \rd x \lesssim n^{-\frac{2(\alpha-1)}{2\alpha+1}}\). Observe that integration by parts yields 
\begin{align*}
    (\varphi_t' * f)(x) &= \varphi_t(x+1)f(-1) - \varphi_t(x-1)f(1) + \int_{-1}^{1} \varphi_t(x-\mu) f'(\mu) \,\rd \mu, \\
    (\varphi_t' * \tilde{f})(x) &= \varphi_t(x+1)\hat{f}(-1) - \varphi_t(x-1)\hat{f}(1) + \int_{-1}^{1} \varphi_t(x-\mu)\hat{f}'(\mu) \, \rd \mu. 
\end{align*}
Therefore, we have the bound \(J(x, t)^2 \lesssim J_1(x, t) + J_2(x, t) + J_3(x, t)\) where 
\begin{align*}
    J_1(x, t) &= |\varphi_t(x-1)|^2|f(1) (\varphi_t * \hat{f})(x) - \hat{f}(1) (\varphi_t * f)(x)|^2, \\
    J_2(x, t) &= |\varphi_t(x+1)|^2 |f(-1)(\varphi_t * \hat{f})(x) - \hat{f}(-1) (\varphi_t * f)(x)|^2, \\
    J_3(x, t) &= \left((\varphi_t * f)(x) \int_{-1}^{1} \varphi_t(x-\mu) \hat{f}'(\mu) \,\rd \mu - (\varphi_t * \hat{f})(x)\int_{-1}^{1} \varphi_t(x-\mu) f'(\mu)\, \rd\mu\right)^2.
\end{align*}
The following three lemmas, Lemmas \ref{lemma:J2}-\ref{lemma:J1}, provide an upper bound for \(J_2\), \(J_3\), and \(J_1\) respectively. It can be shown immediately the expectation of \(J_2\) is exponentially small and thus negligible. 
\begin{lemma}\label{lemma:J2}
    If \(x \in D_2\), then \(\bE\left(J_2(x, t) \mathbbm{1}_{\Omega}\right)\lesssim \frac{e^{-\frac{1}{t}}}{t^2}\). 
\end{lemma}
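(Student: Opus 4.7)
The plan is a direct calculation using the factorization \(J_2(x, t) = |\varphi_t(x+1)|^2 B(x)^2\) with \(B(x) := f(-1)(\varphi_t * \hat{f})(x) - \hat{f}(-1)(\varphi_t * f)(x)\). I would bound \(B(x)^2 \mathbbm{1}_\Omega\) by an absolute constant, and then extract Gaussian exponential decay from the prefactor for \(x \in D_2\) with \(x\) close to \(+1\).

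For the first step, \(f \in \mathcal{F}_\alpha\) gives \(|f(-1)| \le C_d\) and \(\|f\|_\infty \le C_d\), so \((\varphi_t * f)(x) \le C_d\) since \(\varphi_t\) is a probability density. On \(\Omega\), and using the standard sup-norm bounds for kernel density estimators implicit in Lemma~\ref{lemma:upper-1}, \(\hat{f}\) is uniformly bounded on \([-1, 1]\) by a constant, so \(|\hat{f}(-1)| \lesssim 1\) and \((\varphi_t * \hat{f})(x) \lesssim 1\). The triangle inequality then produces \(B(x)^2 \mathbbm{1}_\Omega \lesssim 1\).

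For the second step, the definition of \(D_2\) together with \(x \ge 0\) implies \(x + 1 \ge 2 - \sqrt{Ct \log(1/t)}\), so \((x+1)^2 \ge 3\) for \(t\) sufficiently small (depending only on \(C\)). Hence \(|\varphi_t(x+1)|^2 \le \frac{1}{2\pi t} e^{-3/t}\), and combining with the previous step gives
\[
\bE\bigl(J_2(x,t) \mathbbm{1}_\Omega\bigr) \lesssim \frac{e^{-3/t}}{t} \le \frac{e^{-1/t}}{t^2},
\]
where the last inequality uses \(e^{-2/t} \le 1/t\), valid for all \(t > 0\).

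I anticipate no real obstacle: this is essentially the ``immediate'' argument alluded to in the paragraph preceding the lemma. The only subtle point worth noting is that for \(x \in D_2\) with \(x < 0\), the prefactor \(|\varphi_t(x+1)|^2\) is not exponentially small, so the bound here is not genuinely informative in that sub-regime. However, that range of \(x\) is handled by the symmetric companion Lemma~\ref{lemma:J1} for \(J_1\), whose prefactor \(|\varphi_t(x-1)|^2\) is exponentially small precisely when \(x\) is near \(-1\); together, the two lemmas dispose of both halves of \(D_2\).
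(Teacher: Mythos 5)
Your proposal follows the same high-level plan as the paper (factor out the exponentially small prefactor \(|\varphi_t(x+1)|^2\) and bound \(B(x)^2\) crudely), but the central claim in your first step is not justified. You assert that on \(\Omega\), ``using the standard sup-norm bounds for kernel density estimators implicit in Lemma~\ref{lemma:upper-1},'' \(\hat{f}\) is uniformly bounded on \([-1,1]\) by a constant. Neither source gives you that. The event \(\Omega = \{\hat{f}(x) \ge c_d/2 \text{ for all } x \in [-1,1]\}\) imposes only a \emph{lower} bound and says nothing about \(\|\hat{f}\|_\infty\); and Lemma~\ref{lemma:upper-1} is a pointwise mean-squared-error bound, not a deterministic sup-norm bound. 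In fact \(\hat{f}\) is a higher-order kernel estimator with bandwidth \(h \asymp n^{-1/(2\alpha+1)}\), so deterministically \(\|\hat{f}\|_\infty\) can be as large as \(\|K\|_\infty/h \asymp n^{1/(2\alpha+1)}\), and this can happen even on \(\Omega\) (nothing prevents \(\hat{f}\) from spiking at one point while remaining \(\ge c_d/2\) elsewhere). Consequently both \(|\hat{f}(-1)|\) and \((\varphi_t * \hat{f})(x)\) fail to be bounded by a constant in the way your argument requires, and \(B(x)^2\mathbbm{1}_\Omega \lesssim 1\) is not established.

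The correct way to close this gap, which also explains why the lemma has \(1/t^2\) rather than \(1/t\), is to combine deterministic and expectation bounds as the paper does. Deterministically, \(\varphi_t(x-\mu) \lesssim t^{-1/2}\) together with \(\int_{-1}^1 |\hat{f}| \le \|K_{\lfloor\alpha\rfloor}\|_{L^1} \lesssim 1\) gives \(|(\varphi_t * \hat{f})(x)| \lesssim t^{-1/2}\), and similarly \(|(\varphi_t * f)(x)| \lesssim t^{-1/2}\). The term \(\hat{f}(-1)\) is then controlled only \emph{in expectation}: Lemma~\ref{lemma:upper-1} gives \(\bE\,\hat{f}(-1)^2 \lesssim f(-1)^2 + n^{-2\alpha/(2\alpha+1)} \lesssim 1\). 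Putting these together, \(\bE(|B(x)|^2\mathbbm{1}_\Omega) \lesssim t^{-1}\big(f(-1)^2 + \bE\,\hat{f}(-1)^2\big) \lesssim t^{-1}\), and multiplying by \(|\varphi_t(x+1)|^2 \lesssim t^{-1}e^{-1/t}\) yields \(\bE(J_2(x,t)\mathbbm{1}_\Omega) \lesssim t^{-2}e^{-1/t}\). Your closing observation about the \(x<0\) sub-regime (where \(|\varphi_t(x+1)|\) is not small and the roles of \(J_1\) and \(J_2\) swap) is correct and a good catch; the paper's ``without loss of generality'' is implicitly restricting to \(D_2 \cap \R^+\), as is explicit in the proof of Proposition~\ref{prop:up-boundary}.
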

\begin{proof}
    Without loss of generality, assume \(x\) is positive (a similar argument holds for negative \(x\)). The result immediately follows from the fact \(|\varphi_t(x+1)|^2 \lesssim \frac{1}{t} e^{-\frac{1}{t}}\) and \(\varphi_t(x-\mu) \lesssim \frac{1}{\sqrt{t}}\) for any \(\mu\). 
\end{proof}
Unsurprisingly, \(J_3\) can also be bounded in a straightforward way to obtain the desired \(n^{-\frac{2(\alpha-1)}{2\alpha+1}}\) rate since we have already passed to the derivatives of \(f\) and \(\hat{f}\). 
\begin{lemma}\label{lemma:J3}
    If \(\alpha \ge 1\), then \(\bE\left(J_3(x, t)\mathbbm{1}_{\Omega}\right) \lesssim n^{-\frac{2(\alpha-1)}{2\alpha+1}}\). 
\end{lemma}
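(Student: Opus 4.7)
The plan is to decompose $J_3 = (AD - BC)^2$ via the standard algebraic identity and to control each resulting term using the kernel estimator rates of Lemma \ref{lemma:upper-1}. Introduce the shorthand $A := (\varphi_t * f)(x)$, $B := (\varphi_t * \hat{f})(x)$, $C := \int_{-1}^1 \varphi_t(x-\mu) f'(\mu)\,\rd\mu$, and $D := \int_{-1}^1 \varphi_t(x-\mu) \hat{f}'(\mu)\,\rd\mu$. Adding and subtracting $AC$ gives
\begin{equation*}
    AD - BC = A(D - C) + C(A - B), \qquad J_3 \le 2A^2(D - C)^2 + 2C^2(A - B)^2.
\end{equation*}

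For the multiplicative prefactors, since $f \in \mathcal{F}_\alpha$ is uniformly bounded by $C_d$, we have $A = p(x, t) \lesssim 1$. Because $\alpha \ge 1$ and $f \in \mathcal{H}_\alpha$, the derivative $f'$ is bounded on $(-1, 1)$ by the H\"{o}lder condition on $f^{(\lfloor \alpha \rfloor)}$, so $|C| \le \|f'\|_\infty \int \varphi_t(x-\mu)\,\rd\mu \lesssim 1$. It therefore suffices to establish $\bE(D - C)^2 \lesssim n^{-2(\alpha-1)/(2\alpha+1)}$ and $\bE(A - B)^2 \lesssim n^{-2(\alpha-1)/(2\alpha+1)}$.

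For $\bE(D - C)^2$, I would apply the Cauchy--Schwarz inequality with the subprobability density $\varphi_t(x-\mu)\mathbbm{1}_{[-1,1]}(\mu)\,\rd\mu$:
\begin{equation*}
    (D - C)^2 \le \int_{-1}^1 \varphi_t(x-\mu)\bigl(\hat{f}'(\mu) - f'(\mu)\bigr)^2\,\rd\mu,
\end{equation*}
and then take expectation and invoke Lemma \ref{lemma:upper-1} at the derivative level to obtain $\bE(D - C)^2 \lesssim n^{-2(\alpha-1)/(2\alpha+1)}$. An analogous Cauchy--Schwarz step for $(A - B)^2$ combined with the $n^{-2\alpha/(2\alpha+1)}$ density-level rate of Lemma \ref{lemma:upper-1} yields $\bE(A - B)^2 \lesssim n^{-2\alpha/(2\alpha+1)} \le n^{-2(\alpha-1)/(2\alpha+1)}$. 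Combining these with the constant prefactor bounds gives $\bE J_3 \lesssim n^{-2(\alpha-1)/(2\alpha+1)}$, and since $\mathbbm{1}_\Omega \le 1$ this immediately implies the stated bound $\bE(J_3 \mathbbm{1}_\Omega) \lesssim n^{-2(\alpha-1)/(2\alpha+1)}$.

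The main technical subtlety is that $B = \varphi_t * \hat{f}$ integrates $\hat{f}$ over all of $\mathbb{R}$, whereas Lemma \ref{lemma:upper-1} provides the density-level rate only for $\mu \in [-1, 1]$. I would split $A - B = \int_{-1}^1 \varphi_t(x-\mu)(f - \hat{f})(\mu)\,\rd\mu - \int_{|\mu|>1}\varphi_t(x-\mu)\hat{f}(\mu)\,\rd\mu$: the interior integral is controlled directly by Cauchy--Schwarz and Lemma \ref{lemma:upper-1}, while the boundary leak is confined to the thin strip $\{1 < |\mu| \le 1 + h\}$ with $h \asymp n^{-1/(2\alpha+1)}$ (since $K_{\lfloor\alpha\rfloor}$ has compact support) and is bounded via the variance estimate $\mathrm{Var}(\hat{f}(\mu)) \lesssim 1/(nh)$ together with the vanishing moment conditions of the order-$\lfloor \alpha \rfloor$ kernel, which control $\bE\hat{f}(\mu)$ in the boundary layer. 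The indicator $\mathbbm{1}_\Omega$ further permits discarding atypical realizations of $\hat{f}$ at only exponentially small cost.
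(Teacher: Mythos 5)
Your proof is correct and takes essentially the same approach as the paper: the decomposition $AD - BC = A(D-C) + C(A-B)$ is exactly the integrated form of the paper's pointwise split $f'(\mu)\hat{f}(\nu) - \hat{f}'(\mu)f(\nu) = f'(\mu)(\hat{f}(\nu)-f(\nu)) - f(\nu)(\hat{f}'(\mu)-f'(\mu))$, and both then use Jensen/Cauchy--Schwarz against the sub-probability density $\varphi_t(x-\cdot)\mathbbm{1}_{[-1,1]}$ together with Lemma \ref{lemma:upper-1}. The ``boundary leak'' you flag in the final paragraph is in fact a non-issue under the paper's convention: throughout the appendix (see e.g.\ the definition in (\ref{eqn:J-123})) the convolution $\varphi_t*\hat{f}$ is taken as $\int_{-1}^1 \varphi_t(x-\mu)\hat{f}(\mu)\,\rd\mu$, so $B$ never sees $\hat{f}$ outside $[-1,1]$ and Lemma \ref{lemma:upper-1} suffices directly.
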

\begin{proof}
By Jensen's inequality and Lemma \ref{lemma:upper-1}, we have
\begin{align*}
\bE\left(J_3(x,t)\mathbbm{1}_{\Omega}\right) &\le \bE \left(\bE_{\mu,\nu\sim \mathcal N(x,t)} \left(f'(\mu)\hat{f}_n(\nu)-\hat{f}_n'(\mu)f(\nu)\right)^2\right) \\
&\lesssim \bE_{\mu,\nu\sim \mathcal N(x,t)} \bE \left(f'(\mu)\hat{f}_n(\nu)-\hat{f}_n'(\mu)f(\nu)\right)^2 \lesssim n^{-\frac{2(\alpha-1)}{2\alpha+1}}.
\end{align*}
\end{proof}

The worrisome term is \(J_1\), whose existence is due to the discontinuity of \(f\) and \(\hat{f}\) at the point \(1\). As pointed out in Remark \ref{remark:boundary}, the expert reader might expect the discontinuity to cause some slowdown in the rate. Specifically, one might expect that \(\hat{f}(1)\) can not estimate \(f(1)\) at a (squared) rate faster than \(n^{-\frac{2\alpha}{2\alpha+1}}\), and so it might be hopeless to improve upon the bound \(\bE(J_1(x, t)\mathbbm{1}_{\Omega}) \lesssim \frac{1}{t} \cdot n^{-\frac{2\alpha}{2\alpha+1}}\). This bound is not quite good enough since it would only deliver \(\int_{D_2 \cap \R^+} \bE \left(J_1(x, t)\mathbbm{1}_{\Omega}\right) \, \rd x \lesssim n^{-\frac{2\alpha}{2\alpha+1}} \cdot \sqrt{\frac{\log(1/t)}{t}}\), which is larger than the bound \(n^{-\frac{2(\alpha-1)}{2\alpha+1}}\) claimed in (\ref{eqn:rate}) for sufficiently small \(t\). However, it turns out it is not at all necessary for \(\hat{f}(1)\) to be a good estimator of \(f(1)\) in order to achieve good score estimation, as the following lemma asserts (proved in Appendix \ref{section:low-noise_kernel}). 

\begin{lemma}\label{lemma:J1}
    If \(x \in D_2\), then \(\bE\left(J_1(x, t)\mathbbm{1}_{\Omega}\right) \lesssim (\log(1/t))^\alpha n^{-\frac{2(\alpha-1)}{2\alpha+1}}\).
\end{lemma}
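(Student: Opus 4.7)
The crux of the argument is the algebraic identity
\[f(1)\hat f(\mu) - \hat f(1)f(\mu) = f(1)\bigl[\hat f(\mu) - \hat f(1)\bigr] - \hat f(1)\bigl[f(\mu) - f(1)\bigr].\]
This rewriting rescues the otherwise-fatal appearance of the boundary error \(\hat f(1)-f(1)\): rather than entering as a pure additive term (which would contribute at the slow boundary rate \(n^{-2\alpha/(2\alpha+1)}\) and become ruinous after multiplication by the \(|\varphi_t(x-1)|^2 \le \frac{1}{2\pi t}\) prefactor when \(t \ll n^{-2/(2\alpha+1)}\)), the right-hand side consists exclusively of differences that vanish as \(\mu \to 1\) by smoothness, supplying a shrinking factor that compensates for the prefactor. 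Using this, the first step is to write
\[f(1)(\varphi_t*\hat f)(x) - \hat f(1)(\varphi_t*f)(x) = \int_{-1}^{1}\varphi_t(x-\mu)\bigl\{f(1)[\hat f(\mu)-\hat f(1)] - \hat f(1)[f(\mu)-f(1)]\bigr\}\,d\mu + \mathcal E(x),\]
where the excess \(\mathcal E(x) := f(1)\int_{|\mu|>1}\varphi_t(x-\mu)\hat f(\mu)\,d\mu\) is negligible on \(\Omega\) by boundedness of \(\hat f\) and rapid Gaussian decay (using \(x \in D_2 \Rightarrow |x|\le 1 + C\sqrt t\)).

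Next, I would substitute \(\hat f = f + (\hat f - f)\) to split the integrand as
\[[f(1)-\hat f(1)][f(\mu)-f(1)] + f(1)\bigl[(\hat f-f)(\mu) - (\hat f-f)(1)\bigr],\]
and Taylor-expand each bracketed function around \(\mu=1\) to order \(\lfloor\alpha\rfloor\). The key ingredients are (i) the Gaussian-moment bounds \(\bigl|\int\varphi_t(x-\mu)(\mu-1)^k\,d\mu\bigr|\lesssim (t\log(1/t))^{k/2}\) and \(\int\varphi_t(x-\mu)|\mu-1|^{2\alpha}\,d\mu\lesssim (t\log(1/t))^\alpha\) valid on \(D_2\) via direct computation from \(|x-1|\lesssim \sqrt{t\log(1/t)}\), and (ii) Lemma~\ref{lemma:upper-1}, which gives \(\bE|\hat f(1)-f(1)|^2 \lesssim n^{-2\alpha/(2\alpha+1)}\) and \(\bE|(\hat f-f)^{(k)}(1)|^2 \lesssim n^{-2(\alpha-k)/(2\alpha+1)}\). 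Combining these via Cauchy--Schwarz, each summand has expected square bounded by a constant multiple of
\[\sum_{k=1}^{\lfloor\alpha\rfloor}(t\log(1/t))^k n^{-2(\alpha-k)/(2\alpha+1)} + (t\log(1/t))^\alpha.\]
Because \(t \le n^{-2/(2\alpha+1)}\) and \(\alpha\ge 1\), every summand is at most \(t(\log(1/t))^\alpha n^{-2(\alpha-1)/(2\alpha+1)}\); multiplying by \(|\varphi_t(x-1)|^2 \le \frac{1}{2\pi t}\) yields the claimed \((\log(1/t))^\alpha n^{-2(\alpha-1)/(2\alpha+1)}\) bound.

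The main technical obstacle is controlling the Taylor remainder for \(\hat f - f\) uniformly on the shrinking neighborhood \([1-C\sqrt{t\log(1/t)}, 1]\) at order \(|\mu-1|^\alpha\). Lemma~\ref{lemma:upper-1} only furnishes pointwise \(L^2\)-rates for \((\hat f - f)^{(k)}\); obtaining the needed remainder control requires either upgrading these to sup-norm bounds on a fine grid (with a polylogarithmic overhead that is absorbed into the \((\log(1/t))^\alpha\) factor) plus local Lipschitz estimates for the derivatives of the smooth kernel estimator, or using the Peano form of the remainder together with the Hölder continuity of \(f^{(\lfloor\alpha\rfloor)}\) and an analogous concentration bound on the top derivative of \(\hat f\). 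The residual bookkeeping for \(\mathcal E(x)\) is minor given the exponential Gaussian decay of \(\varphi_t(x-\mu)\) for \(|\mu|>1\) and \(x \in D_2\).
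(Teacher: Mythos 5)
Your proposal is correct and takes essentially the same route as the paper. The algebraic identity you lead with, $f(1)\hat f(\mu) - \hat f(1)f(\mu) = f(1)[\hat f(\mu)-\hat f(1)] - \hat f(1)[f(\mu)-f(1)]$, is just a rearranged presentation of the paper's observation that the $k=0$ term in the double Taylor expansion cancels identically (i.e.\ $f(1)\hat f_n(1)-\hat f_n(1)f(1)=0$): after you substitute $\hat f = f + (\hat f - f)$ and Taylor-expand both brackets, the $k\ge 1$ coefficient collapses back to $f(1)\hat f^{(k)}(1)-\hat f(1)f^{(k)}(1)$, which is exactly the paper's term, and the rate bookkeeping (Lemma~\ref{lemma:upper-1} plus the moment bound $\int\varphi_t(x-\mu)|\mu-1|^{2k}\,d\mu\lesssim (t\log(1/t))^k$ on $D_2$, and the final factor $|\varphi_t(x-1)|^2\lesssim t^{-1}$) is identical. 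The one genuine observation you make beyond the paper is the difficulty you flag about the Taylor remainder of $\hat f$ (equivalently of $\hat f - f$): the paper dispatches the estimator's remainder by asserting ``similarly'' that $\big|\int\varphi_t(x-\mu)\hat f_n(\mu)\,d\mu - \sum_k \hat f_n^{(k)}(1)\cdots\big|\lesssim (t\log(1/t))^{\alpha/2}$, which tacitly assumes a uniform H\"older bound on $\hat f_n^{(\lfloor\alpha\rfloor)}$ that Lemma~\ref{lemma:upper-1} does not supply. Your two suggested repairs (a sup-norm upgrade on a fine grid absorbing a polylog, or a Lagrange remainder via $\hat f^{(\lfloor\alpha\rfloor+1)}$ concentration exploiting $|\mu-1|\lesssim\sqrt{t\log(1/t)}\lesssim h\sqrt{\log n}$ in the low-noise regime) are reasonable and do close the gap up to log factors, which is harmless for Proposition~\ref{prop:up-boundary} and Theorem~\ref{thm:up-1} since the $D_2$ contribution carries a spare $\sqrt{t}$ factor. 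So: same argument, with a legitimate technical caveat you are right to have noticed.
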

The intuition for Lemma \ref{lemma:J1} is as follows. Consider that for any \(\mu \in (-1, 1)\), Taylor expansion around the point \(1\) yields \(f(\mu) = \sum_{k=0}^{\lfloor \alpha \rfloor} \frac{f^{(k)}(1)}{k!} (\mu-1)^k + O(|\mu-1|^{\alpha - \lfloor \alpha \rfloor})\), and so 
\begin{align*}
    (\varphi_t * f)(x) &= \int_{-1}^{1} \varphi_t(x-\mu) f(\mu) \, \rd \mu \\
    &= \sum_{k=0}^{\lfloor \alpha \rfloor} \frac{f^{(k)}(1)}{k!} \int_{-1}^{1} (\mu-1)^k \varphi_t(x - \mu) \, \rd \mu + O\left(\int_{-1}^{1} |\mu-1|^{\alpha} \varphi_t(x-\mu) \, \rd\mu \right). 
\end{align*}
Since \(|x-1| < \sqrt{Ct \log(1/t)}\), it can be shown that the error term is dominated by \((t \log(1/t))^{\alpha/2}\). The same expansion can be applied to \(\varphi_t * \hat{f}\), and so 
\begin{align*}
    &\bE \left(|f(1) (\varphi_t * \hat{f})(x) - \hat{f}(1) (\varphi_t * f)(x)|^2\right) \\
    &\lesssim \bE\left(|f(1)\hat{f}(1) - \hat{f}(1) f(1)|^2\right) \\
    &\;\;\; + \sum_{k=1}^{\lfloor\alpha\rfloor} \frac{\int_{-1}^{1} |\mu-1|^{2k} \varphi_t(x-\mu) \,\rd \mu}{k!} \bE\left(|f(1)\hat{f}^{(k)}(1) - \hat{f}(1)f^{(k)}(1)|^2\right) + (t\log(1/t))^{\alpha} \\
    &\lesssim \sum_{k=1}^{\lfloor \alpha \rfloor} (t\log(1/t))^{k} \cdot n^{-\frac{2(\alpha - k)}{2\alpha+1}} + (t\log(1/t))^{\alpha} \\
    &\lesssim t n^{-\frac{2(\alpha-1)}{2\alpha+1}} \cdot \log^{\alpha}(1/t). 
\end{align*}
Then, since \(\varphi_t(x-1) \lesssim t^{-1/2}\), it follows \(\int_{D_2 \cap \R^+} \bE \left(J_1(x, t)\mathbbm{1}_{\Omega}\right) \, \rd x \lesssim n^{-\frac{2(\alpha-1)}{2\alpha+1}} \cdot \sqrt{t} \log^{\alpha+1/2}(1/t) \lesssim n^{-\frac{2(\alpha-1)}{2\alpha+1}}\). The slowdown has been circumvented! The key phenomenon is the cancellation of the first order terms (i.e. corresponding to \(k = 0\)). This extremely convenient cancellation occurs precisely because of the self-normalization in the score function which delivers the nice form of the numerator in (\ref{eqn:intro-2}). No additional smoothness at all needs to be assumed at the boundary of \(f\)'s support, unlike previous work \cite{oko2023diffusion}. Proposition \ref{prop:up-boundary} follows as a consequence of Lemmas \ref{lemma:J2}, \ref{lemma:J3}, and \ref{lemma:J1}. 

The following proposition (proved in Appendix \ref{section:low-noise_kernel}) establishes the error achieved on \(D_3\) by (\ref{def:shat_D2}).
\begin{proposition}
\label{prop:up-external}
If \(\alpha \ge 1\) and \(t < n^{-\frac{2}{2\alpha+1}}\), then 
\[\sup_{f \in \mathcal{F}_\alpha} \bE \left(\int_{D_3} \left|\hat{s}_{3, n}(x,t)-s(x,t)\right|^2 p(x,t) \, \rd x\right) \lesssim \sqrt{t}\cdot n^{-\frac{2(\alpha-1)}{2\alpha+1}}\]
where \(\hat{s}_{3, n}\) is given by (\ref{def:shat_D2}).
\end{proposition}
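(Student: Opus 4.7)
The plan is to parallel the proof of Proposition \ref{prop:up-boundary} for $D_2$, adapting the Gaussian tail estimates for the exterior region $D_3$. By symmetry it suffices to integrate over $D_3 \cap \R^+$; throughout I write $y = x - 1 > C\sqrt{t}$.

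First I would dispose of the $\Omega^c$ contribution. On $\Omega^c$ the estimator reduces to the deterministic function $(\varphi_t' * u)/(\varphi_t * u)$, and a direct Mill's-ratio calculation on $D_3$ gives both $|\hat{s}|$ and $|s|$ of order $y/t$, so $|\hat{s} - s|^2 \lesssim y^2/t^2$. Since $\int_{D_3 \cap \R^+}(y/t)^2 p(x,t)\, dx \lesssim 1/\sqrt{t}$ by a standard Gaussian moment computation and $\bP(\Omega^c) \lesssim e^{-c n^{2\alpha/(2\alpha+1)}}$ by \cite{gine2021mathematical}, this contribution is negligible against $\sqrt{t}\, n^{-2(\alpha-1)/(2\alpha+1)}$.

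On $\Omega$, the pointwise inequality $\hat{f} \ge c_d/2$ on $[-1, 1]$ together with $f \le C_d$ yields $(\varphi_t * \hat{f})(x) \gtrsim p(x, t)$, so the score matching integrand is controlled by $\bE(J(x,t)^2)/p(x, t)^3$, where
\[
J(x,t) = (\varphi_t' * \hat{f})(x)(\varphi_t * f)(x) - (\varphi_t * \hat{f})(x)(\varphi_t' * f)(x).
\]
Applying integration by parts to both $\hat{\psi}$ and $\psi$ as in Section \ref{section:proof_sketch_upper_bound}, we obtain $J(x,t)^2 \lesssim J_1(x,t) + J_2(x,t) + J_3(x,t)$ with the same three pieces as in the $D_2$ argument. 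The term $J_2$ carries the factor $\varphi_t(x+1)^2$ and is exponentially small throughout $D_3$; the term $J_3$ is handled exactly as in Lemma \ref{lemma:J3} using Jensen's inequality and Lemma \ref{lemma:upper-1} to give $\bE(J_3) \lesssim c(x)^4 n^{-2(\alpha-1)/(2\alpha+1)}$ where $c(x) = \int_{-1}^{1}\varphi_t(x-\mu)\, d\mu$; and for $J_1$, the Taylor-expansion-with-cancellation argument of Lemma \ref{lemma:J1} applies verbatim.

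The crucial new input is a pair of uniform-in-$y$ Gaussian asymptotics on $D_3$: via the substitution $s = 1 - \mu$ and a Mill's-ratio estimate, for $y > C\sqrt{t}$ we have $p(x,t) \asymp f(1)\varphi_t(y)\cdot(t/y)$ and $\int_{-1}^{1} |\mu-1|^k \varphi_t(x - \mu)\, d\mu \asymp k!\,\varphi_t(y)(t/y)^{k+1}$. Plugging these into $\bE(J_k(x,t))/p(x,t)^3$ and using the cancellation of $\varphi_t(y)$ between numerator and denominator, each summand collapses to an integral of the form $\int_{C\sqrt{t}}^{\infty} \varphi_t(y)(t/y)^{2k-1}\, dy \asymp t^{k-1/2}$ (via $u = y/\sqrt{t}$). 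Multiplying by $n^{-2(\alpha - k)/(2\alpha+1)}$ shows the $k = 1$ term dominates under $t \le n^{-2/(2\alpha+1)}$, yielding $\sqrt{t}\, n^{-2(\alpha-1)/(2\alpha+1)}$. The main obstacle is getting the three Gaussian asymptotics (for $p$, for $c(x)$, and for the moments $\int|\mu-1|^k \varphi_t(x-\mu)\,d\mu$) aligned precisely so that the $\varphi_t(y)$ factors cancel cleanly; without this alignment, a spurious $y/t$ factor could appear in the final integrand and destroy the bound.
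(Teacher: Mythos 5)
Your proposal is substantively correct and reaches the claimed bound, and it shares the paper's high-level structure (reduce to the $\Omega$ event, control the numerator via the same $J_1+J_2+J_3$ decomposition, exploit the $k=0$ cancellation in $J_1$). The genuine difference is in how you bound the moment integrals $\int_{-1}^1 |\mu-1|^k \varphi_t(x-\mu)\,\rd\mu$ over $D_3$. The paper's proof routes this through Lemma~\ref{lemma:prob-1} (a conditional-probability tail bound), which yields $\int_{-1}^1 |\mu-1|^\ell \varphi_t(x-\mu)\,\rd\mu \lesssim \bP\{|\mathcal N(x,t)|\le 1\}\,(t\log(1/t))^{\ell/2}$ and thus carries a residual $\log^\alpha(1/t)$ factor into the $J_1$ estimate, which must then be absorbed via $t\log^\alpha(1/t)\lesssim\sqrt{t}$. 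You instead use the exact Gaussian asymptotic $\int_{-1}^1 |\mu-1|^k \varphi_t(x-\mu)\,\rd\mu \asymp \varphi_t(y)(t/y)^{k+1}$ for $y=x-1>C\sqrt{t}$, which is uniformly valid on $D_3$ (the lower bound holds because on $s\le t/y\le\sqrt{t}/C$ the factor $e^{-s^2/2t}$ is bounded below by $e^{-1/(2C^2)}$), and which removes the logarithmic slack entirely; the $\varphi_t(y)$ factors in $\bE(J_k)/p^3$ then telescope to the clean scalar integral $\int_{C\sqrt{t}}^\infty\varphi_t(y)(t/y)^{2k-1}\,\rd y\asymp t^{k-1/2}$, with $k=1$ dominating under $t\le n^{-2/(2\alpha+1)}$. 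The paper's Lemma~\ref{lemma:prob-1} route is more modular (it recycles a conditional-measure lemma also used elsewhere in Appendix~\ref{section:low-noise_datafree}), while your direct asymptotic is sharper and avoids the extra bookkeeping of the log factors; both close the argument. Two small inaccuracies to fix in a write-up: the tail probability is $\bP(\Omega^c)\lesssim e^{-c n^{(2\alpha-1)/(2\alpha+1)}}$ (per Lemma~\ref{lemma:infty-probability}), not with exponent $2\alpha/(2\alpha+1)$, and $\int_{D_3\cap\R^+}(y/t)^2 p(x,t)\,\rd x$ is $O(1/t)$ rather than $O(1/\sqrt t)$ — but neither affects the conclusion since that whole term is multiplied by the exponentially small $\bP(\Omega^c)$.
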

\noindent The analysis on the external part $D_3$ is a simple extension of the analysis of $D_2$. The only difficulty is to compare the exponential decay rates of the numerator and denominator of (\ref{eqn:intro-2}), but it can be shown the error is still dominated by the error on \(D_2\).

Theorem \ref{thm:up-1} can be directly concluded from combining the results of Propositions \ref{prop:up-internal}, \ref{prop:up-boundary}, and \ref{prop:up-external}, namely 
\begin{equation*}
    \sup_{f \in \mathcal{F}_\alpha} \bE\left(\int_\R |\hat{s}(x, t) - s(x, t)|^2 \, p(x, t) \, \rd x\right) \lesssim n^{-\frac{2(\alpha-1)}{2\alpha+1}}\left(1  + \sqrt{t}\log^{\alpha+1/2}(1/t) + n^{\frac{2(\alpha-1)}{2\alpha+1}}e^{-cn^{\frac{2(\alpha-1)}{2\alpha+1}}} + \sqrt{t} \right). 
\end{equation*}
Since \(t < n^{-\frac{2}{2\alpha+1}}\), it is immediately clear that the term \(n^{-\frac{2(\alpha-1)}{2\alpha+1}}\) dominates in the rate. Recall from Proposition \ref{prop:up-internal} that this term arises from the region \(D_1\).

    \subsection{Lower bound}
    \label{section:proof_sketch_lower_bound}
    In this section, we present the arguments for the lower bounds. Full details of the proofs can be found in Appendix \ref{appendix:lower_bound_proofs}.
\subsubsection{\texorpdfstring{Regime \(t \gtrsim 1\)}{Regime t >= 1}}
As pointed out in Section \ref{section:main_result_lower_bound}, the proof of Theorem \ref{thm:score_lowerbound_large_t} involves a two-point construction. Namely, we consider the densities \(f_0(\mu) = \frac{1}{2}\mathbbm{1}_{\{|\mu| \le 1\}}\) and \(f_1(\mu) = f_0(\mu) + \epsilon^{\alpha} w\left(\frac{\mu}{\rho}\right)\mathbbm{1}_{\{|\mu| \le 1\}}\) with \(w(x) = x\) and \(0 < \epsilon \le \rho\) to be set. Writing \(p_b(x, t) = (\varphi_t * f_b)(x)\), \(\psi_b(x, t) = (\varphi_t * f_b)'(x)\) and \(s_b(x, t) = \frac{\psi_b(x, t)}{p_b(x, t)}\) for \(b \in \{0, 1\}\), a standard two-point lower bound reduction yields 
\begin{align*}
    &\inf_{\hat{s}} \sup_{f \in \mathcal{F}_\alpha} \bE\left(\int_{-\infty}^{\infty} |\hat{s}(x, t) - s(x, t)|^2 \, p(x, t) \, \rd x \right) \\
    &\gtrsim \frac{e^{-n\dKL(f_0 \,||\, f_1)}}{\sqrt{t} \vee 1} \left(\int_{I} |s_0(x, t) - s_1(x, t)|^2 \, \rd x\right) \\
    &\gtrsim e^{-n\dKL(f_0 \,||\, f_1)}(t^{3/2} \vee 1) \left( \int_{I} |\psi_1(x, t)p_0(x, t) - \psi_0(x, t)p_1(x, t)|^2 \, \rd x\right) \\
    &\asymp \frac{\epsilon^{2\alpha}e^{-n\dKL(f_0 \,||\, f_1)}}{(\sqrt{t} \vee 1)\rho^2}\left(\int_{I} \left|\left(\int_{-1}^{1}\mu^2 \varphi_t(x-\mu) \, \rd \mu\right)\left(\int_{-1}^{1} \varphi_t(x-\mu) \, \rd \mu\right) - \left(\int_{-1}^{1} \mu\varphi_t(x-\mu) \, \rd \mu \right)^2 \right|^2 \, \rd x\right).
\end{align*}
Here, we have reduced to the interval \(I = [-1-\sqrt{t}, 1+\sqrt{t}]\) and have used \(p_b(\cdot, t) \gtrsim \frac{1}{\sqrt{t} \vee 1}\) for \(x \in I\). The following lemma (proved in Appendix \ref{appendix:lower_bound_proofs}) provides a lower bound for the integrand.
\begin{lemma}\label{lemma:g_sep}
    If \(x \in I\), then 
    \begin{equation}\label{eqn:g_sep}
        \left|\left(\int_{-1}^{1}\mu^2 \varphi_t(x-\mu) \, \rd \mu\right)\left(\int_{-1}^{1} \varphi_t(x-\mu) \, \rd \mu\right) - \left(\int_{-1}^{1} \mu\varphi_t(x-\mu) \, \rd \mu \right)^2 \right|^2 \gtrsim \frac{1}{t^2} e^{-\frac{2(2+\sqrt{t})^2}{t}}.
    \end{equation}
\end{lemma}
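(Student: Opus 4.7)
\textbf{Proof plan for Lemma \ref{lemma:g_sep}.} The key observation is that the expression on the left-hand side has a natural probabilistic meaning. Denote \(A(x) := \int_{-1}^{1} \varphi_t(x-\mu) \, \rd \mu\), \(B(x) := \int_{-1}^{1} \mu\varphi_t(x-\mu) \, \rd \mu\), and \(C(x) := \int_{-1}^{1} \mu^2 \varphi_t(x-\mu) \, \rd \mu\). Then the quantity inside the absolute value is \(C(x)A(x) - B(x)^2\), which is non-negative by the Cauchy-Schwarz inequality applied to the inner product \(\langle f, g\rangle := \int_{-1}^{1} f(\mu) g(\mu) \varphi_t(x-\mu)\, \rd\mu\). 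Moreover, writing \(q(\mu) := \varphi_t(x-\mu)\mathbbm{1}_{[-1,1]}(\mu)/A(x)\) as the normalized truncated-Gaussian density on \([-1,1]\), one has \(C(x)A(x) - B(x)^2 = A(x)^2 \, \mathrm{Var}_q(\mu)\), so it suffices to lower bound this variance-like quantity and square at the end.

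The plan is to use the symmetric representation of the variance:
\begin{equation*}
    C(x) A(x) - B(x)^2 = \frac{1}{2} \int_{-1}^{1} \int_{-1}^{1} (\mu - \mu')^2 \, \varphi_t(x-\mu) \varphi_t(x-\mu') \, \rd \mu \, \rd \mu',
\end{equation*}
which follows by expanding \((\mu-\mu')^2 = \mu^2 - 2\mu \mu' + (\mu')^2\) and integrating. This representation turns the task into lower bounding a positive double integral, so any pointwise lower bound on the integrand immediately gives a lower bound on the whole.

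Next, I would exploit that the region of integration is compact and that \(x \in I = [-1-\sqrt{t}, 1+\sqrt{t}]\). For any \(x \in I\) and any \(\mu \in [-1,1]\), we have \(|x - \mu| \le 2 + \sqrt{t}\), so the trivial pointwise bound
\begin{equation*}
    \varphi_t(x-\mu) \ge \frac{1}{\sqrt{2\pi t}} \exp\left(-\frac{(2+\sqrt{t})^2}{2t}\right)
\end{equation*}
holds uniformly. Multiplying two such bounds and using the explicit elementary computation \(\int_{-1}^{1} \int_{-1}^{1} (\mu-\mu')^2 \, \rd \mu \, \rd \mu' = \frac{8}{3}\), one gets
\begin{equation*}
    C(x) A(x) - B(x)^2 \ge \frac{1}{2} \cdot \frac{1}{2\pi t} e^{-(2+\sqrt{t})^2/t} \cdot \frac{8}{3} = \frac{2}{3\pi t} \exp\left(-\frac{(2+\sqrt{t})^2}{t}\right).
\end{equation*}
Squaring both sides yields precisely \(|C(x) A(x) - B(x)^2|^2 \gtrsim \frac{1}{t^2} \exp\left(-\frac{2(2+\sqrt{t})^2}{t}\right)\), completing the proof.

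I do not anticipate a serious obstacle here: once the algebraic identity expressing \(CA - B^2\) as a symmetric double integral is in hand, the uniform Gaussian lower bound on the compact set \(I \times [-1,1]^2\) and the elementary moment computation finish the argument. The only mild subtlety is recognizing the ``variance-like'' structure of the expression so that one immediately sees it is non-negative and admits the convenient double-integral form, which is what sidesteps having to track the signs of \(B(x)\) or the location of \(x\) within \(I\) (e.g., near the boundary versus the center).
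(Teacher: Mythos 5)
Your proof is correct and takes essentially the same approach as the paper. The paper frames the key step probabilistically, writing the integrals as expectations over i.i.d.\ $U, U' \sim \Uniform([-1,1])$ and invoking the identity $\E(X^2)\E(Y^2) - (\E(XY))^2 = \tfrac{1}{2}\E((XY' - X'Y)^2)$, which when unfolded is exactly your symmetric double-integral identity; the remaining step (uniform Gaussian lower bound on the compact region, elementary moment computation) is identical in both.
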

Since the length of \(I\) is of order \(\sqrt{t} \vee 1\), the integral can be lower bounded directly using (\ref{eqn:g_sep}). Furthermore, it is straightforward to obtain the bound \(n\dKL(f_0\,||\,f_1) \lesssim \frac{n\epsilon^{2\alpha}}{\rho^2}\), which yields the minimax lower bound 
\begin{equation*}
    \inf_{\hat{s}}\sup_{f \in \mathcal{F}_\alpha} \bE\left(\int_{-\infty}^{\infty} |\hat{s}(x, t) - s(x, t)|^2 \, p(x, t) \, \rd x\right)\gtrsim \frac{\epsilon^{2\alpha}}{(t^2 \vee 1) \rho^2} e^{-\frac{2(2+\sqrt{t})^2}{t}} e^{-\frac{Cn\epsilon^{2\alpha}}{\rho^2}}. 
\end{equation*}
Selecting \(\rho \asymp 1\) and \(\epsilon \asymp n^{-\frac{1}{2\alpha}}\), and noting \(t \geq c_2\), the rate \(\frac{1}{nt^2}\) claimed in Theorem \ref{thm:score_lowerbound_large_t} follows.

\subsubsection{\texorpdfstring{Regime \(t \lesssim 1\)}{Regime t <= 1}}
Our discussion in this section will focus on the difficulties and proof methods for Theorem \ref{thm:score_lowerbound}. At a high level, since the score function \(s(x, t) = \frac{\partial}{\partial x}\log p(x, t) = \frac{\frac{\partial}{\partial x}p(x, t)}{p(x, t)}\) involves the derivative of the density \(p(\cdot, t)\), it is plausible the difficulty of score estimation is determined by the difficulty of estimating the derivative of the density. Indeed, the proof of Theorem \ref{thm:score_lowerbound} first proceeds by showing score estimation is no easier (up to constant factors) than estimating \(\frac{\partial}{\partial x} p(x, t) = (\varphi_t * f)'(x)\) on a subinterval\footnote{Specifically, we use the interval \(I = [-1+\sqrt{C t \log(1/t)}, 1-\sqrt{Ct\log(1/t)}]\) which is contained in and just slightly smaller than \([-1, 1]\). This is done for technical reasons, but it suffices for intuition for the reader to think about \([-1, 1]\).} \(I \subset [-1, 1]\). From here, we use Fano's method as noted in Section \ref{section:main_result_lower_bound}.

Since the target is the derivative \((\varphi_t * f)'(x)\) of the ``noisy" density and not that of the data-generating distribution, one needs to carefully track the dependence on \(t\) of the problem's statistical difficulty; intuitively, estimation ought to become easier as \(t\) increases. The parameter \(\rho\) in our submodel construction (\ref{def:fano_submodel}) is introduced for this reason. The standard construction corresponding to \(\rho = \epsilon\) happens to give rise to a serious calculation difficulty. 

To illustrate this difficulty, suppose \(\rho = \epsilon\). An important ingredient in Fano's method is to compute the separations \(||(\varphi_t * f_b)'(x) - (\varphi_t * f_{b'})'(x)||_{L^2(I)}^{2}\). In the case \(t = 0\), the target is indeed the derivative of the data-generating density, and it follows immediately from the fact \(w\) is compactly supported that
\begin{equation}\label{eqn:dgp_sep}
    ||f_{b}' - f_{b'}'||_{L^2(I)}^2 \asymp \epsilon^{2\alpha-1} d_{\text{Ham}}(b, b').
\end{equation}
With this separation of the derivatives in hand, it remains to bound the pairwise Kullback-Leibler divergences for use in Fano's method; a simple calculation yields \(\dKL(f_b^{\otimes n} \,||\, f_{b'}^{\otimes n}) \lesssim n ||f_b - f_{b'}||_{2}^2 \lesssim n\epsilon^{2\alpha}\). Fano's method (e.g. see \cite{tsybakov_introduction_2009}) yields,
\begin{equation*}
    \inf_{\widehat{f'}} \sup_{f \in \mathcal{F}_\alpha} \bE \left(\left|\left|\widehat{f'} - f'\right|\right|_{L^2([-1, 1])}^2\right) \gtrsim  \epsilon^{2\alpha-1} m \left(1 - \frac{C n\epsilon^{2\alpha} + \log 2}{m} \right).
\end{equation*}
Since \(m \asymp \frac{1}{\epsilon}\), optimizing over \(\epsilon\) forces the choice \(\epsilon \asymp n^{-\frac{1}{2\alpha+1}}\), yielding the lower bound \(n^{-\frac{2(\alpha-1)}{2\alpha+1}}\). 

The difficulty arises when examining the case \(t > 0\). In this case, the relevant separation to calculate is no longer (\ref{eqn:dgp_sep}), but rather \(||(\varphi_t * f_b)' - (\varphi_t * f_{b'})'||_{L^2([-1, 1])}^2\). Due to the convolution, the collection \(\left\{\varphi_t' * w\left(\frac{\cdot - x_i}{\epsilon}\right)\right\}_{i=1}^{m}\) no longer exhibits the extremely convenient pairwise \(L^2\)-orthogonality enjoyed by \(\left\{w\left(\frac{\cdot - x_i}{\epsilon}\right) \right\}_{i=1}^{m}\), and so a direct calculation of the separation is not at all straightforward. Generally speaking, this poses a major technical hurdle in proving minimax lower bounds for estimation of Gaussian mixtures, and other works \cite{kim_minimax_2014,kim_minimax_2022,polyanskiy_sharp_2021} investigating this problem have gone down the route of constructing collections which satisfy some sort of approximate pairwise \(L^2\)-orthogonality. However, in our view, it appears quite challenging to successfully mimic such arguments for computing pairwise separation of derivatives. Furthermore, these existing approaches cannot be directly ported without further thought since our data are drawn i.i.d. from \(f\) and not the Gaussian mixture \(\varphi_t * f\); there is more information available in our setting which needs to be accounted for in the lower bound.

It is precisely at this stage where introducing a parameter \(\rho\) to be chosen in (\ref{def:fano_submodel}) is quite convenient. Before discussing the intuition, we first state the separation we can achieve for the derivatives when \(t > 0\). 
\begin{proposition}\label{prop:derivative_separation}
    There exists a universal constant \(C_3 > 0\) such that if \(\rho = C_3 \sqrt{t} \vee \epsilon\), then 
    \begin{equation*}
        \int_{I} |\psi_b(x, t) - \psi_{b'}(x, t)|^2 \, \rd x \gtrsim d_{\text{Ham}}(b, b') \epsilon^{2\alpha} \rho^{-1}
    \end{equation*}
    for \(b, b' \in \{0, 1\}^m\) where \(\psi_b(\cdot, t) = (\varphi_t * f_b)'\) and \(\psi_{b'}(\cdot, t) = (\varphi_t * f_{b'})'\). 
\end{proposition}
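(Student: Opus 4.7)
My overall plan is to exploit the fact that, under the nonstandard choice \(\rho \geq C_3 \sqrt{t}\), the Gaussian smoothing scale \(\sqrt{t}\) is much smaller than the spacing \(2\rho\) between consecutive bumps, so the smoothed bumps \(g_i := \varphi_t' * w_i\) retain approximate pairwise \(L^2\)-orthogonality. To set up, I would write
\begin{equation*}
\psi_b(x,t) - \psi_{b'}(x,t) = \epsilon^\alpha \sum_{i=1}^{m} a_i\, g_i(x), \quad a_i := b_i - b_i' \in \{-1,0,1\},
\end{equation*}
with \(g_i = \varphi_t * w_i'\) via integration by parts, using that \(w_i(\mu) := w((\mu-x_i)/\rho)\) is compactly supported in \([x_i - \rho, x_i + \rho]\). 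This reduces the task to a lower bound of order \(d_{\text{Ham}}(b,b')/\rho\) on \(\int_I \left|\sum_i a_i g_i\right|^2 \, \rd x\).

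The next step is to rescale via \(y := (x - x_i)/\rho\) with \(\tau := t/\rho^2 \leq 1/C_3^2\), which yields the clean identities \(g_i(x_i + \rho y) = \rho^{-1}(\varphi_\tau * w')(y)\) and \(g_j(x_i + \rho y) = \rho^{-1}(\varphi_\tau * w')(2(i-j) + y)\) for \(j \neq i\). I would then pin down a concrete \(w\); for instance, \(w(x) = x\,\eta(x)\) with \(\eta\) an even \(C^\infty\) bump supported in \([-1,1]\) and equal to \(1\) on \([-1/2, 1/2]\). This choice meets all requirements of the construction (\ref{def:fano_submodel}), has \(\int w = 0\) by symmetry, and has \(w' \equiv 1\) on \([-1/2, 1/2]\). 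Consequently \(\int_{-1/2}^{1/2}(\varphi_\tau*w')^2(y)\,\rd y \to 1\) as \(\tau \to 0\), hence is bounded below by a universal \(c_0 > 0\) uniformly in \(\tau \leq 1/C_3^2\) once \(C_3\) is taken sufficiently large.

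The hard part will be controlling the off-diagonal contributions \(g_j(x_i + \rho y)\) for \(j \neq i\); indeed, this is precisely the reason the standard choice \(\rho = \epsilon\) fails. Since \(w'\) is supported in \([-1,1]\) and \(|2(i-j) + y| \geq 3/2\) when \(|y| \leq 1/2\) and \(j \neq i\), Gaussian tail decay yields \(|(\varphi_\tau * w')(2(i-j) + y)| \lesssim \|w'\|_\infty \exp(-(2|i-j| - 3/2)^2/(2\tau))\), which is summable over \(j\) and exponentially small in \(C_3^2\). So on the ``core'' \(\mathrm{core}_i := [x_i - \rho/2, x_i + \rho/2]\), the total cross contribution is negligible compared to the diagonal term \(a_i \rho^{-1}(\varphi_\tau * w')(y)\), provided \(C_3\) is chosen large enough.

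Finally, combining the diagonal lower bound with the inequality \((A+B)^2 \geq A^2/2 - B^2\) and integrating over \(\mathrm{core}_i\) yields \(\int_{\mathrm{core}_i} \left|\sum_j a_j g_j\right|^2 \, \rd x \geq c_0 a_i^2/(4\rho)\) whenever \(a_i \neq 0\). Since the cores are pairwise disjoint (as \(|x_i - x_j| \geq 2\rho\)) and lie inside \(I\) (modulo a constant number of boundary indices that can be excluded from the grid without affecting \(\log|\mathcal{B}| \asymp m\)), summing over \(i\) and multiplying by \(\epsilon^{2\alpha}\) delivers the claimed lower bound of order \(\epsilon^{2\alpha} d_{\text{Ham}}(b,b')/\rho\).
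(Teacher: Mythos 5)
Your proposal is correct and takes a genuinely different route from the paper. The paper proves Proposition~\ref{prop:derivative_separation} by writing $\int_I = \Gamma_{b,b'}(t) - \int_{I^c}$ with $\Gamma_{b,b'}(t)$ the $L^2(\R)$ norm, then lower bounding $\Gamma_{b,b'}(t)$ via a first-order Taylor expansion in $t$ whose derivative is controlled through the heat equation identity $\frac{\rd}{\rd t}\|\varphi_t * h\|_2^2 = -\|\varphi_t * h'\|_2^2$; the bumps' disjoint supports at $t=0$ give $\Gamma_{b,b'}(0)$ and $\Gamma_{b,b'}'$ exactly, and a Gaussian tail bound handles $\int_{I^c}$. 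You instead attack $\int_I$ directly by rescaling to $\tau = t/\rho^2 \le 1/C_3^2$, decomposing $I$ into pairwise-disjoint cores $[x_i - \rho/2, x_i + \rho/2]$, and showing that on each core the diagonal term $\rho^{-1}(\varphi_\tau * w')(y)$ dominates the off-diagonal contributions, which decay like $e^{-c/\tau} = e^{-cC_3^2}$. This is precisely the approximate-$L^2$-orthogonality route the paper flags as difficult in Section~\ref{section:proof_sketch_lower_bound}; what makes it go through here is exactly the nonstandard choice $\rho \gtrsim \sqrt{t}$, which keeps the Gaussian kernel width below the inter-bump spacing. The heat-equation argument is cleaner (no need to track off-diagonal cross terms, works for any admissible $w$), while your argument is more elementary and physically transparent but does commit to a particular $w$ with $w' \equiv 1$ on $[-1/2,1/2]$ to guarantee $\int_{-1/2}^{1/2}(\varphi_\tau * w')^2 \ge c_0$ uniformly for small $\tau$; this is harmless since the overall lower-bound construction is free to fix such a $w$, and it does satisfy all the requirements listed in the paper. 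One small simplification: your parenthetical about excluding boundary indices is unnecessary, since the grid is already placed in $[-1+\sqrt{Ct\log(1/t)}+C_D\rho, 1-\sqrt{Ct\log(1/t)}-C_D\rho]$ so all cores automatically lie inside $I$.
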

The proof of Proposition \ref{prop:derivative_separation} appears in Appendix \ref{appendix:lower_bound_proofs}. To calculate the separation, we essentially express \(||(\varphi_t * f_b)' - (\varphi_t * f_{b'})'||_{L^2(I)}^2 = ||(\varphi_t * f_b)' - (\varphi_t * f_{b'})'||_2^2 - ||(\varphi_t * f_b)' - (\varphi_t * f_{b'})'||_{L^2(I^c)}^2\). It turns out the first term on the right hand side is the dominating term, so our discussion will focus there. It is very well known that convolution with the Gaussian density solves the initial value problem for the heat equation. Specifically, for a function \(h : \R \to \R\), we have the identity \(\frac{\partial}{\partial t} (\varphi_t * h)(x) = \frac{1}{2} \cdot \frac{\partial^2}{\partial^2 x} (\varphi_t * h)(x)\). Since \(||(\varphi_t * f_b)' - (\varphi_t * f_{b'})'||_2^2 = ||\varphi_t * (f_b - f_{b'})'||_2^2\) and \(f_b - f_{b'} \in C^\infty(\R)\) with compact support, it follows by integration by parts and the heat equation that \(\frac{d}{dt} ||(\varphi_t * f_b)' - (\varphi_t * f_{b'})'||_2^2 = - ||\varphi_t * (f_b - f_{b'})''||_2^2 \ge - ||(f_b - f_{b'})''||_2^2\). Therefore, a first-order Taylor expansion yields 
\begin{equation}\label{eqn:heat_sep}
    ||(\varphi_t * f_b)' - (\varphi_t * f_{b'})'||_2^2 \ge ||(f_b - f_{b'})'||_2^2 - t ||(f_b - f_{b'})''||_2^2 = \epsilon^{2\alpha} \rho^{-1} d_{\text{Ham}}(b, b') \left(c_1 - t\rho^{-2} c_2\right). 
\end{equation}
where \(c_1 = ||w'||_2^2\) and \(c_2 = ||w''||_2^2\) are constants. Since we pick \(\rho \asymp \sqrt{t} \vee \epsilon\) in Proposition \ref{prop:derivative_separation}, the claimed separation follows. 

A comparison of (\ref{eqn:heat_sep}) to (\ref{eqn:dgp_sep}) reveals the utility in introducing \(\rho\) as a free parameter. If \(\rho = \epsilon \asymp n^{-\frac{1}{2\alpha+1}}\) as in (\ref{eqn:dgp_sep}) was decided at the outset, then (\ref{eqn:heat_sep}) is only meaningful when \(t \lesssim \epsilon^2 \asymp n^{-\frac{2}{2\alpha+1}}\) and only delivers the lower bound \(n^{-\frac{2(\alpha-1)}{2\alpha+1}}\) in this regime. Now with \(\rho\) as a free parameter, we have the flexibility to optimize in the last step of Fano's method subject to the constraints \(\epsilon \le \rho\) (to ensure \(\{f_b\}_{b \in \mathcal{B}} \subset \mathcal{F}_\alpha\)) and \(t\rho^{-2} \lesssim 1\). Armed with (\ref{eqn:heat_sep}) and the pairwise Kullback-Leibler divergence bound \(\dKL(f_b^{\otimes n} \,||\, f_{b'}^{\otimes n}) \lesssim n ||f_b - f_{b'}||_2^2 \lesssim n\epsilon^{2\alpha}\), it follows from Fano's method 
\begin{equation}\label{eqn:heat_fano}
    \inf_{\hat{\psi}} \sup_{f \in \mathcal{F}_\alpha} \bE \left(\left|\left|\hat{\psi} - (\varphi_t * f)'\right|\right|_2^2\right) \gtrsim \epsilon^{2\alpha}\rho^{-1} m \left(1 - \frac{Cn\epsilon^{2\alpha} + \log 2}{m}\right)
\end{equation}
under the constraint \(\epsilon \le \rho\) and \(t\rho^{-2} \lesssim 1\). Select \(\rho \asymp \sqrt{t} \vee \epsilon\) and 
\begin{equation*}
    \epsilon \asymp \left(n\sqrt{t}\right)^{-\frac{1}{2\alpha}} \wedge 
    \begin{cases}
        n^{-\frac{1}{2\alpha+1}} &\textit{if } \alpha \ge 1, \\
        \sqrt{t} &\textit{if } \alpha < 1. 
    \end{cases}
\end{equation*}
Noting \(m \asymp \frac{1}{\rho}\) in (\ref{eqn:heat_fano}), the lower bound \(\frac{1}{nt^{3/2}} \wedge \left(n^{-\frac{2(\alpha-1)}{2\alpha+1}} + t^{\alpha-1}\right)\) follows. Interestingly, we only have \(\rho \asymp \epsilon\) as in the traditional construction yielding (\ref{eqn:dgp_sep}) when the regime \(t \lesssim n^{-\frac{2}{2\alpha+1}}\) and \(\alpha \ge 1\) is in force. Otherwise, the choice \(\rho \asymp \sqrt{t}\) is made to yield the sharp minimax lower bound.

    \subsection{Distribution estimation}
    \label{section:proof_sketch_density_estimation}
    In this section, we present the arguments for the density estimation. To improve readability, we have postponed the proofs of Lemma~\ref{lem:9} and Lemma~\ref{lem:10} to Appendix~\ref{app:DE}.

\subsubsection{Proof of Theorem~\ref{thm:TV}}
The proofs of Theorems~\ref{thm:TV} and \ref{thm:Wasserstein} involve controlling the total variation and Wasserstein distances between our estimator $\widehat{X}_0$ and the target $X_0 \sim f$. The estimator $\widehat{X}_0$ is derived from a designed diffusion process $\vv{\widehat{Y}}$, which approximates the true reverse process $\vv{Y}$ of $\vv{X}$. This is achieved by substituting the true score function \(s\) in (\ref{eqn:backward_process}) with \(\widehat{s}\) and the initial distribution \(p(\cdot, T)\) with $\mathcal{N}(0,T)$. To compare \(\vv{Y}\) and \(\vv{\widehat{Y}}\), we introduce an auxiliary backward process \(\vv{\widetilde{Y}}\) that satisfies (\ref{eqn:backward_process}) with the initialization of the ground truth \(p(\cdot, T)\) but with the estimator \(\widehat{s}\) as the drift term. 

\begin{table}[h]
    \centering
    \begin{tabular}{lll}  
        \hline
        & \textbf{INIT} & \textbf{DRIFT} \\
        \hline
        $\vv Y$ & $p(\cdot,T)$ & $s(Y_t,T-t)$ \\
        $\widetilde{\vv Y}$ & $p(\cdot,T)$ & $\widehat{s}(\widetilde{Y}_t,T-t)$ \\
        $\widehat{\vv Y}$ & $\mathcal{N}(0,T)$ & $\widehat{s}(\widehat{Y}_t,T-t)$ \\
        \hline
    \end{tabular}
    \caption{Comparison of initializations and drift terms for different processes.}
    \label{tab:1}
\end{table}


Firstly, we recall the Data Processing Inequality (DPI), which plays a crucial role in our subsequent proof. The Data Processing Inequality \cite{cover2006elements,polyanskiy2024information} asserts that when two random variables \(X\) and \(Y\) are processed through the same channel, the divergence between their resulting distributions cannot increase. Specifically, if \(X' = f(X)\) and \(Y' = f(Y)\) for some function \(f\) (which may include independent randomness for \(X\) and \(Y\)), then for any divergence measure \(D\), we have 
\[
D(X' \| Y') \le D(X \| Y).
\]
This inequality underscores the fact that information cannot be amplified through a noisy channel, thereby ensuring that the divergence, whether it be Kullback-Leibler, Total Variation, or another measure, does not increase during the transformation.

We use $\widehat{Y}_T \mathbbm{1}_{\left\{| \widehat{Y}_T| \le 1\right\}}$ defined in Algorithm~\ref{algo:DE} as our estimator. Noting that $Y_T = Y_T \mathbbm{1}_{\left\{| Y_T| \le 1\right\}}$ since $Y_T \sim f$ is supported on $[-1,1]$, we have $\dTV(Y_T,\widehat Y_T \mathbbm{1}_{\left\{|\widehat Y_T| \le 1\right\}}) = \dTV(Y_T \mathbbm{1}_{\left\{| Y_T| \le 1\right\}},\widehat Y_T \mathbbm{1}_{\left\{|\widehat Y_T| \le 1\right\}}) \le \dTV(Y_T,\widehat{Y}_T)$, by the DPI. Therefore, we have $\E \left( \dTV(Y_T,\widehat Y_T \mathbbm{1}_{\left\{|\widehat Y_T| \le 1\right\}})\right) \lesssim \E \left( \dTV(Y_T, \widehat{Y}_T) \right)$, which implies we only need to bound $\E \left( \dTV(Y_T, \widehat{Y}_T) \right)$.  The triangle inequality implies,
\bb
\dTV(Y_T, \widehat{Y}_T) \le \dTV(Y_T, \widetilde{Y}_T) + \dTV(\widehat{Y}_T, \widetilde{Y}_T).
\ee
By applying the DPI, we have the bound \(\dTV(\widehat{Y}_T, \widetilde{Y}_T) \le \dTV(p(\cdot, T), \mathcal{N}(0,T))\), where the stochastic differential equation (SDE) with drift term \(\widehat{s}\) is considered as the channel. Additionally, by Lemma~\ref{lem:KLappro} and Pinsker's inequality \cite{tsybakov_introduction_2009}, which states that for any two distributions \( P \) and \( Q \), \( 2\dTV(P,Q)^2 \leq \dKL(P,Q) \), we obtain \(\dTV(p(\cdot, T), \mathcal{N}(0,T)) \lesssim 1/\sqrt{T}\). We can make it \( 1/\sqrt{n} \) by taking \( T = n \).


The remaining task is to bound \(\dTV(Y_T, \widetilde{Y}_T)\). By Lemma~\ref{lemma:Girsanov}, which follows from Girsanov's Theorem, the Kullback-Leibler divergence between the path measures of \(\vv{\widetilde{Y}}\) and \(\vv{Y}\) can be bounded by the accumulated score matching error (up to a constant), \(\int_{0}^T \int_{\mathbb{R}} |s(x, t) - \widehat{s}(x, t)|^2 \, p(x, t) \, \diff x \, \diff t\). Applying Pinsker's inequality, we obtain the following bound:
\bbb
\label{eq:TV2SME}
&~~\E \left(\dTV(Y_T, \widetilde{Y}_T) \right)\lesssim\E \left(\dKL(Y_T, \widetilde{Y}_T)^{1/2} \right) \lesssim \E \left(\dKL(Y_T, \widetilde{Y}_T) \right)^{1/2}\nonumber \\
&\lesssim \sqrt{\int_{0}^T \int_{\mathbb{R}} \E[|s(x, t) - \widehat{s}(x, t)|^2] \, p(x, t) \, \diff x \, \diff t}.
\eee

As is discussed in Section~\ref{section:discussion}, we have
\[
\int_{\mathbb{R}} \mathbb{E}(|s(x, t) - \widehat{s}(x, t)|^2) \, p(x, t) \, \mathrm{d}x \lesssim (t^{\alpha-1}+n^{-\frac{2(\alpha-1)}{2\alpha+1}}) \wedge \frac{1}{nt^{\frac{3}{2}}} \wedge \frac{1}{nt^2}.
\]
Therefore, we have

\bbb
\label{eq:TV}
\int_{0}^T \int_{\mathbb{R}} \E(|s(x, t) - \widehat{s}(x, t)|^2) \, p(x, t) \, \diff x \, \diff t \lesssim \int_0^{t_*} (t^{\alpha-1}+n^{-\frac{2(\alpha-1)}{2\alpha+1}}) \diff t + \int_{t_*}^1 \frac{1}{nt^{\frac{3}{2}}} \diff t + \int_{1}^T \frac{1}{nt^2} \diff t,
\eee
for any \( t_* \). We choose \( t_* = n^{-\frac{2}{2\alpha+1}} \) to optimize this upper bound, which gives

\[
\int_{0}^T \int_{\mathbb{R}} \E(|s(x, t) - \widehat{s}(x, t)|^2) \, p(x, t) \, \diff x \, \diff t \lesssim n^{-\frac{2\alpha}{2\alpha+1}},
\]
which concludes the proof.


\subsubsection{Proof of Theorem~\ref{thm:Wasserstein}}
\label{sec:PFWASS}
Note that the support of \(f\) is \([-1,1]\),  and we estimate it by the truncated version $\widehat{X}_0 = \widehat{Y}_T \mathbbm{1}_{\left\{|\widehat{Y}_T| \le 1\right\}}$. A quick observation is that $\mathrm{W_1}(\widehat{X}_0, X_0) \lesssim \dTV(\widehat{X}_0, X_0)$ since both \(X_0\) and \(\widehat{X}_0\) are random variables with supports having constant diameter \cite{villani2009optimal}. But that upper bound is not sufficient, because for the estimation with respect to Wasserstein distance, we expect to obtain the faster rate $n^{-\frac{1}{2}}$ \cite{MR4441130} for $d=1$. 



Similar to \cite{MR4441130, oko2023diffusion}, we introduce intermediate processes for a better coupling under the Wasserstein distance. For any $t>0$, consider the process \(\vv{Y}^t = \{Y^t_\tau: \tau \in [0,T]\}\), which satisfies (\ref{eqn:backward_process}) with the true score \(s\) as the drift function for times less than \(T-t\), and the estimator \(\hat{s}\) for times after \(T-t\) (with a truncation at time \(T\)). That is,
\begin{align*}
&\diff Y^t_\tau = s(Y^t_\tau,T-\tau)\, \diff \tau + \diff W_\tau,~~\tau \in [0,T-t],\\
&\diff Y^t_\tau = \widehat{s}(Y^t_\tau,T-\tau)\, \diff \tau + \diff W_\tau,~~\tau \in (T-t,T],
\end{align*}
where \(Y^t_0 \sim p(\cdot,T)\). Let \(\widehat{Y}^t_T = Y^t_T \mathbbm{1}_{\left\{|Y^t_T| \le 1\right\}}\). We introduce \(0 = t_0 < t_1 < \ldots < t_N \le t_{N+1} = 1\), where \(t_i = \frac{2^i}{n}\) with \(i = 1, \ldots, \lfloor \log_2(n) \rfloor =: N\). By the triangle inequality, to bound \(\mathbb{E}(\mathrm{W_1}(Y^0_T, \widehat{Y}^T_T))\), we only need to show that the Wasserstein-1 distance between \( Y^{t_i}_T \) and \( Y^{t_{i+1}}_T \), and the Wasserstein-1 distance between \(Y^1_T\) and \(Y^T_T\) are small. Noting that \(Y^0_T = X_0\) and \(\widehat{Y}^T_T = \widehat{X_0}\), \(\mathbb{E}(\mathrm{W_1}(Y^0_T, \widehat{Y}^T_T))\) is the needed bound.

Given two time points \(0 \le a < b \le 1\), we can bound the Wasserstein-1 distance between \(\widehat{Y}^{a}_T\) and \(\widehat{Y}^{b}_T\) in terms of the Kullback-Leibler divergence between the path measures of \(\vv{Y}^a\) and \(\vv{Y}^b\) by the following lemma.
\begin{lemma}
\label{lem:9}
    Given the estimator \(\widehat{Y}^t_T = Y^t_T \mathbbm{1}_{\left\{|Y^t_T| \le 1\right\}}\), where \(\vv{Y}^t\) is defined above, for \(0 \le a < b \le 1\), we have
    \begin{equation*}
    \mathrm{W_1}(\widehat{Y}^{a}_T, \widehat{Y}^{b}_T) \lesssim \sqrt{b + \dKL(\vv{Y}||\vv{Y}^{b})} \cdot \sqrt{\dKL(\vv{Y}^{a}||\vv{Y}^{b})},
    \end{equation*}
where $\dKL(\vv{Y}^{a}||\vv{Y}^{b})$ \(\left(\dKL(\vv{Y}||\vv{Y}^{b})\right)\) denotes the Kullback-Leibler divergence between the path measures of the processes $\vv{Y}^{a}$ and $\vv{Y}^{b}$ \(\left(\vv{Y} \text{ and } \vv{Y}^{b}\right)\).
\end{lemma}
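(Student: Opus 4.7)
The plan is to couple $\vv{Y}^a$ and $\vv{Y}^b$ synchronously (common initial distribution $p(\cdot, T)$ at time $0$ and common Brownian driver) and to analyze the drift-integral representation of $Y^a_T - Y^b_T$ in $L^2$. Since both processes employ the true score on $[0, T-b]$, we have $Y^a_\tau = Y^b_\tau$ for all $\tau \in [0, T-b]$, so the two paths depart only on the length-$b$ window $(T-b, T]$. Under this coupling,
\[
\mathrm{W_1}(\widehat{Y}^a_T, \widehat{Y}^b_T) \le \mathbb{E}|\widehat{Y}^a_T - \widehat{Y}^b_T| \lesssim \sqrt{\mathbb{E}|Y^a_T - Y^b_T|^2},
\]
where the last inequality uses Jensen after absorbing the lower-order correction from the truncation at $\pm 1$: since the diffused densities are bounded, the probability that $Y^a, Y^b$ straddle $\pm 1$ is controlled by $\mathbb{E}|Y^a - Y^b|$, putting the boundary contribution at the same order as the bulk.

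Writing $Y^a_T - Y^b_T = \int_{T-b}^{T} [\mathrm{drift}^a_\tau(Y^a_\tau) - \mathrm{drift}^b_\tau(Y^b_\tau)] \, d\tau$ and applying Cauchy-Schwarz to the time integral of length $b$ yields
\[
\mathbb{E}|Y^a_T - Y^b_T|^2 \le b \int_{T-b}^{T} \mathbb{E}\bigl(\mathrm{drift}^a_\tau(Y^a_\tau) - \mathrm{drift}^b_\tau(Y^b_\tau)\bigr)^2 d\tau,
\]
which supplies the $\sqrt{b}$ prefactor in the stated bound. Add and subtract $\mathrm{drift}^b_\tau(Y^a_\tau)$ to decompose the drift difference into (i) $\mathrm{drift}^a_\tau(Y^a_\tau) - \mathrm{drift}^b_\tau(Y^a_\tau)$, which is the Girsanov integrand for $\dKL(\vv{Y}^a \| \vv{Y}^b)$ and integrates in squared form to $2\dKL(\vv{Y}^a \| \vv{Y}^b)$---since $Y^a_\tau \sim p(\cdot, T-\tau)$ on $[T-b, T-a]$ under the true dynamics and the two drifts coincide elsewhere---and (ii) $\widehat{s}(Y^a_\tau, T-\tau) - \widehat{s}(Y^b_\tau, T-\tau)$, which is handled by the further split $\widehat{s} = s + (\widehat{s} - s)$: the $s$-part is controlled via Lipschitz regularity of the smooth diffused true score, while the $(\widehat{s} - s)$-part integrates to a contribution bounded by $\dKL(\vv{Y} \| \vv{Y}^b) \cdot \mathbb{E}|Y^a_\tau - Y^b_\tau|^2$.

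A Gronwall iteration on $u(\tau) := \mathbb{E}|Y^a_\tau - Y^b_\tau|^2$ then closes the argument, delivering
\[
\mathbb{E}|Y^a_T - Y^b_T|^2 \lesssim \bigl(b + \dKL(\vv{Y} \| \vv{Y}^b)\bigr) \cdot \dKL(\vv{Y}^a \| \vv{Y}^b),
\]
from which the stated $\mathrm{W_1}$ bound follows by the first paragraph's reduction. The main obstacle is Step (ii): controlling $\widehat{s}(Y^a) - \widehat{s}(Y^b)$ without direct Lipschitz regularity of $\widehat{s}$. The planned split $\widehat{s} = s + (\widehat{s} - s)$, combined with the smoothness of $s(\cdot, t)$ for $t > 0$ afforded by the Gaussian convolution and the Gronwall closure, will require careful bookkeeping near the boundary of $\supp(f)$ where $s(\cdot, t)$ has worse regularity as $t \to 0$, together with confirming that the truncation's boundary correction at the Wasserstein level is indeed dominated by $\mathbb{E}|Y^a_T - Y^b_T|$ under the relevant density bounds.
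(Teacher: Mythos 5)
You take a genuinely different route from the paper. The paper's proof never couples $\vv{Y}^a$ and $\vv{Y}^b$ pathwise and never invokes Lipschitz regularity of any drift: it observes that $Y^a_{T-b}$ and $Y^b_{T-b}$ have the identical law (both equal $\mathrm{Law}(X_b)$), introduces the bounded truncations $G_1, G_2$ of these time-$(T-b)$ snapshots, transfers the second-moment bound $\E[(Y_{T-b}\mathbbm{1}_{\{|Y_{T-b}|\le 2\}} - Y_T)^2]\le b$ across a KL ball via the $\chi^2$-based Lemma~\ref{lem:Trans_DIST} to get $\E[(G_2-\widehat{Y}^b_T)^2]\lesssim b + \dKL(\vv{Y}\|\vv{Y}^b)$ (and similarly for the $a$-process), and then applies the conditional-coupling bound of Lemma~\ref{lem:W1_bound} to the pairs $(G_1,\widehat{Y}^a_T)$ and $(G_2,\widehat{Y}^b_T)$ — which share their first marginal — together with the data processing inequality to obtain the factor $\dKL(\vv{Y}^a\|\vv{Y}^b)$. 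The "transport distance'' factor $\sqrt{b+\dKL(\vv{Y}\|\vv{Y}^b)}$ and the "transport mass'' factor $\sqrt{\dKL(\vv{Y}^a\|\vv{Y}^b)}$ thus come from two separate lemmas, not from a single $L^2$ estimate of a synchronous coupling.

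Your proposal has two concrete gaps, and you have correctly flagged the first as the main obstacle. (i) The Gronwall closure requires $\int_0^b \mathrm{Lip}\bigl(s(\cdot,t)\bigr)\,\rd t < \infty$, but for compactly supported $f$ the spatial derivative $\partial_x s(x,t) = -\frac1t + \frac{\Var[\theta\mid X=x]}{t^2}$ is not integrably bounded as $t\to 0$ near the boundary of $[-1,1]$, so the Gronwall factor $\exp\bigl(\int_0^b \mathrm{Lip}(s(\cdot,t))\,\rd t\bigr)$ is not controlled. Nothing in the paper establishes such a bound; indeed this is precisely the blow-up that forces early stopping in prior work and that the present paper's argument is engineered to avoid. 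Even a one-sided (dissipativity) estimate is unavailable for general $f\in\mathcal{F}_\alpha$, and the $\hat{s}-s$ piece of term (ii) is an estimator-dependent quantity with no regularity at all. (ii) Your claim that the $(\hat{s}-s)$-part "integrates to a contribution bounded by $\dKL(\vv{Y}\|\vv{Y}^b)\cdot\E|Y^a_\tau-Y^b_\tau|^2$'' has no derivation, does not match the units of a squared drift, and involves $\dKL(\vv{Y}\|\vv{Y}^b)$ — a divergence along the true process $\vv{Y}$, which is not running in your coupling and whose marginal laws on $(T-b,T]$ differ from those of $Y^a_\tau$ and $Y^b_\tau$. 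Without a mechanism to produce the $\dKL(\vv{Y}\|\vv{Y}^b)$ factor and without an integrable Lipschitz constant, the synchronous-coupling route does not deliver the stated inequality.
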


The lemma shows that the Wasserstein-1 distance between \(\widehat{Y}^{t_1}_T\) and \(\widehat{Y}^{t_2}_T\) is bounded by the product of the Kullback-Leibler distance between \(\vv{Y}^{t_1}\) and \(\vv{Y}^{t_2}\) (transport mass) and \(\sqrt{t_2+ \dKL(\vv{Y}||\vv{Y}^{t_2})}\) (transport distance), where  the term $\dKL(\vv{Y}||\vv{Y}^{t_2})$ can be shown to be at most an order of $\sqrt{t_2}$ under the expectation over the randomness of our samples of $X_0$ for $\alpha \ge 1$. We defer the proof to Appendix~\ref{app:DE}. 

We aim to obtain a bound for times \(t_0\) and \(t_{N+1}\), noting that
\begin{equation*}
\E(\mathrm{W_1}(Y^0_T, \widehat{Y}^T_T)) - \E(\mathrm{W_1}(\widehat{Y}^1_T, \widehat{Y}^T_T)) \leq \sum_{i=0}^{N} \E(\mathrm{W_1}(\widehat{Y}^{t_i}_T, \widehat{Y}^{t_{i+1}}_T)).
\end{equation*}
Firstly, we have \(\E(\mathrm{W_1}(\widehat{Y}^1_T, \widehat{Y}^T_T)) \lesssim \E(\mathrm{\dTV}(\widehat{Y}^1_T, \widehat{Y}^T_T))\) because their supports are compact. Combining (\ref{eq:TV2SME}) with the third term on the right-hand side of (\ref{eq:TV}), we can bound it by \(O(n^{-\frac{1}{2}})\).
 Applying the upper bounds for the score matching errors (Theorem~\ref{thm:score_upperbound}), we have the following lemma.
\begin{lemma}
\label{lem:10}
    For \(\alpha \ge 1\), we have
    \begin{align*}
    \E(\mathrm{W_1}(Y_T^{t_i}, Y_T^{t_{i+1}})) &\lesssim \sqrt{t_{i+1}} \cdot \sqrt{\int_{t_i}^{t_{i+1}} \int_{\mathbb{R}} \E(|s(x, t) - \widehat{s}(x, t)|^2) \, p(x,t) \, \diff x \, \diff t} \lesssim \mathrm{E}_i 
    \end{align*}
    where \(\mathrm{E}_i = n^{-\frac{\alpha-1}{2\alpha+1}} t_{i+1}\) when \(i \in [0, \lfloor \log_2(nt_*) \rfloor - 1]\) and \(\mathrm{E}_i = t_{i+1}^{\frac{1}{4}} n^{-\frac{1}{2}}\) when \(i \in [\lfloor \log_2(nt_*) \rfloor, N]\), and \(t_* = n^{-\frac{2}{2\alpha+1}}\).
\end{lemma}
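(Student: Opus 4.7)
The plan is to apply Lemma~\ref{lem:9} pointwise, take expectations via Cauchy--Schwarz, and use Girsanov's theorem to express the relevant path-measure Kullback--Leibler divergences as integrated score matching errors that can then be controlled by Theorem~\ref{thm:score_upperbound}. Specializing Lemma~\ref{lem:9} with $a = t_i$ and $b = t_{i+1}$ yields the pointwise inequality
\[
\mathrm{W_1}(\widehat Y^{t_i}_T, \widehat Y^{t_{i+1}}_T) \lesssim \sqrt{t_{i+1} + \dKL(\vv Y \,||\, \vv Y^{t_{i+1}})} \cdot \sqrt{\dKL(\vv Y^{t_i} \,||\, \vv Y^{t_{i+1}})}.
\]
Since $\vv Y$, $\vv Y^{t_i}$, and $\vv Y^{t_{i+1}}$ are each initialized from $p(\cdot, T)$ and differ only in their drifts over identifiable subintervals, Girsanov's theorem (Lemma~\ref{lemma:Girsanov}) -- combined with the change of variable $t = T-\tau$ and the fact that $\vv Y$ and $\vv Y^{t_j}$ have marginal $p(\cdot, t)$ on the time windows on which $s$ is the drift -- reduces the expected path-measure divergences to $\E(\dKL(\vv Y \,||\, \vv Y^{t_{i+1}})) = \tfrac{1}{2}\int_0^{t_{i+1}} \int \E(|s-\widehat s|^2)\, p\, \rd x \, \rd t$ and $\E(\dKL(\vv Y^{t_i} \,||\, \vv Y^{t_{i+1}})) = \tfrac{1}{2}\int_{t_i}^{t_{i+1}} \int \E(|s-\widehat s|^2)\, p\, \rd x \, \rd t$.

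The key intermediate claim is $\E(\dKL(\vv Y \,||\, \vv Y^{t_{i+1}})) \lesssim t_{i+1}$. By Theorem~\ref{thm:score_upperbound}, the spatial score matching error is bounded by $n^{-2(\alpha-1)/(2\alpha+1)} + t^{\alpha-1}$ for $t \le t_* = n^{-2/(2\alpha+1)}$ and by $(nt^{3/2})^{-1}$ for $t > t_*$. A case split on whether $t_{i+1} \le t_*$ together with the hypothesis $\alpha \ge 1$ -- which guarantees both $t_{i+1}^\alpha \le t_{i+1}$ from the low noise tail and $(n\sqrt{t_*})^{-1} = n^{-2\alpha/(2\alpha+1)} \le t_*$ from the high noise tail -- delivers the claim. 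Applying Cauchy--Schwarz in $L^2$ to the pointwise bound then yields
\[
\E(\mathrm{W_1}) \le \sqrt{t_{i+1} + \E(\dKL(\vv Y \,||\, \vv Y^{t_{i+1}}))}\cdot \sqrt{\E(\dKL(\vv Y^{t_i}\,||\,\vv Y^{t_{i+1}}))} \lesssim \sqrt{t_{i+1}}\cdot \sqrt{\int_{t_i}^{t_{i+1}} \int \E(|s-\widehat s|^2)\, p\, \rd x\, \rd t},
\]
which is the first inequality of the lemma.

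For the second inequality, we plug in the relevant regime of Theorem~\ref{thm:score_upperbound}. When $i \le \lfloor \log_2(nt_*)\rfloor - 1$, we have $t_{i+1} \le t_*$, and integration of the low noise bound together with $t_{i+1}^\alpha \le t_{i+1}\,t_*^{\alpha-1} = t_{i+1}\,n^{-2(\alpha-1)/(2\alpha+1)}$ gives $\int_{t_i}^{t_{i+1}} \int \E(|s-\widehat s|^2)\, p\, \rd x\, \rd t \lesssim t_{i+1}\,n^{-2(\alpha-1)/(2\alpha+1)}$, so $\E(\mathrm{W_1}) \lesssim t_{i+1}\,n^{-(\alpha-1)/(2\alpha+1)} = \mathrm{E}_i$. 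When $i \ge \lfloor \log_2(nt_*)\rfloor$, we have $t_i \ge t_*$, so the high noise bound applies, and dyadic doubling $t_i = t_{i+1}/2$ yields $\int_{t_i}^{t_{i+1}} (nt^{3/2})^{-1}\, \rd t \lesssim (n\sqrt{t_{i+1}})^{-1}$; hence $\E(\mathrm{W_1}) \lesssim \sqrt{t_{i+1}}\cdot (n\sqrt{t_{i+1}})^{-1/2} = t_{i+1}^{1/4}\,n^{-1/2} = \mathrm{E}_i$.

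The main obstacle is establishing the strong bound $\E(\dKL(\vv Y\,||\,\vv Y^{t_{i+1}})) \lesssim t_{i+1}$; it is this strengthening -- not available without $\alpha \ge 1$ -- that permits $\sqrt{t_{i+1}}$, rather than the weaker $t_{i+1}^{1/4}$, to appear as the leading factor in the first inequality and ultimately to match $\mathrm{E}_i$. If $\alpha < 1$, both the low noise contribution $t_{i+1}^\alpha$ and the high noise constant $n^{-2\alpha/(2\alpha+1)}$ would dominate $t_{i+1}$, breaking the argument.
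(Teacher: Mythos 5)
Your proposal is correct and follows essentially the same route as the paper: specialize Lemma~\ref{lem:9} to consecutive dyadic times, apply Cauchy--Schwarz to pass from the pointwise $\mathrm{W_1}$ bound to expectation, invoke Girsanov (Lemma~\ref{lemma:Girsanov}) to identify the expected path-measure KL divergences with integrated score matching errors, and then feed in the regime-dependent upper bounds from Theorem~\ref{thm:score_upperbound} with the dyadic spacing $t_{i+1} = 2t_i$. The main place you add value over the paper's own write-up is in spelling out explicitly \emph{why} $\E(\dKL(\vv Y\,\|\,\vv Y^{t_{i+1}})) \lesssim t_{i+1}$ requires $\alpha \ge 1$ — the paper states this bound in one line, and your two-case verification (handling both the $t^{\alpha-1}$ tail of the low-noise bound and the $n^{-2\alpha/(2\alpha+1)} \le t_*$ comparison from the high-noise integral) is exactly the content that makes the one-liner true. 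This is a detail-level elaboration rather than a different argument.
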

The lemma implies that, for \(t_i = \frac{2^i}{n}\), the summation of the geometric series is dominated by the largest terms. We defer the proof of Lemma~\ref{lem:10} to Appendix~\ref{app:DE}. Therefore, we have
\begin{equation}
\sum_{i=0}^N \mathrm{E}_i \le n^{-\frac{\alpha-1}{2\alpha+1}} t_* + n^{-\frac{1}{2}} \lesssim n^{-\frac{1}{2}},
\end{equation}
which concludes the proof.

    \section{Discussion of the multivariate setting}\label{section:discussion}

We have developed the score matching methodology and the corresponding theory in the univariate setting for convenience. All of the results and ideas can be generalized in a straightforward way to the multivariate setting. Concisely, it can be shown that the sharp minimax rate of score estimation is 
\begin{equation*}
    \inf_{\hat{s}} \sup_{f \in \mathcal{F}_\alpha} \bE\left(\int_{\R^d} ||\hat{s}(x, t) - s(x, t)||^2\, p(x, t) \, \rd x\right) \asymp 
    \begin{cases}
        \frac{1}{nt^2} &\textit{if } t > 1, \\
        \frac{1}{nt^{d/2 + 1}} \wedge \left(n^{-\frac{2(\alpha-1)}{2\alpha+d}} + t^{\alpha-1}\right) &\textit{if } t \le 1.
    \end{cases}
\end{equation*}
Here, \(d \ge 1\) denotes the dimension and is fixed. The parameter space \(\mathcal{F}_\alpha\) denotes the multivariate H\"{o}lder class on \([-1, 1]^d\) defined in Appendix \ref{appendix:multivariate}, \(p(\cdot, t)\) is the density \(f * \mathcal{N}(0, tI_d)\), and \(s(x, t) = \nabla_x \log p(x, t)\) is the score function. This score estimation result can be used to show that diffusion model continues to achieve the sharp rate for density estimation in dimension \(d\). The extension to the multivariate case for \(t \gtrsim 1\) follows immediately from the univariate case addressed in Appendix \ref{section:very-high-noise}. For \(t \lesssim 1\), the extension is straightforward and is sketched in Appendix \ref{appendix:multivariate}.

It is worth noting that for \(d > 1\), the error bounds in total variation distance and Wasserstein-1 distance derived in the univariate setting extend naturally to higher dimensions. Specifically, Theorem~\ref{thm:TV2} shows that for the total variation distance, the bound becomes \(\mathbb{E}\left(\mathrm{TV}(X_0, \widehat{X}_0)\right) \lesssim n^{-\frac{\alpha}{2\alpha + d}}\) in dimension \(d\). Similarly, for the Wasserstein-1 distance, Theorem~\ref{thm:Wasserstein2} establishes that the minimax rate for \(d > 2\) is \(\mathbb{E}\left(\mathrm{W_1}(X_0, \widehat{X}_0)\right) \lesssim n^{-\frac{\alpha+1}{2\alpha + d}}\). For \(d = 2\), a logarithmic factor appears, leading to a rate of \(\mathbb{E}\left(\mathrm{W_1}(X_0, \widehat{X}_0)\right) \lesssim \log(n) n^{-\frac{1}{2}}\). To the best of our knowledge, this represents the most optimal result currently available \cite{MR4441130}.

\section*{Acknowledgements}
The authors would like to thank Prof. Yihong Wu and Prof. Zhou Fan for helpful discussions.  

    \bibliographystyle{skotekal.bst}
    \bibliography{score-matching2}

    \newpage
    \appendix 
    \section{Proof details for upper bound}\label{appendix:upper_bound_proofs}
    In this section, we provide the detailed proof for the upper bound of integrated squared error of score estimator all noise levels $t > 0$. 
\subsection{Basic Settings}
For the original density $f$, recall we denote its noisy version by $p(x, t)=f * \varphi_t(x)$ where \(\varphi_t(x) = \frac{1}{\sqrt{2\pi t}}e^{-\frac{x^2}{2t}}\) is the density of \(\mathcal{N}(0, t)\). Throughout, we assume \(f \in \mathcal{F}_\alpha\) where \(\mathcal{F}_\alpha\) is given by (\ref{def:param}). The density \(p(\cdot, t)\) and its derivative $\psi(x,t) := \frac{\partial}{\partial x}p(x,t)$ have the following expressions,
\[p(x,t) = \phi_t * f(x) =  \int_U \phi_t(x-\mu) f(\mu) \,\rd \mu. \]
\[\psi(x,t) = \phi_t'*f(x) = \int_U \frac{\mu-x}{t}\phi_t(x-\mu) f(\mu) \,\rd \mu,\]
where \(U = [-1, 1]\). In this section, we propose estimators for the score function $s(x,t) := \psi(x,t)/p(x,t)$ given $n$ observations $\mu_1, \mu_2,
\ldots, \mu_n \overset{iid}{\sim} f$. The very high noise (\(t \gtrsim 1\)) and the high noise ($n^{-\frac{2}{2\alpha+1}} \lesssim t \lesssim 1$) regimes are addressed in Appendices \ref{section:very-high-noise} and \ref{section:high-noise} respectively. The analysis of the low noise regime (\(t \lesssim n^{-\frac{2}{2\alpha+1}}\)) is split into two pieces and discussed in Appendices \ref{section:low-noise_kernel} and \ref{section:low-noise_datafree}.
    \subsection{Some useful lemmas}
    \label{sec:useful}
    \begin{lemma}
\label{lemma:up-1}
For any universal constant $C > 0$, it holds that for all $|x|\le 1+C\sqrt{t}$:
\[p(x, t) \begin{cases} \ge c ~~~~~~\mbox{when $t\le 1$} \\ \asymp \frac{1}{\sqrt{t}}~~~~\mbox{when $t > 1$} \end{cases}\]
where $c>0$ is another universal constant. 
\end{lemma}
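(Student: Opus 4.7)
\medskip

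\noindent\textbf{Proof plan for Lemma \ref{lemma:up-1}.} The plan is to reduce the inequality to a direct estimate of the Gaussian integral $\int_{-1}^{1} \varphi_t(x-\mu)\, \rd \mu$, which is possible because $f \in \mathcal{F}_\alpha$ is sandwiched between the universal constants $c_d$ and $C_d$ on $[-1,1]$ and vanishes off $[-1,1]$. In particular, I will start from the two-sided inequality
\[
c_d \int_{-1}^{1} \varphi_t(x-\mu)\, \rd \mu \;\le\; p(x,t) \;\le\; C_d \int_{-1}^{1} \varphi_t(x-\mu)\, \rd \mu,
\]
after which everything reduces to bounding the Gaussian mass of $[-1,1]$ seen from $x$.

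First I would handle the regime $t \le 1$. After the change of variables $u = (x-\mu)/\sqrt{t}$, the integral becomes $\int_{(x-1)/\sqrt{t}}^{(x+1)/\sqrt{t}} \phi(u)\, \rd u$, where $\phi$ is the standard normal density. This is an interval of length $2/\sqrt{t} \ge 2$ whose left endpoint satisfies $(|x|-1)/\sqrt{t} \le C$ (using $|x| \le 1 + C\sqrt{t}$). A simple case split depending on whether the left endpoint is negative or in $[0,C]$, combined with monotonicity of $\phi$ on $[0,\infty)$, gives a lower bound $\int_C^{C+1} \phi(u)\,\rd u > 0$, which depends only on $C$. Thus $p(x,t) \ge c_d \cdot c(C)$, giving the claimed universal constant.

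For $t > 1$, the upper bound is immediate from $\varphi_t(x-\mu) \le 1/\sqrt{2\pi t}$ and the length $2$ of the integration interval, producing $p(x,t) \le 2 C_d / \sqrt{2\pi t}$. For the matching lower bound, I would simply note that for $|x| \le 1 + C\sqrt{t}$ and $\mu \in [-1,1]$ we have $|x-\mu| \le 2 + C\sqrt{t}$, so
\[
\frac{(x-\mu)^2}{2t} \;\le\; \frac{(2+C\sqrt{t})^2}{2t} \;=\; \frac{2}{t} + \frac{2C}{\sqrt{t}} + \frac{C^2}{2} \;\le\; 2 + 2C + \frac{C^2}{2}
\]
for $t > 1$. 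Thus $\varphi_t(x-\mu) \gtrsim 1/\sqrt{t}$ uniformly over the integration range, and integrating over $\mu \in [-1,1]$ yields $p(x,t) \gtrsim 1/\sqrt{t}$.

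I do not anticipate a genuine obstacle: the statement is a purely Gaussian tail/concentration estimate and no property of $f$ beyond the upper and lower bound $c_d \le f \le C_d$ on $[-1,1]$ is needed. The only mild care point is the case split in the $t \le 1$ regime when the base point $x$ lies in the ``boundary layer'' $1 \le |x| \le 1 + C\sqrt{t}$, where the integration interval $[(x-1)/\sqrt{t},(x+1)/\sqrt{t}]$ is shifted toward the positive axis; here the monotonicity argument above cleanly produces a constant lower bound depending only on $C$.
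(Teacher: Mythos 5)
Your proposal is correct and follows essentially the same route as the paper: both proofs begin by sandwiching $f$ between $c_d$ and $C_d$ on $[-1,1]$ to reduce to the Gaussian mass $\int_{-1}^1 \varphi_t(x-\mu)\,\rd\mu$, then change variables to a standard-normal probability $\bP\{\frac{x-1}{\sqrt{t}} \le z \le \frac{x+1}{\sqrt{t}}\}$ and use that the left endpoint is at most $C$ while the length is $\frac{2}{\sqrt{t}}$. The only cosmetic difference is in the $t>1$ lower bound, where you bound the exponent $\frac{(x-\mu)^2}{2t}$ directly by a constant, while the paper reuses the same interval-probability reduction and applies a Gaussian tail estimate; both give $p(x,t)\gtrsim 1/\sqrt{t}$ by the same underlying mechanism.
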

\begin{proof}[Proof of Lemma \ref{lemma:up-1}]
When variance $t\le 1$, since $f(x)\ge c_d$ for all \(x\in [-1,1]\), we have,
\begin{equation}
\label{eqn:lemma-11}
\begin{aligned}
p(x, t) &= \int_U \frac{1}{\sqrt{2\pi t}} \exp\left(-\frac{(x-\mu)^2}{2t}\right)\cdot  f(\mu) \,\rd \mu \ge c_d \cdot \int_U \frac{1}{\sqrt{2\pi t}}\exp\left(-\frac{(x-\mu)^2}{2t}\right) \,\rd \mu \\
&= c_d \cdot \bP\left\{-1\le \mathcal N(x,t) \le 1\right\} \ge c_d \cdot \bP_{z\sim \mathcal N(0,1)}\left\{\frac{x-1}{\sqrt{t}}\le z \le \frac{x+1}{\sqrt{t}}\right\} \\ 
&\ge c_d \cdot \bP_{z\sim \mathcal N(0,1)}\left\{C\le z \le C+\frac{2}{\sqrt{t}}\right\} \ge c_d \cdot \bP_{z\sim \mathcal N(0,1)}\left\{C\le z \le C+2\right\} := c. 
\end{aligned}
\end{equation}
Here $c$ is also a universal constant. In another case where $t > 1$, we have
\[p(x,t) = \int_U \frac{1}{\sqrt{2\pi t}}\exp\left(-\frac{(x-\mu)^2}{2t}\right)\cdot f(\mu)\,\rd\mu \le \int_U \frac{1}{\sqrt{2\pi t}}\cdot C_d \,\rd \mu = \frac{C_d\sqrt{2/\pi}}{\sqrt{t}}, \]
\[\text{and}~~p(x,t)
\overset{(a)}{\ge} c_d \cdot \bP_{z\sim \mathcal N(0,1)}\left\{C\le z \le C+\frac{2}{\sqrt{t}}\right\} = c_d \int_C^{C+2/\sqrt{t}}\phi(z)\,\rd z \ge \frac{2c_d}{\sqrt{t}}\cdot\phi(C+2).\]
Here, (a) follows \eqref{eqn:lemma-11}. To sum up, we have $p(x,t)\asymp \frac{1}{\sqrt{t}}$ when $t > 1$. 
\end{proof}
\begin{lemma}[Following Exercise 6.1 of \cite{shorack2000probability}]
\label{lemma:up-2}
For any $x > 0$, the Gaussian tail satisfies
\[\bP_{z\sim \mathcal N(0,1)} \{z \ge x\} \asymp \frac{1}{x \vee 1}\exp\left(-x^2/2\right). \]
More precisely, when $x \ge 1$, we have:
\[\frac{1}{\sqrt{2\pi}}\cdot \frac{1}{2x}\exp\left(-x^2/2\right) \le \bP_{z\sim \mathcal N(0,1)} \{z \ge x\} \le \frac{1}{\sqrt{2\pi}}\cdot \frac{1}{x}\exp\left(-x^2/2\right).\]
When $x < 1$, we have:
\[\bar{\Phi}(1)\exp\left(-x^2/2\right) \le \bP_{z\sim \mathcal N(0,1)} \{z \ge x\} \le \frac{\sqrt{e}}{2} \exp\left(-x^2/2\right).\]
After combining these two parts, we conclude that
\begin{equation}
\label{eqn:Gaussian-toy1}
\bar{\Phi}(1)\frac{1}{x \vee 1}\exp\left(-x^2/2\right) \le \bP_{z\sim \mathcal N(0,1)} \{z \ge x\} \le \frac{\sqrt{e}}{2} \frac{1}{x \vee 1}\exp\left(-x^2/2\right).
\end{equation}
Here $\bar{\Phi}(1)=\mathbb P_{z\sim\mathcal N(0,1)}[z>1]$. 
\end{lemma}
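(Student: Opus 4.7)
The plan is to split the proof at $x=1$ and in each case derive both sides of the claimed inequality by elementary means, then combine to obtain the $\asymp \frac{1}{x \vee 1}e^{-x^2/2}$ form.

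For the regime $x \ge 1$, I would first handle the upper bound by the one-line comparison $u \ge x$ implies $1 \le u/x$, so
\begin{equation*}
\int_x^\infty e^{-u^2/2}\,\rd u \le \int_x^\infty \frac{u}{x}e^{-u^2/2}\,\rd u = \frac{1}{x}e^{-x^2/2},
\end{equation*}
after dividing by $\sqrt{2\pi}$. For the lower bound, I would apply integration by parts to $-\frac{1}{u}e^{-u^2/2}$ to get the identity
\begin{equation*}
\int_x^\infty \left(1 + \frac{1}{u^2}\right) e^{-u^2/2}\,\rd u = \frac{1}{x}e^{-x^2/2},
\end{equation*}
and then bound $\int_x^\infty u^{-2} e^{-u^2/2}\,\rd u \le x^{-2}\int_x^\infty e^{-u^2/2}\,\rd u$. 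Rearranging yields the Mill's-ratio lower bound
\begin{equation*}
\int_x^\infty e^{-u^2/2}\,\rd u \ge \frac{x}{1+x^2}e^{-x^2/2},
\end{equation*}
and I would finish by noting $\frac{x}{1+x^2} \ge \frac{1}{2x}$ is equivalent to $2x^2 \ge 1+x^2$, i.e., $x \ge 1$.

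For the regime $0 < x < 1$, both halves are easy monotonicity arguments. The upper bound follows from $\bar\Phi(x) \le \bar\Phi(0) = \tfrac{1}{2}$ combined with the observation that for $x \le 1$ one has $e^{x^2/2} \le \sqrt e$, hence $\frac{1}{2} \le \frac{\sqrt e}{2}e^{-x^2/2}$. The lower bound follows from the monotonicity $\bar\Phi(x) \ge \bar\Phi(1)$ for $x \le 1$ together with $e^{-x^2/2} \le 1$, giving $\bar\Phi(1)e^{-x^2/2} \le \bar\Phi(1) \le \bar\Phi(x)$.

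Finally, to package the combined two-sided bound (\ref{eqn:Gaussian-toy1}), I would check that the constants $\frac{1}{\sqrt{2\pi}\cdot 2}$ (from the $x \ge 1$ lower bound) and $\bar\Phi(1)$ (from the $x<1$ lower bound) are comparable, and likewise for the upper constants $\frac{1}{\sqrt{2\pi}}$ and $\frac{\sqrt e}{2}$, so that a single universal constant works on each side after replacing $x$ by $x \vee 1$ in the denominator. There is no real obstacle: the calculation is entirely standard, and the only minor bookkeeping is ensuring the explicit numerical constants line up on the seam $x=1$.
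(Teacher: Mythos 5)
Your proposal is correct and essentially matches the paper's approach: the paper simply cites the Mill's-ratio inequality \(\frac{x}{x^2+1}\frac{1}{\sqrt{2\pi}}e^{-x^2/2}\le \bar\Phi(x)\le \frac{1}{\sqrt{2\pi}x}e^{-x^2/2}\) for \(x\ge 1\) and declares the \(x<1\) case trivial, whereas you derive the same Mill's-ratio bound by integration by parts and spell out the easy monotonicity steps. The one place you leave slightly vague is the final constant-matching: the stated bound (\ref{eqn:Gaussian-toy1}) has the specific constants \(\bar\Phi(1)\) and \(\sqrt e/2\), so you should verify explicitly that \(\frac{1}{2\sqrt{2\pi}}\ge\bar\Phi(1)\) and \(\frac{1}{\sqrt{2\pi}}\le\frac{\sqrt e}{2}\) rather than merely asserting a universal constant exists.
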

\begin{proof}[Proof of Lemma \ref{lemma:up-2}]
For $x\ge 1$, it can be directly obtained from the inequality 
\[\frac{x}{x^2+1}\frac{1}{\sqrt{2\pi}}\exp(-x^2/2)\le \bP_{z\sim \mathcal N(0,1)} \{z \ge x\} \le \frac{1}{\sqrt{2\pi} x} \exp(-x^2/2).\]
For $x < 1$, the conclusion is trivial. Similarly, for $x < 0$, the left Gaussian tail satisfies
\[\mathbb P_{z\sim \mathcal N(0,1)} \left\{z\le x\right\} \asymp \frac{1}{-x \vee 1}\exp\left(-x^2/2\right).\]
\end{proof}
\begin{lemma}
\label{lemma:up-3}
When $t\le 1$, it holds that for all $x>1$,
\[p(x, t) \asymp \bP_{z\sim \mathcal N(0,1)} \left\{\frac{x-1}{\sqrt{t}}\le z \le \frac{x+1}{\sqrt{t}}\right\} \asymp \bP_{z\sim \mathcal N(0,1)}\left\{z\ge \frac{x-1}{\sqrt{t}}\right\} \asymp \frac{\sqrt{t}}{x-1}\cdot \exp\left(-\frac{(x-1)^2}{2t}\right). \]
As a byproduct, we have $p(x,t)\lesssim \sqrt{t}\phi_t(x-1)$ when $x>1$. It also holds that there exists a universal constant $c(=0.67)$ such that
\begin{equation*}
    \frac{\bP_{z\sim \mathcal N(0,1)}\left\{z\ge \frac{x+1}{\sqrt{t}}\right\}}{\bP_{z\sim \mathcal N(0,1)}\left\{z\ge \frac{x-1}{\sqrt{t}}\right\}} \le c < 1
\end{equation*}
\end{lemma}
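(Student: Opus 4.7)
The plan is to establish the three chained equivalences in the main display as a direct consequence of Lemma~\ref{lemma:up-2} together with the sandwich $c_d \le f \le C_d$ on $[-1,1]$, and then to extract both the byproduct $p(x,t) \lesssim \sqrt{t}\phi_t(x-1)$ and the ratio bound as corollaries. The ratio bound at the end is the real workhorse, because once it is in hand the middle $\asymp$ is just $\bP\{a \le Z \le b\} = \bP\{Z \ge a\} - \bP\{Z \ge b\} \ge (1-c)\bP\{Z \ge a\}$.

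First I would convert $p(x,t)$ to a Gaussian tail. Writing $p(x,t) = \int_{-1}^{1} \phi_t(x-\mu)\,f(\mu)\,\rd\mu$ and using the pointwise bounds on $f$ gives
$$p(x,t) \asymp \int_{-1}^{1} \phi_t(x-\mu)\,\rd\mu = \bP_{Z \sim \cN(0,1)}\!\left\{\tfrac{x-1}{\sqrt{t}} \le Z \le \tfrac{x+1}{\sqrt{t}}\right\},$$
where the last equality uses $x > 1$ together with symmetry of $Z$. This settles the first $\asymp$. The third $\asymp$ is immediate from Lemma~\ref{lemma:up-2} applied at $y = (x-1)/\sqrt{t}$: the Mill's prefactor $(y \vee 1)^{-1}$ equals $\sqrt{t}/(x-1)$ in the tail regime $(x-1)/\sqrt{t} \ge 1$, while in the complementary regime $1 < x < 1+\sqrt{t}$ all three quantities in the chain are of constant order, with the equivalence absorbed into $\asymp$ via Lemma~\ref{lemma:up-1}.

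The heart of the argument is the ratio bound. Set $a = (x-1)/\sqrt{t}$ and $b = (x+1)/\sqrt{t}$, so that $0 \le a \le b$, $b \ge 2$, and $b - a = 2/\sqrt{t} \ge 2$ since $t \le 1$. Applying the upper Mill's bound from Lemma~\ref{lemma:up-2} to the numerator and the lower Mill's bound to the denominator yields
$$\frac{\bP\{Z \ge b\}}{\bP\{Z \ge a\}} \;\le\; \frac{\sqrt{e}/2}{\bar{\Phi}(1)} \cdot \frac{a \vee 1}{b \vee 1} \cdot \exp\!\left(-\tfrac{1}{2}(b^2 - a^2)\right).$$
Two bookkeeping facts finish the job: $(b^2 - a^2)/2 = (b-a)(b+a)/2 = 2x/t \ge 2$, and $(a \vee 1)/(b \vee 1) \le 1$. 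Hence the right-hand side is bounded by $\frac{\sqrt{e}/2}{\bar{\Phi}(1)}\,e^{-2}$, which a direct numerical check confirms is $< 1$ (and is close to the quoted $c \approx 0.67$). The middle $\asymp$ then follows from the identity displayed in the first paragraph, and the byproduct $p(x,t) \lesssim \sqrt{t}\,\phi_t(x-1) = \tfrac{1}{\sqrt{2\pi}}\,e^{-(x-1)^2/(2t)}$ falls out from the third equivalence combined with $\sqrt{t}/(x-1) \lesssim 1$ in the tail and Lemma~\ref{lemma:up-1} giving $p(x,t) \asymp 1$ for $1 < x < 1+\sqrt{t}$.

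The only real obstacle is tracking the constants in the ratio bound tightly enough to guarantee $c < 1$ strictly: the prefactor $\frac{\sqrt{e}/2}{\bar{\Phi}(1)} \approx 5.2$ is not small, so one must use the full strength of the exponential decay $e^{-2x/t} \le e^{-2}$ coming from $t \le 1$ and $x > 1$ to overcome it. No tools beyond Lemma~\ref{lemma:up-2} and the sandwich on $f$ are required.
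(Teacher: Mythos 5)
Your proof tracks the paper's own argument step by step: sandwich $c_d \le f \le C_d$ to reduce $p(x,t)$ to the interval probability, Lemma~\ref{lemma:up-2} for the tail asymptotics, and the same Mill's-ratio computation for the final bound (upper Mill's on the numerator, lower Mill's on the denominator, prefactor $(a\vee 1)/(b\vee 1)\le 1$, and $e^{-2x/t}\le e^{-2}$). The approach is correct and essentially identical to the paper's.

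Two minor points. Your remark that ``all three quantities in the chain are of constant order'' for $1<x<1+\sqrt{t}$, with the gap absorbed via Lemma~\ref{lemma:up-1}, does not actually rescue the last $\asymp$: the quantity $\frac{\sqrt{t}}{x-1}\exp\left(-\frac{(x-1)^2}{2t}\right)$ diverges as $x\to 1^+$, while $p(x,t)$ stays bounded. Lemma~\ref{lemma:up-2} really gives the prefactor $\left(\frac{\sqrt{t}}{x-1}\wedge 1\right)$, and the $\wedge 1$ matters near the boundary; dropping it is a statement-level imprecision shared by the lemma itself and the paper's own proof, and harmless only because the lemma is applied downstream for $x\ge 1+C\sqrt{t}$. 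Second, you are right to hedge on the numerical value: $\frac{\sqrt{e}/2}{\bar{\Phi}(1)}\,e^{-2}\approx 0.70$, not the paper's quoted $0.67$; the operative claim, $c<1$, holds either way.
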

\begin{proof}[Proof of Lemma \ref{lemma:up-3}]
Notice that 
\begin{equation*}
\begin{aligned}
p(x, t) &= \int_U \frac{1}{\sqrt{2\pi t}} \exp\left(-\frac{(x-\mu)^2}{2t}\right)\cdot  f(\mu) \,\rd \mu \asymp \int_U \frac{1}{\sqrt{2\pi t}} \exp\left(-\frac{(x-\mu)^2}{2t}\right)\,\rd \mu \\
&= \bP_{z\sim \mathcal N(0,1)} \left\{\frac{x-1}{\sqrt{t}}\le z \le \frac{x+1}{\sqrt{t}}\right\},
\end{aligned}
\end{equation*}
which proves the first claim. Then the third claim is based on Lemma \ref{lemma:up-2}. For the second claim, notice that
\begin{equation*}
\frac{\bP_{z\sim \mathcal N(0,1)}\left\{z\ge \frac{x+1}{\sqrt{t}}\right\}}{\bP_{z\sim \mathcal N(0,1)}\left\{z\ge \frac{x-1}{\sqrt{t}}\right\}}  \le \frac{\frac{\sqrt{e}}{2}\cdot \frac{\sqrt{t}}{x+1}\exp\left(-\frac{(x+1)^2}{2t}\right)}{\bar{\Phi}(1)\cdot \left(\frac{\sqrt{t}}{x-1} \wedge 1\right)\exp\left(-\frac{(x-1)^2}{2t}\right)} = \frac{\sqrt{e}}{2\bar{\Phi}(1)}\exp\left(-\frac{2x}{t}\right)< \frac{\sqrt{e}}{2\bar{\Phi}(1)}\cdot e^{-2} < 0.67
\end{equation*}
when $t \le 1$. Here we applied \eqref{eqn:Gaussian-toy1} and $\frac{x-1}{\sqrt{t}} > 0$. 
Therefore,
\[\bP_{z\sim \mathcal N(0,1)} \left\{\frac{x-1}{\sqrt{t}}\le z \le \frac{x+1}{\sqrt{t}}\right\} \asymp \bP_{z\sim \mathcal N(0,1)}\left\{z\ge \frac{x-1}{\sqrt{t}}\right\},\]
which comes to our conclusion. Similarly, for all $x < -1$, we also have
\[p(x, t)\asymp\bP_{z\sim \mathcal N(0,1)} \left\{\frac{x-1}{\sqrt{t}}\le z \le \frac{x+1}{\sqrt{t}}\right\} \asymp \bP_{z\sim \mathcal N(0,1)}\left\{z\le \frac{x+1}{\sqrt{t}}\right\} \asymp \frac{\sqrt{t}}{-1-x}\cdot \exp\left(-\frac{(x+1)^2}{2t}\right). \]
\end{proof}
\begin{lemma}
\label{lemma:score-up}
We can upper bound the score function $s(x,t)=\psi(x,t)/p(x, t)$ as
\[s(x,t)^2 \le \frac{2}{t}\log \frac{1}{\sqrt{2\pi t}\cdot p(x, t)} ~~~\forall x\in \bR, t > 0.\]
In addition, when $|x| < 1+C\sqrt{t}$, it holds that $|s(x,t)| \lesssim \frac{1}{\sqrt{t}}$.
When $|x| < 1-\sqrt{t\log (1/t)}$ and $\alpha \ge 1$, we have $|s(x,t)| \lesssim 1$. 
\end{lemma}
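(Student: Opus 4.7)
}
All three claims follow from a single elementary ingredient: the posterior measure $q(\mu) \propto \phi_t(x-\mu)f(\mu)$ on $[-1,1]$ satisfies $\bE_q[\mu - x] = t\, s(x,t)$ directly from the definition of $s$, and its moment generating function is exactly computable. For claim (i), completing the square $\lambda(\mu - x) - (\mu - x)^2/(2t) = -(\mu - x - \lambda t)^2/(2t) + \lambda^2 t/2$ inside the exponential gives the identity
\[\int e^{\lambda(\mu-x)} \phi_t(x-\mu) f(\mu)\,\rd\mu = e^{\lambda^2 t/2}\, p(x + \lambda t, t) \le \frac{e^{\lambda^2 t/2}}{\sqrt{2\pi t}},\]
where the last inequality uses the uniform bound $p(y,t) \le 1/\sqrt{2\pi t}$. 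Hence $\bE_q[e^{\lambda(\mu-x)}] \le e^{\lambda^2 t/2}/(\sqrt{2\pi t}\, p(x,t))$, and Jensen yields $e^{\lambda t s(x,t)} = e^{\lambda \bE_q[\mu - x]} \le \bE_q[e^{\lambda(\mu-x)}]$. Taking logarithms gives $\lambda t s(x,t) - \lambda^2 t/2 \le \log \frac{1}{\sqrt{2\pi t}\, p(x,t)}$, and optimizing the left side in $\lambda$ at $\lambda = s(x,t)$ produces the stated sub-Gaussian bound.

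For claim (ii), I would split the analysis at $t=1$. For $t > 1$, claim (i) combined with $p(x,t) \asymp 1/\sqrt{t}$ from Lemma~\ref{lemma:up-1} (so that $\sqrt{2\pi t}\, p(x,t) \asymp 1$ and the logarithm is $O(1)$) immediately yields $s(x,t)^2 \lesssim 1/t$. For $t \le 1$, claim (i) alone introduces a spurious $\log(1/t)$ factor since we only know $p(x,t) \gtrsim 1$, so I instead bound $\psi$ directly by Cauchy--Schwarz:
\[|\psi(x,t)|^2 \le \frac{1}{t^2}\int_{-1}^{1}(\mu-x)^2\phi_t(x-\mu)f(\mu)\,\rd\mu \cdot \int_{-1}^{1}\phi_t(x-\mu)f(\mu)\,\rd\mu \le \frac{C_d\, p(x,t)}{t},\]
using $f \le C_d$ together with $\int_{\R}(\mu-x)^2 \phi_t(x-\mu)\,\rd\mu = t$. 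Dividing by $p(x,t)^2 \gtrsim 1$ (Lemma~\ref{lemma:up-1}) gives $s(x,t)^2 \lesssim 1/t$.

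For claim (iii), the hypothesis $|x| < 1 - \sqrt{t\log(1/t)}$ forces $t < 1$, and $\alpha \ge 1$ makes $f$ differentiable on $(-1,1)$, so integration by parts in $\psi(x,t) = \int \phi_t'(x-\mu) f(\mu)\,\rd\mu$ yields
\[\psi(x,t) = \phi_t(x+1)f(-1) - \phi_t(x-1)f(1) + \int_{-1}^{1} \phi_t(x-\mu)\, f'(\mu)\,\rd\mu.\]
The constraint on $x$ gives $(1\pm x)^2 \ge t\log(1/t)$, hence $\phi_t(x\pm 1) \le \frac{1}{\sqrt{2\pi t}}\, e^{-\log(1/t)/2} = \frac{1}{\sqrt{2\pi}}$, while the integral is bounded by $\|f'\|_\infty$, which is controlled by the H\"older constant $L$. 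Combined with $p(x,t) \gtrsim 1$ (Lemma~\ref{lemma:up-1}) this gives $|s(x,t)| = |\psi(x,t)|/p(x,t) \lesssim 1$.

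The central subtlety is claim (i): the tilted measure $q$ is not literally sub-Gaussian with variance proxy $t$, but its MGF differs from a Gaussian MGF of that variance exactly by the ratio $p(x+\lambda t, t)/p(x,t)$; the universal upper bound $p(\cdot, t) \le 1/\sqrt{2\pi t}$ conveniently absorbs this ratio into the logarithmic factor on the right-hand side. A secondary subtlety is that claim (i) by itself is insufficient for claim (ii) in the small-$t$ regime, so the direct Cauchy--Schwarz argument there cannot be dispensed with.
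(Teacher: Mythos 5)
Your proof is correct. Let me note where it converges with and diverges from the paper's own argument.

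For claim (i), you use a Chernoff-style exponential tilting: you write $ts(x,t) = \bE_q[\mu-x]$, apply Jensen to the moment generating function $\bE_q[e^{\lambda(\mu-x)}]$, invoke the exact shift identity $\int e^{\lambda(\mu-x)}\phi_t(x-\mu)f(\mu)\,\rd\mu = e^{\lambda^2 t/2}p(x+\lambda t,t)$, absorb the shifted density via $p(\cdot,t)\le 1/\sqrt{2\pi t}$, and optimize the quadratic in $\lambda$. The paper instead applies Jensen twice: first $s^2 \le t^{-2}\bE_q[(\mu-x)^2]$, then $\bE_q[Y]\le\log\bE_q[e^Y]$ with $Y=(\mu-x)^2/(2t)$, at which point the exponential factor exactly cancels the Gaussian in the posterior density and $\bE_q[e^{(\mu-x)^2/(2t)}]$ evaluates to $\frac{1}{\sqrt{2\pi t}\,p(x,t)}$ in closed form. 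Both routes arrive at the identical inequality; yours is the generic sub-Gaussian argument while the paper's exploits a coincidental closed-form for the exponential moment of the \emph{squared} displacement. Your version has the minor virtue of producing the reusable identity $\bE_q[e^{\lambda(\mu-x)}] = e^{\lambda^2 t/2}\,p(x+\lambda t,t)/p(x,t)$ as a byproduct.

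For claim (ii), the paper bounds $|\psi(x,t)|\lesssim t^{-1/2}$ directly by substituting $\mu=x-\sqrt{t}z$ and using $f\le C_d$, then divides by $p(x,t)\ge c$; your Cauchy--Schwarz bound $|\psi|^2\le C_d\,p/t$ is an essentially equivalent estimate. You are, however, more careful about the regime $t>1$: the paper's written proof invokes $p(x,t)\ge c$, which Lemma~\ref{lemma:up-1} only provides for $t\le 1$ (for $t>1$ it gives $p\asymp t^{-1/2}$). Your fix — for $t>1$ deduce claim (ii) from claim (i) together with $\sqrt{2\pi t}\,p\asymp 1$ so the logarithm is $O(1)$ — is the clean way to close that gap. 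Your observation that claim (i) alone gives only $O(\log(1/t)/t)$ for small $t$ and hence cannot replace the direct estimate is also correct and not explicitly flagged in the paper.

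For claim (iii), your argument (integration by parts, then $\phi_t(x\pm 1)\le 1/\sqrt{2\pi}$ from $(1-|x|)^2\ge t\log(1/t)$, then bound the integral by a constant depending on $\|f'\|_\infty$) is the same as the paper's. One small point both you and the paper leave implicit: the Hölder condition $|f^{(\lfloor\alpha\rfloor)}(x)-f^{(\lfloor\alpha\rfloor)}(y)|\le L|x-y|^{\alpha-\lfloor\alpha\rfloor}$ does not by itself bound $\|f'\|_\infty$; one additionally needs the two-sided bounds $c_d\le f\le C_d$ on the compact interval (or a standard interpolation inequality) to conclude $\|f'\|_\infty\lesssim 1$. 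This is a benign gap shared with the paper.
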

\begin{proof}[Proof of Lemma \ref{lemma:score-up}]
Following Lemma 5 of \cite{wibisono_optimal_2024} and \cite{jiang_general_2009, saha2020nonparametric}, let $\theta\sim f$ and $X = \theta + \sqrt{t} Z \sim p(x, t)$ where $Z$ is a standard Gaussian random variable. Recall Tweedie's formula
\[s(x,t) = \frac{1}{t}\cdot \bE[\theta - X \mid X= x], \]
and note we have by Jensen's inequality,
\begin{equation*}
\begin{aligned}
s(x,t)^2 &\le \frac{1}{t^2} \bE[(\theta - X)^2 \mid X= x]\le \frac{2}{t} \log \bE[\exp\left((\theta - X)^2/2t \right)\mid X= x]\\
& =  \frac{2}{t}\cdot\log \int_{-1}^{1} \,\rd \theta \exp\left(\frac{(\theta-x)^2}{2t}\right)\cdot \frac{\exp\left(-\frac{(\theta-x)^2}{2t}\right) f(\theta)}{\int_{-1}^{1} \exp\left(-\frac{(\theta'-x)^2}{2t}\right) f(\theta') \,\rd \theta'}  = \frac{2}{t} \log \frac{\int_{-1}^{1} f(\theta) \,\rd \theta}{\sqrt{2\pi t} \cdot p(x,t)}\\
&=\frac{2}{t}\cdot \log \frac{1}{\sqrt{2\pi t} \cdot p(x, t)},
\end{aligned}
\end{equation*}
In addition, when $|x| < 1+C\sqrt{t}$, we know that $p(x, t)\ge c$ holds for a universal constant $c>0$ according to Lemma \ref{lemma:up-1}. For the density derivative $\psi(x,t)$, we have
\begin{equation*}
\begin{aligned}
|\psi(x,t)| &= \left|\phi_t' * f(x)\right| = \left|\int_\bR -\frac{x-\mu}{t}\phi_t(x-\mu) f(\mu) \,\rd\mu \right| \\
&\le \int_\bR \frac{|z|}{\sqrt{t}}\phi(z) f(x-\sqrt{t}z) \,\rd z \lesssim \frac{1}{\sqrt{t}} \int_\bR |z| \phi(z) \,\rd z \lesssim \frac{1}{\sqrt{t}}. 
\end{aligned}
\end{equation*}
Therefore, we have shown \(|s(x, t)| \lesssim \frac{1}{\sqrt{t}}\). When $|x|<1-\sqrt{t\log(1/t)}$ and $\alpha \ge 1$, it follows from integration by parts that 
\begin{equation*}
\begin{aligned}
\psi(x,t)&=\phi_t'* f(x)=\int_{-1}^{1} \phi_t'(x-\mu) f(\mu) \,\rd \mu \\
&= -f(1)\phi_t(x-1)+f(-1)\phi_t(x+1) +\int_{-1}^{1} \phi_t(x-\mu)f'(\mu) \,\rd \mu. 
\end{aligned}
\end{equation*}
Now, consider that 
\[\exp\left(-\frac{(x-1)^2}{2t}\right) \vee \exp\left(-\frac{(x+1)^2}{2t}\right) < \sqrt{t}~~~\Rightarrow~~~\phi_t(x-1) \vee \phi_t(x+1) \lesssim 1. \]
Then we can upper bound $|\psi(x,t)|$ as
\[|\psi(x,t)|\lesssim |f(-1)| + |f(1)| + \bE_{z\sim\mathcal N(0,1)} |f'(x+\sqrt{t} z)\mathbbm{1}_{\{|x+\sqrt{t}z| \le 1\}}|\lesssim 1,\]
which leads to $|s(x,t)| \lesssim 1$, and it comes to our conclusion.
\end{proof}
\begin{lemma}
\label{lemma:prob-1}
Suppose $0 < t < 1$. For $x > 1$, let \(q\) denote the conditional density of \(Y\) conditional on the event \(\{|Y| \leq 1\}\) where \(Y \sim N(x, t)\). Then for any positive constant $C' > 0$, we have
\[\mathbb P_{y\sim q} \left\{y < 1-\sqrt{2C't\log(1/t)}\right\} < t^{C'}. \]
\end{lemma}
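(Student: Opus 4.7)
The plan is to write the conditional probability as an explicit ratio of Gaussian integrals and then bound the numerator and denominator separately using the tail estimates from Lemma~\ref{lemma:up-2} and the comparison from Lemma~\ref{lemma:up-3}. Let $a := 1 - \sqrt{2C't\log(1/t)}$; I may assume $a > -1$ since otherwise the probability is zero and the claim is trivial. A change of variable $z = (y-x)/\sqrt{t}$ yields
\[
\mathbb{P}_{y \sim q}\{y < a\} = \frac{\bar{\Phi}((x-a)/\sqrt{t}) - \bar{\Phi}((x+1)/\sqrt{t})}{\bar{\Phi}((x-1)/\sqrt{t}) - \bar{\Phi}((x+1)/\sqrt{t})},
\]
where $\bar{\Phi}$ is the standard normal survival function. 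I would upper bound the numerator by $\bar{\Phi}((x-a)/\sqrt{t})$ and then apply the upper tail estimate from Lemma~\ref{lemma:up-2}.

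For the denominator, the second claim in Lemma~\ref{lemma:up-3} gives $\bar{\Phi}((x+1)/\sqrt{t})/\bar{\Phi}((x-1)/\sqrt{t}) \le c$ for some universal $c < 1$, so the denominator is at least $(1-c)\,\bar{\Phi}((x-1)/\sqrt{t})$. Applying the lower tail estimate from Lemma~\ref{lemma:up-2} and taking the ratio,
\[
\mathbb{P}_{y \sim q}\{y < a\} \lesssim \frac{(x-1) \vee \sqrt{t}}{(x-a) \vee \sqrt{t}} \, \exp\!\left(-\frac{(x-a)^2 - (x-1)^2}{2t}\right).
\]
The prefactor is bounded by $1$ because $x - 1 \le x - a$ (since $a \le 1$), so both numerator and denominator satisfy the same monotonicity under the $\vee\sqrt{t}$ operation.

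The central computation is the exponent, where I would expand
\[
(x-a)^2 - (x-1)^2 = (1-a)(2x - 1 - a).
\]
Since $x > 1$, we have $2x - 1 - a \ge 1 - a$, and hence $(x-a)^2 - (x-1)^2 \ge (1-a)^2 = 2C't\log(1/t)$. Therefore the exponential factor is at most $t^{C'}$, and combining gives $\mathbb{P}_{y\sim q}\{y < a\} \le K t^{C'}$ for some absolute constant $K$. Absorbing $K$ into the exponent (either by replacing $C'$ by $C'-\epsilon$ and insisting $t$ is small enough that $K \le t^{-\epsilon}$, or equivalently by noting the statement need only hold for $t$ sufficiently small in terms of $C'$) yields the stated strict inequality. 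The main ``obstacle'' is simply bookkeeping the Gaussian tail bounds carefully and verifying the exponent identity $(x-a)^2 - (x-1)^2 \ge (1-a)^2$, both of which are routine given the two lemmas already established.
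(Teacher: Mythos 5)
Your proof follows the same skeleton as the paper's (write the conditional probability as a ratio of Gaussian tail probabilities, then extract the exponent $t^{C'}$ from the difference of squares), and your exponent identity $(x-a)^2-(x-1)^2 = (1-a)(2x-1-a) \ge (1-a)^2 = 2C't\log(1/t)$ is exactly right. However, there is a genuine gap in how you control the prefactor. By first invoking Lemma~\ref{lemma:up-3} to lower bound the denominator by $(1-c)\bar\Phi((x-1)/\sqrt{t})$, and then applying the two-sided Mills-ratio bounds of Lemma~\ref{lemma:up-2} to both tails, you accumulate an absolute multiplicative constant $K = \frac{\sqrt{e}/2}{(1-c)\,\bar\Phi(1)}$, which is roughly $15$. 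Your conclusion is therefore $\mathbb{P}_q\{y<a\} \le K\, t^{C'}$, not the stated $< t^{C'}$. This is not a cosmetic issue: the lemma is claimed for \emph{all} $0 < t < 1$, and $Kt^{C'}$ exceeds $t^{C'}$ everywhere, and exceeds $1$ whenever $t > K^{-1/C'}$. The proposed remedy --- restrict to small $t$ or weaken $C'$ to $C'-\epsilon$ --- proves a different statement, not this lemma.

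The paper removes the constant via two steps you do not take. First, rather than dropping $\bar\Phi((x+1)/\sqrt{t})$ from the numerator and invoking Lemma~\ref{lemma:up-3}, it adds $\mathbb{P}\{\mathcal{N}(x,t)<-1\}$ to \emph{both} numerator and denominator of the ratio of probabilities and uses the elementary mediant inequality $\frac{a}{b} < \frac{a+c}{b+c}$ (valid since here $a<b$ and $c>0$); this collapses the conditional probability to the \emph{unconditional} tail ratio $\frac{\bar\Phi(u_1)}{\bar\Phi(u_2)}$ with $u_1 = \frac{x-1}{\sqrt{t}}+\sqrt{2C'\log(1/t)}$, $u_2 = \frac{x-1}{\sqrt{t}}$, with no leftover constant and with a strict $<$. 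Second, it bounds this tail ratio not via Lemma~\ref{lemma:up-2}'s two-sided estimates (which carry $\bar\Phi(1)$ and $\sqrt{e}/2$) but via the pointwise comparison: after writing $\bar\Phi(u_i) = \int_0^\infty \frac{1}{\sqrt{2\pi}}e^{-(u+u_i)^2/2}\,\rd u$, one observes $e^{-(u+u_1)^2/2} \le e^{-(u+u_2)^2/2}\,e^{-(u_1^2-u_2^2)/2}$ for all $u\ge 0$ because $(u+u_1)^2-(u+u_2)^2 = 2u(u_1-u_2)+(u_1^2-u_2^2) \ge u_1^2-u_2^2$. Integrating gives $\frac{\bar\Phi(u_1)}{\bar\Phi(u_2)} \le e^{-(u_1^2-u_2^2)/2}$ with no constant, and $u_1^2 - u_2^2 \ge 2C'\log(1/t)$ finishes the proof. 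If you replace your two bounding steps with these, your argument becomes a complete proof of the lemma as stated.
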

\begin{proof}[Proof of Lemma \ref{lemma:prob-1}]
Notice that
\begin{equation*}
\begin{aligned}
&~~~\mathbb P_{y\sim q} \left\{y < 1-\sqrt{2C't\log(1/t)}\right\} = \frac{\mathbb P \left\{\mathcal N(x,t) \in [-1, 1-\sqrt{2C't\log(1/t)}]\right\}}{\mathbb P \left\{\mathcal N(x,t) \in [-1, 1]\right\}}\\
&< \frac{\mathbb P \left\{\mathcal N(x,t) \in [-1, 1-\sqrt{2C't\log(1/t)}]\right\} + \mathbb P\left\{\mathcal N(x,t) < -1\right\}}{\mathbb P \left\{\mathcal N(x,t) \in [-1, 1]\right\} + \mathbb P\left\{\mathcal N(x,t) < -1\right\}} = \frac{\mathbb P \left\{\mathcal N(x,t) < 1-\sqrt{2C't\log(1/t)}\right\}}{\mathbb P \left\{\mathcal N(x,t) < 1\right\}}\\
&= \frac{\mathbb P \left\{\mathcal N(0,1) > \frac{x-1}{\sqrt{t}}+\sqrt{2C'\log(1/t)}\right\}}{\mathbb P \left\{\mathcal N(0,1) > \frac{x-1}{\sqrt{t}}\right\}} \overset{(a)}{<} \exp\left(-\frac12\left(\frac{x-1}{\sqrt{t}}+\sqrt{2C'\log(1/t)}\right)^2 + \frac12\left(\frac{x-1}{\sqrt{t}}\right)^2\right)\\
&< \exp(-C'\log(1/t)) = t^{C'},
\end{aligned}
\end{equation*}
which comes to our conclusion. Here, inequality $(a)$ holds because of the following fact. For all $x_1 > x_2 > 0$, it holds that
\begin{align*}
\frac{\mathbb P\left\{\mathcal N(0,1) > x_1\right\}}{\mathbb P\left\{\mathcal N(0,1) > x_2\right\}} &= \frac{\int_{0}^{\infty} \frac{1}{\sqrt{2\pi}} \exp\left(-\frac{(x+x_1)^2}{2}\right) 
\,\rd x}{\int_{0}^{\infty} \frac{1}{\sqrt{2\pi}} \exp\left(-\frac{(x+x_2)^2}{2}\right) \,\rd x} \le \frac{\int_{0}^{\infty} \frac{1}{\sqrt{2\pi}} \exp\left(-\frac{(x+x_2)^2}{2}\right) \cdot \exp\left(-\frac{x_1^2-x_2^2}{2}\right) \,\rd x}{\int_{0}^{\infty} \frac{1}{\sqrt{2\pi}} \exp\left(-\frac{(x+x_2)^2}{2}\right) \,\rd x}\\
&= \exp\left(-\frac{x_1^2-x_2^2}{2}\right). 
\end{align*}
\end{proof}
    \subsection{Very high noise regime}\label{section:very-high-noise}
    \begin{proof}[Proof of Theorem \ref{thm:upperbound_large_t}]
Since \(s(x,t) = \frac{\psi(x, t)}{p(x, t)}\), direct calculation yields
\begin{equation}\label{eqn:score_bound_large_t_I}
    \int_{\R} |\hat{s}(x, t) - s(x, t)|^2 \, p(x, t) \, \rd x \le \int_{\R} \frac{|\hat{\psi}(x, t)p(x, t) - \psi(x, t)\hat{p}(x, t)|^2}{p(x,t) \varepsilon(x, t)^2} \, \rd x. 
\end{equation}
Examining the numerator, consider
\[\hat{\psi}(x, t)p(x, t) = -\frac{x}{t} \hat{p}(x, t) p(x, t) + \left(\frac{1}{nt}\sum_{i=1}^{n} \mu_i\varphi_t(x-\mu_i)\right)p(x,t)\]
and 
\[\psi(x, t)\hat{p}(x, t) = -\frac{x}{t}p(x, t)\hat{p}(x, t) + \left(\frac{1}{t} \int_{-1}^{1} \mu\varphi_t(x-\mu)f(\mu) \, \rd \mu\right)\hat{p}(x, t)\]
hold according to the definition of $\hat{p}(x,t)$ and $\hat{\psi}(x,t)$ stated in Section \ref{section:methodology_very_high_noise}. We have
\begin{align}
    &~~~~|\hat{\psi}(x, t)p(x, t) - \psi(x, t) \hat{p}(x, t)|^2 \nonumber \\
    &= \left|\left(\frac{1}{nt}\sum_{i=1}^{n} \mu_i \varphi_t(x-\mu_i) \right)p(x, t) - \left(\frac{1}{t} \int_{-1}^{1} \mu\varphi_t(x-\mu)f(\mu) \, \rd \mu\right)\hat{p}(x, t)\right|^2 \nonumber \\
    &\lesssim \left|\frac{1}{nt}\sum_{i=1}^{n} \mu_i \varphi_t(x-\mu_i) - \frac{1}{t} \int_{-1}^{1} \mu \varphi_t(x-\mu) f(\mu) \, \rd\mu \right|^2 p(x, t)^2 \nonumber\\
    &~~~+ \left|\frac{1}{t} \int_{-1}^{1} \mu \varphi_t(x-\mu) f(\mu) \, \rd\mu \right|^2 |\hat{p}(x, t) - p(x, t)|^2. \label{eqn:large_t_numerator}
\end{align}
Recalling that \(\mu_1, \mu_2, \ldots, \mu_n \sim f\), it is clear  
\begin{align*}
    &\bE \left|\frac{1}{nt}\sum_{i=1}^{n} \mu_i \varphi_t(x-\mu_i) - \frac{1}{t} \int_{-1}^{1} \mu \varphi_t(x-\mu) f(\mu) \, \rd\mu \right|^2 = \frac{1}{nt^2} \Var\left(\mu_1 \varphi_t(x-\mu_1)\right) \\
    &~~~\le \frac{1}{nt^2} \bE\left(\mu_1^2 \varphi_t(x-\mu_1)^2\right) = \frac{1}{nt^2} \int_{-1}^{1} \frac{\mu^2}{2\pi t} e^{-\frac{(x-\mu)^2}{t}} \,\rd\mu \overset{(a)}{\lesssim} \frac{1}{nt^3} e^{-\frac{(|x| - 1)^2}{t}}. 
\end{align*}
Here, (a) holds because when \(|x|\ge 1\): \(\int_{-1}^1 \mu^2 e^{-\frac{(x-\mu)^2}{t}} \,\rd\mu \le \int_{-1}^1 \mu^2 e^{-\frac{(|x|-1)^2}{t}}\,\rd\mu \lesssim e^{-\frac{(|x|-1)^2}{t}}\). When \(|x| < 1\): \(\int_{-1}^1 \mu^2 e^{-\frac{(x-\mu)^2}{t}} \,\rd\mu \lesssim 1 \le e^{\frac1t}\cdot e^{-\frac{(|x|-1)^2}{t}} \lesssim e^{-\frac{(|x|-1)^2}{t}}\).
Now consider \(p(x, t) \lesssim \frac{1}{\sqrt{t}} e^{-\frac{(|x|-1)^2}{2t}}\), and so the first term in (\ref{eqn:large_t_numerator}) can be bounded. Let us turn our attention to the second term. Note since \(f \in \mathcal{F}_\alpha\) that we have \(p(x, t) \ge \varepsilon(x, t)\). Combined with $\hat{p}(x,t):=\varepsilon(x,t) \vee \frac{1}{n} \sum_{i=1}^{n} \varphi_t(x-\mu_i)$, it follows
\begin{align*}
    \bE\left|\hat{p}(x, t) - p(x, t)\right|^2 &\le \bE\left|\frac{1}{n} \sum_{i=1}^{n} \varphi_t(x-\mu_i) - p(x, t)\right|^2 = \frac{1}{n} \Var\left(\varphi_t(x-\mu_1)\right) \le \frac{1}{n} \bE\left(\varphi_t(x-\mu_1)^2\right) \\
    &= \frac{1}{n} \int_{-1}^{1} \frac{1}{2\pi t} e^{-\frac{(x - \mu)^2}{t}} f(\mu) \, \rd \mu \lesssim \frac{1}{nt} e^{-\frac{(|x| - 1)^2}{t}}. 
\end{align*}
Furthermore, observe \(\left|\frac{1}{t} \int_{-1}^{1} \mu \varphi_t(x-\mu)f(\mu) \, \rd \mu\right|^2 \lesssim \frac{1}{t^{3}} e^{-\frac{(|x| - 1)^2}{t}}\), and so we can bound the second term in (\ref{eqn:large_t_numerator}). Putting together our bounds, it follows from (\ref{eqn:large_t_numerator}), 
\begin{align*}
    \bE |\hat{\psi}(x, t)p(x, t) - \psi(x, t) \hat{p}(x, t)|^2 &\lesssim \frac{1}{nt^4} e^{-\frac{2(|x|-1)^2}{t}}. 
\end{align*}
Consequently from (\ref{eqn:score_bound_large_t_I}), we have
\begin{align*}
    &\bE \int_{\R} |\hat{s}(x, t) - s(x, t)|^2 p(x, t) \,\rd x \lesssim \int_{\R} \frac{1}{nt^4} e^{-\frac{2(|x|-1)^2}{t}} \cdot \frac{1}{p(x, t) \varepsilon(x, t)^2} \, \rd x \\
    &\lesssim \frac{1}{nt^4} \int_{\R} e^{-\frac{2(|x|-1)^2}{t}} \cdot t^{3/2} e^{\frac{3(|x| + 1)^2}{2t}} \, \rd x \lesssim \frac{1}{nt^2} \int_{\R} \frac{1}{\sqrt{t}} e^{-\frac{2(|x|-1)^2}{t}} \cdot e^{\frac{3(|x| + 1)^2}{2t}} \, \rd x \\
&= \frac{e^{\frac{24}{t}} }{nt^2}\int_{\R} \frac{1}{\sqrt{t}} e^{-\frac{(|x| - 7)^2}{2t}} \,\rd x\lesssim \frac{e^{\frac{24}{c}}}{nt^2}
\end{align*}
Here, we have used \(p(x, t)\ge \varepsilon(x, t) \gtrsim \frac{1}{\sqrt{t}}e^{-\frac{(|x| + 1)^2}{2t}}\) and we have used \(t \ge c\) to obtain the final line. The proof is complete.  
\end{proof}
    \subsection{High noise regime}\label{section:high-noise}
    In this section, we examine the high-noise regime where $n^{-\frac{2}{2\alpha+1}} \le t \le 1$ and provide the proofs of Lemmas \ref{lemma:score-split-1}, \ref{lemma:up-6}, and \ref{lemma:up-7} stated in Section \ref{section:proof_sketch_upper_bound}. Theorem \ref{thm:up-high-1} is obtained as a direct consequence. Before doing so, some preliminary development is needed. Given the following unbiased estimators for $\psi(x,t)$ and $p(x,t)$, 
\begin{equation}
\label{eqn:unbiase-psi-p}
\hat{\psi}(x, t) := \frac{1}{n}\sum_{j=1}^n \frac{\mu_j-x}{t}\phi_t(x-\mu_j), ~~~\hat{p}(x, t) :=  \frac{1}{n}\sum_{j=1}^n \phi_t(x-\mu_j),
\end{equation}
the variance of $\hat{\psi}(x, t)$ can be bounded as
\begin{equation}
\label{eqn:6-1}
\begin{aligned}
\mathrm{Var}[\hat{\psi}(x, t)] &\le \frac{1}{2\pi t n} \int_U \frac{(\mu-x)^2}{t^2}\exp\left(-\frac{(\mu-x)^2}{t}\right)\cdot f(\mu) \,\rd \mu \\
&\overset{(a)}{=} \frac{1}{2\pi t n} \int_{(-1-x)/\sqrt{t}}^{(1-x)/\sqrt{t}} \frac{z^2}{t}\exp\left(-z^2\right) f(x+\sqrt{t} z) \cdot \sqrt{t}\,\rd z\\
&\lesssim \frac{1}{n t^{3/2}} \int_{\bR} \frac{1}{\sqrt{2\pi}} z^2 \exp\left(-\frac{z^2}{2}\right)\cdot \exp\left(-\frac{z^2}{2}\right) f(x+\sqrt{t} z) \,\rd z \\
&\overset{(b)}{\lesssim} \frac{1}{nt^{3/2}} \int_{\bR}\frac{1}{\sqrt{2\pi}}\exp\left(-\frac{z^2}{2}\right) f(x+\sqrt{t} z) \,\rd z = \frac{1}{nt^{3/2}} \cdot \bE_{z\sim\mathcal N(0,1)} f(x+\sqrt{t} z).
\end{aligned}
\end{equation}
Here, $(a)$ holds by letting $\mu=x+\sqrt{t} z$, $(b)$ holds because $z^2/2 < \exp(z^2/2)~\Rightarrow z^2 \exp(-z^2/2) < 2$ for all $z\in \bR$. Similarly, we can also bound the variance of $\hat{p}(x, t)$.
\begin{equation}
\label{eqn:6-2}
\begin{aligned}
\mathrm{Var}\left[\hat{p}(x, t)\right] &\le \frac{1}{2\pi tn} \int_U \exp\left(-\frac{(\mu-x)^2}{t}\right)\cdot f(\mu)\,\rd \mu \\
& \lesssim \frac{1}{n t^{1/2}}\cdot  \bE_{z\sim \mathcal N(0,1)} f(x+\sqrt{t/2}z). 
\end{aligned}
\end{equation}
Now, we estimate the score function $s(x,t) =\psi(x,t)/p(x,t)$ by using the regularized estimator (\ref{def:shat_high}) defined in Section \ref{section:methodology_high_noise}, namely 
\begin{equation}
\label{eqn: score-regularized}
\hat{s}(x,t) := \frac{\hat{\psi}(x, t)}{\hat{p}(x, t)\vee \varepsilon(x,t)} := \frac{\hat{\psi}(x, t)}{\hat{p}^\varepsilon_n(x,t)}.
\end{equation}
Recall the regularizer $\varepsilon(x,t)$ was chosen as
\begin{equation}
\label{eqn:regularizer}
\varepsilon(x,t) := c_d \int_U \frac{1}{\sqrt{2\pi t}} \exp\left(-\frac{(x-\mu)^2}{2t}\right) \,\rd \mu .
\end{equation}
Notice that we choose a function as the regularizer instead of a constant. Since we know that $c_d \le f(x) \le C_d$ holds for all $x\in U$, we conclude that
\begin{equation}
\label{eqn:regularizer-1}
\varepsilon(x,t) \le p(x,t) \le \frac{C_d}{c_d}\cdot \varepsilon(x,t),
\end{equation}
which directly leads to 
\begin{equation}
\label{eqn:regularized-density}
(\hat{p}^\varepsilon_n(x,t)-p(x,t))^2 \le (\hat{p}(x, t)-p(x,t))^2 ~~\text{and}~~ \hat{p}^\varepsilon_n(x,t) \ge \varepsilon(x,t)\gtrsim p(x,t).
\end{equation}
Then, the score integrated estimation error can be split according to Lemma \ref{lemma:score-split-1}, which is to say, 
\begin{align*}
&\int_{D} \bE \left(\hat{s}(x,t)-s(x,t)\right)^2\cdot p(x,t) \,\rd x \\
&\quad\quad\quad \lesssim \int_D \frac{\bE (\hat{\psi}(x, t)-\psi(x,t))^2}{p(x,t)}\,\rd x + \int_D s(x,t)^2 \cdot \frac{\bE (\hat{p}(x, t)-p(x,t))^2}{p(x,t)} \,\rd x.
\end{align*}
for any subset \(D \subseteq \R\). 

\begin{proof}[Proof of Lemma \ref{lemma:score-split-1}]
By using \eqref{eqn:regularizer-1} and \eqref{eqn:regularized-density}, we have
\begin{equation}
\label{eqn:score-1}
\begin{aligned}
&\int_\bR \bE \left(\hat{s}(x,t)-s(x,t)\right)^2\cdot p(x,t) \,\rd x = \int_\bR \bE \left(\frac{\hat{\psi}(x, t)}{\hat{p}^\varepsilon(x,t)}-\frac{\psi(x,t)}{p(x,t)}\right)^2\cdot p(x,t) \,\rd x\\
&~~~= \bE\int_\bR \frac{(\hat{\psi}(x, t)p(x,t)-\psi(x,t)p(x,t)+\psi(x,t)p(x,t)-\psi(x,t)\hat{p}^\varepsilon(x,t))^2}{\hat{p}^\varepsilon(x,t)^2 p(x,t)} \,\rd x \\
&~~~\lesssim \bE \int_\bR \frac{p(x,t)^2\cdot (\hat{\psi}(x, t)-\psi(x,t))^2 + \psi(x,t)^2\cdot (\hat{p}^\varepsilon(x,t)-p(x,t))^2}{p(x,t)^3}\,\rd x \\ 
&~~~\lesssim \int_\bR \frac{\bE (\hat{\psi}(x, t)-\psi(x,t))^2}{p(x,t)}\,\rd x + \int_\bR s(x,t)^2 \cdot \frac{\bE (\hat{p}(x, t)-p(x,t))^2}{p(x,t)} \,\rd x.
\end{aligned}
\end{equation}
The same argument goes through when the domain of integration is \(D\) instead of \(\R\).
\end{proof}
In order to bound the mean integrated squared score error, we need to upper bound the following two terms,
\[I_1 := \int_\bR \frac{\bE (\hat{\psi}(x, t)-\psi(x,t))^2}{p(x,t)}\,\rd x, ~~~I_2 := \int_\bR s(x,t)^2 \cdot \frac{\bE (\hat{p}(x, t)-p(x,t))^2}{p(x,t)} \,\rd x. \]
These can be bounded by Lemmas \ref{lemma:up-6} and \ref{lemma:up-7} respectively, which we prove now. 
\begin{proof}[Proof of Lemma \ref{lemma:up-6}]
By Lemma \ref{lemma:up-1}, we have $p(x,t)\gtrsim 1$ for $|x| < 1+C\sqrt{t}$, so
\begin{equation*}
\begin{aligned}
& \int_{|x|<1+C\sqrt{t}} \frac{\bE (\hat{\psi}(x, t)-\psi(x,t))^2}{p(x,t)}\,\rd x \lesssim \int_\bR \bE (\hat{\psi}(x, t)-\psi(x,t))^2\,\rd x = \int_\bR \mathrm{Var}\left[\hat{\psi}(x, t)\right] \,\rd x \\
&\lesssim \frac{1}{nt^{3/2}} \int_\bR \bE_{z\sim\mathcal N(0,1)}f(x+\sqrt{t}z) \,\rd x = \frac{1}{nt^{3/2}}. 
\end{aligned}
\end{equation*}
Next we need to consider the case where $|x|\ge 1+C\sqrt{t}$. According to Lemma \ref{lemma:up-3} and \eqref{eqn:6-1}, for $x > 1+C\sqrt{t}$, it holds that
\begin{equation}
\label{eqn:6-3}
\begin{aligned}
\bE \left(\hat{\psi}(x, t)-\psi(x,t)\right)^2 &\lesssim \frac{1}{nt^{3/2}} \int_{\bR} z^2\exp(-z^2)\cdot f(x-\sqrt{t} z) \,\rd z \lesssim \frac{1}{nt^{3/2}} \int_{(x-1)/\sqrt{t}}^{(x+1)/\sqrt{t}} z^2\exp(-z^2) \,\rd z\\
&\le \frac{1}{nt^{3/2}}
\exp\left(-\frac{2(x-1)^2}{3t}\right)\cdot \int_{(x-1)/\sqrt{t}}^{(x+1)/\sqrt{t}} z^2 \exp(-z^2/3)\,\rd z\\
&\lesssim \frac{1}{nt^{3/2}}
\exp\left(-\frac{2(x-1)^2}{3t}\right). 
\end{aligned}
\end{equation}
Therefore,
\begin{equation*}
\begin{aligned}
&\int_{1+C\sqrt{t}}^{\infty} \frac{\bE (\hat{\psi}(x, t)-\psi(x,t))^2}{p(x,t)}\,\rd x \lesssim \frac{1}{nt^{3/2}}\int_{1+C\sqrt{t}}^{\infty} \frac{\exp\left(-\frac{2(x-1)^2}{3t}\right)}{\frac{\sqrt{t}}{x-1}\exp\left(-\frac{(x-1)^2}{2t}\right)}\,\rd x \\
&~~~= \frac{1}{nt^{3/2}}\int_{1+C\sqrt{t}}^{\infty} \frac{x-1}{\sqrt{t}}\cdot \exp\left(-\frac{(x-1)^2}{6t}\right)\,\rd x = \frac{1}{nt^{3/2}}\int_C^{\infty} y\exp(-y^2/6)\cdot \sqrt{t} \,\rd y \lesssim \frac{1}{nt}.
\end{aligned}
\end{equation*}
Similarly, it holds that
\[\int_{-\infty}^{-1-C\sqrt{t}} \frac{\bE (\hat{\psi}(x, t)-\psi(x,t))^2}{p(x,t)}\,\rd x \lesssim \frac{1}{nt}.\]
After adding them up, we conclude
\[I_1 = \int_\bR \frac{\bE (\hat{\psi}(x, t)-\psi(x,t))^2}{p(x,t)}\,\rd x \lesssim \frac{1}{nt^{3/2}}+\frac{1}{nt} \lesssim \frac{1}{nt^{3/2}}. \]
\end{proof}

\begin{proof}[Proof of Lemma \ref{lemma:up-7}]
From (\ref{eqn:6-2}), we have
\[\bE (\hat{p}(x, t)-p(x,t))^2 = \mathrm{Var}\left[\hat{p}(x, t)\right] \lesssim \frac{1}{n t^{1/2}}\cdot  \bE_{z\sim \mathcal N(0,1)} f(x+\sqrt{t/2}z).\]
For $|x| < 1+C\sqrt{t}$, we have $p(x,t) > c$ by Lemma \ref{lemma:up-1} and $s(x,t)^2\lesssim 1/t$ by Lemma \ref{lemma:score-up}, then
\begin{equation*}
\begin{aligned}
&~~~\int_{|x|<1+C\sqrt{t}} s(x,t)^2 \cdot \frac{\bE (\hat{p}(x, t)-p(x,t))^2}{p(x,t)} \,\rd x \lesssim \int_{|x|<1+C\sqrt{t}} \frac{1}{t}\cdot \bE (\hat{p}(x, t)-p(x,t))^2 \,\rd x \\
&< \frac{1}{nt^{3/2}} \int_\bR \bE_{z\sim \mathcal N(0,1)} f(x+\sqrt{t/2}z) \,\rd x = \frac{1}{nt^{3/2}}  \bE_{z\sim \mathcal N(0,1)}\int_\bR f(x+\sqrt{t/2}z) \,\rd x = \frac{1}{nt^{3/2}}.
\end{aligned}
\end{equation*}
For $x\ge 1+C\sqrt{t}$, we apply Lemma \ref{lemma:up-3} and obtain that
\begin{equation*}
\begin{aligned}
\bE (\hat{p}(x, t)-p(x,t))^2 &\le \frac{1}{nt^{1/2}} \bP_{z\sim \mathcal N(0,1)}\left[\left|x+\sqrt{t/2}z\right|\le 1\right] \lesssim \frac{1}{nt^{1/2}}\cdot \frac{\sqrt{t/2}}{x-1}\exp\left(-\frac{(x-1)^2}{t}\right)\\
p(x,t) & \asymp \frac{\sqrt{t}}{x-1}\exp\left(-\frac{(x-1)^2}{2t}\right)
\end{aligned}
\end{equation*}
Therefore, we conclude that
\begin{equation*}
\begin{aligned}
&\int_{1+C\sqrt{t}}^{\infty} s(x,t)^2 \cdot \frac{\bE (\hat{p}(x, t)-p(x,t))^2}{p(x,t)} \,\rd x \lesssim  \int_{1+C\sqrt{t}}^{\infty} s(x,t)^2\cdot \frac{1}{nt^{1/2}}\exp\left(-\frac{(x-1)^2}{2t}\right)\,\rd x \\
&~~~\le \frac{1}{nt^{3/2}}\int_{1+C\sqrt{t}}^{\infty} \log \frac{1}{\sqrt{2\pi t}\cdot p(x,t)} \cdot \exp\left(-\frac{(x-1)^2}{2t}\right)\,\rd x\\
&~~~\asymp  \frac{1}{nt^{3/2}}\int_{1+C\sqrt{t}}^{\infty} \left(\log \frac{1}{\sqrt{2\pi t}} + \frac{(x-1)^2}{2t} + \log \frac{x-1}{\sqrt{t}}\right)\cdot \exp\left(-\frac{(x-1)^2}{2t}\right)\,\rd x\\
&~~~= \frac{1}{nt^{3/2}}\int_{C}^{\infty} \left(\log \frac{1}{\sqrt{2\pi t}} + \frac{y^2}{2} + \log y\right)\cdot \exp\left(-\frac{y^2}{2}\right)\cdot \sqrt{t}\,\rd y\lesssim \frac{\log(1/t)}{nt}. 
\end{aligned}
\end{equation*}
Similarly, 
\[\int_{-\infty}^{-1-C\sqrt{t}} s(x,t)^2 \cdot \frac{\bE (\hat{p}(x, t)-p(x,t))^2}{p(x,t)} \,\rd x \lesssim \frac{\log(1/t)}{nt}.\]
After summing them up, we finally conclude that
\[I_2 = \int_\bR s(x,t)^2 \cdot \frac{\bE (\hat{p}(x, t)-p(x,t))^2}{p(x,t)} \,\rd x \lesssim \frac{1}{nt^{3/2}} + \frac{\log(1/t)}{nt} \lesssim \frac{1}{nt^{3/2}}. \]
\end{proof}
Finally, we combine Lemma \ref{lemma:up-6} and Lemma \ref{lemma:up-7}, and conclude Theorem \ref{thm:up-high-1}.
    \subsection{Low noise regime: kernel based estimator}\label{section:low-noise_kernel}
    In this section, we work towards giving a proof of Theorem \ref{thm:up-1} by way of proving Propositions \ref{prop:up-boundary} and \ref{prop:up-external}. Recall Proposition \ref{prop:up-internal} was already proved in Section \ref{section:proof_sketch_upper_bound}. Recall we are working in the low noise regime ($t< n^{-\frac{2}{2\alpha+1}}$) with $\alpha \ge 1$, and further recall from Section \ref{section:methodology_low_noise} that we use a kernel-based estimator \(\hat{f}_n\) of the original distribution $f(x)$ as well as \(\hat{f}_n^{(k)}\) to estimate the \(k\)th derivative $f^{(k)}(x)$ for \(1 \leq k \leq \lfloor \alpha \rfloor\). Recall Lemma \ref{lemma:upper-1} states the errors of these estimators. Further recall we construct a separate score estimator on each of three regions: the internal part $D_1 := \{x:~ |x| < 1-\sqrt{Ct\log(1/t)}\}$, the boundary part $D_2 := \{x:~1-\sqrt{Ct\log(1/t)}\le |x| \le 1+C\sqrt{t}\}$, and the external part $D_3 := \{x:~|x|>1+C\sqrt{t}\}$, where $C>0$ is a constant.

Let us first work towards proving Proposition \ref{prop:up-boundary}, which addresses \(D_2\). Recall from Section \ref{section:methodology_low_noise} we use
\[\hat{s}_{2, n}(x) = \frac{\hat{\psi}_n(x,t)}{\phi_t* \hat{f}_n (x)} \mathbbm{1}_{\Omega} + \frac{\phi_t' * u(x)}{\phi_t * u(x)} \mathbbm{1}_{\Omega^c}\]
where the event \(\Omega\) is defined as \(\Omega := \left\{\hat{f}_n(x) \ge \frac{c_d}{2} \text{ for all } x \in [-1, 1]\right\}\), \(u(x) = \frac12 \mathbbm{1}_{\left\{|x|\le 1\right\}},\) and 
\[\hat{\psi}_n(x,t) = \phi_t(x+1)\hat{f}_n(-1)-\phi_t(x-1)\hat{f}_n(1)+\int_{-1}^{1} \phi_t(x-\mu)\hat{f}'_n(\mu) \,\rd\mu.\]

Here, we notice that $\left\{\|\hat{f}_n-f\|_\infty < \frac{c_d}{2}\right\} \subset \Omega$ since \(f \geq c_d\). Since \(\|\hat{f}_n-f\|_\infty < \frac{c_d}{2}\) is a high-probability event, it follows \(\Omega^c\) is a low-probability event. Consequently, it turns out most of our attention will be focused on the event \(\Omega\), in which case $\hat{f}_n$ is also $\alpha$-H\"{o}lder smooth on $[-1,1]$. In the following lemma, we study the probability $\mathbb P (\Omega^c)$. 
\begin{lemma}
\label{lemma:infty-probability}
We have
\[\bP(\Omega^c) \lesssim \exp\left(-C_1\cdot n^{\frac{2\alpha-1}{2\alpha+1}}\right).\]
Here $C_1$ is a universal constant. 
\end{lemma}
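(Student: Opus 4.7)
}

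The plan is to reduce the event $\Omega^c$ to the sup-norm estimation event $\{\|\hat{f}_n - f\|_\infty \ge c_d/2\}$ (as already noted after the lemma statement) and then control the latter through a standard bias--variance decomposition coupled with a discretization argument. First I would write $\hat{f}_n - f = (\hat{f}_n - \bE \hat{f}_n) + (\bE \hat{f}_n - f)$. The deterministic bias term is controlled directly by the standard approximation property of an $\lfloor \alpha \rfloor$-order kernel, giving $\|\bE \hat{f}_n - f\|_\infty \lesssim h^\alpha \asymp n^{-\alpha/(2\alpha+1)}$; in particular, for $n$ large enough the bias is pointwise at most $c_d/4$, so it suffices to show $\bP(\sup_{x \in [-1,1]} |\hat{f}_n(x) - \bE \hat{f}_n(x)| \ge c_d/4)$ is at most $\exp(-C_1 n^{(2\alpha-1)/(2\alpha+1)})$.

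Next, to control the stochastic fluctuation uniformly, I would pass from a pointwise bound to a uniform bound via a covering of $[-1,1]$. Since $K_{\lfloor \alpha \rfloor}$ is smooth and compactly supported, the map $x \mapsto \hat{f}_n(x)$ is Lipschitz with a (random) Lipschitz constant of order $\|K_{\lfloor \alpha \rfloor}'\|_\infty/h^2$, and the same holds for $\bE \hat{f}_n$. Hence fixing a $\delta$-net $\{x_i\}$ of $[-1,1]$ with $\delta \asymp h^2$ ensures that the oscillation of $\hat{f}_n - \bE \hat{f}_n$ between successive grid points is at most $c_d/8$; the net has cardinality $O(h^{-2}) = O(n^{2/(2\alpha+1)})$. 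At each grid point $x_i$, Bernstein's inequality applied to the i.i.d.\ sum $\hat{f}_n(x_i) - \bE \hat{f}_n(x_i) = \frac{1}{n} \sum_j Y_{ij}$, where $|Y_{ij}| \lesssim 1/h$ and $\Var(Y_{ij}) \lesssim 1/h$, yields $\bP(|\hat{f}_n(x_i) - \bE \hat{f}_n(x_i)| \ge c_d/8) \le 2\exp(-c n h)$. Taking a union bound gives
\begin{equation*}
\bP(\Omega^c) \lesssim h^{-2} \exp(-c n h) \asymp n^{2/(2\alpha+1)} \exp\bigl(-c\, n^{2\alpha/(2\alpha+1)}\bigr),
\end{equation*}
which is much smaller than $\exp(-C_1 n^{(2\alpha-1)/(2\alpha+1)})$ since $2\alpha/(2\alpha+1) > (2\alpha-1)/(2\alpha+1)$, absorbing the polynomial prefactor into the constant.

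There is no serious obstacle in this plan; the argument is classical. The only mild technical care is in two places. First, one must ensure the Bernstein constants are correct for higher-order kernels, which may be signed but are still uniformly bounded with bounded derivative on a compact support, so both $\|K_{\lfloor \alpha \rfloor}\|_\infty$ and $\|K_{\lfloor \alpha \rfloor}'\|_\infty$ enter only as universal constants. Second, one must verify the bias bound $\|\bE \hat{f}_n - f\|_\infty \lesssim h^\alpha$ uniformly on $[-1,1]$, including the boundary points $\pm 1$; this is exactly where the sign-flip of the kernel arguments in the construction (\ref{def:kde}) is used, and it is also precisely the content of Lemma~\ref{lemma:upper-1} extended to a sup-norm statement (which follows from the standard proof once the pointwise Taylor expansion argument is carried through uniformly). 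Combining the uniform bias bound with the union bound above yields the claim.
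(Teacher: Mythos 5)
Your argument is correct and in fact proves something strictly stronger than the lemma claims. The paper's proof is a black-box application of two results from Gin\'e--Nickl: Theorem~6.3.7 to bound $\bE\|\hat{f}_n - f\|_\infty \lesssim (\log n / n)^{\alpha/(2\alpha+1)}$, and Remark~5.1.14 (a Talagrand-type concentration inequality for the kernel density estimator) to show $\mathbb{P}\{\|\hat{f}_n - f\|_\infty \ge \bE\|\hat{f}_n - f\|_\infty + \sqrt{Cx\,n^{-(2\alpha-1)/(2\alpha+1)}}\} \le e^{-x}$, then sets $x \asymp n^{(2\alpha-1)/(2\alpha+1)}$. Your route is self-contained and more elementary: bias--variance split, a $\delta$-net with $\delta \asymp h^2$ exploiting the deterministic Lipschitz bound $|\hat{f}_n'| \lesssim h^{-2}$, pointwise Bernstein, and a union bound. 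As you observe, this yields $h^{-2}\exp(-c\,nh) \asymp n^{2/(2\alpha+1)}\exp(-c\,n^{2\alpha/(2\alpha+1)})$, which has the strictly better exponent $2\alpha/(2\alpha+1) > (2\alpha-1)/(2\alpha+1)$; the paper's looser exponent comes from the $\sqrt{x}$ scaling in the Gaussian-regime part of the Talagrand bound, whereas for deviations of constant order it is the Bernstein linear regime that governs, exactly as your computation shows. Either bound suffices for the paper's downstream use (the only role of the lemma is to make $\Omega^c$ negligible), so the gain is inconsequential here, but your argument is both simpler to verify and quantitatively tighter. Two small points to tidy up when writing it out: the Lipschitz constant of $\hat{f}_n$ is deterministic (not random, as you parenthetically say), and at the boundary you should briefly invoke the one-sided construction in (\ref{def:kde}) when verifying the uniform bias bound, which you already flag correctly.
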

\begin{proof}
Since \(\bP(\Omega^c) \le \mathbb P\left\{\|\hat{f}_n-f\|_\infty > \frac{c_d}{2}\right\}\), it suffices to obtain a tail bound for the \(\ell_\infty\) loss. According to Theorem 6.3.7 of \cite{gine2021mathematical}, we know the kernel-based estimator $\hat{f}_n$ proposed in Lemma \ref{lemma:upper-1} has its $\ell_\infty$ mean estimation error bounded by:
\[\bE \|\hat{f}_n - f\|_\infty \lesssim \left(\frac{\log n}{n}\right)^{\frac{\alpha}{2\alpha+1}} < \frac{c_d}{4}. \]
By Remark 5.1.14 of \cite{gine2021mathematical}, the tail of concentration error $\|\hat{f}_n - f\|_\infty - \bE \|\hat{f}_n - f\|_\infty $ can be bounded as:
\[\mathbb P\left\{\|\hat{f}_n-f\|_\infty \ge \bE \|\hat{f}_n - f\|_\infty + \sqrt{Cx\cdot n^{-\frac{2\alpha-1}{2\alpha+1}}}\right\} \le e^{-x}\]
where $C$ is a universal constant. Let $x = \frac{c_d^2}{16C} n^{\frac{2\alpha-1}{2\alpha+1}}$, we finally obtain
\begin{align*}
\mathbb P \left\{\|\hat{f}_n - f\|_\infty > \frac{c_d}{2}\right\} \le \mathbb P\left\{\|\hat{f}_n-f\|_\infty \ge \bE \|\hat{f}_n - f\|_\infty + \sqrt{Cx\cdot n^{-\frac{2\alpha-1}{2\alpha+1}}}\right\} \le \exp\left(-C_1\cdot n^{\frac{2\alpha-1}{2\alpha+1}}\right)
\end{align*}
where $C_1$ is a universal constant. The proof is complete. 
\end{proof}

\begin{proof}[Proof of Proposition \ref{prop:up-boundary}]
It is clear we can split the score-matching error across \(\Omega\) and \(\Omega^c\), that is 
\begin{align*}
&\bE \left(\int_{D_2} \left|\hat{s}_{2,n}(x,t)-s(x,t)\right|^2 p(x,t) \,\rd x\right) \\
&= \bE \left(\mathbbm{1}_\Omega \int_{D_2} \left|\hat{s}_{2,n}(x,t)-s(x,t)\right|^2 p(x,t) \,\rd x\right) + \bE \left(\mathbbm{1}_{\Omega^c}\int_{D_2} \left|\hat{s}_{2,n}(x,t)-s(x,t)\right|^2 p(x,t) \,\rd x\right).
\end{align*}
By Remark \ref{remark:alpha}, we have \(\bE \left(\mathbbm{1}_{\Omega^c}\int_{D_2} \left|\hat{s}_{2,n}(x,t)-s(x,t)\right|^2 p( \,\rd x\right) \lesssim \bP\left(\Omega^c\right) \lesssim e^{-C_1 n^{\frac{2(\alpha-1)}{2\alpha+1}}}\) for some universal constant \(C_1 > 0\). Here, we have used Lemma \ref{lemma:infty-probability}. Hence, it remains to examine the score-matching error on the event \(\Omega\). In what follows, we work on the event \(\Omega\).

When $|x| < 1+C\sqrt{t}$, we already know $p(x,t)\gtrsim 1$ and $\phi_t* \hat{f}_n (x) \gtrsim 1$ by Lemma \ref{lemma:up-1}. Then, we only need to bound the numerator term since
\begin{equation}
\label{eqn:J-ineq}
\left(\hat{s}_{2,n}(x,t)-s(x,t)\right)^2\cdot p(x,t) = \left(\frac{J(x,t)}{p(x,t)\cdot (\phi_t * \hat{f}_n)(x)}\right)^2\cdot p(x,t) \asymp J(x,t)^2.
\end{equation}
Here, $J(x,t) = \hat{\psi}_n(x,t) p(x,t) - \psi(x,t) (\phi_t * \hat{f}_n)(x)$. Since 
\begin{align*}
\psi(x,t) &= (\phi_t' * f)(x)=\varphi_t(x+1)f(-1) - \varphi_t(x-1)f(1) + \int_{-1}^{1} \varphi_t(x-\mu) f'(\mu) \,\rd\mu, \\
p(x,t) &= (\varphi_t * f)(x) = \int_{-1}^{1} \varphi_t(x-\mu) f(\mu) \,\rd\mu, \\
\hat{\psi}_n(x,t)&= \varphi_t(x+1)\hat{f}_n(-1) - \varphi_t(x-1)\hat{f}_n(1) + \int_{-1}^{1} \varphi_t(x-\mu) \hat{f}_n'(\mu) \,\rd\mu, \\
(\phi_t * \hat{f}_n)(x) &= \int_{-1}^{1} \varphi_t(x-\mu) \hat{f}_n(\mu) \,\rd\mu, 
\end{align*}
it follows that $|J(x,t)|^2 \lesssim J_1(x,t) + J_2(x,t) + J_3(x,t)$ where
\begin{align}
\label{eqn:J-123}
\begin{split}
J_1(x,t) &:= \left|\varphi_t(x-1)\right|^2 \left|f(1)\int_{-1}^{1}\varphi_t(x-\mu) \hat{f}_n(\mu) \, d\mu - \hat{f}_n(1)\int_{-1}^{1}\varphi_t(x-\mu)f(\mu)\, d\mu \right|^2, \\
J_2(x,t) &:= \left|\varphi_t(x+1)\right|^2 \left|f(-1)\int_{-1}^{1}\varphi_t(x-\mu) \hat{f}_n(\mu) \,\rd\mu - \hat{f}_n(-1)\int_{-1}^{1}\varphi_t(x-\mu)f(\mu)\,\rd\mu \right|^2, \\
J_3(x,t) &:= \left[\int_{[-1,1]^2} \varphi_t(x-\mu)\varphi_t(x-\nu) \left(f'(\mu) \hat{f}_n(\nu) - \hat{f}_n'(\mu) f(\nu)\right)\,\rd\mu \,\rd\nu\right]^2.
\end{split}
\end{align}
The term $\bE(J_2(x,t)\mathbbm{1}_{\Omega})$ is exponentially small as established in Lemma \ref{lemma:J2}. The expectations \(\bE(J_1(x, t)\mathbbm{1}_{\Omega})\) and \(\bE(J_3(x, t)\mathbbm{1}_{\Omega})\) can be bounded using Lemmas \ref{lemma:J1} and \ref{lemma:J3} respectively. After summing the three terms up, we conclude that
\begin{align*}
\bE\left(\mathbbm{1}_{\Omega} \int_{D_2^+} \left(\hat{s}_{2,n}(x,t)-s(x,t)\right)^2 p(x,t) \,\rd x\right) &\lesssim \sqrt{t\log(1/t)}\cdot \left(\log^{\alpha}(1/t) n^{-\frac{2(\alpha-1)}{2\alpha+1}} + n^{-\frac{2(\alpha-1)}{2\alpha+1}}\right) \\
&\lesssim \sqrt{t} \log^{\alpha+1/2}(1/t)\cdot n^{-\frac{2(\alpha-1)}{2\alpha+1}}.
\end{align*}
The proof is complete. 
\end{proof}

\begin{proof}[Proof of Lemma \ref{lemma:J1}]
Without loss of generality, assume \(x \in D_2^{+}\) where \(D_2^{+} := D_2 \cap \R^{+}\). By Taylor expansion around $1$ and Jensen's inequality, we know that $x\in D_2^+$ implies
\begin{align}
\label{eqn:Taylor}
\left|\int_{-1}^{1} \varphi_t(x-\mu) f(\mu)\,\rd\mu - \sum_{k=0}^{\lfloor \alpha \rfloor} \frac{f^{(k)}(1)}{k!} \int_{-1}^{1} (\mu - 1)^k \varphi_t(x-\mu) d\mu\right| &\lesssim \int_{-1}^{1} \varphi_t(x-\mu) |\mu-1|^{\alpha} \,\rd\mu \nonumber \\
&\lesssim (t\log(1/t))^{\alpha/2}.
\end{align}
Similarly, 
\begin{align*}
\left|\int_{-1}^{1} \varphi_t(x-\mu) \hat{f}_n(\mu)\,\rd\mu - \sum_{k=0}^{\lfloor \alpha \rfloor} \frac{\hat{f}_n^{(k)}(1)}{k!} \int_{-1}^{1} (\mu - 1)^k \varphi_t(x-\mu) d\mu\right| \lesssim (t\log(1/t))^{\alpha/2}.
\end{align*}
Therefore, we have for $x\in D_2^+$, \begin{align*}
&\bE \left|f(1)\int_{-1}^{1}\varphi_t(x-\mu) \hat{f}_n(\mu) \,\rd\mu - \hat{f}_n(1)\int_{-1}^{1}\varphi_t(x-\mu)f(\mu)\,\rd\mu \right|^2 \, dx \\
&\lesssim \left(t\log(1/t)\right)^\alpha + \sum_{k=0}^{\lfloor \alpha \rfloor} \frac{1}{k!} \left(\int_{-1}^{1} |\mu-1|^{2k} \varphi_t(x-\mu) \,\rd\mu\right) \bE \left|f(1)\hat{f}_n^{(k)}(1) - \hat{f}_n(1)f^{(k)}(1)\right|^2.
\end{align*}
The key observation is that the summand corresponding to \(k = 0\) is actually zero, since \(|f(1)\hat{f}_n(1) - \hat{f}_n(1)f(1)| = 0\). This cancellation is precisely what makes score estimation at the boundary faster than estimating the derivative of the density at the boundary. Continuing the calculation, we have 
\begin{align*}
&\left(t\log(1/t)\right)^\alpha + \sum_{k=0}^{\lfloor \alpha \rfloor} \frac{1}{k!} \left(\int_{-1}^{1} |\mu-1|^{2k} \varphi_t(x-\mu) \,\rd\mu\right) \bE \left|f(1)\hat{f}_n^{(k)}(1) - \hat{f}_n(1)f^{(k)}(1)\right|^2 \\
&\lesssim \left(t\log(1/t)\right)^\alpha + \sum_{k=1}^{\lfloor \alpha \rfloor} (t\log(1/t))^k \cdot n^{-\frac{2(\alpha-k)}{2\alpha+1}} \\
&= (t\log(1/t))^\alpha + (\log(1/t))^\alpha n^{-\frac{2\alpha}{2\alpha+1}}  \sum_{k=1}^{\lfloor \alpha \rfloor} \left(\frac{t}{n^{-\frac{2}{2\alpha+1}}}\right)^k \\
&\asymp t (\log(1/t))^\alpha \left(t^{\alpha - 1} + n^{-\frac{2(\alpha - 1)}{2\alpha+1}}\right) \lesssim t (\log(1/t))^\alpha\cdot n^{-\frac{2(\alpha - 1)}{2\alpha+1}}
\end{align*}
where we have used \(t \lesssim n^{-\frac{2\alpha}{2\alpha+1}}\) to conclude \(\sum_{k=1}^{\lfloor \alpha \rfloor} \left(\frac{t}{n^{-\frac{2}{2\alpha+1}}}\right)^k \asymp \frac{t}{n^{-\frac{2}{2\alpha+1}}}\). We also applied that for all $x\in [-1,1]$,
\begin{equation}
\label{eqn:up-11}
\begin{aligned}
\bE \left|f(x)\hat{f}_n^{(k)}(x) - \hat{f}_n(x)f^{(k)}(x)\right|^2 &\lesssim f(x)^2 \bE|\hat{f}_n^{(k)}(x)-f^{(k)}(x)|^2 +  |f^{(k)}(x)|^2 \bE |\hat{f}_n(x)-f(x)|^2 \\
&\lesssim n^{-\frac{2(\alpha - k)}{2\alpha+1}} + n^{-\frac{2\alpha}{2\alpha+1}} \lesssim n^{-\frac{2(\alpha - k)}{2\alpha+1}}. 
\end{aligned}
\end{equation}
Therefore, we can bound
\[\bE\left(J_1(x,t)\mathbbm{1}_{\Omega}\right) \lesssim |\phi_t(x-1)|^2 \cdot t(\log(1/t))^\alpha  n^{-\frac{2(\alpha - 1)}{2\alpha+1}} \lesssim (\log(1/t))^\alpha  n^{-\frac{2(\alpha - 1)}{2\alpha+1}}\]
since $\phi_t(x-1) \lesssim \frac{1}{\sqrt{t}}$.
\end{proof}

Now we examine the external part \(D_3\) and give a proof of Proposition \ref{prop:up-external}. Recall from Section \ref{section:methodology_low_noise} use the same estimator as in the boundary part.
\begin{proof}[Proof of Proposition \ref{prop:up-external}]
Without loss of generality, we bound the error on the the positive side $D_3^+ := D_3 \cap \bR^+$. As argued in the proof of Proposition \ref{prop:up-boundary}, it suffices to bound the score-matching error on the event \(\Omega\). When $x\ge 1+C\sqrt{t}$, we know that $p(x,t), \phi_t* \hat{f}_n(x) \asymp \bP\{|\mathcal N(x,t)|\le 1\} \lesssim \sqrt{t}\cdot \phi_t(x-1)$ by using Lemma \ref{lemma:up-3}. Then, we still split the numerator into three terms as follows,
\begin{align*}
&\bE\left(\mathbbm{1}_{\Omega} \int_{D_3^+} \left(\hat{s}_{3, n}(x,t)-s(x,t)\right)^2\cdot p(x,t)\,\rd x\right) \lesssim \bE\left(\mathbbm{1}_{\Omega} \int_{D_3^+} \left(\frac{J(x,t)}{\bP\{|\mathcal N(x,t)|\le 1\}^2}\right)^2 p(x,t) \,\rd x \right) \\
&~~~~~\lesssim \bE\left(\mathbbm{1}_{\Omega} \int_{D_3^+} \frac{J_1(x,t)+J_2(x,t)+J_3(x,t)}{\bP\{|\mathcal N(x,t)|\le 1\}^4}\cdot p(x,t) \,\rd x\right). 
\end{align*}
Here, \(J(x, t) = \hat{\psi}(x, t)p(x, t) - \psi(x, t)(\varphi_t * \tilde{f}_n(x))\) and the three terms $J_i(x,t)~(i=1,2,3)$ are defined in (\ref{eqn:J-123}). For the term $J_3(x,t)$, we have
\begin{align*}
\bE\left(J_3(x,t)\mathbbm{1}_{\Omega}\right) &\le \bE\left[\bE_{\mu,\nu\sim \mathcal N(x,t)} \left(f'(\mu) \hat{f}_n(\nu) - \hat{f}_n'(\mu) f(\nu)\right)\cdot \mathbbm{1}_{\{|u|,|v|\le 1\}}\right]^2 \\
&= \bP\{|\mathcal N(x,t)|\le 1\}^4 \cdot \bE \left(\bE_{\mu,\nu\sim q}\left(f'(\mu) \hat{f}_n(\nu) - \hat{f}_n'(\mu) f(\nu)\right)\right)^2 \\
&\le \bP\{|\mathcal N(x,t)|\le 1\}^4 \cdot \bE_{\mu,\nu\sim q} \bE \left(f'(\mu) \hat{f}_n(\nu) - \hat{f}_n'(\mu) f(\nu)\right)^2 \\
&\lesssim \bP\{|\mathcal N(x,t)|\le 1\}^4 \cdot n^{-\frac{2(\alpha-1)}{2\alpha+1}}. 
\end{align*}
Here, $q$ is the conditional density of the random variable \(Y\) conditional on the event \(\{|Y| \leq 1\}\), where \(Y \sim N(x, t)\). Therefore, we conclude that
\begin{align}
\label{eqn:external-J3}
&\int_{D_3^+} \frac{\bE\left(J_3(x,t)\mathbbm{1}_{\Omega}\right)}{\bP\{|\mathcal N(x,t)|\le 1\}^4} \cdot p(x,t) \,\rd x \lesssim n^{-\frac{2(\alpha-1)}{2\alpha+1}}\cdot \int_{D_3^+} \sqrt{t}\phi_t(x-1)\,\rd x  \notag\\
&~~~~~~~~~~~~~~~~~\lesssim \sqrt{t} n^{-\frac{2(\alpha-1)}{2\alpha+1}}\cdot \int_C^{\infty} \phi(y) \,\rd y < \sqrt{t} n^{-\frac{2(\alpha-1)}{2\alpha+1}}. 
\end{align}
Then for $J_2(x,t)$, we can simply bound it as
\begin{align*}
\bE\left(J_2(x,t)\mathbbm{1}_{\Omega}\right) \lesssim |\phi_t(x+1)|^2 \cdot \left|\int_{-1}^1 \phi_t(x-\mu) \,\rd \mu\right|^2 = \bP\{|\mathcal N(x,t)|\le 1\}^2 \cdot |\phi_t(x+1)|^2.
\end{align*}
Therefore, by using Lemma \ref{lemma:up-3},
\begin{align}
\label{eqn:external-J2}
&\int_{D_3^+} \frac{\bE\left(J_2(x,t)\mathbbm{1}_{\Omega}\right)}{\bP\{|\mathcal N(x,t)|\le 1\}^4} \cdot p(x,t) \,\rd x \lesssim \int_{D_3^+} \frac{|\phi_t(x+1)|^2}{\frac{t}{(x-1)^2}\cdot |\phi_t(x-1)|^2} \cdot \frac{\sqrt{t}}{x-1}\phi_t(x-1)\,\rd x \notag\\
&~~~< \int_{D_3^+} \phi_t(x+1)\cdot \frac{x-1}{\sqrt{t}} \,\rd x = \int_{C}^{\infty} \phi(2/\sqrt{t}+y)\cdot y \,\rd y \lesssim \phi(2/\sqrt{t}) \lesssim \exp(-2/t), 
\end{align}
which is exponentially small. Finally, we come to the term $J_1(x,t)$. Notice that
\begin{align*}
&\bE \left|f(1)\int_{-1}^{1}\varphi_t(x-\mu) \hat{f}_n(\mu) \, d\mu - \hat{f}_n(1)\int_{-1}^{1}\varphi_t(x-\mu)f(\mu)\, d\mu \right|^2 \\
&~~\lesssim \left(\int_{-1}^{1} \varphi_t(x-\mu) |\mu-1|^{\alpha} \,\rd\mu \right)^2 + \sum_{k=1}^{\lfloor \alpha \rfloor} \left(\int_{-1}^1 |\mu-1|^k \phi_t(x-\mu)\right)^2 \cdot \bE \left|f(1)\hat{f}_n^{(k)}(1) - \hat{f}_n(1)f^{(k)}(1)\right|^2. 
\end{align*}
For any positive constant $l > 0$, we have
\begin{align*}
&~~~\int_{-1}^1 \varphi_t(x-\mu) |\mu-1|^{l} \,\rd\mu = \bP\{|\mathcal N(x,t)|\le 1\}\cdot \bE_{\mu\sim q} \left(|\mu-1|^l\right) \le \bP\{|\mathcal N(x,t)|\le 1\}\cdot\\
&~~~~\left(\bP_{\mu\sim q}\left\{\mu \ge 1-\sqrt{2C't\log(1/t)}\right\} \cdot (2C't\log(1/t))^{l/2} + \bP_{\mu\sim q}\left\{\mu < 1-\sqrt{2C't\log(1/t)}\right\} \cdot 2^l \right) \\
&\lesssim \bP\{|\mathcal N(x,t)|\le 1\}\cdot \left[t^{C'} + (t\log(1/t))^{l/2}\right].   
\end{align*}
Here, we apply Lemma \ref{lemma:prob-1}. After making $C' > \alpha/2$, we conclude for \(l \leq \alpha\), 
\[\int_{-1}^1 \varphi_t(x-\mu) |\mu-1|^{l} \,\rd\mu \lesssim \bP\{|\mathcal N(x,t)|\le 1\}\cdot (t\log(1/t))^{l/2}. \]
Therefore, 
\begin{align*}
&~~~~\bE \left|f(1)\int_{-1}^{1}\varphi_t(x-\mu) \hat{f}_n(\mu) \, d\mu - \hat{f}_n(1)\int_{-1}^{1}\varphi_t(x-\mu)f(\mu)\, d\mu \right|^2 \\
&\lesssim \bP\{|\mathcal N(x,t)|\le 1\}^2\cdot \left[(t\log(1/t))^{\alpha} + \sum_{k=1}^{\lfloor \alpha\rfloor} (t\log(1/t))^{k}\cdot n^{-\frac{2(\alpha-k)}{2\alpha+1}} \right] \\
&\lesssim \bP\{|\mathcal N(x,t)|\le 1\}^2\cdot \left[t(\log(1/t))^{\alpha}\cdot n^{-\frac{2(\alpha-1)}{2\alpha+1}}\right],  
\end{align*}
where the same techniques are used as in the analysis of the boundary part. For $J_1(x,t)$, it can be bounded as
\begin{align}
\label{eqn:external-J1}
&~~~~\int_{D_3^+} \frac{\bE\left(J_1(x,t)\mathbbm{1}_{\Omega}\right)}{\bP\{|\mathcal N(x,t)|\le 1\}^4} \cdot p(x,t) \,\rd x \notag \\
&\lesssim  t(\log(1/t))^{\alpha} n^{-\frac{2(\alpha-1)}{2\alpha+1}} \cdot \int_{D_3^+} \frac{|\phi_t(x-1)|^2}{\frac{t}{(x-1)^2}\cdot |\phi_t(x-1)|^2} \cdot \frac{\sqrt{t}}{x-1}\phi_t(x-1)\,\rd x \notag\\
&= t(\log(1/t))^{\alpha} n^{-\frac{2(\alpha-1)}{2\alpha+1}} \cdot \int_{D_3^+} \frac{x-1}{\sqrt{t}} \phi_t(x-1) \,\rd x \lesssim  t(\log(1/t))^{\alpha} n^{-\frac{2(\alpha-1)}{2\alpha+1}}. 
\end{align}
After the summation of (\ref{eqn:external-J3}), (\ref{eqn:external-J2}) and (\ref{eqn:external-J1}), we obtain that 
\[\bE\left( \mathbbm{1}_{\Omega}\int_{D_3^+} \left(\hat{s}_{3, n}(x,t)-s(x,t)\right)^2\cdot p(x,t)\,\rd x\right)\lesssim \left(\sqrt{t} + t(\log(1/t))^\alpha \right)\cdot n^{-\frac{2(\alpha-1)}{2\alpha+1}}\lesssim  \sqrt{t}\cdot n^{-\frac{2(\alpha-1)}{2\alpha+1}}.\]
The same analysis goes through for the negative side $D_3^- = D_3 \cap \bR^-$, and so we arrive at our conclusion.
\end{proof}
Finally, we combine Propositions \ref{prop:up-internal}, \ref{prop:up-boundary} and \ref{prop:up-external} to conclude Theorem \ref{thm:up-1} which bounds the error of the score estimator \(\hat{s}_n(x, t) = \hat{s}_{1, n}(x, t) \mathbbm{1}_{\{x \in D_1\}} + \hat{s}_{2, n}(x, t)\mathbbm{1}_{\{x \in D_2\}} + \hat{s}_{3, n}(x, t) \mathbbm{1}_{\{x \in D_3\}}\). 

    \subsection{Low noise regime: data-free estimator}\label{section:low-noise_datafree}
    \label{sec:A.4}
    In this section, we work to give a proof of Theorem \ref{thm:up-2}. In the low noise regime ($t< n^{-\frac{2}{2\alpha+1}}$) with low smoothness $\alpha < 1$, the density of the data $f \in \mathcal{F}_\alpha$ may not be differentiable. We frequently used integration by parts when analyzing the \(\alpha \ge 1\) case, which is no longer generically valid when \(\alpha < 1\). Recall the data-free estimator proposed in Section \ref{section:methodology_low_noise}. Denote $u(x)=\frac12 \mathbbm{1}_{\{|x|\le 1\}}$ as the uniform distribution on the interval $U=[-1,1]$. Our score estimator is
\[\hat{s}(x, t) = \frac{\phi_t'* u(x)}{\phi_t* u(x)},\]
which is exactly the score of $\varphi_t * u$. Similar to the previous section, we have
\begin{equation*}
\begin{aligned}
s(x,t)-\hat{s}(x,t) &= \frac{\phi_t'* f(x)}{\phi_t* f(x)}-\frac{\phi_t'* u(x)}{\phi_t* u(x)} = \frac{\left(\phi_t'* f\right)(x)\left(\phi_t* u\right)(x)-\left(\phi_t'* u\right)(x)\left(\phi_t* f\right)(x)}{\left(\phi_t* f\right)(x)\left(\phi_t* u\right)(x)} \\
& := \frac{J(x,t)}{p(x,t) u(x,t)}. 
\end{aligned}
\end{equation*}
Here, $u(\cdot,t) := \phi_t* u$ and the numerator 
\begin{equation}
\label{eqn:I-1}
\begin{aligned}
J(x,t) & := \left(\phi_t'* f\right)(x)\left(\phi_t* u\right)(x)-\left(\phi_t'* u\right)(x)\left(\phi_t* f\right)(x)\\
&= \int_{\bR^2} \left[\phi_t'(x-\mu)f(\mu)\cdot \phi_t(x-\nu)u(\nu) - \phi_t'(x-\mu)u(\mu)\cdot \phi_t(x-\nu)f(\nu)\right] \rd \mu \rd \nu \\
&= \int_{\bR^2} \frac{\mu-x}{t}\cdot \phi_t(x-\mu)\phi_t(x-\nu)\cdot \left(f(\mu)u(\nu)-f(\nu)u(\mu)\right) \rd \mu \rd \nu \\
&= \frac12 \bE_{\mu,\nu\sim\mathcal N(x,t)} \frac{\mu-x}{t}(f(\mu)-f(\nu)) \cdot \mathbbm{1}_{\{|u|, |v| \le 1\}}. 
\end{aligned}
\end{equation}
We have $|f(\mu)-f(\nu)| \le L |\mu-\nu|^\alpha$ when $\mu,\nu\in[-1,1]$ since $f$ is $\alpha$-H\"{o}lder smooth with $\alpha < 1$. Therefore,
\begin{equation}
\label{eqn:I-2}
|J(x,t)|\lesssim \bE_{\mu,\nu\sim \mathcal N(x,t)} \frac{|x-\mu|}{t}\cdot |\mu-\nu|^{\alpha}\cdot \mathbbm{1}_{\{|\mu|, |\nu| \le 1\}}.
\end{equation}
In order to bound the score estimation error, we split $\bR$ into two parts,
\[I = \int_{D} (s(x,t)-\hat{s}(x,t))^2\cdot p(x,t)\,\rd x + \int_{D^c} (s(x,t)-\hat{s}(x,t))^2\cdot p(x,t)\,\rd x =: I_1 + I_2, \]
where the interval $D=[-(1+C\sqrt{t}), 1+C\sqrt{t}]$ denotes the internal and boundary part combined, and $D^c$ denotes the external part. 
\begin{lemma}
\label{lemma:up-8}
If \(\alpha < 1\) and \(t < n^{-\frac{2}{2\alpha+1}}\), then \(I_1 \lesssim t^{\alpha-1}\).
\end{lemma}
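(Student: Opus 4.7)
The plan is to exploit the fact that on the interval $D = [-(1+C\sqrt{t}),\, 1+C\sqrt{t}]$, both $p(x,t)$ and $u(x,t)$ are bounded below by universal positive constants (by Lemma~\ref{lemma:up-1}, applied to $f \in \mathcal{F}_\alpha$ and to the uniform density $u$, which itself lies in $\mathcal{F}_\alpha$). Consequently,
\bb
(s(x,t)-\hat{s}(x,t))^2 \cdot p(x,t) \;=\; \frac{J(x,t)^2}{p(x,t)\, u(x,t)^2} \;\lesssim\; J(x,t)^2,
\ee
so it suffices to show $\int_D J(x,t)^2 \, \rd x \lesssim t^{\alpha-1}$.

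The second step is to control $J(x,t)^2$ pointwise. Starting from the bound (\ref{eqn:I-2}) and applying Jensen's inequality to the expectation over $(\mu,\nu) \sim \mathcal{N}(x,t)^{\otimes 2}$, I would write
\bb
J(x,t)^2 \;\lesssim\; \bE_{\mu,\nu \sim \mathcal N(x,t)} \frac{(x-\mu)^2}{t^2} |\mu-\nu|^{2\alpha}\, \mathbbm 1_{\{|\mu|,|\nu|\le 1\}}.
\ee
Then I change variables via $\mu = x + \sqrt{t}\, Z_1$ and $\nu = x + \sqrt{t}\, Z_2$ with $Z_1, Z_2 \overset{iid}{\sim} \mathcal N(0,1)$, which gives $(x-\mu)^2/t^2 = Z_1^2/t$ and $|\mu-\nu|^{2\alpha} = t^\alpha |Z_1 - Z_2|^{2\alpha}$. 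Dropping the indicators for the upper bound,
\bb
J(x,t)^2 \;\lesssim\; t^{\alpha-1} \bE\left(Z_1^2 |Z_1-Z_2|^{2\alpha}\right) \;\lesssim\; t^{\alpha-1},
\ee
since $|Z_1-Z_2|^{2\alpha} \lesssim |Z_1|^{2\alpha} + |Z_2|^{2\alpha}$ has finite second-moment weighted expectation. The constant is universal because $\alpha < 1$ guarantees all involved moments are bounded.

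Finally, since $|D| = 2(1+C\sqrt{t}) \lesssim 1$ (using $t \lesssim 1$), integrating the uniform-in-$x$ bound over $D$ yields
\bb
I_1 \;=\; \int_D (s(x,t)-\hat s(x,t))^2 p(x,t)\, \rd x \;\lesssim\; \int_D J(x,t)^2 \, \rd x \;\lesssim\; t^{\alpha-1},
\ee
completing the proof. The argument is largely a clean calculation; the only subtlety is making sure the use of Jensen's inequality is applied under the joint Gaussian expectation (rather than on a univariate marginal), which is what makes the rescaling argument yield the sharp $t^{\alpha-1}$ exponent. I do not expect any genuine obstacle — the self-normalized structure of $s - \hat{s}$ combined with the lower bound on $p$ and $u$ on $D$ removes the denominator and reduces the problem to a Gaussian moment computation.
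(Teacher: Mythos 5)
Your proposal is correct and follows essentially the same strategy as the paper: lower bound $p(\cdot,t)$ and $u(\cdot,t)$ by universal constants on $D$ via Lemma~\ref{lemma:up-1} to reduce to bounding $J(x,t)^2$, then bound $J$ pointwise by a Gaussian moment computation after rescaling $\mu = x + \sqrt{t}Z_1$, $\nu = x + \sqrt{t}Z_2$ and dropping the indicators, yielding $J(x,t)^2 \lesssim t^{\alpha-1}$, and finally integrate over $D$ using $|D| \lesssim 1$. The only cosmetic difference is that you square via Jensen and evaluate the joint expectation $\bE[Z_1^2|Z_1-Z_2|^{2\alpha}]$ directly, whereas the paper bounds $|J(x,t)| \lesssim t^{(\alpha-1)/2}$ using the subadditivity $|a+b|^\alpha \le |a|^\alpha + |b|^\alpha$ and squares afterward; both yield the same sharp exponent.
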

\begin{proof}[Proof of Lemma \ref{lemma:up-8}]
According to Lemma \ref{lemma:up-1}, we know that $p(x,t), u(x,t)\ge c$ for some constant $c>0$ when $|x|\le 1+C\sqrt{t}$. We can further bound $|J(x,t)|$ as follows,
\begin{equation*}
\begin{aligned}
|J(x,t)| &\le \bE_{\mu,\nu\sim \mathcal N(x,t)} \frac{|x-\mu|}{t}\cdot |\mu-\nu|^{\alpha} = \bE_{\mu \sim \mathcal N(x,t)}\left[ \frac{|x-\mu|}{t}\cdot \bE_{z\sim \mathcal N(0,1)} |\mu-(x+\sqrt{t} z)|^\alpha \right] \\ 
&\le \bE_{\mu \sim \mathcal N(x,t)}\left[ \frac{|x-\mu|}{t}\cdot \bE_{z\sim \mathcal N(0,1)} \left(|\mu-x|^\alpha + |\sqrt{t} z|^\alpha\right)\right] \\
& = \frac1t \bE_{\mu \sim \mathcal N(x,t)} |x-\mu|^{\alpha+1} + t^{\alpha/2 - 1} \bE_{\mu \sim \mathcal N(x,t)} |x-\mu| \lesssim \frac1t\cdot (\sqrt{t})^{\alpha+1} + t^{\alpha/2 - 1} \cdot \sqrt{t} = 2t^{(\alpha-1)/2}. 
\end{aligned}
\end{equation*}
To sum up, we have $|J(x,t)| \lesssim t^{(\alpha-1)/2}$ and therefore, for $|x| \le 1+C\sqrt{t}$, it holds that
\[(s(x,t)-\hat{s}(x,t))^2 = \frac{|J(x,t)|^2}{|p(x,t)|^2\cdot |u(x,t)|^2} \lesssim \frac{t^{\alpha-1}}{c^4} \lesssim t^{\alpha-1}. \]
Finally, we have
\[I_1 = \int_{D} (s(x,t)-\hat{s}(x,t))^2\cdot p(x,t)\,\rd x \lesssim t^{\alpha-1}\cdot \int_{D} p(x,t) \,\rd x < t^{\alpha-1}, \]
which comes to our conclusion. 
\end{proof}
\begin{lemma}
\label{lemma:up-9}
If \(\alpha < 1\) and \(t < n^{-\frac{2}{2\alpha+1}}\), then \(I_2 \lesssim t^{\alpha-1/2} \log^{\alpha+1}(1/t)\).
\end{lemma}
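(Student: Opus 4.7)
The plan is to exploit the Gaussian tail behavior of $p(x,t)$ and $u(x,t)$ on $D^c$, combined with a concentration argument that localizes the conditional distribution of $\mathcal{N}(x,t)$ given $|\mathcal{N}(x,t)|\le 1$ near the boundary. By symmetry it suffices to bound the integral over $D_3^+ := \{x : x > 1 + C\sqrt{t}\}$. On this region, Lemma \ref{lemma:up-3} yields $p(x,t)\asymp u(x,t)\asymp \bP\{|\mathcal{N}(x,t)|\le 1\} \asymp \frac{\sqrt t}{x-1}e^{-(x-1)^2/(2t)}$, so that
\[
(s(x,t)-\hat{s}(x,t))^2 p(x,t) = \frac{|J(x,t)|^2}{p(x,t)\, u(x,t)^2},
\]
and the task reduces to obtaining a sharp bound for $|J(x,t)|$.

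The main work is to improve the trivial estimate on $|J(x,t)|$ by exploiting that, when $x\in D_3^+$, conditioning on $\{|\mu|,|\nu|\le 1\}$ forces both variables into a narrow window near the boundary $1$. Starting from (\ref{eqn:I-2}), I will write $|J(x,t)|$ as $\bP\{|\mathcal{N}(x,t)|\le 1\}^2$ times an expectation under the conditional density $q$ of $\mathcal{N}(x,t)$ given $|\mathcal{N}(x,t)|\le 1$. Lemma \ref{lemma:prob-1} ensures that under $q$ both $\mu$ and $\nu$ lie in $[1-\sqrt{2C't\log(1/t)},1]$ except on an event of probability at most $2t^{C'}$. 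On the bulk event one has $|x-\mu|\lesssim (x-1)+\sqrt{t\log(1/t)}$ and $|\mu-\nu|^\alpha \lesssim (t\log(1/t))^{\alpha/2}$, while on the complementary tail $|\mu-\nu|^\alpha \le 2^\alpha$. Taking $C'$ sufficiently large so that the tail contribution is dominated, I expect to obtain
\[
|J(x,t)| \lesssim \bP\{|\mathcal{N}(x,t)|\le 1\}^2 \cdot \frac{(x-1)+\sqrt{t\log(1/t)}}{t}\cdot (t\log(1/t))^{\alpha/2}.
\]

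Substituting this into the integrand and performing the change of variables $y=(x-1)/\sqrt t$ reduces the problem to
\[
I_2 \lesssim t^{\alpha-1/2}\log^{\alpha}(1/t)\int_C^{\infty}\frac{(y+\sqrt{\log(1/t)})^2}{y}\, e^{-y^2/2}\,\rd y,
\]
whose remaining integral is $O(\log(1/t))$, with the dominant contribution coming from the $\log(1/t)\int_C^\infty y^{-1}e^{-y^2/2}\,\rd y$ term (the other two pieces $\int_C^\infty y e^{-y^2/2}\,\rd y$ and $\int_C^\infty e^{-y^2/2}\,\rd y$ being $O(1)$). This will yield the desired $t^{\alpha-1/2}\log^{\alpha+1}(1/t)$ bound.

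The hardest part will be the concentration step: quantifying the gain of $(t\log(1/t))^{\alpha/2}$ over the trivial $O(1)$ bound on $|\mu-\nu|^\alpha$ uniformly in $x \in D_3^+$, and checking that the polynomial-in-$t$ tail from Lemma \ref{lemma:prob-1} is small enough to be swallowed by the bulk estimate for all $\alpha\in (0,1)$. Once this tail bookkeeping is handled, the remaining Gaussian integration after the substitution $y = (x-1)/\sqrt t$ is routine.
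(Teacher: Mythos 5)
Your proposal is correct and follows essentially the same route as the paper's proof: reduce by symmetry to $x > 1 + C\sqrt{t}$, use Lemma \ref{lemma:up-3} to compare $p$, $u$, and the normalizing probability, write $|J|/(pu)$ as a conditional expectation under the restricted Gaussian $q$, split into bulk and tail via Lemma \ref{lemma:prob-1}, bound the bulk by $(x-1+\sqrt{t\log(1/t)})\,t^{-1}(t\log(1/t))^{\alpha/2}$, and absorb the $t^{C'}$ tail by choosing $C'$ large, then integrate with $y=(x-1)/\sqrt{t}$. The only cosmetic difference is that you keep the sharper $p(x,t)\asymp \frac{\sqrt t}{x-1}e^{-(x-1)^2/(2t)}$ (hence the extra $1/y$ in your final integral) whereas the paper uses the cruder upper bound $p(x,t)\lesssim \sqrt{t}\phi_t(x-1)$; both yield $t^{\alpha-1/2}\log^{\alpha+1}(1/t)$.
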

\begin{proof}[Proof of Lemma \ref{lemma:up-9}]
For any given $x > 1+C\sqrt{t}$, we denote $q$ as the conditional density of the random variable \(Y\) conditional on the event \(\{|Y| \leq 1\}\), where \(Y \sim N(x, t)\). By using Lemma \ref{lemma:up-3}, we know that $p(x,t), u(x,t)\asymp \bP\{|\mathcal N(x,t)|\le 1\}$. We further upper bound $J(x,t)$ in (\ref{eqn:I-2}) as follows,     
\begin{equation*}
\begin{aligned}
&~~~\frac{|J(x,t)|}{p(x,t) u(x,t)} \asymp \frac{\bE_{\mu, \nu\sim\mathcal N(x,t)} \left(\frac{|x-\mu|}{t}\cdot |\mu-\nu|^{\alpha} \cdot \mathbbm{1}_{\{|\mu|, |\nu| \le 1\}}\right)}{\bP\{|\mathcal N(x,t)|\le 1\}^2}\\
&\le \bE_{\mu,\nu\sim\mathcal N(x,t)}\left[\frac{|\mu-x|}{t}\cdot |\mu-\nu|^\alpha \Big| |\mu|, |\nu| \le 1\right] = \bE_{\mu,\nu\sim q} \left[\frac{|\mu-x|}{t}\cdot |\mu-\nu|^\alpha\right]. 
\end{aligned}
\end{equation*}
Denote $\Omega = \{(\mu,\nu) \in [-1, 1]^2 \mid 1-\sqrt{2C't\log(1/t)} \le \mu,\nu \le 1\}$. According to Lemma \ref{lemma:prob-1}, we know that $\bP_{u,v\sim q}(\Omega^c) < 2t^{C'}$. Therefore, we consider the two different regions $\Omega$ and $\Omega^c$, and obtain
\begin{equation*}
\begin{aligned}
\frac{|J(x,t)|}{p(x,t) u(x,t)} &\lesssim \bP(\Omega) \cdot \bE_{\mu,\nu\sim q} \left[\frac{|\mu-x|}{t}\cdot |\mu-\nu|^\alpha \Big| \Omega\right] + \bP(\Omega^c) \cdot \bE_{u,v\sim q} \left[\frac{|\mu-x|}{t}\cdot |\mu-\nu|^\alpha \Big| \Omega^c\right] \\
&< \frac{x-1+\sqrt{2C't\log(1/t)}}{t}\cdot (2C't\log(1/t))^{\alpha/2} + 2t^{C'}\cdot \frac{2^{\alpha}(x+1)}{t}.
\end{aligned}
\end{equation*}
Now we take integral over $x>1+C\sqrt{t}$ to obtain
\begin{equation*}
\begin{aligned}
&~~~\int_{x>1+C\sqrt{t}} (s(x,t)-\hat{s}(x,t))^2\cdot p(x,t)\,\rd x = \int_{x>1+C\sqrt{t}} \left(\frac{|J(x,t)|}{p(x,t) u(x,t)}\right)^2 \cdot p(x,t)\,\rd x \\
&\lesssim \int_{x>1+C\sqrt{t}} \left[\left(\frac{x-1+\sqrt{2C't\log(1/t)}}{t}\right)^2 \cdot (t\log(1/t))^\alpha + t^{2(C'-1)}(x+1)^2\right]\cdot \sqrt{t}\phi_t(x-1) \,\rd x\\
&\lesssim \int_C^{\infty} \left[\left(z+\sqrt{2C'\log(1/t)}\right)^2\cdot t^{\alpha-1}\log^\alpha(1/t) + t^{2(C'-1)}(2+\sqrt{t}z)^2\right] \cdot \sqrt{t} \phi(z) \rd z \\
&\lesssim t^{\alpha-1/2}\log^{\alpha+1}(1/t) + t^{2C'-3/2}. 
\end{aligned}
\end{equation*}
For sufficiently large $C' > \frac{\alpha+1}{2}$, we finally conclude
\[\int_{1+C\sqrt{t}}^{\infty} (s(x,t)-\hat{s}(x,t))^2\cdot p(x,t)\,\rd x \lesssim t^{\alpha-1/2}\log^{\alpha+1}(1/t). \]
Similarly, it also holds for $x<-(1+C\sqrt{t})$,
\[\int_{-\infty}^{-1-C\sqrt{t}} (s(x,t)-\hat{s}(x,t))^2\cdot p(x,t)\,\rd x \lesssim t^{\alpha-1/2}\log^{\alpha+1}(1/t), \]
which comes to our conclusion.
\end{proof}
Finally, we combine Lemma \ref{lemma:up-8} and Lemma \ref{lemma:up-9}, and conclude Theorem \ref{thm:up-2}.

\begin{remark}
\label{remark:alpha}
Theorem \ref{thm:up-2} also holds for $\alpha=1$. Therefore, when the smoothness $\alpha \ge 1$, we have $f\in \mathcal F_{\alpha} \subseteq \mathcal F_1$, and it leads to
\[\sup_{f \in \mathcal{F}_\alpha} \bE \int_\bR (\hat{s}(x,t)-s(x,t))^2 \cdot p(x,t) \,\rd x \le \sup_{f \in \mathcal{F}_1} \bE \int_\bR (\hat{s}(x,t)-s(x,t))^2 \cdot p(x,t) \,\rd x\lesssim 1.\]
Here, $\hat{s}(x,t)$ is the score function of $\Uniform[-1,1] * \mathcal N(0,t)$. 
\end{remark}

    \section{Proof details for lower bounds}\label{appendix:lower_bound_proofs}
        \subsection{Proof of Theorem \ref{thm:score_lowerbound}}
    \begin{proof}[Proof of Theorem \ref{thm:score_lowerbound}]
        Theorem \ref{thm:score_lowerbound} is proved via Fano's method, which is now a standard technique for establishing minimax lower bounds (e.g. see \cite{tsybakov_introduction_2009}). To use Fano's method, we will first construct a collection of densities lying in \(\mathcal{F}_\alpha\). \newline 

        \noindent \textbf{Construction of the collection:} Define the density \(f_0(\mu) = \frac{1}{2}\mathbbm{1}_{\{|\mu| \le 1\}}\). Fix \(0 < \epsilon \le \rho\) where \(\epsilon, \rho \in (0, 1)\) are to be chosen later. Fix a function \(w : \R \to \R\) such that 
        \begin{itemize}
            \item \(w \in C^\infty(\R)\),
            \item \(w\) is supported on \([-1, 1]\),
            \item \(\int_{-\infty}^{\infty} w(x) \, \rd x = 0\),
            \item \(\max_{0 \le k \le \lfloor \alpha \rfloor} ||w^{(k)}||_\infty^2\vee ||w||_{L^2(\R)}^2 \vee ||w''||_{L^2(\R)}^2 \le C'\) and \(||w'||_{L^2(\R)}^2 \ge c'\) for some universal constants \(C', c' > 0\).
        \end{itemize}
        Define the interval 
        \begin{equation}\label{def:interior_interval}
            I := \left[-1 + \sqrt{C t \log\left(\frac{1}{t}\right)}, 1 - \sqrt{C t \log\left(\frac{1}{t}\right)} \right]
        \end{equation}
        where \(C = C(\alpha, L) > 0\) is a sufficiently large constant depending only on \(\alpha\) and \(L\). Let \(C_D > 0\) be a sufficiently large universal constant. Let \(m\) be an integer such that \(m \asymp \frac{1}{\rho}\) and so that \(m\)-many grid points \(\{x_i\}_{i=1}^{m}\) with spacing \(2\rho\) apart exist in \([-1+\sqrt{Ct \log(1/t)} + C_D\rho, 1 - \sqrt{Ct\log(1/t)} - C_D\rho]\). Here, we will insist on taking \(\rho\) smaller than a sufficiently small constant and taking \(c_2\) sufficiently small so that this interval is not empty. For \(b \in \{0, 1\}^m\), define 
        \begin{equation*}
            f_b(\mu) = f_0(\mu) + \epsilon^\alpha \sum_{i=1}^{m} b_i w\left(\frac{\mu-x_i}{\rho}\right). 
        \end{equation*}
        To ensure the condition \(c_d \le f_b(x) \le C_d\) for \(|x| \le 1\) in the definition (\ref{def:param}) of \(\mathcal{F}_\alpha\), we require the condition \(\epsilon^\alpha \le c^*\) for some sufficiently small \(c^* > 0\). Furthermore, by taking \(c^*\) sufficiently small depending on \(\alpha, L\) and noting that \(\epsilon \le \rho\), it is clear \(f_b|_{[-1, 1]} \in \mathcal{H}_\alpha(L)\). Hence, \(\{f_b\}_{b \in \{0, 1\}^m} \subset \mathcal{F}_\alpha\). \newline
        
        \noindent \textbf{Reduction to submodel:} Having constructed the collection of densities \(\{f_b\}_{b \in \{0, 1\}^m}\), we now show that the score estimation problem in the model \(\mathcal{F}_\alpha\) over \(\R\) is no easier than estimating the score function in the submodel \(\{f_b\}_{b \in \{0, 1\}^m}\) over the interval \(I\). For \(b \in \{0, 1\}^m\), define the convolution \(p_b(x, t) := (\varphi_t * f_b)(x)\) and denote the score function \(s_b(x, t) := \frac{\partial}{\partial x} \log p_b(x, t)\). Since \(c_2\) is sufficiently small and \(t \le c_2\), we have \(p_b(\cdot, t) \gtrsim 1\) on \([-1, 1]\) for all \(b \in \{0, 1\}^m\). With this in hand, consider 
        \begin{align*}
            \inf_{\hat{s}} \sup_{f \in \mathcal{F}_\alpha} \bE \left(\int_{-\infty}^{\infty} \left|\hat{s}(x,t) - s(x, t) \right|^2 \, p(x, t)\,\rd x\right) &\ge \inf_{\hat{s}} \sup_{b \in \{0, 1\}^m} \bE \left(\int_{-\infty}^{\infty} \left|\hat{s}(x, t) - s_b(x, t) \right|^2 \, p_b(x, t)\,\rd x \right) \\
            &\ge \inf_{\hat{s}} \sup_{b \in \{0, 1\}^m} \bE \left(\int_{I} \left|\hat{s}(x,t) - s_b(x, t) \right|^2 \, p_b(x, t)\,\rd x \right) \\
            &\gtrsim \inf_{\hat{s}} \sup_{b \in \{0, 1\}^m} \bE \left(\int_{I} \left|\hat{s}(x,t) - s_b(x, t) \right|^2 \, \rd x\right).
        \end{align*}
        The problem has been reduced to a score estimation problem in the \(L^2(I)\)-norm. \newline 
        
        \noindent \textbf{Score separation:} To apply Fano's method, the pairwise \(L^2(I)\) separation between the score functions \(\left\{s_b(\cdot, t)\right\}_{b \in \{0, 1\}^m}\) needs to be established. Denote \(\psi_b(x, t) := \frac{\partial}{\partial x} p_b(x, t)\), and note chain rule yields \(s_b(x, t) = \frac{\psi_b(x, t)}{p_b(x, t)}\). For \(b, b' \in \{0, 1\}^m\), observe 
        \begin{align*}
            \int_{I} |s_{b}(x, t) - s_{b'}(x, t)|^2 \, \rd x &= \int_{I} \left|\frac{\psi_{b}(x, t)p_{b'}(x, t) - \psi_{b'}(x, t)p_b(x, t)}{p_{b}(x, t)p_{b'}(x, t)}\right|^2 \, \rd x \\
            &\gtrsim \int_{I} \left|\psi_{b}(x, t)p_{b'}(x, t) - \psi_{b'}(x, t)p_b(x, t)\right|^2 \, \rd x
        \end{align*}
        where again we have used \(p_{b}(\cdot, t), p_{b'}(\cdot, t) \gtrsim 1\) on \([-1, 1]\). Consider 
        \begin{align*}
            \left|\psi_{b}p_{b'} - \psi_{b'}p_b\right|^2 &= |\psi_{b}p_{b'} - \psi_{b'}p_{b'} + \psi_{b'}p_{b'} - \psi_{b'}p_{b}|^2 \\
            &= |p_{b'}|^2 |\psi_{b} - \psi_{b'}|^2 + |\psi_{b'}|^2 |p_{b'} - p_{b}|^2 - 2\cdot |p_{b'}||\psi_{b} - \psi_{b'}| \cdot |\psi_{b'}| |p_{b'} - p_b| \\
            &\geq |p_{b'}|^2 |\psi_{b} - \psi_{b'}|^2 - 2\cdot |p_{b'}||\psi_{b} - \psi_{b'}| \cdot |\psi_{b'}| |p_{b'} - p_b| \\ 
            &\geq \left(1 - \frac{1}{\tilde{C}}\right) |p_{b'}|^2 |\psi_{b} - \psi_{b'}|^2 - \tilde{C} |\psi_{b'}|^2 |p_{b'} - p_{b}|^2
        \end{align*}
        for any \(\tilde{C} > 0\). The last line is obtained by the inequality \(2uv \le \frac{u^2}{\tilde{C}} + \tilde{C}v^2\) for any \(\tilde{C} > 0\). Choosing a large, universal constant \(\tilde{C} > 0\) and noting \(p_{b}(\cdot, t) \geq c\) on \([-1, 1]\) for some small universal \(c > 0\), we have 
        \begin{align}
            &\int_{I} \left|\psi_{b}(x, t)p_{b'}(x, t) - \psi_{b'}(x, t)p_b(x, t)\right|^2 \, \rd x \nonumber \\
            &\ge \int_{I} \left(1 - \frac{1}{\tilde{C}}\right) |p_{b'}(x, t)|^2 |\psi_{b}(x, t) - \psi_{b'}(x, t)|^2 - \tilde{C} |\psi_{b'}(x, t)|^2 |p_{b'}(x, t) - p_{b}(x, t)|^2 \, \rd x \nonumber \\
            &\ge c^2\left(1 - \frac{1}{\tilde{C}}\right)\int_I \left|\psi_b(x, t) - \psi_{b'}(x, t)\right|^2 \, \rd x - \tilde{C} \int_{I} \left|\psi_{b'}(x, t)\right|^2 \cdot \left|p_{b'}(x, t) - p_{b}(x, t)\right|^2 \, \rd x. \label{eqn:pointwise_score_separation}
        \end{align}
        To further lower bound (\ref{eqn:pointwise_score_separation}), we now furnish an upper bound for the second term. \newline

        \noindent \textbf{Case 1:} Suppose \(\alpha \ge 1\). Note we can write \(f_b = f_0 + \epsilon^{\alpha} \sum_{i=1}^{m} b_i g_i(\rho^{-1}\cdot)\) where \(g_i(\mu) = w\left(\mu-\rho^{-1} x_i\right)\). Note \(g_i \in C^\infty(\R)\). Consider \(||f_b||_\infty \lesssim 1\), and so 
        \begin{align*}
            \left|\psi_{b'}(x, t)\right| &= \left| (\varphi_t' * f_{b'})(x) \right| \le \left|(\varphi_t' * f_0)(x)\right| + \epsilon^{\alpha} \sum_{i=1}^{m} \left|(\varphi_t' * g_i(\rho^{-1} \cdot )) \right| \\
            &\le \left|(\varphi_t' * f_0)(x)\right| + \epsilon^{\alpha}\rho^{-1} \sum_{i=1}^{m} \left|(\varphi_t * g_i'(\rho^{-1} \cdot )) \right| \le \left|(\varphi_t' * f_0)(x)\right| + C' \epsilon^{\alpha} \rho^{-1} m.
        \end{align*}
        Here, we have used \(||g'||_\infty = ||w'||_\infty \le C'\) to obtain the final inequality. It remains to bound \(|(\varphi_t' * f_0)(x)|\). Note by the fundamental theorem of calculus, we have for \(x \in I\), 
        \begin{align*}
            \left|(\varphi_t' * f_0)(x)\right| = \left|\frac{1}{2} \int_{-1}^{1} \varphi_t'(x-\mu) \, \rd \mu\right| = \frac{\left|\varphi_t(x+1) - \varphi_t(x-1)\right|}{2} \lesssim t^{(C-1)/2}. 
        \end{align*}
        Therefore,
        \begin{align}
            &\int_{I} \left|\psi_{b'}(x, t)\right|^2  \cdot \left|p_{b'}(x, t) - p_{b}(x, t)\right|^2 \lesssim \left(t^{C-1} + \epsilon^{2\alpha} \rho^{-2} m^2\right) \int_{I} |p_{b'}(x, t) - p_{b}(x, t)|^2 \, \rd x \nonumber \\ 
            &= \left(t^{C-1} + \epsilon^{2\alpha} \rho^{-2} m^2\right) \int_{-\infty}^{\infty} |(\varphi_t * (f_{b'} - f_{b}))(x)|^2 \, \rd x \le \left(t^{C-1} + \epsilon^{2\alpha} \rho^{-2} m^2\right) ||f_{b'} - f_b||^2 \nonumber \\
            &\lesssim \left(t^{C-1} + \epsilon^{2\alpha} \rho^{-2} m^2\right) \epsilon^{2\alpha} \rho \cdot d_{\text{Ham}}(b, b') \label{eqn:pointwise_score_separation_lowerorder_bigalpha}.
        \end{align}

        \noindent \textbf{Case 2:} Suppose \(\alpha < 1\). Note we have \(|f_{b'}(x) - f_{b'}(y)| \lesssim |x-y|^\alpha\) for \(x, y \in [-1, 1]\). Therefore, for \(x \in I\) it follows 
        \begin{align*}
            |\psi_{b'}(x, t)| &= |(\varphi_t' * f_{b'})(x)| = \left|\int_{-\infty}^{\infty} \varphi_t'(\mu) f_{b'}(x-\mu) \,\rd \mu\right| = \frac{1}{t} \left|\int_{-\infty}^{\infty} \mu \varphi_t(\mu) f_{b'}(x-\mu) \, \rd \mu\right| \\
            &= \frac{1}{t} \left|\int_{-\infty}^{\infty} \mu\varphi_t(\mu)(f_{b'}(x-\mu) - f_{b'}(x))\,\rd \mu \right| \\
            &\le \frac{1}{t} \int_{|x-\mu| \le 1} |\mu|^{1+\alpha} \varphi_t(\mu) \,\rd \mu + \frac{||f_{b}||_\infty}{t} \int_{|x-\mu| > 1} |\mu| \varphi_t(\mu) \,\rd \mu \\
            &\lesssim t^{\frac{\alpha-1}{2}} + \frac{1}{\sqrt{t}} \sqrt{\bP \left\{|\mathcal{N}(x, t)| > 1\right\}} \lesssim t^{\frac{\alpha-1}{2}}
        \end{align*}
        where we have used that \(x \in I\) implies \( \sqrt{\bP \left\{|\mathcal{N}(x, t)| > 1\right\}} \lesssim t^{\alpha/2}\) since \(C\) can be taken sufficiently large (potentially depending on \(\alpha\)). Therefore, we have 
        \begin{equation}
            \int_{I} \left|\psi_{b'}(x, t)\right|^2  \cdot \left|p_{b'}(x, t) - p_{b}(x, t)\right|^2 \lesssim t^{\alpha-1} \cdot \epsilon^{2\alpha}\rho \cdot d_{\text{Ham}}(b, b') \label{eqn:pointwise_score_separation_lowerorder_smallalpha}
        \end{equation}
        when \(\alpha < 1\). This concludes the analysis for this case. \newline 

        \noindent \textbf{Bounding the score separation:}
        From (\ref{eqn:pointwise_score_separation}), (\ref{eqn:pointwise_score_separation_lowerorder_bigalpha}), and (\ref{eqn:pointwise_score_separation_lowerorder_smallalpha}), it follows 
        \begin{align*}
            \int_{I}|s_b(x, t) - s_{b'}(x, t)|^2 \, \rd x \ge &\tilde{c}_1 \int_{I} |\psi_b(x, t) - \psi_{b'}(x, t)|^2 \, \rd x \\
            &- \tilde{C}_1
            \begin{cases}
                 \left(t^{C-1} + \epsilon^{2\alpha} \rho^{-2} m^2\right) \epsilon^{2\alpha} \rho \cdot d_{\text{Ham}}(b, b') &\textit{if } \alpha \ge 1, \\
                 t^{\alpha-1} \cdot \epsilon^{2\alpha}\rho \cdot d_{\text{Ham}}(b, b') &\textit{if } \alpha < 1,
            \end{cases}
        \end{align*}
        for some universal constants \(\tilde{C}_1, \tilde{c}_1 > 0\). It remains to lower bound \(\int_{I} |\psi_b(x, t) - \psi_{b'}(x, t)|^2 \, \rd x\). By Proposition \ref{prop:derivative_separation}, there exists a sufficiently large universal constant \(C_3 > 0\) so that the choice
        \begin{equation*}
            \rho = C_3\sqrt{t} \vee \epsilon
        \end{equation*}
        yields 
        \begin{equation*}
            \int_I |s_b(x, t) - s_{b'}(x, t)|^2 \, \rd x \ge \epsilon^{2\alpha} d_{\text{Ham}}(b, b') \left(\tilde{c}_2 \rho^{-1} - \tilde{C}_1 \begin{cases}
                \left(t^{C-1} + \epsilon^{2\alpha} \rho^{-2} m^2\right) \rho &\textit{if } \alpha \ge 1, \\
                t^{\alpha-1} \rho &\textit{if } \alpha < 1. 
           \end{cases}\right)
        \end{equation*}
        where \(\tilde{c}_2 > 0\) is some universal constant. Note that \(m \asymp \frac{1}{\rho}\) and \(\rho \asymp \sqrt{t} \vee \epsilon\). Consider the case \(\alpha \ge 1\). Since \(t \le c_2 < 1\), it is clear by taking \(C\) sufficiently large (possibly depending on \(\alpha\)), we have \(\tilde{C}_1 \rho t^{C-1} \le (\tilde{c}_2/4)\rho^{-1}\). It is also clear, since \(\alpha \ge 1\) and \(\epsilon \le c^*\) for \(c^*\) sufficiently small, that \(\tilde{C}_1 \epsilon^{2\alpha} \rho^{-2} m^2 \rho \le (\tilde{c}_2/4) \rho^{-1}\). Therefore, we have shown \(\tilde{c}_2 \rho^{-1} - \tilde{C}_1 \left(t^{C-1} + \epsilon^{2\alpha} \rho^{-2} m^2\right) \rho \ge \frac{\tilde{c}_2}{2}\rho^{-1}\). Now consider the case \(\alpha < 1\). Under the condition 
        \begin{equation}\label{eqn:eps_condition_smallalpha}
            \epsilon \le c^{**}\sqrt{t}
        \end{equation}
        where \(c^{**} > 0\) is a sufficiently small universal constant, we have \(\rho = C_3 \sqrt{t}\). Consequently, \(\tilde{C}_1 t^{\alpha-1}\rho = \tilde{C}_1 C_3^2 t^\alpha \rho^{-1} \le \frac{\tilde{c}_2}{2} \rho^{-1}\) since \(t \le c_2\) and \(c_2\) is sufficiently small. Therefore, it follows \(\tilde{c}_2\rho^{-1} - \tilde{C}_1t^{\alpha-1}\rho \ge \frac{\tilde{c}_2}{2}\rho^{-1}\). To summarize, we have shown 
        \begin{equation}\label{eqn:score_separation}
            \int_I |s_b(x, t) - s_{b'}(x, t)|^2 \, \rd x\ge \frac{\tilde{c}_2}{2} \epsilon^{2\alpha} \rho^{-1} d_{\text{Ham}}(b, b')
        \end{equation}
        for all \(\alpha > 0\), under the condition (\ref{eqn:eps_condition_smallalpha}) if \(\alpha < 1\). The score separation bound (\ref{eqn:score_separation}) is suitable for use in Fano's method, and so we move on to the next ingredient of the lower bound argument.
        \newline

        \noindent \textbf{Information theory:} 
        The next ingredient in Fano's method is a bound on Kullback-Leibler divergences between the data generating distributions \(\left\{f_b^{\otimes n}\right\}_{b \in \{0, 1\}^m}\). In fact, it suffices to bound \(\dKL(f_b^{\otimes n} \,||\, f_0^{\otimes n})\) for each \(b \in \{0, 1\}^m\) in the version of Fano's method given by Corollary 2.6 in \cite{tsybakov_introduction_2009}, rather than bound all pairwise Kullback-Leibler divergences. Consider 
        \begin{align}
            \dKL(f_b^{\otimes n}\,||\,f_0^{\otimes n}) &= n \dKL(f_b\,||\,f_0)\le n\chi^2(f_b\,||\,f_0) = 2n\int_{-1}^{1} (f_b(x) - f_0(x))^2 \, \rd x \nonumber \\
            &= 2n\epsilon^{2\alpha} \sum_{i=1}^{m} \int_{-\infty}^{\infty} w^2\left(\frac{x_i - \mu}{\rho}\right) \,\rd\mu \le 2C'n\epsilon^{2\alpha}m\rho. \label{eqn:KL_radius}
        \end{align}
        With this bound on the Kullback-Leibler divergence in hand, the final remaining ingredient is to construct a packing set. \newline
        
        \noindent \textbf{Packing:} 
        By the Gilbert-Varshamov bound (see Lemma 2.9 in \cite{tsybakov_introduction_2009}), there exists a subset \(\mathcal{B} \subset \{0, 1\}^m\) such that \(\log |\mathcal{B}| \ge c_B m\) and \(\min_{b \neq b' \in \mathcal{B}} d_{\text{Ham}}(b, b') \ge c_B m\) for some universal constant \(c_B > 0\). Thus, from (\ref{eqn:score_separation}), the score separation on this subcollection is 
        \begin{equation}\label{eqn:separation_final_I}
            \int_{I} |s_b(x, t) - s_{b'}(x, t)|^2 \, \rd x \ge \frac{\tilde{c}_2c_B}{2} \epsilon^{2\alpha}\rho^{-1}m 
        \end{equation}
        for \(b \neq b' \in \mathcal{B}\). \newline 
        
        \noindent \textbf{Applying Fano's method:} From (\ref{eqn:KL_radius}), (\ref{eqn:separation_final_I}), and \(\log|\mathcal{B}| \ge c_B m\), Fano's method (see Corollary 2.6 in \cite{tsybakov_introduction_2009}) yields 
        \begin{align*}
            \inf_{\hat{s}} \sup_{f \in \mathcal{F}_\alpha} \bE \left(\int_{-\infty}^{\infty} \left|\hat{s}(x,t) - s(x, t) \right|^2 \, p(x, t)\,\rd x\right) &\ge \frac{\tilde{c}_2c_B}{2} \epsilon^{2\alpha}\rho^{-1}m \left(1 - \frac{2C' n\epsilon^{2\alpha}m\rho + \log 2}{m}\right). 
        \end{align*}
        We are now in position to choose \(\epsilon\) subject to the constraint \(\epsilon \le c^*\), and the additional constraint (\ref{eqn:eps_condition_smallalpha}) if \(\alpha < 1\). Note that the constraint \(\epsilon \le \rho\) is already satisfied by our choice of \(\rho\). Let us select 
        \begin{equation*}
            \epsilon = c_e \left( (n\sqrt{t})^{-\frac{1}{2\alpha}} \wedge 
            \begin{cases}
                n^{-\frac{1}{2\alpha+1}} &\textit{if } \alpha \ge 1, \\
                \sqrt{t} &\textit{if } \alpha < 1,     
            \end{cases}
            \right)
        \end{equation*}
        where \(c_e > 0\) is a sufficiently small constant (potentially depending on \(\alpha, L\)). Note since \(c_e\) is sufficiently small that condition (\ref{eqn:eps_condition_smallalpha}) is satisfied when \(\alpha < 1\). For any \(\alpha > 0\), we now claim \(2C'n\epsilon^{2\alpha} \rho \le \frac{1}{3}\). To show this, consider we must show \(2C'n\epsilon^{2\alpha+1} \le \frac{1}{3}\) and \(2C'n\epsilon^{2\alpha} \cdot C_3 \sqrt{t} \le \frac{1}{3}\) both hold since \(\rho = C_3\sqrt{t} \vee \epsilon\). Observe 
        \begin{equation*}
            2C'n\epsilon^{2\alpha+1} \le 2C' c_e^{2\alpha+1} \left( (nt^{\alpha+1/2})^{-\frac{1}{2\alpha}} \wedge 
            \begin{cases} 
                1 &\textit{if } \alpha \ge 1, \\
                nt^{\alpha+1/2} &\textit{if } \alpha < 1,
            \end{cases}
            \right).
        \end{equation*}
        If \(\alpha \ge 1\), it is clear \(2C'n\epsilon^{2\alpha+1} \le \frac{1}{3}\) since \(c_e\) is sufficiently small. If \(\alpha < 1\), then we have \(2C'n\epsilon^{2\alpha+1} \le 2C' c_e^{2\alpha + 1} \left( (nt^{\alpha+1/2})^{-\frac{1}{2\alpha}} \wedge nt^{\alpha+1/2}\right) \le 2C' c_e^{2\alpha + 1} \le \frac{1}{3}\) since \(c_e\) is sufficiently small. Here, we have used that \(a^{-\frac{1}{b}} \wedge a \leq 1\) for all \(a, b \geq 0\). Hence, we have shown \(2C'n\epsilon^{2\alpha+1}\le \frac{1}{3}\) for all \(\alpha > 0\). It is straightforward to see that \(2C'n\epsilon^{2\alpha} \cdot C_3 \sqrt{t} \le \frac{1}{3}\) holds since \(2C'n\epsilon^{2\alpha} \cdot C_3 \sqrt{t} \le \frac{1}{3} \le 2C'C_3 c_e^{2\alpha} n\sqrt{t} ((n\sqrt{t})^{-\frac{1}{2\alpha}})^{2\alpha} = 2C'C_3c_e^{2\alpha+1} \le \frac{1}{3}\) because \(c_e\) is sufficiently small. Therefore, we have proved \(2C'n\epsilon^{2\alpha}\rho \le \frac{1}{3}\) for all \(\alpha > 0\). By taking \(c_2\) sufficiently small, we have \(\frac{\log 2}{m} \le \frac{1}{3}\). All of these choices yield the bound 
        \begin{equation*}
            \inf_{\hat{s}} \sup_{f \in \mathcal{F}_{\alpha}} \bE \left(\int_{-\infty}^{\infty} \left|\hat{s}(x,t) - s(x, t) \right|^2 \, p(x, t)\,\rd x\right) \ge \frac{\tilde{c}_2c_B}{3} \epsilon^{2\alpha}\rho^{-1}m = c_1(\alpha, L)\left(\frac{1}{nt^{3/2}} \wedge \left(n^{-\frac{2(\alpha-1)}{2\alpha+1}} + t^{\alpha-1}\right)\right)
        \end{equation*}
        as desired. The proof is complete.
    \end{proof}

    \begin{proof}[Proof of Proposition \ref{prop:derivative_separation}]
        For \(b, b' \in \{0, 1\}^m\), define
        \begin{equation}\label{def:Gamma_I}
            \Gamma_{b,b'}(t) = \int_{-\infty}^{\infty} |\psi_{b}(x, t) - \psi_{b'}(x, t)|^2 \, \rd x. 
        \end{equation}
        Note 
        \begin{equation}\label{eqn:separation_prep_I}
            \int_{I} |\psi_b(x, t) - \psi_{b'}(x, t)|^2 \, \rd x = \Gamma_{b, b'}(t) - \int_{I^c} |\psi_b(x, t) - \psi_{b'}(x, t)|^2 \, \rd x. 
        \end{equation}
        To lower bound the separation, it suffices to lower bound \(\Gamma_{b,b'}(t)\) and upper bound the second integral above. First, let's examine \(\Gamma_{b, b'}\). It is clear \(\Gamma_{b,b'}\) is differentiable in \(t\), and so by Taylor's theorem we have
        \begin{equation}\label{eqn:Gamma_taylor_I}
            \Gamma_{b,b'}(t) = \Gamma_{b,b'}(0) + \Gamma_{b,b'}'(\xi)t
        \end{equation}
        for some \(\xi \in (0, t)\). It is well-known that for any function \(h : \R \to \R\), the convolution \((\varphi_t * h)(x)\) satisfies the heat equation \(\frac{\partial}{\partial t} (\varphi_t * h)(x) = \frac{1}{2}(\varphi_t'' * h)(x)\). The following well-known identity is immediate
        \begin{equation*}
            \frac{\rd}{\rd t} \int_{-\infty}^{\infty} |(\varphi_t * h)(x)|^2 \,\rd x = \left.((\varphi_t*h) \cdot (\varphi_t'*h))(x)\right|_{-\infty}^{\infty} - \int_{-\infty}^{\infty} \left|(\varphi_t' * h)(x)\right|^2 \, \rd x.
        \end{equation*}
        If \(h\) is compactly supported and differentiable everywhere, it immediately follows 
        \begin{equation}\label{eqn:heat_equation_norm}
            \frac{\rd}{\rd t} \int_{-\infty}^{\infty} |(\varphi_t * h)(x)|^2 \,\rd x = - \int_{-\infty}^{\infty} \left|(\varphi_t' * h)(x)\right|^2 \, \rd x = -\int_{-\infty}^{\infty} \left|(\varphi_t * h')(x)\right|^2 \, \rd x.
        \end{equation}
        Furthermore, observe we can deduce \(||\varphi_t*h||_{L^2(\R)} \le ||h||_{L^2(\R)}\) since the time derivative (\ref{eqn:heat_equation_norm}) nonpositive. Let us now apply (\ref{eqn:heat_equation_norm}) to \(\Gamma_{b,b'}\). First, note we can write \(f_b = f_0 + \epsilon^{\alpha} \sum_{i=1}^{m} b_i g_i(\rho^{-1}\cdot)\) where \(g_i(\mu) = w\left(\mu-\rho^{-1} x_i\right)\). Note \(g_i \in C^\infty(\R)\). Then we have 
        \begin{equation*}
            |\psi_{b}(x, t) - \psi_{b'}(x, t)| = \epsilon^{\alpha}\rho^{-1}\left|\left(\varphi_t * \sum_{i=1}^{m} (b_i - b_i') g_i'(\rho^{-1}\cdot)\right)(x)\right|. 
        \end{equation*}
        Therefore, 
        \begin{equation*}
            \Gamma_{b,b'}'(t) = -\epsilon^{2\alpha}\rho^{-4}\int_{-\infty}^{\infty} \left|\left(\varphi_t * \sum_{i=1}^{m} (b_i - b_i') g_i''(\rho^{-1}\cdot)\right)(x)\right|^2 \, \rd x.
        \end{equation*}
        Since the collection \(\{g_i\}_{i=1}^{m}\) have disjoint support, it is immediate for any \(\xi > 0\), 
        \begin{align*}
            \Gamma_{b,b'}(0) &= \epsilon^{2\alpha}\rho^{-1} d_{\text{Ham}}(b, b') \int_{-\infty}^{\infty} \left|w'(y)\right|^2 \, \rd y, \\
            \Gamma_{b,b'}'(\xi) &\ge -\epsilon^{2\alpha} \rho^{-3} d_{\text{Ham}}(b,b')\int_{-\infty}^{\infty}\left|w''\left(y\right)\right|^2 \, \rd y. 
        \end{align*}
        The last inequality follows from the fact \(||\varphi_\xi * h||_{L^2(\R)} \le ||h||_{L^2(\R)}\). Therefore, from (\ref{eqn:Gamma_taylor_I}) we have 
        \begin{align}
            \Gamma_{b,b'}(t) &\ge \epsilon^{2\alpha} \rho^{-1} d_{\text{Ham}}(b, b')\left(\int_{-\infty}^{\infty} |w'(y)|^2 \, \rd y - t\rho^{-2} \int_{-\infty}^{\infty} |w''(y)|^2 \, \rd y \right) \nonumber \\
            &\ge \epsilon^{2\alpha}\rho^{-1} d_{\text{Ham}}(b, b') \left(c' - C' t\rho^{-2}\right). \label{eqn:Gamma_signal_I}
        \end{align}

        Now that we have handled \(\Gamma_{b,b'}(t)\), let us turn to the second term in (\ref{eqn:separation_prep_I}). Consider
        \begin{align*}
            \int_{I^c} |\psi_b(x, t) - \psi_{b'}(x, t)|^2 \, \rd x &= \epsilon^{2\alpha} \rho^{-2} \int_{I^c} \left|\left(\varphi_t * \sum_{i=1}^{m}(b_i - b_i')g_i'(\rho^{-1}\cdot)\right)(x)\right|^2 \, \rd x. 
        \end{align*}        
        To evaluate the integral, first fix a point \(x \in I^c \cap [0, \infty)\). Consider since \(w\) is supported on \([-1, 1]\) and since \(x_i \in [-1+\sqrt{Ct \log(1/t)} + C_D\rho, 1-\sqrt{Ct\log(1/t)} - C_D\rho]\), we have by Jensen's inequality
        \begin{align*}
            &\left|\left(\varphi_t *\sum_{i=1}^{m} (b_i - b_i')g_i'(\rho^{-1} \cdot) \right)(x)\right|^2 = \left|\int_{-\infty}^{\infty} \varphi_t(x-\mu) \left(\sum_{i=1}^{m} (b_i - b_i') w'\left(\frac{\mu-x_i}{\rho}\right)\right) \, \rd \mu \right|^2 \\
            &\le \int_{-\infty}^{\infty} \varphi_t(x-\mu) \left|\sum_{i=1}^{m} (b_i - b_i') w'\left(\frac{\mu-x_i}{\rho}\right)\right|^2 \,\rd \mu = \sum_{i=1}^{m} (b_i - b_i')^2 \int_{-\infty}^{\infty} \varphi_t(x-\mu) \left|w'\left(\frac{\mu-x_i}{\rho}\right)\right|^2\,\rd \mu \\
            &\le C' \sum_{i=1}^{m} (b_i - b_i')^2\cdot \bP \left\{x_i-\rho \le \mathcal{N}(x, t) \le x_i+\rho\right\}\\ &\le C' d_{\text{Ham}}(b, b') \bP \left\{\mathcal{N}(0, 1) \ge \frac{x-1+\sqrt{Ct\log(1/t)}+(C_D-1)\rho}{\sqrt{t}}\right\}. 
        \end{align*}
        By a similar argument, for \(x \in I^c \cap (-\infty, 0]\) we have 
        \begin{equation*}
            \left|\left(\varphi_t *\sum_{i=1}^{m} (b_i - b_i')g_i'(\rho^{-1} \cdot) \right)(x)\right|^2 \le C' d_{\text{Ham}}(b, b')  \bP \left\{\mathcal{N}(0, 1) \ge \frac{-x-1+\sqrt{Ct\log(1/t)}+(C_D-1)\rho}{\sqrt{t}}\right\}.
        \end{equation*}
        With this bound in hand, observe 
        \begin{align*}
            &\int_{I^c}\left|\left(\varphi_t *\sum_{i=1}^{m} (b_i - b_i')g_i'(\rho^{-1} \cdot) \right)(x)\right|^2\, \rd x \\
            &\le 2C' d_{\text{Ham}}(b, b')\int_{1-\sqrt{Ct\log(1/t)}}^{\infty}  \bP \left\{\mathcal{N}(0, 1) \ge \frac{x-1+\sqrt{Ct\log(1/t)}+(C_D-1)\rho}{\sqrt{t}}\right\}\,\rd x. 
        \end{align*} 
        To summarize, we have thus shown 
        \begin{align}
            &\int_{I^c} |\psi_b(x, t) - \psi_{b'}(x, t)|^2 \, \rd x \nonumber \\
            &\le 2C' \epsilon^{2\alpha}\rho^{-2} d_{\text{Ham}}(b, b')\int_{1-\sqrt{Ct\log(1/t)}}^{\infty}  \bP \left\{\mathcal{N}(0, 1) \ge \frac{x-1+\sqrt{Ct\log(1/t)}+(C_D-1)\rho}{\sqrt{t}}\right\}\,\rd x. \label{eqn:psi_err_I} 
        \end{align}
        Plugging (\ref{eqn:Gamma_signal_I}) and (\ref{eqn:psi_err_I}) into (\ref{eqn:separation_prep_I}) yields the following lower bound on the separation 
        \begin{align}
            &\int_{I} |\psi_b(x, t) - \psi_{b'}(x, t)|^2 \, \rd x \nonumber \\
            &\ge d_{\text{Ham}}(b, b') \epsilon^{2\alpha}\rho^{-1} \nonumber \\
            &\;\; \cdot \left(c' - C' t\rho^{-2} - 2C'\rho^{-1}\int_{1-\sqrt{Ct\log(1/t)}}^{\infty}  \bP \left\{\mathcal{N}(0, 1) \ge \frac{x-1+\sqrt{Ct\log(1/t)}+(C_D-1)\rho}{\sqrt{t}}\right\}\,\rd x\right).\label{eqn:separation_prep2_I}
        \end{align}
        Consider that 
        \begin{align*}
            &2C'\rho^{-1} \int_{1-\sqrt{Ct\log(1/t)}}^{\infty} \bP \left\{\mathcal{N}(0, 1) \ge \frac{x-1+\sqrt{Ct\log(1/t)}+(C_D-1)\rho}{\sqrt{t}}\right\}\,\rd x \\
            &= 2C'\rho^{-1} \int_{\frac{(C_D - 1)\rho}{\sqrt{t}}}^{\infty} \frac{\sqrt{t}}{u} \cdot \frac{1}{\sqrt{2\pi}}\exp\left(-\frac{u^2}{2}\right) \,\rd u \le \frac{2C'}{C_D-1}t\rho^{-2}. 
        \end{align*}
        Recall that the following choice is made
        \begin{equation}\label{eqn:rho_choice_I}
            \rho = C_3\sqrt{t} \vee \epsilon.
        \end{equation}
        Let us take \(C_3 > 0\) sufficiently large so that \(C't\rho^{-2} \le \frac{c'}{3}\) and \(\frac{2C'}{C_D-1}t\rho^{-2} \le \frac{c'}{3}\) both hold. With this choice, it follows from (\ref{eqn:separation_prep2_I}) that the separation is 
        \begin{equation*}
            \int_{I} |\psi_b(x, t) - \psi_{b'}(x, t)|^2 \, \rd x \ge \frac{c'}{3} d_{\text{Ham}}(b, b') \epsilon^{2\alpha}\rho^{-1}
        \end{equation*}
        as desired. 
         
    \end{proof}

    \subsection{Proof of Theorem \ref{thm:score_lowerbound_large_t}}\label{section:large_t_lowerbound_proof}
\begin{proof}[Proof of Theorem \ref{thm:score_lowerbound_large_t}]
    A two-point construction will be used. Define the density \(f_0(\mu) = \frac{1}{2}\mathbbm{1}_{\{|\mu| \le 1\}}\). Fix \(0 < \epsilon \le \rho\) where \(\epsilon, \rho\) are to be chosen later. Define \(w(x) = x\), and define 
    \begin{equation*}
        f_1(\mu) = f_0(\mu) + \epsilon^\alpha w\left(\frac{\mu}{\rho}\right) \mathbbm{1}_{\{|\mu| \le 1\}}. 
    \end{equation*}
    To ensure \(f_1 \in \mathcal{F}_\alpha\), we require the condition \(\frac{\epsilon}{\rho} \le c^*\) for some sufficiently small \(c^* > 0\) depending on \(\alpha, L\). Denote the interval \(I := \left[-1-\sqrt{t}, 1+\sqrt{t}\right]\). \newline 
    
    \noindent \textbf{Two-point reduction:} For \(b \in \{0, 1\}\), define the convolution \(p_b(x, t) := (\varphi_t * f_b)(x)\) and denote the score function \(s_b(x, t) := \frac{\partial}{\partial x} \log p_b(x, t)\). Observe we have \(p_b(\cdot, t) \gtrsim \frac{1}{\sqrt{t}} \wedge 1\) on \(I\) for \(b \in \{0, 1\}\). With this in hand, consider 
    \begin{align}
        &\inf_{\hat{s}} \sup_{f \in \mathcal{F}_\alpha} \bE \left(\int_{-\infty}^{\infty} \left|\hat{s}(x,t) - s(x, t) \right|^2 \, p(x, t)\,\rd x\right) \nonumber \\
        &\ge \inf_{\hat{s}} \sup_{b \in \{0, 1\}} \bE \left(\int_{-\infty}^{\infty} \left|\hat{s}(x, t) - s_b(x, t) \right|^2 \, p_b(x, t)\,\rd x \right) \nonumber \\
        &\ge \inf_{\hat{s}} \sup_{b \in \{0, 1\}} \bE \left(\int_{I} \left|\hat{s}(x,t) - s_b(x, t) \right|^2 \, p_b(x, t)\,\rd x \right) \nonumber \\
        &\gtrsim \inf_{\hat{s}} \sup_{b \in \{0, 1\}} \bE \left(\left(\frac{1}{\sqrt{t}} \wedge 1\right) \int_{I} \left|\hat{s}(x,t) - s_b(x, t) \right|^2 \, \rd x\right) \nonumber \\
        &\gtrsim \left(\frac{1}{\sqrt{t}} \wedge 1\right) \left(\int_{I}|s_0(x, t) - s_1(x, t)|^2 \, \rd x\right) e^{-n \dKL(f_0\,||\, f_1)}, \label{eqn:twopoint_reduction}
    \end{align}
    where the final line follows from a standard two-point lower bound argument (e.g. see \cite{tsybakov_introduction_2009}). The problem has been reduced to a score estimation problem in the \(L^2(I)\)-norm. \newline 
    
    \noindent \textbf{Score separation:} 
    We now compute the \(L^2(I)\) separation between the scores \(s_0(\cdot, t)\) and \(s_1(\cdot, t)\). Denote \(\psi_b(x, t) := \frac{\partial}{\partial x} p_b(x, t)\), and note chain rule yields \(s_b(x, t) = \frac{\psi_b(x, t)}{p_b(x, t)}\). Observe
    \begin{align}
        \left(\frac{1}{\sqrt{t}} \wedge 1\right)\int_{I} |s_{1}(x, t) - s_{0}(x, t)|^2 \, \rd x &= \left(\frac{1}{\sqrt{t}} \wedge 1\right) \int_{I} \left|\frac{\psi_{1}(x, t)p_{0}(x, t) - \psi_{0}(x, t)p_1(x, t)}{p_{0}(x, t)p_{1}(x, t)}\right|^2 \, \rd x \nonumber \\
        &\gtrsim (t^{3/2} \vee 1) \int_{I} \left|\psi_{1}(x, t)p_{0}(x, t) - \psi_{0}(x, t)p_1(x, t)\right|^2 \, \rd x, \label{eqn:score_sep_I}
    \end{align}
    where again we have used \(p_{0}(\cdot, t), p_{1}(\cdot, t) \gtrsim \frac{1}{\sqrt{t}} \wedge 1\) on \(I\). For ease of notation, let \(g(\cdot) = \epsilon^\alpha w\left(\frac{\cdot}{\rho}\right)\mathbbm{1}_{\{|\cdot| \le 1\}}\). Now, consider \(\psi_1(x, t) = (\varphi_t' * f_1)(x) = \psi_0(x, t) + (\varphi_t' * g)(x)\) and \(p_1(x, t) = (\varphi_t * f_1)(x) = p_0(x, t) + (\varphi_t * g)(x)\). Therefore,
    \begin{equation*}
        \left|\psi_{1}(x, t)p_{0}(x, t) - \psi_{0}(x, t)p_1(x, t)\right|^2 = \left| (\varphi_t' * g)(x)p_0(x, t) - \psi_0(x, t)(\varphi_t * g)(x)\right|^2.
    \end{equation*}
    Now, consider \((\varphi_t' * g)(x) = \int_{-1}^{1} -\frac{x-\mu}{t} \varphi_t(x-\mu) g(\mu) \, \rd \mu = -\frac{x}{t} (\varphi_t * g)(x) + \frac{1}{t}\int_{-1}^{1}\mu \varphi_t(x-\mu) g(\mu) \, \rd \mu\). Likewise, \(\psi_0(x, t) = -\frac{x}{t} p_0(x, t) + \frac{1}{t} \int_{-1}^{1} \mu \varphi_t(x-\mu) f_0(\mu) \, \rd \mu\). Therefore, it follows 
    \begin{align}
        &\left|\psi_{1}(x, t)p_{0}(x, t) - \psi_{0}(x, t)p_1(x, t)\right|^2 \nonumber \\
        &= \left| (\varphi_t' * g)(x)p_0(x, t) - \psi_0(x, t)(\varphi_t * g)(x)\right|^2 \nonumber \\
        &= \left| \left(\frac{1}{t} \int_{-1}^{1} \mu \varphi_t(x-\mu) g(\mu) \, \rd \mu \right)p_0(x, t) - \left(\frac{1}{t}\int_{-1}^{1} \mu\varphi_t(x-\mu) f_0(\mu) \, \rd \mu \right)(\varphi_t * g)(x)\right|^2 \nonumber \\
        &= \frac{\epsilon^{2\alpha}}{4t^2\rho^2} \left| \left(\int_{-1}^{1} \mu^2 \varphi_t(x-\mu)\, \rd \mu \right)\left(\int_{-1}^{1} \varphi_t(x-\mu) \, \rd \mu \right) - \left(\int_{-1}^{1} \mu \varphi_t(x-\mu) \, \rd \mu\right)^2 \right|^2 \label{eqn:score_sep_II}
    \end{align}
    where we have used \(f_0(\mu) = \frac{1}{2}\) for \(|\mu| \le 1\) and \(w(x) = x\) to obtain the last line. To further bound (\ref{eqn:score_sep_II}) from below, we use Lemma \ref{lemma:g_sep}. To summarize, it follows from (\ref{eqn:score_sep_I}), (\ref{eqn:score_sep_II}), and (\ref{eqn:g_sep}) that the score separation satisfies
    \begin{equation}\label{eqn:score_sep_twopoint}
        \left(\frac{1}{\sqrt{t}} \wedge 1\right) \int_{I} |s_1(x, t) - s_0(x, t)|^2 \gtrsim (t^{3/2} \vee 1) \int_I \frac{\epsilon^{2\alpha}}{t^4\rho^2} e^{-\frac{2(2 + \sqrt{t})^2}{t}}\, dx \geq \frac{\epsilon^{2\alpha}}{(t^2 \vee 1)\rho^2}e^{-\frac{2(2 + \sqrt{t})^2}{t}}.  
    \end{equation}
    
    \noindent \textbf{Information theory:} 
    The next ingredient is to bound the Kullback-Leibler divergence \(\dKL(f_0 \,||\, f_1)\). Consider 
    \begin{equation}
        n\dKL(f_0\,||\,f_1) \le n\chi^2(f_0\,||\,f_1)
        \lesssim n\int_{-1}^{1} (f_1(x) - f_0(x))^2 \, \rd x 
        = n \epsilon^{2\alpha} \int_{-1}^{1} w^2\left(\frac{\mu}{\rho} \right)\, \rd \mu  \asymp \frac{n\epsilon^{2\alpha}}{\rho^2}. \label{eqn:twopoint_KL}
    \end{equation}
    Here, we we have used \(w(x) = x\). With this bound on the Kullback-Leibler divergence in hand, we can turn to (\ref{eqn:twopoint_reduction}). \newline
    
    \noindent \textbf{Obtaining a minimax lower bound:} From (\ref{eqn:twopoint_reduction}), (\ref{eqn:score_sep_twopoint}), and (\ref{eqn:twopoint_KL}), it follows 
    \begin{align*}
        \inf_{\hat{s}} \sup_{f \in \mathcal{F}_\alpha} \bE \left(\int_{-\infty}^{\infty} \left|\hat{s}(x,t) - s(x, t) \right|^2 \, p(x, t)\,\rd x\right) &\gtrsim \frac{\epsilon^{2\alpha}}{(t^2 \vee 1)\rho^2} e^{-\frac{2(2+\sqrt{t})^2}{t}}\exp\left(-\frac{Cn\epsilon^{2\alpha}}{\rho^2} \right)
    \end{align*}
    where \(C > 0\) is some universal constant. We are now in position to choose \(\epsilon, \rho\) subject to the constraint \(\frac{\epsilon}{\rho} \le c^*\). Select \(\rho = 1\) and \(\epsilon = c^* n^{-\frac{1}{2\alpha}}\), which yields the lower bound 
    \begin{equation*}
    \inf_{\hat{s}} \sup_{f \in \mathcal{F}_\alpha} \bE \left(\int_{-\infty}^{\infty} \left|\hat{s}(x,t) - s(x, t) \right|^2 \, p(x, t)\,\rd x\right) \geq \frac{c_1}{nt^2}
    \end{equation*}
    where \(c_1\) is chosen appropriately depending on \(c^*, C,\) and \(c_2\). Here, we have used \(t \geq c_2\) and the fact \(t \mapsto e^{-\frac{2(2+\sqrt{t})^2}{t}}\) is an increasing function. We have also used \((t^2 \vee 1) \leq t^2 \cdot \frac{c_2^2 \vee 1}{c_2^2}\) since \(t \geq c_2\). The proof is complete. 
\end{proof}

\begin{proof}[Proof of Lemma \ref{lemma:g_sep}]
    We use a probabilistic argument. Let \(U, U' \overset{iid}{\sim} \Uniform([-1, 1])\). Note 
    \begin{align*}
        \int_{-1}^{1} \mu^2 \varphi_t(x-\mu) \, \rd \mu &= 2 \bE\left(U^2 \varphi_t(x-U) \right)\\
        \int_{-1}^{1} \varphi_t(x-\mu) \, \rd \mu &= 2 \bE\left(\varphi_t(x-U)\right), \\
        \int_{-1}^{1} \mu \varphi_t(x-\mu) \, \rd \mu &= 2\bE\left(U\varphi_t(x-U)\right). 
    \end{align*}
    Consider that if \((X, Y)\) and \((X', Y')\) are i.i.d. copies of a pair of random variables, then \(\bE(X^2)\bE(Y^2) - (\bE(XY))^2 = \frac{1}{2}\bE((XY' - X'Y)^2)\) by direct calculation. With this in mind, let us take \(X = U \sqrt{\varphi_t(x-U)}, Y = \sqrt{\varphi_t(x-U)}, X' = U' \sqrt{\varphi_t(x-U')}\), and \(Y' = \sqrt{\varphi_t(x-U')}\). Then it follows that with probability one, we have
    \begin{align*}
        &\left(\int_{-1}^{1}\mu^2 \varphi_t(x-\mu) \, \rd \mu\right)\left(\int_{-1}^{1} \varphi_t(x-\mu) \, \rd \mu \right) - \left(\int_{-1}^{1} \mu \varphi_t(x-\mu) \, \rd \mu\right)^2 \\
        &= 4\bE(X^2)\bE(Y^2) - 4\left(\bE\left(XY\right)\right)^2 = 2\bE((XY' - X'Y)^2) = 2\bE(\varphi_t(x-U)\varphi_t(x-U')(U - U')^2).
    \end{align*}
    Observe that \(\varphi_t(x-U) \geq \frac{1}{\sqrt{2\pi t}} e^{-\frac{(|x| + 1)^2}{2t}} \gtrsim \frac{1}{\sqrt{t}} e^{-\frac{(2+\sqrt{t})^2}{2t}}\) since \(x \in I\). Clearly, the same bound holds for \(U'\) as well. Furthermore, consider \(\bE((U - U')^2) = 2 \cdot \Var(U) = \frac{2}{3}\). Therefore, for every \(x \in I\) we have 
    \begin{equation*}
        \left|\left(\int_{1}^{1}\mu^2 \varphi_t(x-\mu) \, \rd \mu\right)\left(\int_{-1}^{1} \varphi_t(x-\mu) \, \rd \mu \right) - \left(\int_{-1}^{1} \mu \varphi_t(x-\mu) \, \rd \mu\right)^2\right|^2 \gtrsim \frac{1}{t^2} e^{-\frac{2(2 + \sqrt{t})^2}{t}} 
    \end{equation*}
    as desired.

\end{proof}

    \section{Lemmas for density estimation}
\label{app:DE}
\begin{lemma}
\label{lem:KLappro}
Let \( f \) be a probability density function supported on \([-1,1]\), and let \( \phi_T \) be the probability density function of the Gaussian distribution \(\mathcal{N}(0,T) \). Then, \(\dKL(f * \phi_T \,\|\, \phi_T) \le \frac{1}{2T}\).
\end{lemma}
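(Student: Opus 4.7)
The plan is to exploit the mixture representation $(f * \phi_T)(x) = \int_{-1}^{1} f(\mu)\, \phi_T(x-\mu)\, d\mu$, which expresses $f * \phi_T$ as a convex combination (indexed by $\mu \sim f$) of the densities $\phi_T(\cdot - \mu)$, i.e.\ $\mathcal{N}(\mu, T)$ distributions. Since KL divergence is jointly convex in its arguments, we can bound
\begin{equation*}
    \dKL(f * \phi_T \,\|\, \phi_T) = \dKL\left(\int_{-1}^{1} f(\mu)\, \mathcal{N}(\mu, T)\, d\mu \,\Big\|\, \mathcal{N}(0, T)\right) \le \int_{-1}^{1} f(\mu)\, \dKL(\mathcal{N}(\mu, T) \,\|\, \mathcal{N}(0, T))\, d\mu.
\end{equation*}

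The second step is a direct computation of the KL between two univariate Gaussians with equal variance $T$ but means $\mu$ and $0$: a standard calculation gives $\dKL(\mathcal{N}(\mu, T) \,\|\, \mathcal{N}(0, T)) = \frac{\mu^2}{2T}$. Plugging this in and using that $f$ is supported on $[-1, 1]$ so that $\mu^2 \le 1$ on the integration region yields
\begin{equation*}
    \dKL(f * \phi_T \,\|\, \phi_T) \le \int_{-1}^{1} f(\mu)\, \frac{\mu^2}{2T}\, d\mu \le \frac{1}{2T}\int_{-1}^{1} f(\mu)\, d\mu = \frac{1}{2T},
\end{equation*}
which is the desired bound.

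There is essentially no obstacle here: the argument is two lines of convexity plus the elementary Gaussian KL formula. The only ``choice'' is how to justify the convexity step; I would cite joint convexity of KL divergence directly (e.g.\ a standard reference such as Cover--Thomas), though one could equivalently invoke the data processing inequality with the channel that maps $\mu$ to $\mathcal{N}(\mu, T)$, viewing $f * \phi_T$ and $\phi_T$ as the outputs under inputs $f$ and $\delta_0$ respectively, combined with the trivial bound $\dKL(f \,\|\, \delta_0)$... actually the cleanest route is just joint convexity. This lemma is exactly what is needed in Section~\ref{section:proof_sketch_density_estimation} to control $\dKL(p(\cdot, T) \,\|\, \mathcal{N}(0, T))$ by $\tfrac{1}{2T}$, which after Pinsker's inequality yields $\dTV(p(\cdot, T), \mathcal{N}(0, T)) \lesssim 1/\sqrt{T}$, as used there.
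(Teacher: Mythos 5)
Your proof is correct and takes essentially the same approach as the paper: both express $f * \phi_T$ as a Gaussian mixture with mixing distribution $f$, invoke convexity of the KL divergence to push the divergence inside the mixture, and then apply the elementary formula $\dKL(\mathcal{N}(\mu,T)\,\|\,\mathcal{N}(0,T)) = \mu^2/(2T)$ together with $|\mu|\le 1$. The only cosmetic difference is that the paper phrases the convexity step via joint convexity in the pair $(p,q)$ with the second argument held fixed, which is the same observation you make.
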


\begin{proof}
The proof relies on two key facts. First, for Gaussian distributions with the same variance, the Kullback-Leibler divergence is given by \(\dKL(N(\mu_1,\sigma^2) \,\|\, N(\mu_2,\sigma^2)) = \frac{(\mu_1 - \mu_2)^2}{2\sigma^2}\). This result follows from a straightforward calculation. Second, the Kullback-Leibler divergence \( \dKL(p \,\|\, q) \) is convex in the pair \((p, q)\) \cite{cover2006elements}. Given these facts, consider the convolution \( f * \phi_T \), which can be expressed as \(f * \phi_T(y) = \mathbb{E}_{X \sim f} \left[\phi_T(y - X)\right]\), where \( y \) is a point in \(\mathbb{R}\). Using the convexity of the Kullback-Leibler divergence, we obtain
\[
\dKL(f * \phi_T \,\|\, \phi_T) \le \mathbb{E}_{X \sim f} \left[\dKL(\phi_T(\cdot - X) \,\|\, \phi_T(\cdot))\right].
\]
By applying the first fact, this becomes \(\dKL(f * \phi_T \,\|\, \phi_T) \le \mathbb{E}_{X \sim f} \left[\frac{X^2}{2T}\right]\). Finally, since \( f \) is supported on \([-1,1]\), it follows that \(\mathbb{E}_{X \sim f} \left[\frac{X^2}{2T}\right] \le \frac{1}{2T}\). This concludes the proof.
\end{proof}

Girsanov’s Theorem is commonly used to measure the difference between two stochastic processes, where we can control the difference of the drift terms \cite{oko2023diffusion,zhang_minimax_2024,song2021maximum,chen_sampling_2023}. 

\begin{lemma}[Girsanov’s theorem \cite{karatzas2014brownian,chen_sampling_2023,zhang_minimax_2024,oko2023diffusion}]
\label{lemma:Girsanov}
Let $\vv X = \{X_t\}_{t \in [0,T]}$ and $\vv Y = \{Y_t\}_{t \in [0,T]}$ be the solutions to the SDEs,
\bb
&\diff X_t = a(X_t,t)\diff t + \sqrt{2} \diff W_t,~~~~ X_0 \sim p(\cdot,0),\\
&\diff Y_t = b(Y_t,t)\diff t + \sqrt{2} \diff W_t,~~~~ Y_0 \sim p(\cdot,0).
\ee
Let $p(\cdot,t)$ and $q(\cdot, t)$ denote the probability density functions of $X_t$ and $Y_t$, respectively, and let $\mathbb{P}$ and $\mathbb{Q}$ denote the path measures of $\mathbf{X}$ and $\mathbf{Y}$, respectively. It holds that,
\bbb
\label{eq:Gir}
\dKL(\mathbb{P}||\mathbb{Q}) \le \frac{1}{2}\int_{0}^T \int_{\R^d}  p(x,t) \|a(x,t)-b(x,t)\|^2  \, \diff x \, \diff t.
\eee
Moreover, if Novikov's condition,
\bb
\E_{\mathbb{P}} \left[\exp\left(\int_{0}^T \|a(X_t,t)-b(X_t,t)\|^2 \, \diff t\right)\right] < \infty,
\ee
is satisfied, then it follows that
\bb
\dKL(\mathbb{P}||\mathbb{Q}) = \frac{1}{2}\int_{0}^T \int_{\R^d}  p(x,t) \|a(x,t)-b(x,t)\|^2  \,\diff x \, \diff t.
\ee
\end{lemma}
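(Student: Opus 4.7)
The plan is to invoke Girsanov's theorem to produce an explicit Radon-Nikodym derivative between $\mathbb{P}$ and $\mathbb{Q}$. I would set $\theta_s := (a(X_s,s) - b(X_s,s))/\sqrt{2}$, so that the drift mismatch equals $\sqrt{2}\,\theta_s$, and introduce the exponential local $\mathbb{P}$-martingale
\bb
Z_t := \exp\!\left(-\int_0^t \theta_s \cdot \diff W_s - \frac12 \int_0^t \|\theta_s\|^2 \,\diff s\right),
\ee
where $W$ is the driving $\mathbb{P}$-Brownian motion recovered from the SDE for $\vv X$. The stated Novikov condition ensures $Z$ is a genuine $\mathbb{P}$-martingale with $\mathbb{E}_\mathbb{P}[Z_T]=1$, so one may define $\mathbb{Q}'$ by $\diff\mathbb{Q}'/\diff\mathbb{P} := Z_T$. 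Girsanov's theorem then yields that $\tilde W_t := W_t + \int_0^t \theta_s\,\diff s$ is a $\mathbb{Q}'$-Brownian motion, and direct substitution recasts the SDE for $\vv X$ as $\diff X_t = b(X_t,t)\,\diff t + \sqrt{2}\,\diff \tilde W_t$ under $\mathbb{Q}'$. Since this matches the SDE and initial distribution $p(\cdot,0)$ defining $\vv{Y}$, weak uniqueness forces $\mathbb{Q}' = \mathbb{Q}$, and in particular $\diff\mathbb{P}/\diff\mathbb{Q} = 1/Z_T$.

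With the Radon-Nikodym derivative in hand, the Kullback-Leibler divergence is a one-line computation:
\bb
\dKL(\mathbb{P}\|\mathbb{Q}) = \mathbb{E}_\mathbb{P}[-\log Z_T] = \mathbb{E}_\mathbb{P}\!\left[\int_0^T \theta_s \cdot \diff W_s\right] + \frac12 \mathbb{E}_\mathbb{P}\!\left[\int_0^T \|\theta_s\|^2 \,\diff s\right].
\ee
Under Novikov, the stochastic integral is a true $\mathbb{P}$-martingale with vanishing expectation, so only the second term survives. Expanding $\|\theta_s\|^2$ in terms of $\|a(X_s,s) - b(X_s,s)\|^2$ and applying Fubini together with the identity $\mathbb{E}_\mathbb{P}[h(X_t)] = \int_{\R^d} p(x,t)\,h(x)\,\diff x$ recasts this into the space-time integral appearing in (\ref{eq:Gir}), establishing the equality case.

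For the general inequality without Novikov, my plan is a standard localization argument. Define the stopping times $\tau_n := \inf\{t : \int_0^t \|\theta_s\|^2 \diff s \geq n\} \wedge T$, on which Novikov is automatic, so that the preceding computation applies verbatim to the stopped path measures restricted to $\mathcal{F}_{\tau_n}$. Combining monotone convergence $\int_0^{\tau_n}\|\theta_s\|^2 \diff s \uparrow \int_0^T\|\theta_s\|^2\diff s$ with the martingale-convergence identification $\dKL(\mathbb{P}|_{\mathcal{F}_{\tau_n}}\|\mathbb{Q}|_{\mathcal{F}_{\tau_n}}) \uparrow \dKL(\mathbb{P}\|\mathbb{Q})$ (valid because $\mathcal{F}_{\tau_n}\uparrow\mathcal{F}_T$ and the densities $\diff\mathbb{P}|_{\mathcal{F}_{\tau_n}}/\diff\mathbb{Q}|_{\mathcal{F}_{\tau_n}}$ form a $\mathbb{Q}$-martingale) passes the equality through to the limit, which yields (\ref{eq:Gir}). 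The main technical obstacle I anticipate is precisely this limiting step: it requires verifying mutual absolute continuity of $\mathbb{P}$ and $\mathbb{Q}$ on $\mathcal{F}_T$ and justifying the monotone identification of truncated KL divergences, which in turn demands mild integrability on $a,b$ beyond the bare SDE statement. For the diffusion-model applications in this paper the drifts are bounded continuous score functions, so these conditions are satisfied routinely and the precise hypotheses are those catalogued in the cited references.
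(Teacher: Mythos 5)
The paper does not prove this lemma; it states it as a known consequence of Girsanov's theorem and cites \cite{karatzas2014brownian,chen_sampling_2023,zhang_minimax_2024,oko2023diffusion}, so there is no paper proof to compare against. Your argument is the standard one used in those references: form the exponential local martingale from the drift mismatch, use Novikov to upgrade it to a true martingale, apply Girsanov to identify the change of measure with $\mathbb{Q}$ via weak uniqueness, compute $\dKL(\mathbb{P}\|\mathbb{Q}) = \bE_{\mathbb{P}}[-\log Z_T]$, and handle the inequality without Novikov by localization. The structure is sound, and the caveat you flag about the limiting step (absolute continuity and the monotone identification of truncated relative entropies along $\mathcal{F}_{\tau_n} \uparrow \mathcal{F}_T$, valid because the restricted densities form a $\mathbb{Q}$-martingale and $\tau_n \to T$ a.s.\ whenever the right-hand side is finite) is the right one.

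There is one genuine gap you should not wave past: the constant. With your normalization $\theta_s = (a(X_s,s)-b(X_s,s))/\sqrt{2}$ you have $\|\theta_s\|^2 = \tfrac12\|a(X_s,s)-b(X_s,s)\|^2$, so the Novikov-case computation gives
\[
\dKL(\mathbb{P}\|\mathbb{Q}) = \tfrac12\,\bE_{\mathbb{P}}\!\int_0^T \|\theta_s\|^2\,\diff s = \tfrac14\int_0^T\!\!\int_{\bR^d} p(x,t)\,\|a(x,t)-b(x,t)\|^2\,\diff x\,\diff t,
\]
not $\tfrac12$ as in (\ref{eq:Gir}). Your final sentence asserts the computation ``recasts into the space-time integral appearing in (\ref{eq:Gir}),'' which implicitly claims equality with the $\tfrac12$ coefficient; that is off by a factor of $2$ for the stated $\sqrt{2}$ diffusion. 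The inequality in (\ref{eq:Gir}) is of course still true (trivially, since $\tfrac14 \le \tfrac12$), but the equality case under Novikov would need $\tfrac14$. This is really an imprecision in how the lemma is stated in the paper — it is harmless for the paper's purposes since everything downstream is only used up to absolute constants and the actual reverse SDE in (\ref{eq:backward}) has diffusion coefficient $1$ rather than $\sqrt{2}$, in which case $\tfrac12$ is the correct constant — but a careful proof should derive the $\tfrac{1}{2\sigma^2}$ prefactor explicitly rather than asserting agreement with the displayed formula.
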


\begin{lemma}
\label{lem:cost}
For two atomless random variables $X$ and $Y$, say with densities $p$ and $q$ respectively, it holds that
\begin{equation}
\label{eq:de1}
\mathrm{W_1}(X,Y) \le \int_\R |x-a| |p(x)-q(x)| \,\diff x,
\end{equation}
for any $a \in \R$.
\end{lemma}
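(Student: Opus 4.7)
The plan is to appeal to the Kantorovich--Rubinstein dual representation of the Wasserstein-1 distance, namely
\[
\mathrm{W_1}(X,Y) = \sup_{\|f\|_{\mathrm{Lip}} \le 1} \bigl(\E f(X) - \E f(Y)\bigr).
\]
Since densities exist, I would then rewrite $\E f(X) - \E f(Y) = \int_{\R} f(x)(p(x) - q(x)) \, dx$ for any bounded continuous $f$.

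The key trick is that the dual functional is unchanged when we add a constant to $f$: replacing $f$ by $f - f(a)$ yields a 1-Lipschitz function with $(f-f(a))(a) = 0$, so that the pointwise inequality $|f(x) - f(a)| \le |x - a|$ holds by the Lipschitz property. Applying this to the integral representation above gives
\[
\E f(X) - \E f(Y) = \int_\R (f(x) - f(a))(p(x) - q(x)) \, dx \le \int_\R |x - a|\,|p(x) - q(x)| \, dx,
\]
where in the last step I bound the integrand in absolute value. Since $a$ is arbitrary and the right-hand side does not depend on $f$, taking the supremum over 1-Lipschitz $f$ on the left and invoking the dual representation finishes the argument.

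There is no real obstacle: the proof is a one-line consequence of the dual formulation once the shift $f \mapsto f - f(a)$ is performed. The only technical point worth remarking on is that the atomless hypothesis ensures $p$ and $q$ exist and that the dual representation is valid with the supremum attained (or at least approximated) by bounded 1-Lipschitz functions so that all integrals converge; for the upper bound this is a standard approximation argument and does not require any additional machinery.
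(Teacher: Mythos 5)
Your proof is correct, but it takes a genuinely different route from the paper's. You invoke the Kantorovich--Rubinstein duality
\[
\mathrm{W_1}(X,Y) = \sup_{\|f\|_{\mathrm{Lip}} \le 1}\bigl(\E f(X) - \E f(Y)\bigr),
\]
shift $f$ by the constant $f(a)$ (valid because $\int(p-q)=0$), and bound $|f(x)-f(a)|\le|x-a|$. The paper instead argues on the primal side: it builds an explicit coupling of $X$ and $Y$ --- the maximal (TV) coupling, in which $X=Y$ with probability $1-\dTV(X,Y)$ and otherwise $(X,Y)$ has a product-form joint density supported on $B_p\times B_q$ --- and bounds the transport cost using the triangle inequality $|x-y|\le|x-a|+|y-a|$. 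Your dual argument is shorter and dispenses with the construction of a coupling, which is an aesthetic gain; the paper's coupling argument is more elementary in the sense that it does not need the duality theorem and makes the transport plan concrete. Both are standard ways to upper-bound a Wasserstein distance, and both yield the same bound. A small point worth being explicit about in your version is that the shift by $f(a)$ is permitted precisely because $p-q$ integrates to zero, and that the duality requires finite first moments (automatic here since both variables are compactly supported in the paper's application, and in any case the inequality is trivial when the right-hand side is infinite).
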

\begin{proof}
Inequality (\ref{eq:de1}) can be understood by the following intuition. For $x \in \R$, if \(p(x) > q(x)\), then we can transport mass $p(x)-q(x)$ from $x$ to $a$. If $p(x) < q(x)$, we transport $q(x)-p(x)$ from $a$ to $x$. The transport distance is $|x-a|$. This transport successfully transports $X$ to $Y$, with a cost no more than $\int_\R |x-a| |p(x)-q(x)| \diff x$.

To formalize this intuition, consider the classical coupling of $X$ and $Y$, which guarantees that $\mathbb{P}(X \neq Y) = \dTV(X,Y)$. Define $\epsilon = \dTV(X,Y)$, and let $B_p = \{x : p(x) > q(x)\}$ and $B_q = \{x : q(x) > p(x)\}$ \cite{villani2009optimal}.
With probability $1-\epsilon$, let $X=Y$ with density $(p(x) \wedge q(x))/(1-\epsilon)$. With probability $\epsilon$, let the joint density of $(X,Y)$ be $f(x,y) = \epsilon^{-1}(p(x)-q(x))(q(y)-p(y))\mathbbm{1}_{\{x \in B_p\}}\mathbbm{1}_{\{y \in B_q\}}$. The transport cost of this coupling is
\begin{align*}
\int_{\R^2} |x-y| f(x,y) \diff x \diff y &= \frac{1}{\epsilon} \int_{x \in B_p, y \in B_q} |x-y| (p(x)-q(x))(q(y)-p(y)) \diff x \diff y \\
&\le \frac{1}{\epsilon} \int_{x \in B_p, y \in B_q} (|x-a| + |y-a|) (p(x)-q(x))(q(y)-p(y)) \diff x \diff y \\
&= \int_{B_p} |x-a| (p(x)-q(x)) \diff x + \int_{B_q} |y-a| (q(y)-p(y)) \diff y \\
&= \int_\R |x-a| |p(x)-q(x)| \diff x.
\end{align*}
\end{proof}

\begin{lemma}
\label{lem:KL_chisq}
For two random vectors $X$ and $Y$ with densities $p$ and $q$ respectively, it holds that
\begin{equation}
\dKL(p \| q) \gtrsim \chi^2 \left(p \left|\left| \frac{p+q}{2} \right.\right.\right).
\end{equation}
\end{lemma}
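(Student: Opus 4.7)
Write $m = (p+q)/2$. The strategy is to bound $\chi^2(p \,||\, m)$ by $\dKL(p \,||\, m)$ using the fact that the likelihood ratio $p/m$ is pointwise bounded, and then to bound $\dKL(p \,||\, m)$ by $\dKL(p \,||\, q)$ using joint convexity of the KL divergence.

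\textbf{Step 1: Bounded likelihood ratio and the $\chi^2$-vs-KL comparison.} Since $m = (p+q)/2 \ge p/2$ pointwise, the ratio $r(x) := p(x)/m(x)$ lies in $[0, 2]$ for every $x$ (ignoring a null set where $m=0$). On this bounded interval one has the elementary inequality
\begin{equation*}
(r-1)^2 \le C_0 \, (r \log r - r + 1) \qquad \text{for all } r \in [0,2],
\end{equation*}
for some absolute constant $C_0 > 0$ (e.g.\ $C_0 = (2\log 2 - 1)^{-1}$ works). This is routine: both sides and their first derivatives vanish at $r=1$, so the ratio extends continuously, and on the compact interval $[0,2]$ it is bounded. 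Integrating against $m$ and using $\int p = \int m = 1$ gives
\begin{equation*}
\chi^2(p \,||\, m) = \int m\,(r-1)^2\, dx \le C_0 \int m\,(r\log r - r + 1)\,dx = C_0\,\dKL(p \,||\, m).
\end{equation*}

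\textbf{Step 2: Relating $\dKL(p \,||\, m)$ to $\dKL(p \,||\, q)$.} By joint convexity of KL divergence in its second argument,
\begin{equation*}
\dKL(p \,||\, m) = \dKL\!\left(p \,\left|\right|\, \tfrac{1}{2}p + \tfrac{1}{2}q\right) \le \tfrac{1}{2}\dKL(p \,||\, p) + \tfrac{1}{2}\dKL(p \,||\, q) = \tfrac{1}{2}\dKL(p \,||\, q).
\end{equation*}
Combining Steps 1 and 2 gives $\chi^2(p \,||\, m) \le (C_0/2)\,\dKL(p \,||\, q)$, which is the claim.

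\textbf{Main obstacle.} Essentially none: once one observes $p/m \in [0,2]$, the proof reduces to the convexity bound and a one-variable calculus check of the scalar inequality $(r-1)^2 \le C_0(r\log r - r + 1)$. The only subtlety is handling points where $m(x) = 0$, which forces $p(x) = q(x) = 0$ and contributes nothing to either side.
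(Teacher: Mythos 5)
Your proof is correct and self-contained. The paper does not actually prove this lemma from scratch; it cites Corollary~2 of \cite{nishiyama2020relations}, which states the sharper inequality $\dKL(p\,\|\,q) \ge \frac{1-s}{s^2}\log\frac{1}{1-s}\,\chi^2(p\,\|\,(1-s)p+sq)$ for $s \in (0,1)$, and plugs in $s = 1/2$ to get the constant $2\log 2$. Your route is genuinely different: you observe that the likelihood ratio $r = p/m$ with $m = (p+q)/2$ is confined to $[0,2]$, establish the pointwise scalar bound $(r-1)^2 \le C_0 (r\log r - r + 1)$ on that compact interval (and indeed $C_0 = (2\log 2 - 1)^{-1}$ is the correct sharp constant: a one-variable monotonicity check shows the ratio $(r-1)^2/(r\log r - r + 1)$ increases on $[0,2]$, from $1$ at $r=0$ through $2$ at $r=1$ to $(2\log 2 - 1)^{-1}$ at $r=2$), integrate against $m$ to obtain $\chi^2(p\,\|\,m) \le C_0\,\dKL(p\,\|\,m)$, and then apply convexity of KL in its second argument to pass from $\dKL(p\,\|\,m)$ to $\tfrac12\dKL(p\,\|\,q)$. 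Your constant $C_0/2 = (4\log 2 - 2)^{-1} \approx 1.29$ is slightly weaker than the cited $(2\log 2)^{-1} \approx 0.72$ — the convexity step is not tight — but both deliver the same $\gtrsim$ conclusion. The advantage of your approach is that the paper's reliance on an external citation is replaced by an elementary, transparent, and fully verifiable two-step argument; the advantage of the cited inequality is the sharper constant for general $s$.
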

This lemma follows from a more general form presented in Corollary 2 of \cite{nishiyama2020relations},
\begin{equation}
\dKL(p \| q) \ge \frac{1-s}{s^2} \log\left(\frac{1}{1-s}\right) \chi^2 \left(p \| (1-s)p + sq \right),
\end{equation}
for $s \in (0,1)$, where we obtain the lemma by plugging in $s=1/2$.

\begin{lemma}
\label{lem:Trans_DIST}
For two random variables $X$ and $Y$, if $\mathrm{spt}(Y) \subset [-C,C]$ where $C$ is a constant, it holds that
\begin{equation}
\E[Y^2] \lesssim \E[X^2] + \dKL(X \| Y).
\end{equation}
\end{lemma}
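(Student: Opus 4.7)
The plan is to invoke the Donsker-Varadhan variational representation of Kullback-Leibler divergence, namely
\[
\dKL(X\|Y) \ge \E[f(X)] - \log \E[e^{f(Y)}]
\]
for any measurable $f$, and to apply it with a quadratic test function $f(x) = -\lambda x^2$ for a suitably chosen small $\lambda > 0$. This choice is natural because it converts both sides of the variational inequality into expressions involving second moments: the left-hand side immediately produces $-\lambda \E[X^2]$, and compact support of $Y$ will let us convert the log-moment-generating term on the right into a linear function of $\E[Y^2]$.

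With $f(x) = -\lambda x^2$, the Donsker-Varadhan inequality rearranges to
\[
\log \E[e^{-\lambda Y^2}] \ge -\lambda \E[X^2] - \dKL(X\|Y).
\]
Choosing $\lambda = 1/C^2$ ensures $\lambda Y^2 \le 1$ almost surely, at which point the elementary inequalities $e^{-x} \le 1 - x/2$ on $[0,1]$ and $\log(1-z) \le -z$ for $z < 1$ combine to give $\log \E[e^{-\lambda Y^2}] \le -\tfrac{\lambda}{2}\E[Y^2]$. Chaining this with the previous display and rearranging produces
\[
\E[Y^2] \le 2\E[X^2] + 2C^2\, \dKL(X\|Y),
\]
which is the claimed inequality with the constant $C$ absorbed into $\lesssim$.

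The only place the assumption $\mathrm{spt}(Y) \subset [-C,C]$ is used is to bound $\lambda Y^2$ uniformly so that $e^{-\lambda Y^2}$ can be linearized; without it, there is no way to push the log-MGF down to a linear expression in $\E[Y^2]$. This is the only real subtlety in the proof: selecting the quadratic test function and scaling it by the reciprocal of the diameter $C^2$. After these choices are made, the rest is routine rearrangement, so I do not expect any serious technical obstacle.
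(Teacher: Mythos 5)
Your proof is correct and takes a genuinely different, and in fact cleaner, route than the paper.

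You invoke the Donsker--Varadhan variational formula with the quadratic test function $f(x) = -\lambda x^2$, $\lambda = 1/C^2$. Since $e^{-\lambda Y^2}\le 1$ a.s., the moment generating term is finite and the formula applies; the linearization $e^{-u}\le 1 - u/2$ on $[0,1]$ (which one checks by noting $g(u)=1-u/2-e^{-u}$ satisfies $g(0)=0$, increases on $[0,\ln 2]$, decreases afterward, with $g(1)>0$) followed by $\log(1-z)\le -z$ gives $\log\E[e^{-\lambda Y^2}]\le -\tfrac{\lambda}{2}\E[Y^2]$. Sandwiching against the Donsker--Varadhan lower bound $\log\E[e^{-\lambda Y^2}]\ge -\lambda\E[X^2]-\dKL(X\|Y)$ and rearranging yields $\E[Y^2]\le 2\E[X^2]+2C^2\,\dKL(X\|Y)$, as you claim.

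The paper instead first perturbs $X$ and $Y$ by independent Gaussian noise $\sqrt{\epsilon}Z$ to guarantee densities exist, applies Lemma~\ref{lem:KL_chisq} ($\dKL(p\|q)\gtrsim\chi^2(p\|(p+q)/2)$) to obtain $\int_{q/p\ge 5}q \lesssim \dKL$, and then splits $\E[Y_\epsilon^2]$ into three regions (far tail, bounded likelihood ratio, and large likelihood ratio) before sending $\epsilon\to 0$. Your variational argument dispenses with the perturbation, the chi-square comparison, the dominated convergence step, and the case analysis, all in one stroke, while giving an explicit constant. The paper's density-level argument is arguably more elementary in that it uses nothing beyond the $f$-divergence comparison, but your approach is shorter and does not need $X$ and $Y$ to be atomless or to have densities. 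Both are valid; yours is the cleaner proof.
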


\begin{proof}
It is easy to consider random variables which have densities. Therefore, we perturb $X$ and $Y$ a bit; let $X_\epsilon = X + \sqrt{\epsilon} Z$ and $Y_\epsilon = Y + \sqrt{\epsilon} Z'$, where $(Z,Z') \sim N(0,I_2)$ is independent of $(X,Y)$. Let $p$ denote the density of $X_\epsilon$ and $q$ denote the density of $Y_\epsilon$. By the  DPI, we know $\dKL(X_\epsilon \| Y_\epsilon) \le \dKL(X \| Y)$. Therefore, it holds that
\[\dKL(X \| Y) \ge \dKL(X_\epsilon \| Y_\epsilon)\gtrsim \chi^2 \left(p \left|\left| \frac{p+q}{2} \right.\right. \right) \gtrsim \int_\R \frac{(p(x)-q(x))^2}{p(x)+q(x)} \diff x,\]
where we use Lemma~\ref{lem:KL_chisq}. We know that $(x-1)^2/(x+1) \ge x/2$ when $x \ge 5$, which implies
\begin{align*}
\int_{\R} \frac{(p(x)-q(x))^2}{p(x)+q(x)} \,\diff x &= \int_\R p(x) \frac{(1-q(x)/p(x))^2}{1+q(x)/p(x)} \,\diff x \ge \int_{q/p \ge 5} p(x) \frac{(1-q(x)/p(x))^2}{1+q(x)/p(x)} \,\diff x \\
&\ge \int_{q/p \ge 5} p(x) \frac{q(x)}{2p(x)} \,\diff x
\gtrsim \int_{q/p \ge 5} q(x) \,\diff x.
\end{align*}
Hence, we have shown 
\begin{equation}\label{eqn:large_qp}
    \int_{q/p \ge 5} q(x) \diff x \lesssim \dKL(X \| Y).
\end{equation}
This relation will be useful later on. Let us now turn to examining the second moment. 
\begin{align*}
\E[Y_\epsilon^2] &= \int_\R y^2 q(y) \diff y \\
&= \underbrace{\int_{|y| \ge 1+C} y^2 q(y) \diff y}_{\displaystyle \mathfrak{A}} + \underbrace{\int_{|y| < 1+C, q/p < 5} y^2 q(y) \diff y}_{\displaystyle \mathfrak{B}} + \underbrace{\int_{|y| < 1+C, q/p \ge 5} y^2 q(y) \diff y}_{\displaystyle \mathfrak{C}}.
\end{align*}

First, consider $\mathfrak{A} = \E \left [Y_\epsilon \mathbbm{1}_{\{|Y_\epsilon| \ge 1+C\}} \right]$, which goes to zero as \(\epsilon \to 0\) by the dominated convergence theorem ($\lim_{\epsilon \to 0}Y_\epsilon \mathbbm{1}_{\{|Y_\epsilon| \ge 1+C\}} = 0$ and $|Y_\epsilon \mathbbm{1}_{\{|Y_\epsilon| \ge 1+C\}}| \le C + \sqrt{\epsilon} |Z'|$). Next, consider $\mathfrak{B} \lesssim \int_{\R} y^2 p(y) \diff y = \E[X_\epsilon^2]$, since in the region of $\mathfrak{B}$ we have $q/p < 5$. Finally, consider $\mathfrak{C} \le (C+1)^2 \int_{q/p \ge 5} q(x) \diff x \lesssim \dKL(X \| Y)$ where we have used (\ref{eqn:large_qp}) and that $C$ is a constant. Taking $\epsilon \to 0$ finishes the proof.
\end{proof}

\begin{lemma}
\label{lem:W1_bound}
For two random vectors $X$ and $Y$ supported on $\mathbb{R}^2$, if the marginal distributions of $X_1$ and $Y_1$ are the same, it holds that
\begin{equation}
\mathrm{W_1}(X_2, Y_2)^2 \lesssim \left( \E\left[(X_1 - X_2)^2\right] \vee \E\left[(Y_1 - Y_2)^2\right] \right) \dKL(X \| Y).
\end{equation}
\end{lemma}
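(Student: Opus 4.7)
The plan is to exploit the marginal equality $X_1 \stackrel{d}{=} Y_1$ by building a coupling of $(X,Y)$ with $X_1 = Y_1$ almost surely and then combining the Cost Lemma (Lemma \ref{lem:cost}) with the chain rule of KL divergence. Concretely, I would couple $X_1$ and $Y_1$ via the identity (possible because they share a marginal law $p_1 = q_1$) to get a random variable $U := X_1 = Y_1$, and then conditional on $U = u$, draw $X_2 \sim p_{2|1}(\cdot \mid u)$ and $Y_2 \sim q_{2|1}(\cdot \mid u)$ from the optimal one-dimensional $W_1$ coupling. This gives the disintegration bound
\[
W_1(X_2, Y_2) \;\le\; \int p_1(u)\, W_1\!\bigl(X_2 \mid X_1 = u,\; Y_2 \mid Y_1 = u\bigr)\,du.
\]

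For the inner one-dimensional distance, I would apply Lemma \ref{lem:cost} with the transport anchor $a = u$ (which is a fixed constant for each slice), yielding
\[
W_1(X_2 \mid u, Y_2 \mid u) \;\le\; \int |x_2 - u|\,\bigl|p_{2|1}(x_2|u) - q_{2|1}(x_2|u)\bigr|\,dx_2.
\]
Splitting the integrand as $|x_2 - u|\sqrt{p_{2|1} + q_{2|1}} \cdot |p_{2|1} - q_{2|1}|/\sqrt{p_{2|1} + q_{2|1}}$ and applying Cauchy-Schwarz, the first factor squared integrates to $\E[(X_2-u)^2 \mid X_1 = u] + \E[(Y_2-u)^2 \mid Y_1 = u]$, while the second factor squared equals $2\chi^2\!\bigl(p_{2|1} \,\|\, \tfrac12(p_{2|1}+q_{2|1})\bigr)$, which is $\lesssim \dKL(p_{2|1}(\cdot|u) \,\|\, q_{2|1}(\cdot|u))$ by Lemma \ref{lem:KL_chisq}.

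Finally, I would integrate over $u$ and use Cauchy-Schwarz once more, writing $W_1(X_2,Y_2)^2 \le \int p_1(u) A(u)\,du \cdot \int p_1(u) B(u)\,du$ where $A(u)$ is the conditional second-moment sum and $B(u)$ the conditional KL. The $A$-integral telescopes to $\E[(X_1-X_2)^2] + \E[(Y_1-Y_2)^2]$ (using $p_1 = q_1$ for the $Y$-term), and the $B$-integral equals $\dKL(X\|Y)$ exactly by the chain rule of KL divergence since $\dKL(X_1\|Y_1) = 0$. Combining gives
\[
W_1(X_2,Y_2)^2 \;\lesssim\; \bigl(\E[(X_1-X_2)^2] + \E[(Y_1-Y_2)^2]\bigr)\,\dKL(X\|Y),
\]
which is the claim up to replacing the sum by the max.

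The main technical subtlety is the existence of regular conditional distributions and the atomless hypothesis required by Lemma \ref{lem:cost}; I would handle this exactly as in the proof of Lemma \ref{lem:Trans_DIST} by adding independent Gaussian perturbations $\sqrt{\epsilon}Z, \sqrt{\epsilon}Z'$ to make densities smooth, applying the argument, and then sending $\epsilon \downarrow 0$ while noting that the data-processing inequality keeps the KL term under control and the second moments are continuous in $\epsilon$.
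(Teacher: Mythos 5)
Your proof is correct and uses the same key ingredients as the paper: mollification by independent Gaussian perturbations, disintegration over the common first marginal, Lemma \ref{lem:cost} with the anchor $a = x_1$, Cauchy--Schwarz against the mixture $(p+q)/2$, and Lemma \ref{lem:KL_chisq}. The only cosmetic difference is that you apply Cauchy--Schwarz in two stages (conditional then marginal) and invoke the KL chain rule to reassemble $\dKL(X\|Y)$, whereas the paper first uses $p_1 = q_1$ to collapse $p_1(x_1)\,|p(x_2|x_1)-q(x_2|x_1)|$ into $|p(x_1,x_2)-q(x_1,x_2)|$ and applies Cauchy--Schwarz once to the joint integral; since $p_1 = q_1$ implies $\chi^2\!\bigl(p \,\|\, \tfrac{p+q}{2}\bigr) = \int p_1(u)\,\chi^2\!\bigl(p_{2|1}(\cdot|u) \,\|\, \tfrac{p_{2|1}+q_{2|1}}{2}(\cdot|u)\bigr)\,du$, the two routes give the identical bound.
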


\begin{proof}
To simplify the argument, we define $X_\epsilon = X + \sqrt{\epsilon} Z$ and $Y_\epsilon = Y + \sqrt{\epsilon} Z'$, where $Z \sim N(0,_2)$, $Z' \sim N(0,_2)$, and ($(X,Y)$, $Z$,  $Z'$) are mutually independent. Denote the probability density function of $X_\epsilon$ by $p$, with marginal densities $p_1$ and $p_2$. Likewise, denote the probability density function of $Y_\epsilon$ by $q$, with marginal densities $q_1$ and $q_2$, where $p_1 = q_1$. It holds that
\begin{align*}
\mathrm{W_1}((X_\epsilon)_2,(Y_\epsilon)_2) &= \mathrm{W_1}\left(\int p(x_1,x_2) \diff x_1, \int q(x_1,x_2) \diff x_1\right) \\
&= \mathrm{W_1}\left(\int p(\cdot|x_1) p_1(x_1) \diff x_1, \int q(\cdot|x_1) p_1(x_1) \diff x_1\right) \\
&\le \int \mathrm{W_1}(p(\cdot|x_1), q(\cdot|x_1)) p_1(x_1) \diff x_1,
\end{align*}
where we use the convexity of $\mathrm{W_1}$. By Lemma~\ref{lem:cost}, we have $\mathrm{W_1}(p(\cdot|x_1), q(\cdot|x_1)) \le \int_\mathbb{R} |x_1 - x_2| |p(x_2|x_1) - q(x_2|x_1)| \diff x_2$, which, together with the above inequality, implies
\begin{align*}
\mathrm{W_1}((X_\epsilon)_2, (Y_\epsilon)_2) &\le \int_{\mathbb{R}^2} |x_1 - x_2| |p(x_1, x_2) - q(x_1, x_2)| \diff x_1 \diff x_2 \\
&= \int_{\mathbb{R}^2} |x_1 - x_2| \frac{|p(x_1, x_2) - q(x_1, x_2)|}{p(x_1, x_2) + q(x_1, x_2)} (p(x_1, x_2) + q(x_1, x_2)) \diff x_1 \diff x_2 \\
&\lesssim \sqrt{\int_{\mathbb{R}^2} |x_1 - x_2|^2 (p(x_1, x_2) + q(x_1, x_2)) \diff x_1 \diff x_2} \cdot \sqrt{\chi^2 \left(p \left|\left| \frac{p+q}{2}\right.\right.\right)},
\end{align*}
which is implied by the Cauchy-Schwarz inequality. We know that the first term is bounded by $\E\left((X_\epsilon)_1 - (X_\epsilon)_2)^2\right) \vee \E\left((Y_\epsilon)_1 - (Y_\epsilon)_2)^2\right)$, and the second term is bounded by $\dKL(p \| q)$ up to a constant by Lemma~\ref{lem:KL_chisq}. Applying the DPI, we know that $\dKL(p \| q) = \dKL(X_\epsilon \| Y_\epsilon) \le \dKL(X \| Y)$. Taking $\epsilon \to 0$ finishes the proof.
\end{proof}

\begin{lemma}
    Given the estimator \(\widehat{Y}^t_T = Y^t_T \mathbbm{1}_{\left\{| Y^t_T| \le 1\right\}}\), where \(\vv{Y}^t\) is defined in Section~\ref{sec:PFWASS}, for \(t_1 < t_2\) we have
    \begin{equation}
    \label{eq:T}
    \mathrm{W_1}(\widehat{Y}^{t_1}_T, \widehat{Y}^{t_2}_T) \lesssim \sqrt{t_2 + \dKL(\vv{Y}||\vv{Y}^{t_2})} \cdot \sqrt{\dKL(\vv{Y}^{t_1}||\vv{Y}^{t_2})}.
    \end{equation}
\end{lemma}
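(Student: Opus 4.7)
The plan is to apply Lemma~\ref{lem:W1_bound} to the random vectors $X := (Y^{t_1}_{T-t_2}, \widehat{Y}^{t_1}_T)$ and $Y := (Y^{t_2}_{T-t_2}, \widehat{Y}^{t_2}_T)$, so that $\mathrm{W_1}(X_2, Y_2)$ is precisely the quantity on the left-hand side. The marginal-matching hypothesis of Lemma~\ref{lem:W1_bound} will be satisfied because both $\vv{Y}^{t_1}$ and $\vv{Y}^{t_2}$ are initialized at $p(\cdot, T)$ and evolve under the true drift $s$ on the common interval $[0, T-t_2]$ (using that $t_1 < t_2$); hence $Y^{t_1}_{T-t_2}$ and $Y^{t_2}_{T-t_2}$ each share the law $p(\cdot, t_2)$ of $Y_{T-t_2}$ under the true reverse process $\vv{Y}$. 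The Kullback--Leibler factor in Lemma~\ref{lem:W1_bound} is then controlled by the data processing inequality applied to the evaluation map that extracts these two coordinates from a path, giving $\dKL(X \| Y) \le \dKL(\vv{Y}^{t_1} \| \vv{Y}^{t_2})$, which is the second factor on the right-hand side.

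The core step will be to establish the transport-distance bound $\E[(X_1 - X_2)^2] \vee \E[(Y_1 - Y_2)^2] \lesssim t_2 + \dKL(\vv{Y}\|\vv{Y}^{t_2})$, i.e., $\E_{\vv{Y}^{t_i}}[(Y^{t_i}_{T-t_2} - \widehat{Y}^{t_i}_T)^2] \lesssim t_2 + \dKL(\vv{Y}\|\vv{Y}^{t_2})$ for each $i \in \{1,2\}$. The bridge will be the true reverse process $\vv{Y}$: because $Y_T \sim f$ is supported in $[-1,1]$ the truncation is vacuous ($\widehat{Y}_T = Y_T$), and $Y_{T-t_2} - Y_T$ has the exact law $\sqrt{t_2}\,\mathcal{N}(0,1)$, so $\E_{\vv{Y}}[(Y_{T-t_2} - Y_T)^2] = t_2$. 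I will transfer this second-moment bound from $\vv{Y}$ to $\vv{Y}^{t_i}$ via Lemma~\ref{lem:Trans_DIST} applied to the univariate difference $Y^{t_i}_{T-t_2} - \widehat{Y}^{t_i}_T$ with reference $Y_{T-t_2} - Y_T$; combined with the data processing inequality applied to the joint law, this yields an additive correction of the form $\dKL(\vv{Y}\|\vv{Y}^{t_i})$. Monotonicity then closes the argument: by Girsanov's identity (Lemma~\ref{lemma:Girsanov}), $\vv{Y}$ and $\vv{Y}^{t_i}$ differ in drift only on the sub-interval $(T-t_i, T] \subseteq (T-t_2, T]$, so $\dKL(\vv{Y}\|\vv{Y}^{t_i}) \le \dKL(\vv{Y}\|\vv{Y}^{t_2})$ for both $i$.

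The main obstacle will be the application of Lemma~\ref{lem:Trans_DIST} in this transfer step, since its stated form requires the target random variable to have bounded support, whereas $Y^{t_i}_{T-t_2} - \widehat{Y}^{t_i}_T$ combines the bounded coordinate $\widehat{Y}^{t_i}_T \in [-1,1]$ with the unbounded coordinate $Y^{t_i}_{T-t_2} \sim p(\cdot, t_2)$. The remedy exploits the fact that $p(\cdot, t_2) = f * \varphi_{t_2}$ inherits Gaussian tails with subgaussian scale $\sqrt{t_2}$ (since $f$ is compactly supported): following the structure of the proof of Lemma~\ref{lem:Trans_DIST}, the tail term $\mathfrak{A} = \E[Y^2 \mathbbm{1}_{|Y| \ge 1+C}]$ need not be forced to zero and can instead be bounded by $e^{-(C-1)^2/(4t_2)}$ via a Gaussian tail estimate, which is dominated by $t_2$ once $t_2 \le 1$; the bulk terms $\mathfrak{B}$ and $\mathfrak{C}$ are handled exactly as in the original proof and contribute $\E_{\vv Y}[(Y_{T-t_2}-Y_T)^2]+\dKL(\vv Y\|\vv Y^{t_i})$. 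Substituting the resulting $L^2$ and $\dKL$ bounds back into Lemma~\ref{lem:W1_bound} and taking square roots then yields the stated inequality.
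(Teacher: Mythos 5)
Your blueprint coincides with the paper's at every structural step: apply Lemma~\ref{lem:W1_bound} to the coupled pairs \((Y^{t_i}_{T-t_2}, \widehat{Y}^{t_i}_T)\); match the first marginals via the shared true drift on \([0, T-t_2]\); bound the KL factor by \(\dKL(\vv{Y}^{t_1}\|\vv{Y}^{t_2})\) via the data processing inequality; use the true reverse process as the bridge, with \(\E[(Y_{T-t_2}-Y_T)^2]=t_2\); and invoke the monotonicity \(\dKL(\vv{Y}\|\vv{Y}^{t_1})\le\dKL(\vv{Y}\|\vv{Y}^{t_2})\), which indeed holds and can in fact be proved without Novikov by the KL chain rule, since \(\vv{Y}^{t_1}\) and \(\vv{Y}^{t_2}\) share the same \(\widehat{s}\)-driven conditional kernel on \((T-t_1,T]\) while \(\vv{Y}\) and \(\vv{Y}^{t_1}\) agree on \([0,T-t_1]\). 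The one point where you diverge is the handling of the unbounded intermediate coordinate: you propose to re-prove a subgaussian-tail variant of Lemma~\ref{lem:Trans_DIST} so that the term \(\mathfrak{A}\) picks up an \(e^{-c/t_2}\lesssim t_2\) contribution instead of vanishing. That is workable, but the paper sidesteps it by simply truncating \(Y^{t_i}_{T-t_2}\) to \([-2,2]\), defining \(G_i := Y^{t_i}_{T-t_2}\mathbbm{1}_{\{|Y^{t_i}_{T-t_2}|\le 2\}}\), so that both \(G_i-\widehat{Y}^{t_i}_T\) and its reference \(Y_{T-t_2}\mathbbm{1}_{\{|Y_{T-t_2}|\le 2\}}-Y_T\) live in \([-3,3]\) and Lemma~\ref{lem:Trans_DIST} applies verbatim; DPI still controls the KL (truncation is deterministic), and \(|Y_{T-t_2}\mathbbm{1}_{\{|Y_{T-t_2}|\le 2\}}-Y_T|\le|Y_{T-t_2}-Y_T|\) preserves the \(t_2\) second-moment bound. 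Your route avoids introducing the auxiliary truncated variables but asks you to redo the tail estimate inside the proof of Lemma~\ref{lem:Trans_DIST} (including checking it survives the \(\epsilon\to 0\) smoothing and that the \(\mathfrak{C}\)-term split at \(|y|<1+C\) is still the right decomposition) for no improvement in the final bound; the paper's truncation is the more economical move and I would adopt it.
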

\begin{proof}
We want to bound $\mathrm{W_1}(\widehat{Y}^{t_1}_T, \widehat{Y}^{t_2}_T)$ for $t_1<t_2$. Firstly, we know that $\mathrm{Law}(Y^{t_1}_{T-t_2})=\mathrm{Law}(Y^{t_2}_{T-t_2})=\mathrm{Law}(X_{t_2})$ by construction, where $\vv{X}$ solves (\ref{eq:forward}). We shall find that the transport distance between $\widehat{Y}^{t_1}_T$ ($\widehat{Y}^{t_2}_T$) and $Y^{t_1}_{T-t_2}$ ($Y^{t_2}_{T-t_2}$) is small. For technical simplicity, let us also truncate $Y^{t_1}_{T-t_2}$ and $Y^{t_2}_{T-t_2}$. Let $G_1 = Y^{t_1}_{T-t_2}\mathbbm{1}_{\left\{|Y^{t_1}_{T-t_2}| \le 2\right\}}$ and $G_2 = Y^{t_2}_{T-t_2}\mathbbm{1}_{\left\{|Y^{t_2}_{T-t_2}| \le 2\right\}}$.  Firstly, let's show that $G_2$ and $\widehat{Y}^{t_2}_T$ are close. Let $\vv{Y}$ denote the true reverse process solving (\ref{eq:backward}). We have $Y_{T-t}=X_t$ and $Y_{T-t} - Y_T \sim \mathcal{N}(0,t)$. By Lemma~\ref{lemma:Girsanov}, we have 
\[
\dKL(\vv{Y}||\vv{Y}^{t_2}) \lesssim \int_{0}^{t_2} \int_{\mathbb{R}} \E(|s(x, t) - \widehat{s}(x, t)|^2) \, p(x, t) \, \diff x \, \diff t.
\]
And by the DPI, we know
\[
\dKL(Y_{T-t_2}\mathbbm{1}_{\left\{|Y_{T-t_2}| \le 2\right\}} - Y_T||Y^{t_2}_{T-t_2}\mathbbm{1}_{\left\{|Y^{t_2}_{T-t_2}| \le 2\right\}} - \widehat{Y}^{t_2}_T) \le \dKL(\vv{Y}||\vv{Y}^{t_2}),
\]
noting that $Y_T$ is supported on $[-1,1]$, so its truncation is itself. Let $U = Y_{T-t_2}\mathbbm{1}_{\left\{|Y_{T-t_2}| \le 2\right\}} - Y_T$ and $V = Y^{t_2}_{T-t_2}\mathbbm{1}_{\left\{|Y^{t_2}_{T-t_2}| \le 2\right\}} - \widehat{Y}^{t_2}_T = G_2 - \widehat{Y}^{t_2}_T$. We know that $U$ is supported on $[-3,3]$. Therefore, by Lemma~\ref{lem:Trans_DIST}, we have
\begin{equation}
\label{eq:TD}
\E \left( V^2 \right) \lesssim \E \left( U^2 \right) + \dKL(\vv{Y}||\vv{Y}^{t_2}).
\end{equation}
We also know that $|Y_T| \le 1$, implying
\[
|U| = |Y_{T-t_2}\mathbbm{1}_{\left\{|Y_{T-t_2}| \le 2\right\}} - Y_T| \le |Y_{T-t_2} - Y_T|.
\]
Then, we know that $\E\left(U^2\right) \le \E\left( (Y_{T-t_2} - Y_T)^2 \right) = t_2$. Combining it with (\ref{eq:TD}), we get
\[
\E \left( V^2 \right) \lesssim t_2 + \dKL(\vv{Y}||\vv{Y}^{t_2}).
\]
Similarly, we can show
\[
\E \left( (G_1 - \widehat{Y}^{t_1}_T)^2 \right) \lesssim t_2 + \dKL(\vv{Y}||\vv{Y}^{t_2}).
\]
Now we have two pairs of random variables, $(G_1, \widehat{Y}^{t_1}_T)$ and $(G_2, \widehat{Y}^{t_2}_T)$. We know that $\mathrm{Law}(G_1) = \mathrm{Law}(G_2)$, and we also know that $G_1$ ($G_2$) is close to $Y^{t_1}_T$ ($Y^{t_2}_T$). Therefore, we can apply Lemma~\ref{lem:W1_bound} to get the result.
\end{proof}

\begin{lemma}
\label{lem:W1}
    For \(\alpha \ge 1\), we have
    \begin{align*}
    \E(\mathrm{W_1}(Y_T^{t_i}, Y_T^{t_{i+1}})) &\lesssim \sqrt{t_{i+1}} \cdot \sqrt{\int_{t_i}^{t_{i+1}} \int_{\mathbb{R}} \E(|s(x, t) - \widehat{s}(x, t)|^2) \, p(x,t) \, \diff x \, \diff t} \lesssim \mathrm{E}_i 
    \end{align*}
    where \(\mathrm{E}_i = n^{-\frac{\alpha-1}{2\alpha+1}} t_{i+1}\) when \(i \in [0, \lfloor \log_2(nt_*) \rfloor - 1]\) and \(\mathrm{E}_i = t_{i+1}^{\frac{1}{4}} n^{-\frac{1}{2}}\) when \(i \in [\lfloor \log_2(nt_*) \rfloor, N]\), and \(t_* = n^{-\frac{2}{2\alpha+1}}\).
\end{lemma}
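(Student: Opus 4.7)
The plan is to combine the $\mathrm{W_1}$-to-$\dKL$ bound from the immediately preceding lemma (cf.~\eqref{eq:T}) with Girsanov's theorem and the score-matching upper bounds from Theorem~\ref{thm:score_upperbound}. First, I would apply the preceding lemma with $a = t_i$, $b = t_{i+1}$ to obtain
\[
\mathrm{W_1}(\widehat{Y}^{t_i}_T, \widehat{Y}^{t_{i+1}}_T) \lesssim \sqrt{t_{i+1} + \dKL(\vv{Y}||\vv{Y}^{t_{i+1}})}\cdot \sqrt{\dKL(\vv{Y}^{t_i}||\vv{Y}^{t_{i+1}})}.
\]
Taking expectation over the sample randomness and applying Cauchy--Schwarz yields
\[
\E[\mathrm{W_1}(\widehat{Y}^{t_i}_T, \widehat{Y}^{t_{i+1}}_T)] \lesssim \sqrt{t_{i+1} + \E[\dKL(\vv{Y}||\vv{Y}^{t_{i+1}})]} \cdot \sqrt{\E[\dKL(\vv{Y}^{t_i}||\vv{Y}^{t_{i+1}})]}.
\]
Girsanov's theorem (Lemma~\ref{lemma:Girsanov}), applied on the subinterval where the drifts of the two processes differ, then gives
\[
\E[\dKL(\vv{Y}^{t_i}||\vv{Y}^{t_{i+1}})] \lesssim \int_{t_i}^{t_{i+1}}\!\!\int_{\R} \E|s(x,t)-\widehat{s}(x,t)|^2\, p(x,t)\, \rd x\, \rd t,
\]
and analogously $\E[\dKL(\vv{Y}||\vv{Y}^{t_{i+1}})] \lesssim \int_0^{t_{i+1}}\int_\R \E|s-\widehat{s}|^2 p\, \rd x\, \rd t$.

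The crucial auxiliary step is to upgrade this last estimate to $\E[\dKL(\vv{Y}||\vv{Y}^{t_{i+1}})] \lesssim t_{i+1}$, since only then does the first $\sqrt{\cdot}$-factor collapse to $\sqrt{t_{i+1}}$ as claimed. I would verify this by splitting the $t$-integral at $t_* := n^{-2/(2\alpha+1)}$ and plugging in the pointwise bounds of Theorem~\ref{thm:score_upperbound}: writing $R(t) := \int_\R \E|s-\widehat{s}|^2 p\, \rd x$, one has $R(t) \lesssim n^{-2(\alpha-1)/(2\alpha+1)} + t^{\alpha-1}$ for $t \le t_*$ and $R(t) \lesssim 1/(nt^{3/2})$ for $t \in [t_*, 1]$. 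A direct calculation of $\int_0^{t_{i+1}} R(t)\, \rd t$ reveals that for $\alpha \ge 1$ the bound $\lesssim t_{i+1}$ follows from $t_{i+1}^\alpha \le t_{i+1}$, $n^{-2(\alpha-1)/(2\alpha+1)} \le 1$, and the inequality $t_* \ge n^{-2\alpha/(2\alpha+1)}$ (equivalent to $\alpha \ge 1$). This proves the first inequality of the lemma.

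For the final bound $\lesssim E_i$, I would evaluate $\int_{t_i}^{t_{i+1}} R(t)\, \rd t$ in each regime. When $t_{i+1} \le t_*$, i.e.~$i \le \lfloor \log_2(nt_*)\rfloor - 1$, the estimate $t_{i+1}^{\alpha-1} \le t_*^{\alpha-1} \le n^{-2(\alpha-1)/(2\alpha+1)}$ gives $\int_{t_i}^{t_{i+1}} R\, \rd t \lesssim n^{-2(\alpha-1)/(2\alpha+1)} t_{i+1}$, whence $\E[\mathrm{W_1}] \lesssim \sqrt{t_{i+1}} \cdot \sqrt{n^{-2(\alpha-1)/(2\alpha+1)} t_{i+1}} = n^{-(\alpha-1)/(2\alpha+1)} t_{i+1}$. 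When $t_{i+1} > t_*$, i.e.~$i \ge \lfloor \log_2(nt_*)\rfloor$, the doubling $t_{i+1} = 2 t_i$ yields $\int_{t_i}^{t_{i+1}} (nt^{3/2})^{-1}\, \rd t \asymp 1/(n\sqrt{t_{i+1}})$, and so $\E[\mathrm{W_1}] \lesssim \sqrt{t_{i+1}}\cdot 1/\sqrt{n\sqrt{t_{i+1}}} = t_{i+1}^{1/4}/\sqrt{n}$. Both bounds match the claimed $E_i$.

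The main obstacle is the sharp upgrade from the naive estimate $\E[\dKL(\vv{Y}||\vv{Y}^{t_{i+1}})]\lesssim \sqrt{t_{i+1}}$ to $\lesssim t_{i+1}$; without this improvement, Cauchy--Schwarz would only produce a $t_{i+1}^{1/4}$ outer prefactor rather than the required $t_{i+1}^{1/2}$, and this gain is precisely where the smoothness assumption $\alpha \ge 1$ enters in an essential way.
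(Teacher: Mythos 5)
Your proposal is correct and follows essentially the same route as the paper: apply Lemma~\ref{lem:9}, bound the Kullback--Leibler divergences via Girsanov, verify $\E[\dKL(\vv{Y}||\vv{Y}^{t_{i+1}})] \lesssim t_{i+1}$, and then evaluate $\int_{t_i}^{t_{i+1}} R(t)\, \rd t$ regime by regime. The only minor difference is that the paper dispatches the "crucial auxiliary step" with the blunt observation that $\alpha \ge 1$ makes $R(t) \lesssim 1$ uniformly on $[0,T]$ (so $\int_0^{t_{i+1}} R\,\rd t \lesssim t_{i+1}$ immediately), whereas you perform a more elaborate split at $t_*$; both are valid, and the paper's shortcut is just simpler.
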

\begin{proof}
When $\alpha \ge 1$, by Theorem~\ref{thm:score_upperbound}, we know that 
\[
\int_{\mathbb{R}} \mathbb{E}(|s(x, t) - \widehat{s}(x, t)|^2) \, p(x,t) \, \mathrm{d}x \lesssim \frac{1}{nt^2} \wedge \frac{1}{n t^{3/2}} \wedge n^{-\frac{2(\alpha-1)}{2\alpha+1}} \lesssim 1.
\]
We can apply the above inequality to (\ref{eq:T}), getting
\bb
\E(\mathrm{W_1}(Y_T^{t_1}, Y_T^{t_2})) &\lesssim \sqrt{t_2 + \E \left(\dKL(\vv{Y}||\vv{Y}^{t_2}) \right)} \cdot \sqrt{\E\left(\dKL(\vv{Y}^{t_1}||\vv{Y}^{t_2})\right)} \\
&\lesssim \sqrt{t_2} \cdot \sqrt{\int_{t_1}^{t_2} \int_{\mathbb{R}} \E(|s(x, t) - \widehat{s}(x, t)|^2) \, p(x,t) \, \diff x \, \diff t},
\ee
where we use $\mathbb{E} \left(\dKL(\mathbf{Y} \| \mathbf{Y}^{t_2}) \right) = \int_0^{t_1} \int_{\mathbb{R}} \mathbb{E}(|s(x, t) - \widehat{s}(x, t)|^2) \, p(x,t) \, \mathrm{d}x \, \mathrm{d}t \lesssim t_2$ and the term \(\sqrt{t_2}\) arises from the transport distance (both $Y^{t_1}_T$ and $Y^{t_2}_T$ are close to $X_{t_2}$) and the remaining term illustrates the transport mass. We would like to obtain a bound for times \(t_0\) and \(t_{N+1}\), where $0 = t_0 < t_1 < \cdots < t_N \le t_{N+1} = 1$, and $t_i = \frac{2^i}{n}$ with $i = 1,\cdots,\lfloor \log_2(n) \rfloor=N$. It holds that
\bb
&\E(\mathrm{W_1}(Y^0_T, \widehat{Y}^T_T)) - \E(\mathrm{W_1}(\widehat{Y}^1_T, \widehat{Y}^T_T)) \le \sum_{i=0}^{N} \E(\mathrm{W_1}(\widehat{Y}^{t_i}_T, \widehat{Y}^{t_{i+1}}_T)) \\
&~~~~~~~~\lesssim \sum_{i=0}^{N} \sqrt{t_{i+1}}\cdot \sqrt{\int_{t_i}^{t_{i+1}} \int_{\mathbb{R}} \E(|s(x,t) - \widehat{s}(x, t)|^2) \, p(x, t) \, \diff x \, \diff t} \\
&~~~~~~~~= \left(\sum_{i=0}^{\lfloor\log_2(nt_*)\rfloor-1} + \sum_{i=\lfloor\log_2(nt_*)\rfloor}^{\log_2(n)-1}\right)\sqrt{t_{i+1}} \cdot \sqrt{\int_{t_i}^{t_{i+1}} \int_{\mathbb{R}} \E(|s(x,t) - \widehat{s}(x, t)|^2) \, p(x, t) \, \diff x \, \diff t} \\
&~~~~~~~~=: \mathrm{A} + \mathrm{B}.
\ee
For $i \le \lfloor\log_2(nt_*)\rfloor -1$ and \(t \in [t_i, t_{i+1}]\), we have $\int_{\mathbb{R}} \E(|s(x, t) - \widehat{s}(x, t)|^2) \, p(x, t) \, \diff x \le n^{-2(\alpha-1)/(2\alpha+1)}$, which implies
\bb
\mathrm{A} \lesssim \sum_{i=0}^{\lfloor\log_2(n t_{*})\rfloor-1} \sqrt{t_{i+1}} \cdot \sqrt{(t_{i+1} - t_i) n^{-\frac{2(\alpha-1)}{2\alpha+1}}} 
&\le n^{-\frac{\alpha-1}{2\alpha+1}}\sum_{i=0}^{\lfloor\log_2(nt_{*})\rfloor-1} t_{i+1} \lesssim n^{-\frac{\alpha-1}{2\alpha+1}} t_{\lfloor\log_2(nt_{*})\rfloor} \\
&\le n^{-\frac{\alpha-1}{2\alpha+1}} t_* = n^{-\frac{\alpha+1}{2\alpha+1}} \le n^{-\frac{1}{2}}.
\ee
For $i \ge \lfloor\log_2(nt_*)\rfloor$ and \(t \in [t_{i}, t_{i+1}]\), we have $\int_{\mathbb{R}} \E(|s(x, t) - \widehat{s}(x, t)|^2) \, p(x, t) \, \diff x \lesssim n^{-1}t^{-3/2}$, which implies 
\[
\mathrm{B} \lesssim \sum_{i=\lfloor\log_2(nt_*)\rfloor}^{N} \sqrt{t_{i+1}}\cdot \sqrt{(t_{i+1}-t_{i})n^{-1}t_{i+1}^{-3/2}} \le \sum_{i=\lfloor\log_2(nt_*)\rfloor}^{N} \frac{t_{i+1}^{\frac{1}{4}}}{\sqrt{n}} \lesssim \frac{1}{\sqrt{n}},
\]
yielding $\mathrm{A} + \mathrm{B} \lesssim n^{-1/2}$. Finally, we know that
\[
\E(\mathrm{W_1}(Y^0_T, \widehat{Y}^T_T))  \lesssim \frac{1}{\sqrt{n}}.
\]
To summarize, we have shown
\[
\E(\mathrm{W_1}(Y^0_T, \widehat{Y}^T_T))  \lesssim \frac{1}{\sqrt{n}},
\]
where $Y^0_T = X_0$ and $\widehat{Y}^T_T = \widehat{X}_0$. We have finished the proof of Theorem~\ref{thm:Wasserstein}.

To summarize, we have shown
\bb
\E(\mathrm{W_1}(X_0, \widehat{X}_0)) &\le \E(\mathrm{W_1}(X_0, \widetilde{Y}_T \mathbbm{1}_{|\widetilde{Y}_T|\le 1})) + \E(\mathrm{W_1}(\widehat{Y}_T \mathbbm{1}_{|\widehat{Y}_T|\le 1}, \widetilde{Y}_T \mathbbm{1}_{|\widetilde{Y}_T|\le 1})) \\
&\lesssim \E(\dTV(\widetilde{Y}_T, \widehat{Y}_T)) + \E(\mathrm{W_1}(Y^0_T, Y^T_T)) \lesssim n^{-\frac{1}{2}}.
\ee
Hence, we can achieve the faster rate \(n^{-1/2}\) when considering estimation in Wasserstein distance.
\end{proof}

    \section{Extension to the multivariate case}\label{appendix:multivariate}
    In this section, we give a rough sketch on how to generalize to the multivariate case with dimension \(d \geq 1\). The H\"{o}lder class we consider is 
    \begin{align*}
        \begin{split}
        \mathcal{H}_\alpha(L) = &\left\{ f : [-1, 1]^d \to [0, \infty) : \int_{[-1, 1]^d} f(x) \, \rd x = 1, f \text{ is continuous}, \right. \\
        &\;\;\;\left. f \text{ admits all } \lfloor \alpha \rfloor \text{ partial derivatives on } (-1, 1)^d\text{,} \right. \\
        &\;\;\; \left. \text{and } \max_{S \in [d]^{\lfloor \alpha \rfloor}} \left|\partial_{S} f(x) - \partial_S f(y)\right| \le L ||x-y||^{\alpha - \lfloor \alpha \rfloor} \text{ for all } x, y \in (-1, 1)^d \right\}.
        \end{split}
    \end{align*}
    Here, the notation \(\partial_S f = \partial_{s_1s_2...s_{\lfloor \alpha \rfloor}}f\) denotes the partial derivatives with respect to order of the indices \(S = (s_1,...,s_{\lfloor \alpha \rfloor})\). Again, we focus on the case \(L\) is a fixed universal constant and thus drop it from notation. The parameter space is 
    \begin{equation*}
        \mathcal{F}_\alpha := \left\{ f : \R^d \to [0, \infty) : \supp(f) \subset [-1, 1]^d, f|_{[-1, 1]}\in \mathcal{H}_\alpha, \text{ and } c_d \le f(x) \le C_d \text{ for } ||x||_\infty \le 1 \right\}    
    \end{equation*}
    where \(c_d, C_d > 0\) are some universal constants. Throughout, we will consider \(d\) to be some fixed value, and thus will freely absorb into universal constants and the \(\lesssim\) notation. Now, \(\varphi_t\) denotes the probability density function of the multivariate Gaussian \(\mathcal{N}(0, tI_d)\).

    \subsection{Lower bound}
    Theorem \ref{thm:score_lowerbound_d} states a minimax lower bound in the multivariate setting. Conceptually, the argument is exactly the same as the \(d = 1\) case, and the only differences are the standard, technical ones of the flavor found in the typical minimax lower bounds for multivariate density estimation. Thus, we only provide a proof sketch and point out notable aspects. 

    \begin{theorem}\label{thm:score_lowerbound_d}
        If \(\alpha > 0\), then there exists positive constants \(c_1 = c_1(\alpha, L, d)\) and \(c_2 = c_2(\alpha, L, d)\) depending only on \(\alpha, L, d\) such that 
        \begin{equation*}
            \inf_{\hat{s}} \sup_{f \in \mathcal{F}_\alpha} \bE \left(\int_{\bR^d} \left|\left|\hat{s}(x, t) - s(x, t)\right|\right|^2 \, p(x, t) \, \rd x \right) \ge c_1\left(\frac{1}{nt^{d/2 + 1}} \wedge \left(n^{-\frac{2(\alpha-1)}{2\alpha+d}} + t^{\alpha-1}\right)\right)
        \end{equation*}
        for \(t \le c_2\). 
    \end{theorem}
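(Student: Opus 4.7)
The plan is to mirror the Fano-based argument of Theorem \ref{thm:score_lowerbound}, adapting every geometric and analytic ingredient to dimension \(d\). First I would construct the submodel. Let \(f_0(\mu) = 2^{-d}\mathbbm{1}_{\{\|\mu\|_\infty \le 1\}}\) and fix a smooth bump \(w: \bR^d \to \bR\) supported in the unit ball with \(\int w = 0\) and with all partial derivatives of order \(\le \lfloor \alpha\rfloor\) bounded. Parameters \(0 < \epsilon \le \rho\) will be chosen later. On the interior cube \(I := [-1+\sqrt{Ct\log(1/t)},\, 1-\sqrt{Ct\log(1/t)}]^d\), place an axis-aligned grid of \(m \asymp \rho^{-d}\) points \(\{x_i\}_{i=1}^m\) with spacing \(2\rho\) staying a safe distance \(C_D\rho\) away from \(\partial I\), and define
\begin{equation*}
  f_b(\mu) = f_0(\mu) + \epsilon^\alpha \sum_{i=1}^m b_i\, w\!\left(\frac{\mu - x_i}{\rho}\right), \qquad b \in \{0,1\}^m.
\end{equation*}
Exactly as before, taking \(c^* = c^*(\alpha,L,d)\) small enough ensures \(\{f_b\} \subset \mathcal{F}_\alpha\). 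Writing \(p_b = \varphi_t * f_b\) and \(s_b = \nabla \log p_b\), I first reduce score estimation to estimating \(\psi_b := \nabla p_b\) in \(L^2(I)\): on \(I\) one has \(p_b \gtrsim 1\), and the same \(\|p_b\|^2 \|\nabla\!-\!\nabla\|^2 - \tilde C\|\nabla p\|^2 \|p\!-\!p\|^2\) inequality employed in the univariate proof survives verbatim.

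The crux is the separation bound analogous to Proposition \ref{prop:derivative_separation}. For \(b,b'\in\{0,1\}^m\) I set \(\Gamma_{b,b'}(t) := \int_{\bR^d} \|\psi_b(x,t) - \psi_{b'}(x,t)\|^2\,\rd x\). The multivariate heat semigroup identity \(\partial_t(\varphi_t * h) = \tfrac12 \Delta(\varphi_t * h)\), combined with coordinate-wise integration by parts (the compact support of \(f_b - f_{b'}\) kills boundary terms), yields
\begin{equation*}
  \frac{\rd}{\rd t}\|\nabla(\varphi_t * h)\|_{L^2(\bR^d)}^2 \;=\; -\,\|\nabla^2(\varphi_t * h)\|_{L^2(\bR^d)}^2.
\end{equation*}
Applying this to \(h = f_b - f_{b'}\) and Taylor-expanding in \(t\) gives
\(\Gamma_{b,b'}(t) \ge \|\nabla(f_b - f_{b'})\|_2^2 - t\,\|\nabla^2(f_b - f_{b'})\|_2^2\). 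Because the rescaled bumps have disjoint supports and satisfy \(\|\nabla g_i\|_2^2 \asymp \rho^{d-2}\), \(\|\nabla^2 g_i\|_2^2 \asymp \rho^{d-4}\), this becomes \(\Gamma_{b,b'}(t) \gtrsim d_H(b,b')\,\epsilon^{2\alpha}\rho^{d-2}(c_1 - c_2\,t\rho^{-2})\). The trimming \(\int_{I^c} \|\psi_b - \psi_{b'}\|^2\) is handled by a Jensen/Gaussian-tail estimate identical in spirit to the univariate case, so choosing \(\rho \asymp \sqrt{t}\vee\epsilon\) with a large enough constant preserves the lower bound \(\gtrsim d_H(b,b')\,\epsilon^{2\alpha}\rho^{d-2}\) restricted to \(I\).

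The information-theoretic ingredients are then routine. The \(\chi^2\) bound gives \(\dKL(f_b^{\otimes n}\|f_0^{\otimes n}) \lesssim n\epsilon^{2\alpha}\rho^d\, m\), while the Gilbert--Varshamov lemma produces \(\mathcal{B} \subset \{0,1\}^m\) with \(\log|\mathcal{B}|\gtrsim m\) and pairwise Hamming separation \(\gtrsim m\). Fano's method (Corollary 2.6 in \cite{tsybakov_introduction_2009}) delivers
\begin{equation*}
  \inf_{\hat s}\sup_{f\in\mathcal{F}_\alpha}\bE\!\int_{\bR^d}\|\hat s - s\|^2 p \,\rd x\;\gtrsim\; \epsilon^{2\alpha}\rho^{-2}\Bigl(1 - C\,n\epsilon^{2\alpha}\rho^d - \tfrac{\log 2}{m}\Bigr),
\end{equation*}
under the constraints \(\epsilon \le \rho\), \(\rho \asymp \sqrt{t}\vee\epsilon\), and \(t\rho^{-2}\lesssim 1\). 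Finally I optimize \(\epsilon\) over the two regimes: (i) \(\rho\asymp\sqrt{t}\) with \(\epsilon \asymp (nt^{d/2})^{-1/(2\alpha)}\) yields \((nt^{d/2+1})^{-1}\); (ii) \(\rho\asymp\epsilon\) with \(\epsilon \asymp n^{-1/(2\alpha+d)}\) (valid only when \(t \lesssim n^{-2/(2\alpha+d)}\), i.e.\ precisely the regime ensuring \(\epsilon \ge \sqrt{t}\)) yields \(n^{-2(\alpha-1)/(2\alpha+d)}\) when \(\alpha\ge 1\), while for \(\alpha<1\) the best choice \(\epsilon\asymp\sqrt{t}\) within Case (ii) gives \(t^{\alpha-1}\). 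Combining the three rates produces the claimed \(\frac{1}{nt^{d/2+1}}\wedge(n^{-2(\alpha-1)/(2\alpha+d)}+t^{\alpha-1})\).

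The main technical obstacle is the gradient separation step: one must certify that the heat-equation inequality still dominates after integrating only over the interior cube \(I\), and that the Hessian correction \(t\rho^{-2}\|\nabla^2 w\|_2^2\) can be absorbed by \(c_1\) uniformly in \(d\). Both amount to choosing \(C\) in the definition of \(I\) and \(C_3\) in \(\rho = C_3\sqrt{t}\vee\epsilon\) large (depending only on \(\alpha, L, d\)), as was done for \(d=1\); the \(d\)-dimensional Gaussian tail bounds used to control \(\int_{I^c}\) are the standard anisotropic refinements and introduce no new difficulty since \(d\) is fixed.
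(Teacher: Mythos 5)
Your proposal follows the paper's own proof essentially step for step: the same submodel with a uniform density plus scaled, disjointly supported bumps on a grid inside the interior cube \(I\); the same reduction of score separation to gradient separation using \(p_b \gtrsim 1\) on \(I\) and the quadratic inequality; the same heat-semigroup identity \(\frac{\rd}{\rd t}\|\nabla(\varphi_t*h)\|_{L^2}^2 = -\|\nabla^2(\varphi_t*h)\|_{L^2}^2\) with a first-order Taylor expansion in \(t\); the same Gaussian-tail control of the leakage over \(I^c\); and the same Fano/Gilbert--Varshamov machinery with \(\rho \asymp \sqrt{t}\vee\epsilon\) and the \(\alpha \ge 1\) versus \(\alpha<1\) split in choosing \(\epsilon\). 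The only minor discrepancy is a cosmetic one: your \(\alpha<1\) choice \(\epsilon \asymp \sqrt t\) forces \(\rho \asymp \sqrt t\) rather than \(\rho\asymp\epsilon\), so it properly belongs to your Case (i), but the resulting rate \(t^{\alpha-1}\) is computed correctly either way.
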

    \begin{proof}[Proof sketch]
        Theorem \ref{thm:score_lowerbound_d} can be proved via Fano's method as used in the proof of Theorem \ref{thm:score_lowerbound}. The construction of the collection of densities lying in \(\mathcal{F}_\alpha\) is done in a similar way.  \newline 

        \noindent \textbf{Construction of the collection:} Define the density \(f_0(\mu) = \frac{1}{2^d}\mathbbm{1}_{\{||\mu||_\infty \le 1\}}\). Fix \(0 < \epsilon \le \rho\) where \(\epsilon, \rho \in (0, 1)\) are to be chosen later. Fix any standard kernel function \(w : \R^d \to \R\) such that \(w \in C^\infty(\R^d)\), \(w\) is supported on \([-1, 1]^d\), \(\int_{\R^d} w(x)\rd x = 0\), has uniformly bounded \(\lfloor \alpha \rfloor\) derivatives, and \(\int_{\R^d} ||\nabla w(y)||^2 \, dy \gtrsim 1\). These conditions are the multivariate versions of the conditions stipulated in the proof of Theorem \ref{thm:score_lowerbound}. Define the subcube 
        \begin{equation}\label{def:interior_interval_d}
            I := \left[-1 + \sqrt{C t \log\left(1/t\right)}, 1 - \sqrt{C t \log\left(1/t\right)} \right]^d
        \end{equation}
        where \(C = C(\alpha, L) > 0\) is a sufficiently large constant depending only on \(\alpha\) and \(L\). Let \(C_D > 0\) be a sufficiently large universal constant. Let \(m\) be an integer such that \(m \asymp \frac{1}{\rho^d}\) and so that \(m\)-many regular lattice points \(\{x_i\}_{i=1}^{m}\) with spacing \(2\rho\) apart exist in \([-1+\sqrt{Ct \log(1/t)} + C_D\rho, 1 - \sqrt{Ct\log(1/t)} - C_D\rho]^d\). Here, we will insist on taking \(\rho\) smaller than a sufficiently small constant and taking \(c_2\) sufficiently small so that this interval is not empty. For \(b \in \{0, 1\}^m\), define 
        \begin{equation*}
            f_b(\mu) = f_0(\mu) + \epsilon^\alpha \sum_{i=1}^{m} b_i w\left(\frac{\mu-x_i}{\rho}\right). 
        \end{equation*}
        It can be checked that under the condition \(\epsilon\) is smaller than a sufficiently small universal constant, we have \(\{f_b\}_{b \in \{0, 1\}^m} \subset \mathcal{F}_\alpha\). \newline
        
        \noindent \textbf{Reduction to submodel:} Having constructed the collection of densities \(\{f_b\}_{b \in \{0, 1\}^m}\), we show the score matching problem can be reduced to estimating the score over the region \(I\) in the previously specified submodel. For \(b \in \{0, 1\}^m\), define the convolution \(p_b(x, t) := (\varphi_t * f_b)(x)\) and denote the score function \(s_b(x, t) := \nabla_x \log p_b(x, t)\). Since \(c_2\) is sufficiently small and \(t \le c_2\), we have \(p_b(\cdot, t) \gtrsim 1\) on \([-1, 1]^d\) for all \(b \in \{0, 1\}^m\). It can be shown 
        \begin{align*}
            \inf_{\hat{s}} \sup_{f \in \mathcal{F}_\alpha} \bE \left(\int_{\R^d} \left|\left|\hat{s}(x,t) - s(x, t) \right|\right|^2 \, p(x, t)\,\rd x\right) \gtrsim \inf_{\hat{s}} \sup_{b \in \{0, 1\}^m} \bE \left(\int_{I} \left|\left|\hat{s}(x,t) - s_b(x, t) \right|\right|^2 \, \rd x\right)
        \end{align*}
        via an argument similar to that in the proof of Theorem \ref{thm:score_lowerbound}. The problem has been reduced to a score estimation problem in the \(L^2(I)\)-norm. \newline 
        
        \noindent \textbf{Score separation:} To apply Fano's method, the pairwise \(L^2(I)\) separation between the score functions \(\left\{s_b(\cdot, t)\right\}_{b \in \{0, 1\}^m}\) needs to be established. Denote \(\psi_b(x, t) := \nabla_x p_b(x, t)\), and note chain rule yields \(s_b(x, t) = \frac{\psi_b(x, t)}{p_b(x, t)}\). For \(b, b' \in \{0, 1\}^m\), observe 
        \begin{align*}
            \int_{I} ||s_{b}(x, t) - s_{b'}(x, t)||^2 \, \rd x &\gtrsim \int_{I} \left|\left|\psi_{b}(x, t)p_{b'}(x, t) - \psi_{b'}(x, t)p_b(x, t)\right|\right|^2 \, \rd x
        \end{align*}
        where again we have used \(p_{b}(\cdot, t), p_{b'}(\cdot, t) \gtrsim 1\) on \([-1, 1]^d\). Arguing similarly as in the proof of Theorem \ref{thm:score_lowerbound}, it can be obtained 
        \begin{align}
            &\int_{I} \left|\left|\psi_{b}(x, t)p_{b'}(x, t) - \psi_{b'}(x, t)p_b(x, t)\right|\right|^2 \, \rd x \nonumber \\
            &\ge c^2\left(1 - \frac{1}{\tilde{C}}\right)\int_I \left|\left|\psi_b(x, t) - \psi_{b'}(x, t)\right|\right|^2 \, \rd x - \tilde{C} \int_{I} \left|\left|\psi_{b'}(x, t)\right|\right|^2 \cdot \left|p_{b'}(x, t) - p_{b}(x, t)\right|^2 \, \rd x \label{eqn:pointwise_score_separation_d}
        \end{align}
        for a small universal constant \(c\) and a large universal constant \(\tilde{C}\). To further lower bound (\ref{eqn:pointwise_score_separation_d}), the slack from the second term can be bounded first. \newline

        \noindent \textbf{Case 1:} Suppose \(\alpha \ge 1\). Note we can write \(f_b = f_0 + \epsilon^{\alpha} \sum_{i=1}^{m} b_i g_i(\rho^{-1}\cdot)\) where \(g_i(\mu) = w\left(\mu-\rho^{-1} x_i\right)\). Note \(g_i \in C^\infty(\R^d)\). Consider \(||f_b||_\infty \lesssim 1\), and so 
        \begin{align*}
            \left|\left|\psi_{b'}(x, t)\right|\right| &\le \left|\left|\nabla ( \varphi_t * f_0)(x)\right|\right| + \epsilon^{\alpha} \sum_{i=1}^{m} \left|\left|\nabla (\varphi_t * g_i(\rho^{-1} \cdot )) \right| \right| \\
            &= \left|\left|\nabla ( \varphi_t * f_0)(x)\right|\right| + \epsilon^{\alpha}\rho^{-1} \sum_{i=1}^{m} \sqrt{\sum_{j=1}^{d} \left|\left(\varphi_t * \partial_j g_i\left(\rho^{-1} \cdot \right)\right)\right|^{2}}\\
            &\le \left|\left|\nabla(\varphi_t * f_0)(x)\right|\right| + C' \epsilon^{\alpha} \rho^{-1} m
        \end{align*}
        where \(C' > 0\) is a universal constant. To bound \(||\nabla(\varphi_t * f_0)(x)||\), note by the fundamental theorem of calculus we have for \(x \in I\), 
        \begin{align*}
            ||\nabla (\varphi_t * f_0)(x)|| &= \sqrt{\sum_{j=1}^{d} \left|\left(\partial_j \varphi_t * f_0\right)(x)\right|^2} \\
            &= \sqrt{\sum_{j=1}^{d}\left|\frac{1}{2^d} \int_{[-1,1]^d} (\partial_j\varphi_t)(x-\mu) \, \rd \mu\right|^2} \\
            &= \sqrt{\sum_{j=1}^{d}\left|\left(\frac{1}{2} \int_{-1}^{1} \gamma_t'(x_j-\mu_j) \, \rd \mu_j\right) \prod_{k \neq j} \left(\frac{1}{2} \int_{-1}^{1} \gamma_t(x_k - \mu_k) \, \rd \mu_k\right)\right|^2}\\
            &\lesssim t^{-(d-1)/2}\sqrt{\sum_{j=1}^{d}\left|\left(\frac{1}{2} \int_{-1}^{1} \gamma_t'(x_j-\mu_j) \, \rd \mu_j\right)\right|^2} \\
            &= t^{-(d-1)/2}\sqrt{\sum_{j=1}^{d} \frac{\left|\gamma_t(x_j+1) - \gamma_t(x_j-1)\right|^2}{4}} \\
            &\lesssim t^{-(d-1)/2} \cdot t^{(C-1)/2} 
        \end{align*}
        where \(\gamma_t\) is the probability density of the univariate Gaussian \(\mathcal{N}(0, t)\). The final line follows from the fact \(x \in I\). Therefore,
        \begin{align}
            \int_{I} \left|\left|\psi_{b'}(x, t)\right|\right|^2  \cdot \left|p_{b'}(x, t) - p_{b}(x, t)\right|^2 &\lesssim \left(t^{C-d} + \epsilon^{2\alpha} \rho^{-2} m^2\right) \int_{I} |p_{b'}(x, t) - p_{b}(x, t)|^2 \, \rd x \nonumber \\ 
            &\le \left(t^{C-d} + \epsilon^{2\alpha} \rho^{-2} m^2\right) \int_{\R^d} |(\varphi_t * (f_{b'} - f_{b}))(x)|^2 \, \rd x \nonumber \\
            &\le \left(t^{C-d} + \epsilon^{2\alpha} \rho^{-2} m^2\right) ||f_{b'} - f_b||^2 \nonumber \\
            &\lesssim \left(t^{C-d} + \epsilon^{2\alpha} \rho^{-2} m^2\right) \epsilon^{2\alpha} \rho^d \cdot d_{\text{Ham}}(b, b') \label{eqn:pointwise_score_separation_lowerorder_bigalpha_d}.
        \end{align}

        \noindent \textbf{Case 2:} Suppose \(\alpha < 1\). Note we have \(|f_{b'}(x) - f_{b'}(y)| \lesssim ||x-y||^\alpha\) for \(x, y \in [-1, 1]^d\). Therefore, for \(x \in I\) it can be shown from an argument analogous to that in the proof of Theorem \ref{thm:score_lowerbound} that 
        \begin{align*}
            ||\psi_{b'}(x, t)|| &\lesssim t^{\frac{\alpha-1}{2}} + \frac{1}{\sqrt{t}} \sqrt{\bP \left\{||\mathcal{N}(x, tI_d)||_\infty > 1\right\}} \lesssim t^{\frac{\alpha-1}{2}}.
        \end{align*}
        Here, we have used that \(x \in I\) implies \( \sqrt{\bP \left\{||\mathcal{N}(x, tI_d)||_\infty > 1\right\}} \lesssim t^{\alpha/2}\) since \(C\) can be taken sufficiently large. Therefore, we have 
        \begin{equation}
            \int_{I} \left|\left|\psi_{b'}(x, t)\right|\right|^2  \cdot \left|p_{b'}(x, t) - p_{b}(x, t)\right|^2 \lesssim t^{\alpha-1} \cdot \epsilon^{2\alpha}\rho^d \cdot d_{\text{Ham}}(b, b') \label{eqn:pointwise_score_separation_lowerorder_smallalpha_d}
        \end{equation}
        when \(\alpha < 1\). This concludes the analysis for this case. \newline 

        \noindent \textbf{Bounding the score separation:}
        From (\ref{eqn:pointwise_score_separation_d}), (\ref{eqn:pointwise_score_separation_lowerorder_bigalpha_d}), and (\ref{eqn:pointwise_score_separation_lowerorder_smallalpha_d}), it follows 
        \begin{align*}
            \int_{I}||s_b(x, t) - s_{b'}(x, t)||^2 \, \rd x \ge &\tilde{c}_1 \int_{I} ||\psi_b(x, t) - \psi_{b'}(x, t)||^2 \, \rd x \\
            &- \tilde{C}_1
            \begin{cases}
                 \left(t^{C-d} + \epsilon^{2\alpha} \rho^{-2} m^2\right) \epsilon^{2\alpha} \rho^d \cdot d_{\text{Ham}}(b, b') &\textit{if } \alpha \ge 1, \\
                 t^{\alpha-1} \cdot \epsilon^{2\alpha}\rho^d \cdot d_{\text{Ham}}(b, b') &\textit{if } \alpha < 1,
            \end{cases}
        \end{align*}
        for some universal constants \(\tilde{C}_1, \tilde{c}_1 > 0\). It remains to lower bound \(\int_{I} ||\psi_b(x, t) - \psi_{b'}(x, t)||^2 \, \rd x\). By Proposition \ref{prop:derivative_separation_d}, there exists a sufficiently large universal constant \(C_3 > 0\) so that the choice
        \begin{equation*}
            \rho = C_3\sqrt{t} \vee \epsilon
        \end{equation*}
        yields 
        \begin{equation*}
            \int_I ||s_b(x, t) - s_{b'}(x, t)||^2 \, \rd x \ge \epsilon^{2\alpha} d_{\text{Ham}}(b, b')\rho^{d}\left(\tilde{c}_2 \rho^{-2} - \tilde{C}_1 \begin{cases}
                \left(t^{C-d} + \epsilon^{2\alpha} \rho^{-2} m^2\right) &\textit{if } \alpha \ge 1, \\
                t^{\alpha-1} &\textit{if } \alpha < 1. 
           \end{cases}\right)
        \end{equation*}
        where \(\tilde{c}_2 > 0\) is some universal constant. It can be shown 
        \begin{equation}\label{eqn:score_separation_d}
            \int_I |s_b(x, t) - s_{b'}(x, t)|^2 \, \rd x \gtrsim \epsilon^{2\alpha} \rho^{d-2} d_{\text{Ham}}(b, b')
        \end{equation}
        for all \(\alpha > 0\), under the condition 
        \begin{equation}\label{eqn:eps_condition_smallalpha_d}
            \epsilon \lesssim \sqrt{t}
        \end{equation}
        if \(\alpha < 1\). The score separation bound (\ref{eqn:score_separation_d}) is suitable for use in Fano's method. \newline

        \noindent \textbf{Information theory:} 
        The next ingredient in Fano's method is a bound on Kullback-Leibler divergences between the data generating distributions \(\left\{f_b^{\otimes n}\right\}_{b \in \{0, 1\}^m}\). In fact, it suffices to bound \(\dKL(f_b^{\otimes n} \,||\, f_0^{\otimes n})\) for each \(b \in \{0, 1\}^m\) in the version of Fano's method given by Corollary 2.6 in \cite{tsybakov_introduction_2009}, rather than bound all pairwise Kullback-Leibler divergences. A direct calculation shows 
        \begin{equation}
            \dKL(f_b^{\otimes n}\,||\,f_0^{\otimes n}) \le n\chi^2(f_b\,||\,f_0) \lesssim n\epsilon^{2\alpha}m\rho^d. \label{eqn:KL_radius_d}
        \end{equation}
        With this bound on the Kullback-Leibler divergence in hand, the final remaining ingredient is to construct a packing set. \newline
        
        \noindent \textbf{Packing:} 
        By the Gilbert-Varshamov bound (see Lemma 2.9 in \cite{tsybakov_introduction_2009}), there exists a subset \(\mathcal{B} \subset \{0, 1\}^m\) such that \(\log |\mathcal{B}| \gtrsim m\) and \(\min_{b \neq b' \in \mathcal{B}} d_{\text{Ham}}(b, b') \gtrsim m\). Thus, from (\ref{eqn:score_separation_d}), the score separation on this subcollection is 
        \begin{equation}\label{eqn:separation_final_I_d}
            \int_{I} ||s_b(x, t) - s_{b'}(x, t)||^2 \, \rd x \gtrsim \epsilon^{2\alpha}\rho^{d-2}m 
        \end{equation}
        for \(b \neq b' \in \mathcal{B}\). \newline 
        
        \noindent \textbf{Applying Fano's method:} From (\ref{eqn:KL_radius_d}), (\ref{eqn:separation_final_I_d}), and \(\log|\mathcal{B}| \gtrsim m\), Fano's method (see Corollary 2.6 in \cite{tsybakov_introduction_2009}) yields 
        \begin{align*}
            \inf_{\hat{s}} \sup_{f \in \mathcal{F}_\alpha} \bE \left(\int_{\R^d} \left|\left|\hat{s}(x,t) - s(x, t) \right| \right|^2 \, p(x, t)\,\rd x\right) &\gtrsim \epsilon^{2\alpha}\rho^{d-2}m \left(1 - \frac{C'' n\epsilon^{2\alpha}m\rho^d + \log 2}{m}\right)
        \end{align*}
        for some universal constant \(C'' > 0\). We are now in position to choose \(\epsilon\) subject to the constraint \(\epsilon \lesssim 1\), and the additional constraint (\ref{eqn:eps_condition_smallalpha_d}) if \(\alpha < 1\). Note that the constraint \(\epsilon \le \rho\) is already satisfied by our choice of \(\rho\). Let us select 
        \begin{equation*}
            \epsilon \asymp \left( (nt^{d/2})^{-\frac{1}{2\alpha}} \wedge 
            \begin{cases}
                n^{-\frac{1}{2\alpha+d}} &\textit{if } \alpha \ge 1, \\
                \sqrt{t} &\textit{if } \alpha < 1,     
            \end{cases}
            \right)
        \end{equation*}
        Noting \(m \asymp \rho^{-d}\), this choice can be shown to yield the bound 
        \begin{equation*}
            \inf_{\hat{s}} \sup_{f \in \mathcal{F}_{\alpha}} \bE \left(\int_{\R^d} \left|\left|\hat{s}(x,t) - s(x, t) \right|\right|^2 \, p(x, t)\,\rd x\right) \gtrsim \epsilon^{2\alpha}\rho^{d-2}m \asymp \frac{1}{nt^{d/2+1}} \wedge \left(n^{-\frac{2(\alpha-1)}{2\alpha+d}} + t^{\alpha-1}\right)
        \end{equation*}
        as desired.
    \end{proof}

    \begin{proposition}\label{prop:derivative_separation_d}
        In the context of the proof of Theorem \ref{thm:score_lowerbound_d}, there exists a universal constant \(C_3 > 0\) such that if \(\rho = C_3 \sqrt{t} \vee \epsilon\), then 
        \begin{equation*}
            \int_{I} ||\psi_b(x, t) - \psi_{b'}(x, t)||^2 \, \rd x \gtrsim d_{\text{Ham}}(b, b') \epsilon^{2\alpha} \rho^{d-2}
        \end{equation*}
        for \(b, b' \in \{0, 1\}^m\). 
    \end{proposition}
    \begin{proof}[Proof sketch]
        The proof is quite similar to that of the proof of Proposition \ref{prop:derivative_separation} so we only give a sketch. For \(b, b' \in \{0, 1\}^m\), define
        \begin{equation}\label{def:Gamma_I_d}
            \Gamma_{b,b'}(t) = \int_{\R^d} ||\psi_{b}(x, t) - \psi_{b'}(x, t)||^2 \, \rd x. 
        \end{equation}
        Note 
        \begin{equation}\label{eqn:separation_prep_I_d}
            \int_{I} ||\psi_b(x, t) - \psi_{b'}(x, t)||^2 \, \rd x = \Gamma_{b, b'}(t) - \int_{I^c} ||\psi_b(x, t) - \psi_{b'}(x, t)||^2 \, \rd x. 
        \end{equation}
        Following the same idea as in the proof of Proposition \ref{prop:derivative_separation}, consider by Taylor's theorem we have
        \begin{equation}\label{eqn:Gamma_taylor_I_d}
            \Gamma_{b,b'}(t) = \Gamma_{b,b'}(0) + \Gamma_{b,b'}'(\xi)t
        \end{equation}
        for some \(\xi \in (0, t)\). 
        
        A notable difference from Proposition \ref{prop:derivative_separation} is that the heat equation we will now examine is in the context of dimension \(d\) which may be larger than one. Specifically, consider for a function \(h : \R \to \R\) that the convolution \((\varphi_t * h)(x)\) satisfies the heat equation \(\frac{\rd}{\rd t} (\varphi_t * h)(x) = \frac{1}{2}\Delta (\varphi_t * h)(x) = \frac{1}{2}\sum_{j=1}^{d} (\partial_{j}^2 \varphi_t * h)(x)\) where \(\Delta\) denotes the Laplacian operator in \(d\) dimensions. If \(h\) is compactly supported and differentiable everywhere, it immediately follows 
        \begin{equation}\label{eqn:heat_equation_norm_d}
            \frac{\rd}{\rd t} \int_{\R^d} |(\varphi_t * h)(x)|^2 \,\rd x = - \int_{\R^d} \left|\left|\nabla (\varphi_t * h)(x)\right|\right|^2 \, \rd x = -\sum_{j=1}^{d} \int_{\R^d} \left|(\varphi_t * \partial_j h)(x)\right|^2 \, \rd x.
        \end{equation}
        Furthermore, observe we can deduce \(||\varphi_t*h||_{L^2(\R^d)} \le ||h||_{L^2(\R^d)}\) since the time derivative (\ref{eqn:heat_equation_norm_d}) nonpositive. Since \(g_i \in C^\infty(\R^d)\) is compactly supported, consider 
        \begin{equation*}
            \left|\left|\psi_{b}(x, t) - \psi_{b'}(x, t)\right|\right|^2 = \epsilon^{2\alpha}\rho^{-2} \sum_{j=1}^{d} \left|\left(\varphi_t * \sum_{i=1}^{m} (b_i - b_i')(\partial_j g_i(\rho^{-1}\cdot))\right)(x)\right|^2.
        \end{equation*}
        Therefore by (\ref{eqn:heat_equation_norm_d}), 
        \begin{align*}
            \Gamma_{b,b'}'(t) &= \epsilon^{2\alpha} \rho^{-2} \sum_{j=1}^{d} \frac{\rd}{\rd t} \int_{\R^d} \left|\left(\varphi_t * \sum_{i=1}^{m} (b_i - b_i')\left(\partial_j g_i(\rho^{-1}\cdot) \right)\right)(x)\right|^2 \, \rd x \\
            &= -\epsilon^{2\alpha}\rho^{-4} \sum_{j=1}^{d} \sum_{j' = 1}^{d} \int_{\R^d} \left|\left(\varphi_t * \sum_{i=1}^{m} (b_i - b_i')\left(\partial_{jj'} g_i(\rho^{-1}\cdot) \right)\right)(x)\right|^2\, \rd x. 
        \end{align*}
        Since the collection \(\{g_i\}_{i=1}^{m}\) have disjoint support, it is immediate for any \(\xi > 0\), 
        \begin{align*}
            \Gamma_{b,b'}(0) &= \epsilon^{2\alpha}\rho^{d-2} d_{\text{Ham}}(b, b') \int_{\R^d} \left|\left|\nabla w(y)\right|\right|^2 \, \rd y, \\
            \Gamma_{b,b'}'(\xi) &\ge -\epsilon^{2\alpha} \rho^{d-4} d_{\text{Ham}}(b,b')\sum_{1 \leq j, j' \leq d} \int_{\R^d} \left|\partial_{jj'} w \left(y\right)\right|^2 \, \rd y. 
        \end{align*}
        The last inequality follows from the fact \(||\varphi_\xi * h||_{L^2(\R)} \le ||h||_{L^2(\R)}\). Therefore, from (\ref{eqn:Gamma_taylor_I_d}) we have 
        \begin{align}
            \Gamma_{b,b'}(t) &\ge \epsilon^{2\alpha} \rho^{d-2} d_{\text{Ham}}(b, b')\left(\int_{\R^d} ||\nabla w(y)||^2 \, \rd y - t\rho^{-2} \sum_{1 \leq j, j' \leq d} \int_{\R^d} |\partial_{jj'}w(y)|^2 \, \rd y \right) \nonumber \\
            &\ge \epsilon^{2\alpha}\rho^{d-2} d_{\text{Ham}}(b, b') \left(c' - C' t\rho^{-2}\right). \label{eqn:Gamma_signal_I_d}
        \end{align}
        Now that we have handled \(\Gamma_{b,b'}(t)\), let us turn to the second term in (\ref{eqn:separation_prep_I_d}). Consider
        \begin{align*}
            \int_{I^c} ||\psi_b(x, t) - \psi_{b'}(x, t)||^2 \, \rd x &= \epsilon^{2\alpha} \rho^{-2} \sum_{j=1}^{d} \int_{I^c} \left|\left(\varphi_t * \sum_{i=1}^{m}(b_i - b_i')\partial_jg_i(\rho^{-1} \cdot )\right)(x)\right|^2 \, \rd x. 
        \end{align*}        
        To evaluate the integral, first fix a point \(x \in I^c\). Consider since \(w\) is supported on \([-1, 1]^d\) and since \(x_i \in [-1+\sqrt{Ct \log(1/t)} + C_D\rho, 1-\sqrt{Ct\log(1/t)} - C_D\rho]^d\), it can be shown 
        \begin{equation*}
            \left|\left(\varphi_t *\sum_{i=1}^{m} (b_i - b_i')\partial_j g_i(\rho^{-1} \cdot) \right)(x)\right|^2 \lesssim \sum_{i=1}^{m} (b_i - b_i')^2 \bP \left\{||\mathcal{N}(x, tI_d) - x_i||_\infty \le \rho\right\}.
        \end{equation*}
        Since \(x \in I^c\) implies \(||x||_\infty > 1 - \sqrt{C t \log(1/t)}\), it follows \(||x - x_i||_\infty \geq ||x||_\infty - 1 + 1 - ||x_i||_\infty \geq ||x||_\infty - 1 + \sqrt{C t \log(1/t)} + C_D\rho\). Therefore, by triangle inequality and union bound
        \begin{align*}
            \bP \left\{||\mathcal{N}(x, tI_d) - x_i||_\infty \le \rho\right\} &\leq \bP\left\{||\mathcal{N}(0, I_d)||_\infty \geq \frac{||x-x_i||_\infty - \rho}{\sqrt{t}}\right\} \\
            &\leq d \bP\left\{ |\mathcal{N}(0, 1)| \geq \frac{||x-x_i||_\infty - \rho}{\sqrt{t}}\right\} \\
            &\lesssim \bP\left\{ |\mathcal{N}(0, 1)| \geq \frac{||x||_\infty - 1 + \sqrt{Ct\log(1/t)} + (C_D-1)\rho}{\sqrt{t}}\right\}. 
        \end{align*}
        With this bound in hand, observe 
        \begin{align*}
            &\int_{I^c}\left|\left(\varphi_t *\sum_{i=1}^{m} (b_i - b_i')\partial_j g_i(\rho^{-1} \cdot) \right)(x)\right|^2\, \rd x \\
            &\lesssim d_{\text{Ham}}(b, b')\int_{||x||_\infty \geq 1-\sqrt{Ct\log(1/t)}} \bP \left\{|\mathcal{N}(0, 1)| \ge \frac{||x||_\infty-1+\sqrt{Ct\log(1/t)}+(C_D-1)\rho}{\sqrt{t}}\right\}\,\rd x. 
        \end{align*} 
        To summarize, we have thus shown 
        \begin{align}
            &\int_{I^c} ||\psi_b(x, t) - \psi_{b'}(x, t)||^2 \, \rd x \nonumber \\
            &\le C'' \epsilon^{2\alpha}\rho^{-2} d_{\text{Ham}}(b, b')\int_{||x||_\infty \geq 1-\sqrt{Ct\log(1/t)}} \bP \left\{\mathcal{N}(0, 1) \ge \frac{||x||_\infty-1+\sqrt{Ct\log(1/t)}+(C_D-1)\rho}{\sqrt{t}}\right\}\,\rd x \label{eqn:psi_err_I_d} 
        \end{align}
        for some universal constant \(C'' > 0\). Plugging (\ref{eqn:Gamma_signal_I_d}) and (\ref{eqn:psi_err_I_d}) into (\ref{eqn:separation_prep_I_d}) yields the following lower bound on the separation 
        \begin{align}
            &\int_{I} ||\psi_b(x, t) - \psi_{b'}(x, t)||^2 \, \rd x \nonumber \\
            \ge&~ d_{\text{Ham}}(b, b') \epsilon^{2\alpha}\rho^{-1}\cdot \nonumber \\
            &\left(c' - C' t\rho^{-2} - C''\rho^{-1}\int_{||x||_\infty \geq 1-\sqrt{Ct\log(1/t)}}  \bP \left\{\mathcal{N}(0, 1) \ge \frac{||x||_\infty-1+\sqrt{Ct\log(1/t)}+(C_D-1)\rho}{\sqrt{t}}\right\}\,\rd x\right).\label{eqn:separation_prep2_I_d}
        \end{align}
        It can be reasoned that
        \begin{align*}
            &\int_{||x||_\infty \geq 1-\sqrt{Ct\log(1/t)}}  \bP \left\{\mathcal{N}(0, 1) \ge \frac{||x||_\infty-1+\sqrt{Ct\log(1/t)}+(C_D-1)\rho}{\sqrt{t}}\right\}\,\rd x \\
            &\lesssim \int_{1 - \sqrt{Ct\log(1/t)}}^{\infty} y^{d-1}\bP\left\{ \mathcal{N}(0, 1) \geq \frac{y - 1 + \sqrt{Ct \log(1/t)} + (C_D - 1)\rho}{\sqrt{t}}\right\} \, \rd y
        \end{align*}
        since the region \(R(y) = \left\{ x \in \R^d : ||x||_\infty = y \right\}\) has \(\R^{d-1}\)-Lebesgue surface measure \((2y)^{d-1}\). It can then be argued 
        \begin{align*}
            & \rho^{-1}  \int_{1 - \sqrt{Ct\log(1/t)}}^{\infty} y^{d-1} \bP\left\{ \mathcal{N}(0, 1) \geq \frac{y - 1 + \sqrt{Ct \log(1/t)} + (C_D - 1)\rho}{\sqrt{t}}\right\} \, \rd y \\
            &= \rho^{-1} t^{d/2} \int_{\frac{(C_D-1)\rho}{\sqrt{t}}}^{\infty} \left(u + \frac{1 - \sqrt{Ct \log(1/t)} - (C_D - 1)\rho}{\sqrt{t}} \right)^{d-1} \bP\left\{ \mathcal{N}(0, 1) \geq u\right\} \, \rd u \\
            &\lesssim \rho^{-2}t + \rho^{-1} t^{d/2}. 
        \end{align*}
        Hence, we have arrived from (\ref{eqn:psi_err_I_d}) to the following bound, 
        \begin{align*}
            \int_{I} ||\psi_b(x, t) - \psi_{b'}(x, t)||^2 \, \rd x \ge d_{\text{Ham}}(b, b') \epsilon^{2\alpha}\rho^{d-2} \left(c' - (C'+C''') t\rho^{-2} - C'''\rho^{-1}t^{d/2}\right)
        \end{align*}
        where \(C''' > 0\) is some universal constant. Recall that the following choice is made
        \begin{equation}\label{eqn:rho_choice_I_d}
            \rho = C_3\sqrt{t} \vee \epsilon.
        \end{equation}
        Noting \(t \lesssim 1\) implies \(t^{d/2} \lesssim t^{1/2}\), let‘s take \(C_3 > 0\) sufficiently large so that we obtain the separation 
        \begin{equation*}
            \int_{I} |\psi_b(x, t) - \psi_{b'}(x, t)|^2 \, \rd x \gtrsim d_{\text{Ham}}(b, b') \epsilon^{2\alpha}\rho^{d-2}
        \end{equation*}
        as desired. 
         
    \end{proof}

\subsection{Upper Bound}
In this section, we only address the regime \(t \lesssim 1\) from the upper bound perspective, as the analysis in the regime \(t \gtrsim 1\) for \(d > 1\) follows in a very straightforward manner from the analysis in the case \(d = 1\). Broadly speaking, our score estimator as well as the main framework of proofs stays the same. We use unbiased estimators for the diffused density and its derivative in the high noise regime $t\gtrsim n^{-\frac{2}{2\alpha+d}}$. In the low noise regime \(t \lesssim n^{-\frac{2}{2\alpha+d}}\), we use a kernel-based estimator when \(\alpha \ge 1\) and a data-free estimator when \(\alpha < 1\). In our proof of upper bounds, we split \(\R^d\) into three regions: the internal part $D_1 := \left\{x:~\|x\|_\infty \le 1-\sqrt{Ct\log(1/t)}\right\}$, the boundary part $D_2:=\left\{x:~1-\sqrt{Ct\log(1/t)} < \|x\|_{\infty} \le 1+C\sqrt{t}\right\}$, and the external part $D_3:=\left\{x:~\|x\|_{\infty} > 1+C\sqrt{t}\right\}$. Since the only differences with the $d=1$ case are standard, technical ones, we only provide a proof sketch below, and we focus on two typical cases: the external part under high noise regime and the boundary part under low noise regime. Once we prove these two cases, all the other parts can be extended by using similar techniques. 

\begin{theorem}
\label{thm:score_upperbound_extend}
For $\alpha > 0$, there exist universal constants $c_1 = c_1(\alpha, L, d)$ and $0 < c_2 = c_2(\alpha, L, d)$ only dependent on $\alpha, L, d$ such that
\begin{equation*}
\inf_{\hat{s}} \sup_{f \in \mathcal{F}_\alpha} \bE\left(\int_{\bR^d} \left|\left|\hat{s}(x, t) - s(x, t)\right|\right|^2 \, p(x, t) \rd x \right) \le c_1\left(\frac{1}{nt^{1+d/2}} \wedge \left(n^{-\frac{2(\alpha-1)}{2\alpha+d}} + t^{\alpha-1}\right)\right)
\end{equation*}
for all $0 < t < c_2$. 
\end{theorem}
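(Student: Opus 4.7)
The plan is to mirror the three-regime decomposition used in the univariate setting, carrying over each estimator and each of the main technical lemmas with the appropriate multivariate modifications. Throughout, the Gaussian kernel $\varphi_t$ becomes the density of $\mathcal{N}(0,tI_d)$, and the univariate bounds on $p(x,t)$, $\varepsilon(x,t)$, and the score (Lemmas \ref{lemma:up-1}--\ref{lemma:score-up}) generalize directly via tensorization of Gaussian tail estimates, replacing $x\pm 1$ by the signed distance from $\pm 1$ in each coordinate and $\sqrt{t}$ factors by $t^{d/2}$ in normalizations. Since $d$ is a fixed constant, all dimension-dependent constants can be absorbed.

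For the high-noise regime $n^{-2/(2\alpha+d)} \lesssim t \lesssim 1$, I would use the estimator $\hat{s}(x,t) = \hat{\psi}(x,t)/(\hat{p}(x,t)\vee \varepsilon(x,t))$ with $\hat{\psi}(x,t) = \frac{1}{n}\sum_j \nabla \varphi_t(x-\mu_j)$, $\hat{p}(x,t) = \frac{1}{n}\sum_j \varphi_t(x-\mu_j)$, and $\varepsilon(x,t) = c_d\int_{[-1,1]^d}\varphi_t(x-\mu)\,d\mu$. The splitting lemma (Lemma \ref{lemma:score-split-1}) extends verbatim with $|\cdot|$ replaced by $\|\cdot\|$. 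The per-coordinate variance computation gives $\Var(\hat{\psi}_j(x,t)) \lesssim \frac{1}{nt^{1+d/2}}\bE_{z\sim\mathcal{N}(0,I_d)} f(x+\sqrt{t}z)$, summing across $d$ coordinates still yields $\lesssim \frac{1}{nt^{1+d/2}}$. The analysis of $I_1, I_2$ on $D_1\cup D_2$ uses $p(x,t)\gtrsim 1$ and $\|s(x,t)\|\lesssim t^{-1/2}$; on $D_3$, the exponential decay of the numerator beats the $t^{d/2}$-scale polynomial decay of $p(x,t)^{-1}$ after integrating over the $(d-1)$-dimensional boundary layer, yielding the $\frac{1}{nt^{1+d/2}}$ rate.

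For the low-noise regime $t\lesssim n^{-2/(2\alpha+d)}$ with $\alpha \ge 1$, I would use a multivariate $\lfloor\alpha\rfloor$-order product kernel estimator $\hat{f}_n$ with bandwidth $h\asymp n^{-1/(2\alpha+d)}$, for which the standard result gives $\bE(\hat{f}_n^{(S)}(x)-f^{(S)}(x))^2 \lesssim n^{-2(\alpha-|S|)/(2\alpha+d)}$ for any multi-index $S$ with $|S|\le \lfloor\alpha\rfloor$. The estimator on $D_1$ uses $\hat{\psi}_j(x,t) = \int_{[-1,1]^d} \varphi_t(x-\mu)\partial_j\hat{f}_n(\mu)\,d\mu + \text{boundary terms}$ obtained via integration by parts in the $j$th coordinate, and the analysis of Proposition \ref{prop:up-internal} extends directly giving rate $n^{-2(\alpha-1)/(2\alpha+d)}$. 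On $D_2\cup D_3$, the main technical hurdle is the boundary analysis. The boundary of $[-1,1]^d$ is now a union of $2d$ faces (rather than two points), so after integration by parts the boundary integrals involve $(d-1)$-dimensional restrictions $\hat{f}_n|_{\text{face}}$ of the kernel estimator. The crucial cancellation underlying Lemma \ref{lemma:J1}---namely that $f(1)\hat{f}_n(1)-\hat{f}_n(1)f(1) = 0$---generalizes to cancellation of the zeroth-order terms in the multivariate Taylor expansion of $f$ and $\hat{f}_n$ around each face (and each lower-dimensional stratum of the boundary). Expanding $f(\mu) = \sum_{|S|\le\lfloor\alpha\rfloor}\frac{1}{S!}\partial_S f(x^*)(\mu-x^*)^S + O(\|\mu-x^*\|^\alpha)$ at a nearest boundary point $x^*$ and using the self-normalization of the score yields, after summing geometric series in $t/n^{-2/(2\alpha+d)}$, the bound $t n^{-2(\alpha-1)/(2\alpha+d)}(\log(1/t))^\alpha$ on the numerator, which after dividing by the $(d-1)$-dimensional boundary layer volume $\asymp (t\log(1/t))^{(d-1)/2}$ that it is integrated over, still dominates by $n^{-2(\alpha-1)/(2\alpha+d)}$. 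The $D_3$ analysis is a routine extension tracking exponential decay rates.

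For $\alpha < 1$, I would use $\hat{s}(x,t) = (\varphi_t'*\hat{f})(x)/(\varphi_t*\hat{f})(x)$ with $\hat{f} = \frac{1}{2^d}\mathbbm{1}_{\{\|x\|_\infty\le 1\}}$; the argument of Appendix \ref{section:low-noise_datafree} extends since H\"{o}lder continuity gives $|f(\mu)-f(\nu)|\lesssim \|\mu-\nu\|^\alpha$ multivariately, and the same two-region split (internal+boundary vs. external) with $\bE_{\mu,\nu\sim\mathcal{N}(x,tI_d)}$ moments yields $\|J(x,t)\|\lesssim t^{(\alpha-1)/2}$, giving the $t^{\alpha-1}$ rate. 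The main obstacle throughout is the boundary analysis in the low-noise kernel regime: keeping careful track of which strata of $\partial[-1,1]^d$ contribute boundary terms at each order of integration by parts, and verifying that the self-normalization cancellation from Lemma \ref{lemma:J1} propagates through all of them so that no slowdown worse than $n^{-2(\alpha-1)/(2\alpha+d)}$ appears. Modulo this technical bookkeeping, every other step is a direct dimensional analog of the argument already developed for $d=1$.
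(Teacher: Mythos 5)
Your overall plan is the same as the paper's: a coordinate-by-coordinate dimensional extension of the univariate estimators and the same three-regime decomposition. The high-noise sketch, the kernel choice with bandwidth $h\asymp n^{-1/(2\alpha+d)}$, and the $\alpha<1$ data-free argument are all correct. The boundary ($D_2$) analysis in the low-noise kernel regime, which you yourself identify as the main obstacle, contains several imprecisions worth correcting.

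First, integration by parts in the $i$th coordinate of $\psi = \nabla p$ produces boundary terms only on the two faces $\{\mu_i = \pm 1\}\times[-1,1]^{d-1}$; there are no contributions from edges, corners, or any lower-dimensional strata, so the worry about ``each lower-dimensional stratum of the boundary'' is unfounded and can be dropped. Second, the paper's Taylor expansion for $J_{1,i}$ is in the single normal coordinate $\nu_i$ around $\nu_i = 1$, keeping the tangential coordinates $\nu_{-i}$ intact; the zeroth-order term then vanishes because, after integrating $\mu_{-i}$ and $\nu_{-i}$ against the same Gaussian weight, the expression is antisymmetric under $f \leftrightarrow \hat{f}_n$. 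Expanding instead around the nearest boundary point $x^*$ in all $d$ coordinates at once (as you propose) does not give a pointwise cancellation of the zeroth-order term and requires a further symmetry argument to salvage; the one-coordinate expansion is the cleaner route and is what the paper executes. Third, the boundary shell $D_2 = \{x : 1-\sqrt{Ct\log(1/t)} < \|x\|_\infty \le 1+C\sqrt{t}\}$ has volume $\asymp \sqrt{t\log(1/t)}$ for every $d$ (a thin shell of thickness $O(\sqrt{t\log(1/t)})$ around a surface of $O(1)$ area), not $(t\log(1/t))^{(d-1)/2}$, and the operation is multiplication by this volume, not division. With the corrected volume the $D_2$ contribution is $\lesssim \sqrt{t}\,\log^{\alpha+1/2}(1/t)\, n^{-2(\alpha-1)/(2\alpha+d)}$, which is dominated by the internal-part contribution $n^{-2(\alpha-1)/(2\alpha+d)}$, so the theorem still holds; but as written your bookkeeping at this step would not compile into a correct estimate.
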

\begin{proof}[Proof sketch]
As in the case \(d = 1\), the analysis splits depending on the value of \(t\). ~\\

\noindent \textbf{High-noise regime:} 
In the high-noise regime \(t \gtrsim n^{-\frac{2}{2\alpha+d}}\), we use
\[\hat{s}_n(x,t) = \frac{\hat{\psi}_n(x,t)}{\hat{p}_n(x,t)\vee \varepsilon(x)}.\]
Here, 
\[\hat{\psi}_n(x,t) := \frac1n\sum_{j=1}^n \phi_t(x-\mu_j)\cdot \frac{\mu_j-x}{t}~~~\text{and}~~~\hat{p}_n(x,t) := \frac1n\sum_{j=1}^n \phi_t(x-\mu_j)\]
are the unbiased estimators of $\psi(x,t)$ and $p(x,t)$ where $\phi_t(x):= \frac{1}{(2\pi t)^{d/2}} \exp\left(-\frac{\|x\|^2}{2t}\right)$ is the density of $\mathcal N(0, t I_d)$. The regularizer $\varepsilon(x)$ in the denominator has the following form,
\[\varepsilon(x) = c_d\cdot\int_{\bR^d} \phi_t(x-\mu) \rd \mu. \]
By similar calculation as (\ref{eqn:6-1}) and (\ref{eqn:6-2}), we easily conclude that
\begin{equation}
\label{eqn:6-1-extend}
\bE \left\|\hat{\psi}_n(x,t)-\psi(x,t)\right\|^2 \lesssim \frac{1}{nt^{1+d/2}}\cdot \bE_{z\sim \mathcal N(0,I_d)} f(x+\sqrt{t}z).
\end{equation}
\begin{equation}
\label{eqn:6-2-extend}
\bE \left(\hat{p}_n(x,t)-p(x,t)\right)^2 \lesssim \frac{1}{nt^{d/2}}\cdot \bE_{z\sim \mathcal N(0,I_d)} f(x+\sqrt{t/2} z).
\end{equation}
To bound the score estimation error, some extended properties on high-dimensional Gaussian distribution are required, which are basically derived from lemmas in Appendix \ref{sec:useful}, so their complete proofs are omitted. 
\begin{lemma}[Extension of Lemma \ref{lemma:up-1}, \ref{lemma:up-3}]
\label{lemma:up-1-extension}
For a universal constant $C > 0$, we have $p(x, t) \gtrsim 1$ holds for all $\|x\|_\infty \le 1+C\sqrt{t} $. For $\|x\|_\infty > 1+C\sqrt{t}$, it holds that
\[p(x,t) \asymp \prod_{i:~|x_i|> 1+C\sqrt{t}} \left(\frac{\sqrt{t}}{|x_i|-1} \wedge 1\right)\cdot \exp\left(-\frac{(|x_i|-1)^2}{2t}\right). \]
\end{lemma}
\begin{lemma}[Extension of Lemma \ref{lemma:score-up}]
\label{lemma:score-up-extension}
If \(\|x\|_\infty \le 1 + C\sqrt{t}\), then $\|s(x,t)\|\lesssim \frac{1}{\sqrt{t}}$. If \(\alpha \ge 1\) and $\|x\|_\infty \le 1-\sqrt{t\log(1/t)}$, we have a tighter bound $\|s(x,t)\| \lesssim 1$. Additionally, for all $x\in \bR^d$ it always holds that 
\[\|s(x,t)\|^2 \le \frac{2}{t} \log\frac{1}{(2\pi t)^{d/2} p(x,t)}. \]
\end{lemma}
According to (\ref{eqn:score-1}), we can conclude that
\begin{equation}
\begin{aligned} 
\label{eqn:score-1-extension}
&\int_{\bR^d} \bE \left\|\hat{s}_n(x,t)-s(x,t)\right\|^2 \cdot p(x,t)\rd x \\
&\quad \lesssim \int_{\bR^d} \frac{\bE \left\|\hat{\psi}_n(x,t)-\psi(x,t)\right\|^2}{p(x,t)}\,\rd x + \int_{\bR^d} \|s(x,t)\|^2 \cdot \frac{\bE \left(\hat{p}_n(x,t)-p(x,t)\right)^2}{p(x,t)} \rd x. 
\end{aligned}
\end{equation}
Therefore, when $\|x\|_{\infty} \le 1+C\sqrt{t}$, we have $p(x,t)\gtrsim 1$ and $\|s(x,t)\|^2 \lesssim \frac1t$, which directly leads to 
\begin{equation}
\label{eqn:ex-up-1}
\int_{D_1\cup D_2} \bE \left\|\hat{s}_n(x,t)-s(x,t)\right\|^2 \cdot p(x,t)\rd x \lesssim \left(\frac{1}{nt^{1+d/2}}+ \frac{1}{t}\cdot \frac{1}{nt^{d/2}}\right) \cdot (1+C\sqrt{t})^d \lesssim \frac{1}{nt^{1+d/2}}. 
\end{equation}
after plugging in \eqref{eqn:6-1-extend} and \eqref{eqn:6-2-extend}. For the external part (i.e. \(D_3\)), we need to compare the exponential tail of the numerator and denominator, which is more difficult. Extended from Equation (\ref{eqn:6-3}), we have 
\[\bE \left\|\hat{\psi}_n(x,t)-\psi(x,t)\right\|^2 \lesssim \frac{1}{nt^{1+d/2}} \cdot \prod_{i:~|x_i|>1+C\sqrt{t}} \exp\left(-\frac{2(|x_i|-1)^2}{3t}\right). \]
After combining with Lemma \ref{lemma:up-1-extension}, we conclude that
\[\frac{\bE \left\|\hat{\psi}_n(x,t)-\psi(x,t)\right\|^2}{p(x,t)} \lesssim \frac{1}{nt^{1+d/2}}\cdot \prod_{i:~|x_i|>1+C\sqrt{t}} \exp\left(-\frac{(|x_i|-1)^2}{6t}\right). \]
In the external part $D_3$, at least one component $i$ satisfies $|x_i| > 1+C\sqrt{t}$. Therefore,
\begin{align}
\label{eqn:ex-1}
&\quad\int_{D_3} \frac{\bE \left\|\hat{\psi}_n(x,t)-\psi(x,t)\right\|^2}{p(x,t)}\,\rd x \notag \\
&\lesssim \frac{1}{nt^{1+d/2}}\cdot \int_{D_3} \prod_{i:~|x_i|>1+C\sqrt{t}} \exp\left(-\frac{(|x_i|-1)^2}{6t}\right) \rd x \notag\\
&= \frac{1}{nt^{1+d/2}} \left[\prod_{i=1}^d \left(\int_{|x_i|\le 1+C\sqrt{t}} \rd x_i + \int_{|x_i|> 1+C\sqrt{t}} \exp\left(-\frac{(|x_i|-1)^2}{6t}\right) \rd x_i\right)- \prod_{i=1}^d \left(\int_{|x_i|\le 1+C\sqrt{t}} \rd x_i\right)\right]\notag\\
&= \frac{2^d}{nt^{1+d/2}} \left[(1+C\sqrt{t}+c_1\sqrt{t})^d - (1+C\sqrt{t})^d\right] \lesssim \frac{1}{nt^{(1+d)/2}}, 
\end{align}
which is a minor term. Here, $c_1$ is a constant. For the other term, since 
\begin{align*}
\bE \left(\hat{p}_n(x,t)-p(x,t)\right)^2 &\lesssim \frac{1}{nt^{d/2}} \bE_{z\sim \mathcal N(0, \bm I)} f(x+\sqrt{t/2}z) \lesssim \frac{1}{nt^{d/2}}\bP\left[\|x+\sqrt{t/2}z\|_\infty \le 1\right] \\
&\lesssim \frac{1}{nt^{d/2}} \prod_{i:~|x_i|>1+C\sqrt{t}}\frac{\sqrt{t}}{|x_i|-1}\exp\left(-\frac{(|x_i|-1)^2}{t}\right),
\end{align*}
we apply Lemma \ref{lemma:up-1-extension} and \ref{lemma:score-up-extension}, and conclude that
\begin{align*}
&\quad\int_{D_3} \|s(x,t)\|^2 \cdot \frac{\bE \left(\hat{p}_n(x,t)-p(x,t)\right)^2}{p(x,t)} \rd x \\
&\lesssim \frac{1}{nt^{d/2}}\int_{D_3} \frac{2}{t}\log\frac{1}{(2\pi t)^{d/2} p(x,t)} \cdot \prod_{i:~|x_i|>1+C\sqrt{t}} \exp\left(-\frac{(|x_i|-1)^2}{2t}\right) \rd x\\
&\lesssim \frac{1}{nt^{1+d/2}} \int_{D_3} \left[\log(1/t) + \sum_{i\in S} \frac{(|x_i|-1)^2}{2t}\right] \cdot \prod_{i\in S} \exp\left(-\frac{(|x_i|-1)^2}{2t}\right) \rd x \\
&\lesssim \frac{1}{nt^{1+d/2}}\cdot \sqrt{t}\log(1/t) + \frac{d}{nt^{1+d/2}} \int_{D_3} \prod_{i\in S} \exp\left(-\frac{(|x_i|-1)^2}{6t}\right) \rd x \\
&\lesssim \frac{1}{nt^{1+d/2}}\cdot \sqrt{t}\log(1/t) + \frac{1}{nt^{1+d/2}}\cdot \sqrt{t} \lesssim \frac{\log(1/t)}{nt^{(1+d)/2}},
\end{align*}
where $S:=\{i:~|x_i|>1+C\sqrt{t}\}$, which is also a minor term. Here we use the conclusion that $\int_{D_3} \prod_{i\in S} \exp\left(-\frac{(|x_i|-1)^2}{6t}\right) \rd x \lesssim \sqrt{t}$ obtained in \eqref{eqn:ex-1}. Therefore, we have
\[\int_{D_3} \bE \left\|\hat{s}_n(x,t)-s(x,t)\right\|^2 \cdot p(x,t)\rd x \lesssim \frac{\log(1/t)}{nt^{(1+d)/2}}\]
according to \eqref{eqn:score-1-extension}.  Finally, we combine it with \eqref{eqn:ex-up-1} and conclude the score integrated estimation error $\frac{1}{nt^{1+d/2}}$ under the high-noise regime. In the next section, we study the low-noise regime with kernel-based estimator applied. ~\\

\noindent \textbf{Low-noise regime:} In the low-noise regime \(t \lesssim n^{-\frac{2}{2\alpha+d}}\), we apply a kernel-based estimator. 
\begin{lemma}[Extension of Lemma \ref{lemma:upper-1}] Given $n$ i.i.d samples $\mu_1, \mu_2, \ldots, \mu_n\sim f$, there exists a kernel based estimator $\hat{f}_n(x)$ such that for all $k=0,1,\ldots, \lfloor\alpha \rfloor$, multi-index $|S|=k$ and $x\in [-1,1]^d$, 
\[\bE \left(\partial_S\hat{f}_n(x) - \partial_S f(x)\right)^2 \lesssim n^{-\frac{2(\alpha-k)}{2\alpha+d}}. \]
\end{lemma}
Since the internal and external parts are very similar to the $d=1$ case, from which we conclude that
\[\sup_{f\in \mathcal F_{\alpha}} \bE \int_{D_1\cup D_3} \left\|\hat{s}(x,t)-s(x,t)\right\|^2\cdot p(x,t)\rd x\lesssim n^{-\frac{2(\alpha-1)}{2\alpha+d}}. \]
Now we only focus on the boundary part (i.e. \(D_2\)) here. Let $\tilde{f}_n = \hat{f}_n \vee \frac{c_d}{2}$, and $\hat{p}_n(x,t) := \phi_t * \tilde{f}_n(x)$. For the density derivative $\psi(x,t)=\nabla_x p(x,t)$, we have via integration by parts, 
\begin{align*}
\partial_i p(x,t) &= \int_{[-1,1]^d} f(\mu) \cdot \frac{-\rd}{\rd \mu_i} \phi_t(x-\mu) \rd \mu = \int_{[-1,1]^{d-1}} \rd \mu_{-i} \cdot \int_{-1}^1 -f(\mu) \frac{\rd}{\rd \mu_i} \phi_t(x-\mu)\rd \mu_i \\
&= \int_{[-1,1]^{d-1}} f(\mu_1,\ldots, -1, \ldots, \mu_d) \phi_t(x_{-i}-\mu_{-i})\gamma_t(x_i+1) \rd \mu_{-i}\\
&\quad - \int_{[-1,1]^{d-1}} f(\mu_1,\ldots, 1, \ldots, \mu_d) \phi_t(x_{-i}-\mu_{-i})\gamma_t(x_i-1) \rd \mu_{-i}\\
&\quad + \int_{[-1,1]^d} \phi_t(x-\mu)\cdot \frac{\rd}{\rd \mu_i} f(\mu) \rd \mu. 
\end{align*}
Here, $\phi_t$ and $\gamma_t$ are the densities of  multivariate and univariate Gaussian distribution respectively. Let the estimator of density derivative
\begin{align*}
\left(\hat{\psi}_n(x,t)\right)_i &= \int_{[-1,1]^{d-1}} \hat{f}_n(\mu_1,\ldots, -1, \ldots, \mu_d) \phi_t(x_{-i}-\mu_{-i})\gamma_t(x_i+1) \rd \mu_{-i}\\
&\quad - \int_{[-1,1]^{d-1}} \hat{f}_n(\mu_1,\ldots, 1, \ldots, \mu_d) \phi_t(x_{-i}-\mu_{-i})\gamma_t(x_i-1) \rd \mu_{-i}\\
&\quad + \int_{[-1,1]^d} \phi_t(x-\mu)\cdot \frac{\rd}{\rd \mu_i} \hat{f}_n(\mu) \rd \mu 
\end{align*}
and the score estimator $\hat{s}_n(x,t) = \frac{\hat{\psi}_n(x,t)}{\hat{p}_n(x,t)}$. In the boundary part $D_2$, we extend from \eqref{eqn:J-ineq} and conclude that
\[\left\|\hat{s}_n(x,t)-s(x,t)\right\|^2\cdot p(x,t) \asymp \|J(x,t)\|^2. \]
Here, $J(x,t)=\hat{\psi}_n(x,t)p(x,t)-\psi(x,t)\hat{p}_n(x,t)$. It follows that 
\[\bE \|J(x,t)\|^2 \lesssim \sum_{i=1}^d \left[\bE J_{1,i}(x,t) + \bE J_{2,i}(x,t) + \bE J_{3,i}(x,t)\right]\]
where
\begin{align*}
J_{1,i}(x,t) &:= |\gamma_t(x_i-1)|^2 \cdot \Big | \int_{[-1,1]^{d-1}} \phi_t(x_{-i}-\mu_{-i}) f(\mu_1\dots 1\dots\mu_d) \rd \mu_{-i} \cdot \int_{[-1,1]^d} \phi_t(x-\mu)\hat{f}_n(\mu)\rd \mu\\
&\quad \quad \quad \quad -\int_{[-1,1]^{d-1}} \phi_t(x_{-i}-\mu_{-i}) \hat{f}_n(\mu_1\dots 1\dots\mu_d) \rd \mu_{-i} \cdot \int_{[-1,1]^d} \phi_t(x-\mu)f(\mu)\rd \mu \Big|^2\\
J_{2,i}(x,t) &:= |\gamma_t(x_i+1)|^2 \cdot \Big | \int_{[-1,1]^{d-1}} \phi_t(x_{-i}-\mu_{-i}) f(\mu_1\dots, -1,\dots\mu_d) \rd \mu_{-i} \cdot \int_{[-1,1]^d} \phi_t(x-\mu)\hat{f}_n(\mu)\rd \mu\\
&\quad \quad \quad \quad -\int_{[-1,1]^{d-1}} \phi_t(x_{-i}-\mu_{-i}) \hat{f}_n(\mu_1\dots ,-1,\dots\mu_d) \rd \mu_{-i} \cdot \int_{[-1,1]^d} \phi_t(x-\mu)f(\mu)\rd \mu \Big|^2\\
J_{3,i}(x,t) &:= \left[\int_{[-1,1]^d\times [-1,1]^d}\phi_t(x-\mu)\phi_t(x-\nu)\left(\frac{\rd}{\rd \mu_i}f(\mu)\hat{f}_n(\nu)-\frac{\rd}{\rd\mu_i} \hat{f}_n(\mu)f(\nu)\right)\rd \mu\rd \nu \right]^2. 
\end{align*}
For $J_{3,i}$, we use Jensen's Inequality and conclude that
\[J_{3,i} \le \bE_{\mu,\nu\sim \mathcal N(x, tI_d)} \left|\frac{\rd}{\rd \mu_i}f(\mu)\hat{f}_n(\nu)-\frac{\rd}{\rd\mu_i} \hat{f}_n(\mu)f(\nu)\right|^2 \lesssim n^{-\frac{2(\alpha-1)}{2\alpha+d}}. \]
Because of the symmetry, $J_{1,i}(x,t)$ and $J_{2,i}(x,t)$ are very similar and we only need to bound $J_{1,i}(x,t)$. If $x_i\in (1-\sqrt{Ct\log(1/t)}, 1+C\sqrt{t})$, then: $|\gamma_t(x_i-1)|^2 \lesssim \frac1t$ and extended from \eqref{eqn:Taylor}, we have
\begin{align*}
\bE J_{1,i}(x,t) &\lesssim \frac1t \cdot \Big ( \int_{[-1,1]^{d-1}\times [-1,1]^d} \rd \mu_{-i}\rd \nu\sum_{k=0}^{\lfloor\alpha\rfloor} \frac{1}{k!} \phi_t(x_{-i}-\mu_{-i})\phi_t(x-\nu) \cdot |\nu_i-1|^{2k} \cdot \\ 
& \quad \quad \bE \left|f(\mu_1\dots 1\dots\mu_d)\frac{\rd^k}{\rd \nu_i^k} \hat{f}_n(\nu_1\dots 1\dots\nu_d) - \hat{f}_n(\mu_1\dots 1\dots\mu_d)\frac{\rd^k}{\rd \nu_i^k} f(\nu_1\dots 1\dots\nu_d)\right|^2 \Big) \\
& \quad \quad + \frac1t \cdot (t\log(1/t))^\alpha\\
& \lesssim \frac1t \sum_{k=1}^{\lfloor\alpha\rfloor} n^{-\frac{2(\alpha-k)}{2\alpha+d}}\cdot (t\log(1/t))^k + \frac1t \cdot (t\log(1/t))^\alpha \lesssim \log^\alpha(1/t)\cdot n^{-\frac{2(\alpha-1)}{2\alpha+d}}. 
\end{align*}
If $x_i\in (-1-C\sqrt{t}, 1-\sqrt{Ct\log(1/t)})$, then $|\phi_t(x_i-1)|^2 \lesssim 1$. We can simply bound $J_{1,i}(x,t)$ as follows, 
\begin{align*}
J_{1,i}(x,t) &\lesssim \int_{[-1,1]^{d-1}\times [-1,1]^d} \rd\mu_{-i}\rd \nu\cdot  \phi_t(x_{-i}-\mu_{-i})\phi_t(x-\nu) \\
&\quad \quad \quad \left|f(\mu_1\dots 1 \dots \mu_d)\hat{f}_n(\nu)-\hat{f}_n(\mu_1\dots 1 \dots \mu_d)f(\nu)\right|^2 \lesssim n^{-\frac{2\alpha}{2\alpha+d}}. 
\end{align*}
To sum up, for any $x\in D_2$, we have $\bE \|\hat{s}_n(x,t)-s(x,t)\|^2\cdot p(x,t)\asymp \bE \|J(x,t)\|^2 \lesssim \log^\alpha(1/t)\cdot n^{-\frac{2(\alpha-1)}{2\alpha+d}}$. Therefore the score integrated estimation error on the boundary part can be upper bounded as
\[\int_{D_2} \bE \|\hat{s}_n(x,t)-s(x,t)\|^2\cdot p(x,t) \rd x \lesssim \log^\alpha(1/t)\cdot n^{-\frac{2(\alpha-1)}{2\alpha+d}} \cdot \mathrm{Vol}(D_2).\]
The volume of $D_2$ equals to 
\[\mathrm{Vol}(D_2)=(1+C\sqrt{t})^d - (1-\sqrt{Ct\log(1/t)})^d \lesssim \sqrt{t\log(1/t)}, \]
which finally leads to
\[\int_{D_2} \bE \|\hat{s}_n(x,t)-s(x,t)\|^2\cdot p(x,t) \rd x \lesssim \sqrt{t}\log^{\alpha+1/2}(1/t)\cdot n^{-\frac{2(\alpha-1)}{2\alpha+d}} \lesssim n^{-\frac{2(\alpha-1)}{2\alpha+d}}, \]
as desired.

\end{proof}

\subsection{Distribution estimation}

The extensions of Lemmas \ref{lem:KLappro}-\ref{lem:W1} to the high-dimensional case are straightforward, utilizing standard techniques. By incorporating a constant related to \(d\) into the \(\lesssim\) notation, the result naturally generalizes. Specifically, we substitute the corresponding score matching error for the multivariate setting, allowing us to obtain the error bounds for both the Wasserstein-1 distance and the total variation distance in our distribution estimation.

\begin{theorem}
\label{thm:TV2}
For \(\alpha > 0\), there exists a constant \(C = C(\alpha, L, d)\) depending only on \(\alpha\), \(L\), and \(d\) such that 
\begin{equation*}
    \sup_{f \in \mathcal{F}_\alpha} \mathbb{E}\left(\mathrm{TV}(X_0, \widehat{X}_0)\right) \le Cn^{-\frac{\alpha}{2\alpha + d}},
\end{equation*}
where \(\widehat{X}_0\) is given by Algorithm~\ref{algo:DE}.
\end{theorem}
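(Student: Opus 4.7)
The plan is to mirror the argument used to prove Theorem~\ref{thm:TV} in the univariate setting, tracking how each ingredient generalizes to dimension $d$. As before, introduce an auxiliary process $\widetilde{\vv Y}$ solving the reverse SDE with the \emph{true} initialization $p(\cdot, T)$ but with the \emph{estimated} drift $\widehat{s}$, and decompose
\begin{equation*}
\dTV(Y_T, \widehat{Y}_T) \le \dTV(Y_T, \widetilde{Y}_T) + \dTV(\widetilde{Y}_T, \widehat{Y}_T),
\end{equation*}
with the terminal truncation absorbed via the data processing inequality as in Section~\ref{section:proof_sketch_density_estimation}. Set $T = n$.

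For the second term, the two processes share the drift $\widehat{s}$ and differ only at initialization ($p(\cdot,T)$ versus $\mathcal{N}(0, T I_d)$), so by DPI and Pinsker's inequality it suffices to bound $\dKL(p(\cdot, T) \,\|\, \mathcal{N}(0, T I_d))$. The straightforward multivariate extension of Lemma~\ref{lem:KLappro}, using convexity of KL and $\supp(f) \subset [-1,1]^d$, gives
\begin{equation*}
\dKL(p(\cdot, T) \,\|\, \mathcal{N}(0, T I_d)) \le \E_{X \sim f}\!\left[\tfrac{\|X\|^2}{2T}\right] \le \tfrac{d}{2T},
\end{equation*}
yielding $\dTV(\widetilde{Y}_T, \widehat{Y}_T) \lesssim \sqrt{d/T} \lesssim \sqrt{d/n}$. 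For the first term, the multivariate version of Girsanov's theorem (Lemma~\ref{lemma:Girsanov}, already stated for $\R^d$) gives
\begin{equation*}
\E\!\left(\dTV(Y_T, \widetilde{Y}_T)\right) \lesssim \sqrt{\int_0^T \int_{\R^d} \E\!\left(\|s(x,t) - \widehat{s}(x,t)\|^2\right) p(x,t)\, \diff x\, \diff t}.
\end{equation*}

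It remains to integrate the multivariate score matching upper bound furnished by Theorem~\ref{thm:score_upperbound_extend} together with the very-high-noise bound $1/(nt^2)$ (whose multivariate analogue is immediate from the discussion in Section~\ref{section:discussion}). Splitting at $t_\ast = n^{-\frac{2}{2\alpha+d}}$,
\begin{equation*}
\int_0^T \!\!\int_{\R^d} \E\|s - \widehat{s}\|^2\, p\, \diff x\, \diff t \;\lesssim\; \int_0^{t_\ast}\!\!\left(t^{\alpha-1} + n^{-\frac{2(\alpha-1)}{2\alpha+d}}\right) \diff t + \int_{t_\ast}^{1} \frac{\diff t}{n t^{d/2+1}} + \int_{1}^{T} \frac{\diff t}{n t^2}.
\end{equation*}
The three pieces are each of order $n^{-\frac{2\alpha}{2\alpha+d}}$ (the first two by the choice of $t_\ast$, the third by $1/n$, which is dominated since $\alpha/(2\alpha+d) \le 1/2$). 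Taking the square root and combining with the $\sqrt{d/n}$ term (which is also dominated for the same reason) yields the claimed rate $n^{-\alpha/(2\alpha+d)}$.

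The main obstacle is purely bookkeeping: one must verify the multivariate KL bound $\lesssim d/T$ and, more importantly, confirm that the integral $\int_{t_\ast}^1 t^{-d/2-1}\, \diff t$ is finite at $t_\ast$ and balances against the smoothness term at exactly $t_\ast = n^{-2/(2\alpha+d)}$; the higher exponent $d/2 + 1$ (rather than $3/2$) is what shifts the optimal cutoff and the final rate from $n^{-\alpha/(2\alpha+1)}$ to $n^{-\alpha/(2\alpha+d)}$. All other steps (DPI, Pinsker, Girsanov, and the triangle inequality) extend verbatim from the one-dimensional argument.
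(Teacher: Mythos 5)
Your proposal is correct and follows essentially the same route as the paper: it introduces the auxiliary process $\widetilde{\vv Y}$ (true initialization, estimated drift), splits $\dTV(Y_T,\widehat{Y}_T)$ by the triangle inequality, controls the initialization mismatch by the DPI, the multivariate extension of Lemma~\ref{lem:KLappro}, and Pinsker, controls the drift mismatch by Girsanov and Pinsker, and then integrates the multivariate score-matching bound with the split point $t_\ast = n^{-2/(2\alpha+d)}$. This is exactly the strategy the paper invokes (``the proof follows the same approach as Theorem~\ref{thm:TV}''), and your bookkeeping of the three pieces of the time integral matches the paper's computation.
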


\begin{proof}
    The proof follows the same approach as Theorem~\ref{thm:TV}. The quantity \(\mathbb{E}\left(\mathrm{TV}(X_0, \widehat{X}_0)\right)^2\) is bounded by 
    \[
    \int_{0}^\infty \int_{\mathbb{R}^d} \mathbb{E}\left(|s(x, t) - \widehat{s}(x, t)|^2\right) p(x, t) \, \mathrm{d}x \, \mathrm{d}t  + \frac{1}{\sqrt{n}}.
    \]
    We divide the integral \(\int_{0}^\infty \int_{\mathbb{R}^d} \mathbb{E}\left(|s(x, t) - \widehat{s}(x, t)|^2\right) p(x, t) \, \mathrm{d}x \, \mathrm{d}t\) into three parts: 
    \[
    \int_{1}^\infty \int_{\mathbb{R}^d} \mathbb{E}\left(|s(x, t) - \widehat{s}(x, t)|^2\right) p(x, t) \, \mathrm{d}x \, \mathrm{d}t,
    \]
    \[
    \int_{t_*}^1 \int_{\mathbb{R}^d} \mathbb{E}\left(|s(x, t) - \widehat{s}(x, t)|^2\right) p(x, t) \, \mathrm{d}x \, \mathrm{d}t,
    \]
    and 
    \[
    \int_{0}^{t_*} \int_{\mathbb{R}^d} \mathbb{E}\left(|s(x, t) - \widehat{s}(x, t)|^2\right) p(x, t) \, \mathrm{d}x \, \mathrm{d}t,
    \]
    where \( t_* = n^{-\frac{2}{2\alpha+d}} \). 

    When \(d > 1\), the bound for \(\int_{\mathbb{R}^d} \mathbb{E}\left(|s(x, t) - \widehat{s}(x, t)|^2\right) p(x, t) \, \mathrm{d}x\) is \(n^{-1}t^{-2}\) for \(t > 1\), \(n^{-1}t^{-\frac{d}{2}-1}\) for \(t_* < t < 1\), and \(t^{\alpha-1} + n^{-\frac{2(\alpha-1)}{2\alpha+d}}\) for \(t < t_*\). By plugging in these bounds, we obtain \(\mathbb{E}\left(\mathrm{TV}(X_0, \widehat{X}_0)\right) \lesssim n^{-\frac{\alpha}{2\alpha+d}}\).
\end{proof}

\begin{theorem}
\label{thm:Wasserstein2}
For \(\alpha \ge 1\),  there exists a constant \(C = C(\alpha, L,d)\) depending only on \(\alpha\), \(L\) and \(d\) such that
\[
\sup_{f \in \mathcal{F}_\alpha} \mathbb{E}\left(\mathrm{W_1}(X_0, \widehat{X}_0)\right) \leq \left\{
\begin{array}{ll}
Cn^{-\frac{1}{2}} & \text{if } d = 1, \\
Cn^{-\frac{1}{2}}\log(n) & \text{if } d = 2, \\
Cn^{-\frac{\alpha+1}{2\alpha+d}} & \text{if } d \ge 3.
\end{array}
\right.
\]
where \(\widehat{X}_0\) is given by Algorithm \ref{algo:DE}.
\end{theorem}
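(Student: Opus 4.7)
The plan is to follow the intermediate-process scheme used in the proof of Theorem~\ref{thm:Wasserstein}, adapted to $\mathbb{R}^d$. I would first define the hybrid backward processes $\vv{Y}^t$ that apply the true score $s$ for $\tau \in [0, T-t]$ and the estimator $\widehat{s}$ for $\tau \in (T-t, T]$, truncate at the boundary of $[-1,1]^d$, and introduce the dyadic schedule $t_i = 2^i/n$ for $i = 1, \ldots, N := \lfloor \log_2 n \rfloor$ together with $t_0 = 0$ and $t_{N+1} = 1$, giving the telescoping bound
\[
\mathbb{E}\bigl(\mathrm{W_1}(X_0,\widehat{X}_0)\bigr) \le \mathbb{E}\bigl(\mathrm{W_1}(\widehat{Y}^1_T,\widehat{Y}^T_T)\bigr) + \sum_{i=0}^N \mathbb{E}\bigl(\mathrm{W_1}(\widehat{Y}^{t_i}_T, \widehat{Y}^{t_{i+1}}_T)\bigr).
\]
Granting the multivariate analogue of Lemma~\ref{lem:9}, Girsanov's theorem then delivers
\[
\mathbb{E}\bigl(\mathrm{W_1}(\widehat{Y}^{t_i}_T,\widehat{Y}^{t_{i+1}}_T)\bigr) \lesssim \sqrt{t_{i+1}} \cdot \sqrt{\int_{t_i}^{t_{i+1}}\int_{\mathbb{R}^d} \mathbb{E}\bigl(\|s(x,t)-\widehat{s}(x,t)\|^2\bigr)\,p(x,t)\,\mathrm{d}x\,\mathrm{d}t},
\]
exactly as in the univariate case.

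Next I would substitute the multivariate score-matching bounds from Theorem~\ref{thm:score_upperbound_extend}: $n^{-1}t^{-d/2-1}$ in the high-noise regime $t_* \le t \le 1$ with $t_* := n^{-2/(2\alpha+d)}$, and $n^{-2(\alpha-1)/(2\alpha+d)}$ in the low-noise regime $t \le t_*$. Over the low-noise block $t_{i+1} \le t_*$, each summand is bounded by $t_{i+1}\,n^{-(\alpha-1)/(2\alpha+d)}$, which is geometric and dominated at $t_{i+1} \asymp t_*$, contributing $t_* n^{-(\alpha-1)/(2\alpha+d)} = n^{-(\alpha+1)/(2\alpha+d)}$. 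Over the high-noise block $t_* \le t_{i+1} \le 1$, a short computation (using $\int_{t_i}^{t_{i+1}} t^{-d/2-1}\,\mathrm{d}t \asymp t_{i+1}^{-d/2}$ on the dyadic grid) gives the per-summand bound $t_{i+1}^{(2-d)/4} n^{-1/2}$, and the $d$-dependence of the theorem emerges entirely from the behavior of the geometric sum $\sum_i t_{i+1}^{(2-d)/4}$: for $d=1$ the sum increases in $t_{i+1}$ and the top term $\asymp 1$ dominates, producing $n^{-1/2}$; for $d=2$ each term is $\asymp n^{-1/2}$ and there are $\asymp \log n$ of them, producing $n^{-1/2}\log n$; for $d\ge 3$ the sum decreases and the bottom term $\asymp t_*^{(2-d)/4}$ dominates, which simplifies to $n^{-(\alpha+1)/(2\alpha+d)}$. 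The remaining boundary term $\mathbb{E}(\mathrm{W_1}(\widehat{Y}^1_T,\widehat{Y}^T_T))$ is handled via $\mathrm{W_1} \lesssim \dTV$ on the common compact-diameter support, Pinsker's inequality, and the $t \ge 1$ tail of the score-matching error integral, which contributes at most $n^{-1/2}$ and is thus absorbed into all three rates.

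The only genuinely new preparatory work is lifting Lemmas~\ref{lem:cost} and \ref{lem:W1_bound} from $\mathbb{R}$ to $\mathbb{R}^d$, which I would do first. For Lemma~\ref{lem:cost}, the same explicit coupling through a reference point $a \in \mathbb{R}^d$ yields $\mathrm{W_1}(\mu,\nu) \le \int_{\mathbb{R}^d} \|x-a\|\,|p(x)-q(x)|\,\mathrm{d}x$. For Lemma~\ref{lem:W1_bound}, one couples on the first coordinate (identical marginals by assumption) and applies the $\mathbb{R}^d$-version of Lemma~\ref{lem:cost} to the conditional distribution of the second coordinate, followed by Cauchy--Schwarz and the data processing inequality, giving $\mathrm{W_1}(X_2,Y_2)^2 \lesssim (\mathbb{E}\|X_1-X_2\|^2 \vee \mathbb{E}\|Y_1-Y_2\|^2)\,\dKL(X\|Y)$; Lemma~\ref{lem:Trans_DIST} transfers verbatim because the target still has compact support of constant diameter. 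I expect the main point requiring care to be the piecewise case analysis of the geometric sum on $[t_*,1]$, particularly the emergence of the logarithmic factor at the critical dimension $d=2$ and the exponent crossover at $d\ge 3$; everything else is routine bookkeeping built on Theorem~\ref{thm:score_upperbound_extend} and the univariate template.
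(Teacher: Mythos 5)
Your proposal follows the paper's proof essentially verbatim: the same dyadic telescoping through hybrid processes, the same multivariate analogue of Lemma~\ref{lem:9} via Girsanov, the same substitution of the score-matching rates from Theorem~\ref{thm:score_upperbound_extend}, and the same geometric-sum case analysis over the low-noise and high-noise blocks (with the crossover in the sign of the exponent $(2-d)/4$ producing the three regimes and the $\log n$ at $d=2$). The only cosmetic difference is that you write the high-noise summand with $t_{i+1}$ where the paper writes $t_i$, which is immaterial on the dyadic grid; your computations check out, including $t_*^{(2-d)/4}n^{-1/2} = n^{-(\alpha+1)/(2\alpha+d)}$.
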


\begin{proof}
    Similar to the proof of Theorem~\ref{thm:Wasserstein}, we only need to bound 
    \[
    \sum_{i=0}^N \sqrt{t_{i+1}} \cdot \sqrt{\int_{t_i}^{t_{i+1}} \int_{\mathbb{R}^d} \mathbb{E}(|s(x, t) - \widehat{s}(x, t)|^2) \, p(x,t) \, \mathrm{d}x \, \mathrm{d}t} =: \sum_{i=0}^N\mathrm{E}_i,
    \]
    where \(t_0 = 0\), \(t_{N+1} = 1\), \(t_i = \frac{2^i}{n}\) for \(i \in [1,N]\), and \(N = \lfloor \log_2(n) \rfloor\). Let \(t_* = n^{-\frac{2}{2\alpha+d}}\). When \(d=2\), note that \(\mathrm{E}_i \lesssim n^{-\frac{\alpha-1}{2\alpha+2}} t_{i+1} \lesssim n^{-\frac{1}{2}}\) when \(i \in [0, \lfloor \log_2(nt_*) \rfloor - 1]\) and \(\mathrm{E}_i \lesssim n^{-\frac{1}{2}}\) when \(i \in [\lfloor \log_2(nt_*) \rfloor, N]\). We have
    \[
    \sum_{i=0}^N \mathrm{E}_i \lesssim \frac{N+1}{\sqrt{n}} \lesssim \frac{\log(n)}{\sqrt{n}}.
    \]
    For \(d \geq 3\), we have \(\mathrm{E}_i \lesssim n^{-\frac{\alpha-1}{2\alpha+d}} t_{i+1}\) when \(i \in [0, \lfloor \log_2(nt_*) \rfloor - 1]\) and \(\mathrm{E}_i \lesssim t_i^{\frac{1}{2}-\frac{d}{4}} n^{-\frac{1}{2}}\) when \(i \in [\lfloor \log_2(nt_*) \rfloor, N]\). It holds that
    \[
    \sum_{i=0}^{\lfloor \log_2(nt_*) \rfloor - 1} \mathrm{E}_i \lesssim n^{-\frac{\alpha-1}{2\alpha+d}} t_* \lesssim n^{-\frac{\alpha+1}{2\alpha+d}},
    \]
    \[
    \sum_{i=\lfloor \log_2(nt_*) \rfloor}^{N} \mathrm{E}_i \lesssim t_*^{\frac{1}{2}-\frac{d}{4}} n^{-\frac{1}{2}} \lesssim n^{-\frac{\alpha+1}{2\alpha+d}},
    \]
    which concludes the proof.
\end{proof}

\end{document}